\newtheorem{theorem}{Theorem}
\newtheorem{lemma}[theorem]{Lemma}
\newtheorem{rmk}[theorem]{Remark}
\newtheorem{defn}[theorem]{Definition}
\newtheorem{prop}[theorem]{Proposition}
\newtheorem{cor}[theorem]{Corollary}
\newcommand\independent{\protect\mathpalette{\protect\independenT}{\perp}}
\def\independenT#1#2{\mathrel{\rlap{$#1#2$}\mkern2mu{#1#2}}}
\newcommand{\erf}{\text{erf}}
\newcommand\numberthis{\addtocounter{equation}{1}\tag{\theequation}}
\title{Avoiding Kernel Fixed Points: \\Computing with ELU and GELU Infinite Networks}
\author {
    Russell Tsuchida\textsuperscript{$\dagger$}\textsuperscript{$\ddagger$}, Tim Pearce\textsuperscript{$*$}, Chris van der Heide\textsuperscript{$\ddagger$}, Fred Roosta\textsuperscript{$\ddagger$}\textsuperscript{$**$}, Marcus Gallagher\textsuperscript{$\ddagger$}\\
}
\begin{document}

\maketitle

\begin{abstract}
Analysing and computing with Gaussian processes arising from infinitely wide neural networks has recently seen a resurgence in popularity. 
Despite this, many explicit covariance functions of networks with activation functions used in modern networks remain unknown. 
Furthermore, while the kernels of deep networks can be computed iteratively, theoretical understanding of deep kernels is lacking, particularly with respect to fixed-point dynamics. 
Firstly, we derive the covariance functions of multi-layer perceptrons (MLPs) with exponential linear units (ELU) and Gaussian error linear units (GELU) and evaluate the performance of the limiting Gaussian processes on some benchmarks. 
Secondly, and more generally, we analyse the fixed-point dynamics of iterated kernels corresponding to a broad range of activation functions. 
We find that unlike some previously studied neural network kernels, these new kernels exhibit non-trivial fixed-point dynamics which are mirrored in finite-width neural networks. The fixed point behaviour present in some networks explains a mechanism for implicit regularisation in overparameterised deep models.
Our results relate to both the static iid parameter conjugate kernel and the dynamic neural tangent kernel constructions\footnote{Software at \url{github.com/RussellTsuchida/ELU_GELU_kernels}\label{footnote:software}}.
\end{abstract}

\section{Background --- Infinitely wide neural networks as Gaussian processes}
\label{sec:background}
Infinitely wide neural networks (NNs) and Gaussian processes (GPs) share an interesting connection~\citep{neal1995bayesian,jacot2018neural} which has only partially been explored. We begin by reviewing this connection. Readers familiar with this connection may skip to \S~\ref{sec:motivation}. Consider a one-hidden layer network with independent parameters. Suppose each $i$th row of weights $\mathbf{W}_i$ together with the corresponding bias $B_i$ in the hidden layer has distribution $(\mathbf{W}_i^\top, B_i)^\top = \widetilde{\mathbf{W}_i} \sim \mathcal{N} \big(\mu, \Sigma)$, with $\Sigma \succ 0$ being a diagonal matrix having a unique ``square root" $\Sigma^{(1/2)}$. Further, suppose the output layer parameter vector $\mathbf{V}=\frac{1}{\sqrt{n}} \mathbf{U}$ satisfies $\mathbf{U} \sim \mathcal{N}(\mathbf{0}, \sigma_w^2I)$, where $n$ is the number of neurons in the hidden layer and the output bias satisfies $V_b \sim \mathcal{N}(0, \sigma^2_b)$. The output evaluated at input $\mathbf{x}_1$ is $f(\mathbf{x}_1) = \frac{1}{\sqrt{n}}\sum_{i=1} U_i \psi(\widetilde{\mathbf{W}}_i^\top \widetilde{\mathbf{x}}_1) + V_b$, 
where $\psi$ is an activation function and $\widetilde{\mathbf{x}}_1 = (\mathbf{x}_1^\top, 1)^\top$.
The covariance between any two outputs is
\begin{align*} 
&\phantom{{}={}}k^{(1)}(\mathbf{x}_1, \mathbf{x}_2) \\
&= \mathbb{E}\Big[ \sum_{i=1}^n V_i \psi(\widetilde{\mathbf{W}}_i^\top \widetilde{\mathbf{x}}_1) \sum_{j=1}^n V_j \psi(\widetilde{\mathbf{W}}_j^\top \widetilde{\mathbf{x}}_2) \Big] {+} \sigma_b^2 \\
&= \sigma_w^2\mathbb{E}\Big[ \psi(\widetilde{\mathbf{W}}_1^\top \widetilde{\mathbf{x}}_1)\psi(\widetilde{\mathbf{W}}_1^\top \widetilde{\mathbf{x}}_2) \Big] + \sigma_b^2.
\end{align*}
The expectation over $d+1$ random variables reduces to an expectation over $2$ random variables, $\widetilde{\mathbf{W}}_1^\top \widetilde{\mathbf{x}}_1$ and $\widetilde{\mathbf{W}}_1^\top \widetilde{\mathbf{x}}_2$. The joint distribution of these two random variables is a bivariate Gaussian. The mean of each component is zero, and the variance is $\Vert \Sigma^{(1/2)} \widetilde{\mathbf{x}}_i \Vert^2$, where $\Vert \cdot \Vert$ denotes the Euclidean norm. The covariance is $\Vert \Sigma^{(1/2)} \widetilde{\mathbf{x}}_1 \Vert \Vert \Sigma^{(1/2)} \widetilde{\mathbf{x}}_2 \Vert \cos\theta$, where $\theta$ is the angle between $\Sigma^{(1/2)} \widetilde{\mathbf{x}}_1$ and $\Sigma^{(1/2)} \widetilde{\mathbf{x}}_2$. Therefore, the expectation in terms of $\mathbf{Z}  \sim \mathcal{N}(\mathbf{0},S)$ is
\begin{align*}
k^{(1)}(\mathbf{x}_1, \mathbf{x}_2) = \sigma_w^2 \mathbb{E}\Big[ &\psi\big(s_1 Z_1 + \widetilde{\mu}_1\big) \psi\big(s_2 Z_2 + \widetilde{\mu}_2 \big)\Big] + \sigma_b^2, \numberthis \label{eq:kernel}
\end{align*}
where $S$ has diagonals $1$ and off-diagonals $\cos\theta$, $s_i = \Vert \Sigma^{(1/2)} \widetilde{\mathbf{x}}_i \Vert$ and $\widetilde{\mu}_i = \bm{\mu}^\top \widetilde{\mathbf{x}}_i$. 
\begin{defn}
We call~\eqref{eq:kernel} the kernel. We call $\cos\theta^{(1)} = \frac{k^{(1)}(\mathbf{x}_1, \mathbf{x}_2)}{\sqrt{k^{(1)}(\mathbf{x}_1, \mathbf{x}_1)k^{(1)}(\mathbf{x}_2, \mathbf{x}_2)}}$ the normalised kernel.
\end{defn}
The above NN converges to a GP as $n \to \infty$ under mild conditions on the input and activation function $\psi$~\cite{neal1995bayesian}. Since $f(\mathbf{x}_1)$ is a sum of independent random variables scaling as $n^{-1/2}$, it converges to a Gaussian random variable with zero mean as $n \to \infty$. More generally, any fixed collection of $N$ evaluations of $f$, $\{f(\mathbf{x}_i)\}_{i=1}^N$ converges to an $N$-dimensional $\mathbf{0}$-mean Gaussian as $n \to \infty$. 

Analytical and closed-form covariance functions~\eqref{eq:kernel} are available for specific choices of $\psi$~\cite{le2007continuous, tsuchida2018invariance, tsuchida2018exchangeability, pearce2019expressive, tsuchida2019richer}, although some of these require $\bm{\mu}=0$. Most notably, the kernel is known for historically relevant activation functions $\psi(z) = \erf(z)$, RBF networks~\cite{williams1997computing} and for the more modern ReLU activation, $\psi(z)=\max(0,z)$ \cite{NIPS2009_3628}. More recently~\citet{meronen2020stationary} solved the \emph{inverse} problem, finding $\psi$ that recovers the Mat\'ern class of covariance functions. Once the form of~\eqref{eq:kernel} is known, the kernel of deep networks can be evaluated in the case where $\Sigma=\text{diag}(\sigma_w^2,...,\sigma_w^2, \sigma_b^2)$ and ${\bm{\mu}=\bm{0}}$~\cite{matthews2018gaussian, lee2017deep, NIPS2019_9186, yang2019scaling}. The case where $\bm{\mu}\neq 0$ can also be handled~\cite{tsuchida2019richer}, but we focus on $\bm{\mu}=0$ in this work. The kernel in layer $l+1$ can be found iteratively as a function of the kernel in layer $l$,
\begin{align*}
k^{(l+1)}(\mathbf{x}_1, \mathbf{x}_2) &= \sigma_w^2 \mathbb{E}\Big[\psi \big( s_1^{(l)} Z_1^{(l)} \big) \psi \big(s_2^{(l)}Z_2^{(l)} \big) \Big] + \sigma_b^2, \\
\begin{bmatrix} Z_1^{(l)} \\ Z^{(l)}_2 \end{bmatrix} &\sim \mathcal{N} \Bigg( \mathbf{0},\begin{bmatrix}
   1      & \cos\theta^{(l)} \\
   \cos\theta^{(l)}       & 1
\end{bmatrix}  \Bigg), \numberthis \label{eq:kernel_iter}
\end{align*}
where $\cos\theta^{(l)}$ is the normalised kernel in layer $l$, and $s_i^{(l)} = \sqrt{k^{(l)}(\mathbf{x}_i, \mathbf{x}_i)}$.
\begin{defn}
We call $k^{(l)}$ in~\eqref{eq:kernel_iter} the kernel in layer $l$. We call $\cos \theta^{(l)} = \frac{k^{(l)}(\mathbf{x}_1, \mathbf{x}_2)}{\sqrt{k^{(l)}(\mathbf{x}_1, \mathbf{x}_1)k^{(l)}(\mathbf{x}_2, \mathbf{x}_2)}}$ the normalised kernel in layer $l$.
\end{defn}
A generalisation of iid weight priors to partially exchangeable weight priors is also available~\cite{tsuchida2019richer}, resulting in a GP with an additional layer of inference over the hyperparameters $\bm{\mu}$ and $\Sigma$. Convergence to GPs also occurs for other NN architectures such as convolutional architectures~\cite{garriga2018deep, novak2018bayesian} and general compositions of recurrent, graph convolution, pooling, skip connection, attention and normalisation layers~\cite{NIPS2019_9186, yang2019scaling}\footnote{As detailed in these works, knowledge of~\eqref{eq:kernel} is often sufficient to describe these kernel, so our new kernels apply to more complicated networks.}. When an MLP is trained under a continuous-time analogue of gradient descent, the limiting output is still a GP~\cite{jacot2018neural}. The dynamics and associated covariance of the GP depends on the neural tangent kernel (NTK) $T^{(l)}$, which in addition to the iterations~\eqref{eq:kernel_iter}, is given by
\begin{align*}
T^{(1)}(\mathbf{x}_1, \mathbf{x}_2) &= k^{(1)}(\mathbf{x}_1, \mathbf{x}_2) \\ 
\dot{k}^{(l+1)} (\mathbf{x}_1, \mathbf{x}_2) &=  \sigma_w^2 \mathbb{E}\Big[ \psi'(s_1 Z^{(l)}_1) \psi'(s_2 Z^{(l)}_2) \Big], \\
T^{(l+1)}(\mathbf{x}_1, \mathbf{x}_2) &= T^{(l)}(\mathbf{x}_1, \mathbf{x}_2) \dot{k}^{(l+1)} (\mathbf{x}_1, \mathbf{x}_2) \\
&\phantom{{}={}} + k^{(l+1)}(\mathbf{x}_1, \mathbf{x}_2). \numberthis \label{eq:ntk}
\end{align*}

\section{Contributions and motivation}
\label{sec:motivation}

\begin{figure}[t]
    \centering
    \includegraphics[scale=0.25]{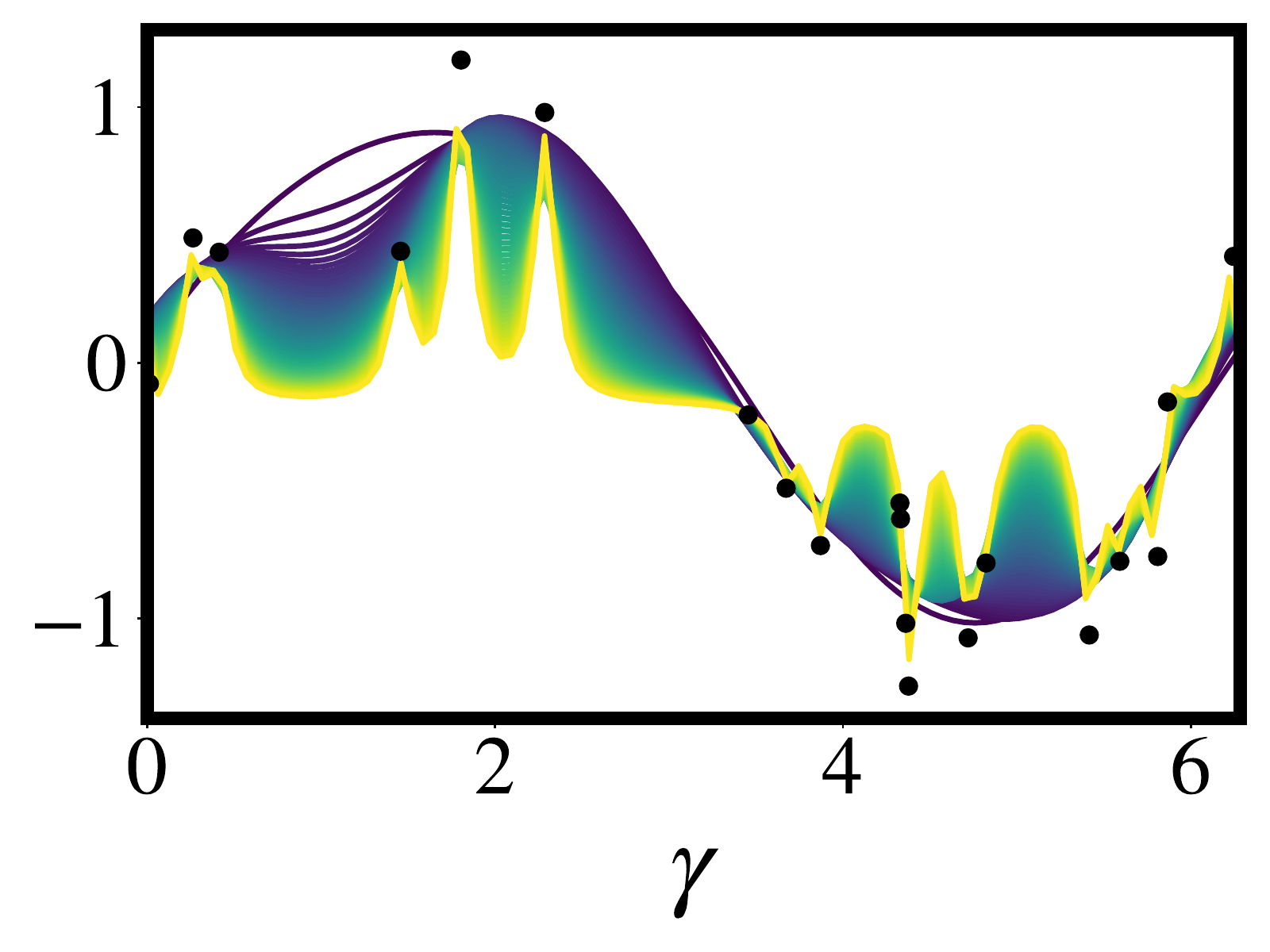}
    \includegraphics[scale=0.25]{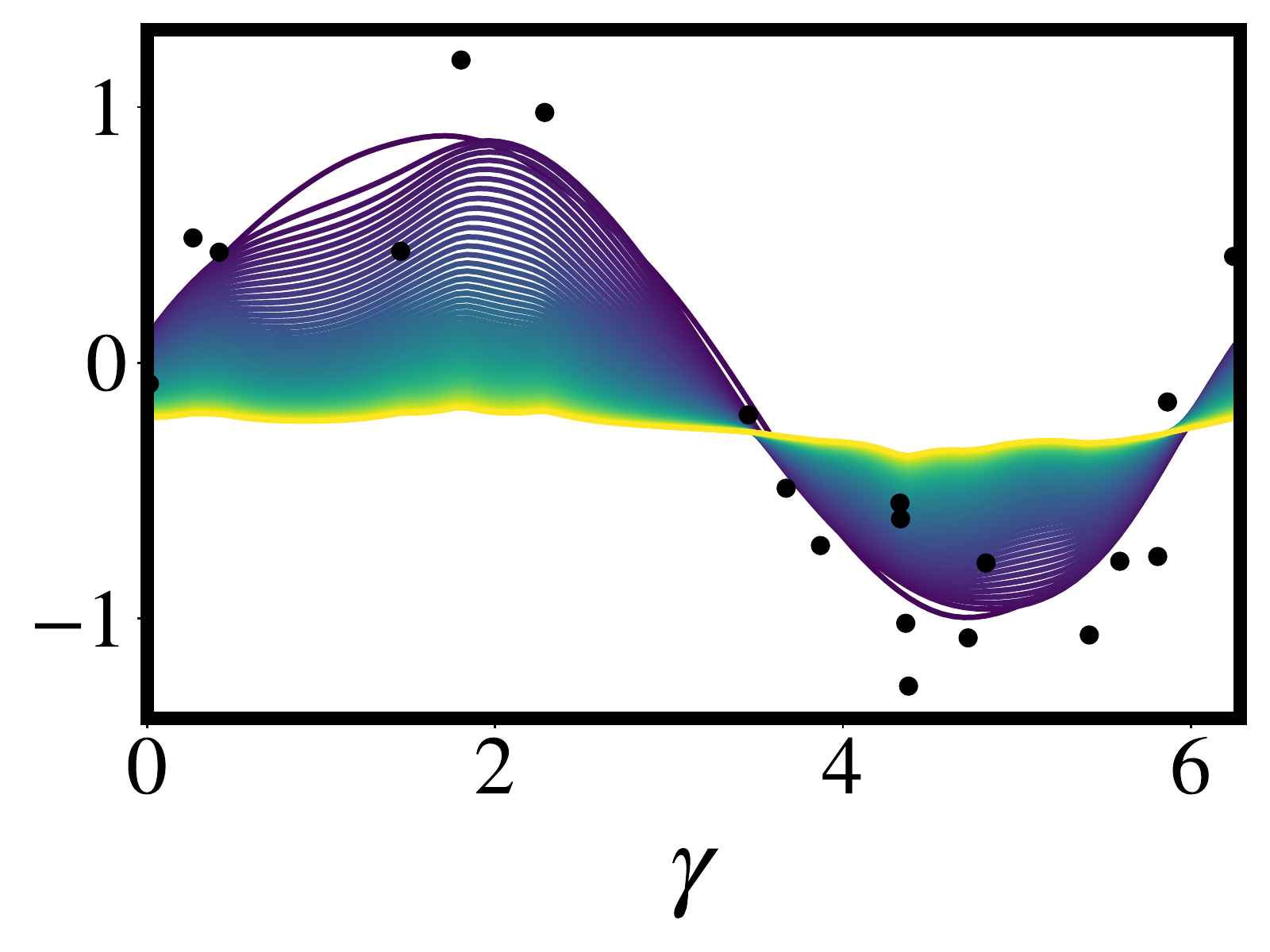}
    \caption{Illustration of simplicity bias due to kernel fixed points. Training data $\mathbf{x} \in \mathbb{R}^2$ is uniformly sampled on the unit disc at heading $\gamma$. Curves show the posterior mean of a GP regression model on $y=\sin(\gamma) + \epsilon$ with known additive noise variance. $\sigma_w$ is chosen according to Figure 3. Colours move from purple to yellow as depth increases from $1$ to $64$. (Left) GELU without unique kernel fixed point leading to overfitting (Right) ReLU with unique kernel fixed point leading to underfitting. More examples in Appendix~\ref{app:simplicity_bias}.}
    \label{fig:simplicity}
\end{figure}

This paper contains two main contributions. We:
\begin{enumerate}[leftmargin=*]
\item Derive kernels for GELU and ELU activations (defined below) and verify our results numerically. We implement GPs with different NN kernels on some benchmarks.
\item Study the fixed point dynamics of the kernel when $\psi$ is bounded by the absolute value of a polynomial. We find sufficient conditions for the existence of a unique kernel fixed point. We show theoretically and empirically that unlike the kernel corresponding to ReLU $\psi$, the new kernels are able to avoid unique fixed points. These conditions apply to both the iid prior~\citep{neal1995bayesian} and dynamic NTK~\citep{jacot2018neural} cases. This fixed point behaviour can be used to explain a simplicity bias in deep NN. More surprisingly, we find theoretically that the NTK dynamic which approximates gradient descent preserves this simplicity bias.
\end{enumerate}

\subsection{Motivation for studying fixed points}
Viewing NNs through the arguably idealised lens of GPs has some surprisingly non-intuitive practical implications. One important open problem is in explaining the empirical observation that some overparameterised NNs do not overfit, even when trained without explicit regularisation~\cite{zhang2017understanding}.~\citet{tsuchida2019richer} show empirically that samples from the limiting prior of deep MLPs with zero-mean parameters and LReLU activations are approximately constant on the unit hypersphere.~\citet{valle2019deep} argue that deep NNs with ReLU activations exhibit a ``simplicity bias", in that randomly initialised NNs implementing Boolean functions are likely to be simple.~\citet{yang2019fine} explain this simplicity bias through a spectral decomposition of the limiting kenel, showing that most of its mass is concentrated around the constant eigenfunction, even when accounting for training using gradient descent under the NTK~\cite{jacot2018neural}.~\citet{yang2019fine} are clear to separate the case of ReLU activations, which do result in kernels having peaked spectral mass and exhibiting a simplicity bias, from ERF activations which do not.

\begin{figure*}[t]
\centering
\includegraphics[scale=0.25]{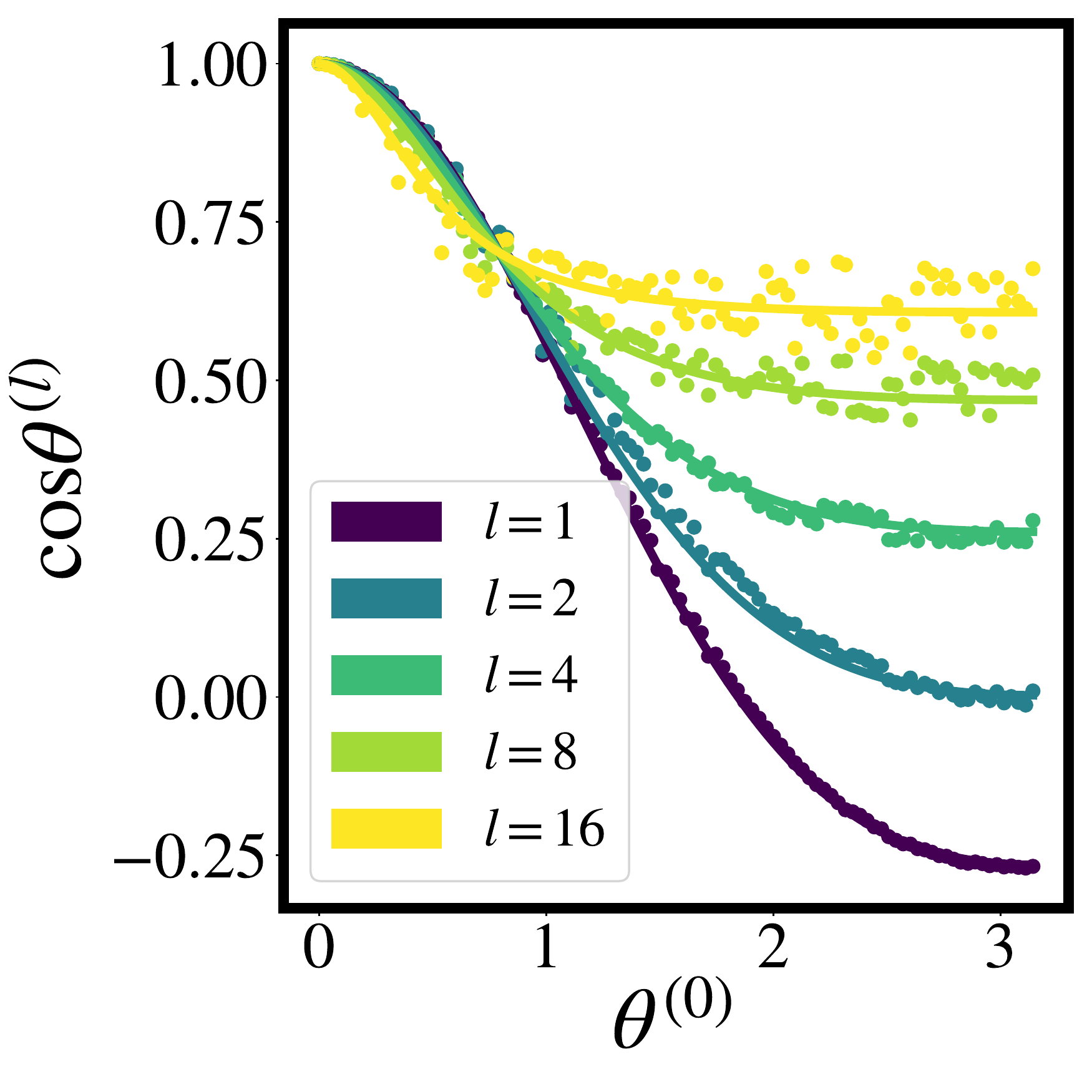}
\includegraphics[scale=0.25]{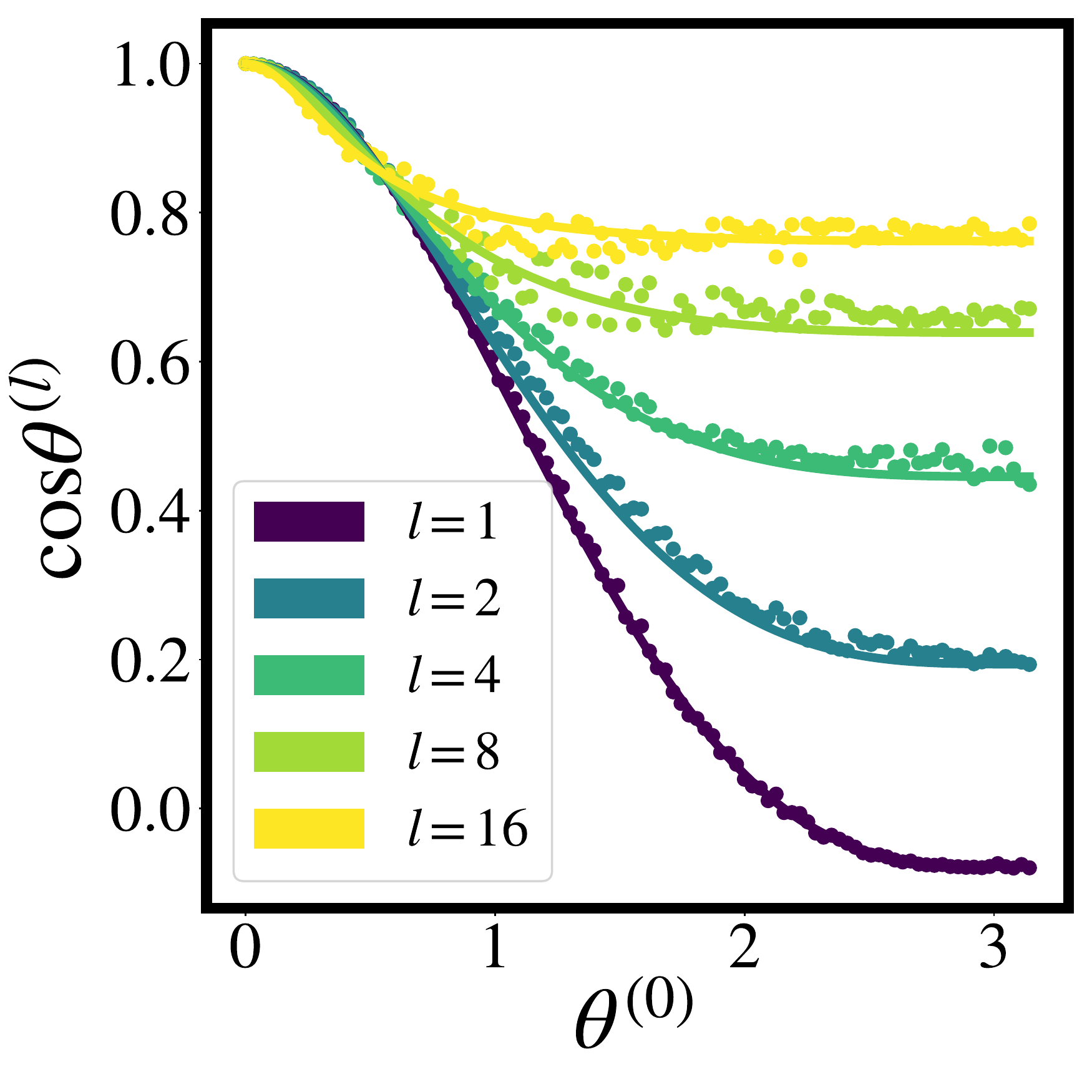}
\includegraphics[scale=0.25]{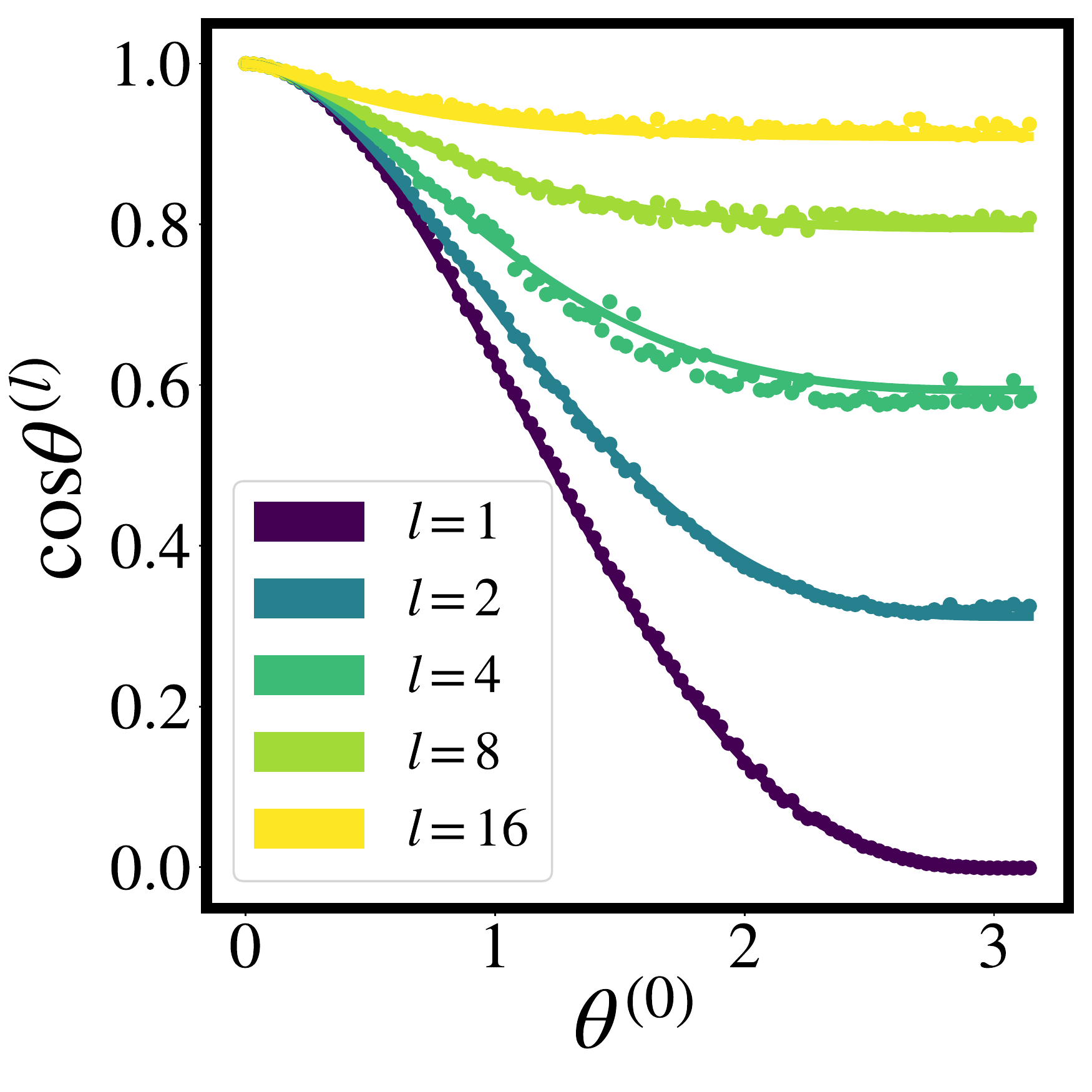}
\small
\begin{align*}
&k(\mathbf{x}_1, \mathbf{x}_2) = \\
&\sigma_b^2 + \sigma_w^2 \Bigg(\frac{s_1s_2}{4}\cos\theta+ \frac{s_1^2 s_2^2}{2\pi } \Bigg[ \frac{\frac{1}{2}(\cos(2\theta) + 3) + s_1^2 + s_2^2 +s_1^2s_2^2 \sin^2\theta}{(1+s_1^2)(1+s_2^2)\sqrt{1+s_1^2+s_2^2+s_1^2s_2^2\sin^2\theta}} + \frac{\cos\theta}{s_1s_2} \tan^{-1}\Bigg( \frac{\cos\theta s_1s_2}{ \sqrt{1+s_1^2+s_2^2+s_1^2s_2^2\sin^2\theta}} \Bigg) \Bigg] \Bigg)
\end{align*}
\caption{The GELU kernel. Plots show the normalised kernels in layer $l$ as a function of the angle $\theta^{(0)}$ between the inputs for MLPs of increasing depth when $\Sigma = \text{diag}(\sigma_w^2, ..., \sigma_w^2, 0)$ and $\Vert \mathbf{x}_i \Vert$ is constant for all $i$. Values of $\sigma_w$ are chosen to preserve the expected square norm $\Vert \mathbf{x} \Vert^2$ (see \S~\ref{sec:norm_preserve}). Solid curve shows infinitely wide limit, and dots show samples from a network with $2$ inputs and $3000$ neurons in each layer. Each dot corresponds to an $\mathbf{x}_1$ and $\mathbf{x}_2$ generated through a random rotation of $(1, 0)^\top$ and $(\cos\theta^{(0)}, \sin\theta^{(0)})^\top$. The random rotation is found through a QR decomposition of a matrix containing entries sampled independently from $\mathcal{U}[0,1]$. (Left) $\Vert \mathbf{x} \Vert=0.5$, $\sigma_w=1.59$ (Middle) $\Vert \mathbf{x} \Vert = 1$, $\sigma_w = 1.47$. (Right) $\Vert \mathbf{x} \Vert = 5$, $\sigma_w=1.42$.} 
\label{fig:gelu_kernel}
\end{figure*}

Our motivation for studying fixed points is in a similar spirit to the work above. For LReLU networks, we observe a so called kernel fixed point. An infinitely deep LReLU network is degenerate, and therefore over-regularised, in that all functions in the prior are constant over inputs on any hypersphere. Therefore, increasingly deep kernels represent a strict and potentially undesirable prior. On the other hand, kernels corresponding to GELU and ELU activation functions do not exhibit unique fixed points, and are therefore less biased towards simple functions. Just as traditional regularisation frameworks allow practitioners to control the bias-variance trade-off, in our framework, the activation function represents a similar choice. A deep ReLU network contains a higher degree of implicit regularisation than a deep GELU network. An illustration of this effect is shown in Figure~\ref{fig:simplicity}. Even more surprisingly, our analysis extends to the NTK.

\subsection{Recently introduced activation functions}
The increased volume of gradient-based deep learning research has seen the introduction of new popular activation functions. Notably these include the exponential linear unit (ELU)~\cite{clevert2015fast}, the Gaussian error linear unit (GELU)~\cite{hendrycks2016gaussian} and the Swish~\cite{ramachandran2017searching, elfwing2018sigmoid}. The GELU and ELU are
\begin{align*}
\psi(z) = z\Phi(z) \text{ and }\psi(z) &= \Theta(z)z + \Theta(-z)(e^z-1),
\end{align*}
respectively, where $\Phi$ denotes the CDF of the standard Gaussian and $\Theta$ denotes the Heaviside step function.

Many state-of-the-art models use GELU~\cite{radford2018improving, devlin2019bert} or swish activations~\cite{chua2018deep}. However, even when critically evaluating empirical evidence, it is difficult to determine whether certain activation functions are a better choice for a given problem, let alone separate the activation function expressivity from the ability of optimisers to find good solutions. Analysing activation functions through the lens of GPs allows one to visualise the function space in isolation of the ability of the optimiser to find good solutions, and reveals interesting implicit regularisation structure in the infinitely wide setting.

\subsection{Model selection}
The choice of activation function in an NN can be framed as a  model selection problem, and as such shares similarities with choosing other model hyperparameters. Take the case of choosing the L2 ridge-regularisation parameter as an example. One could apply n-fold cross-validation, with an implicit understanding that this parameter penalises model complexity as measured through the norm in an RKHS. Alternatively, a Bayesian might interpret the L2 weighting as the precision of a Gaussian prior, and optimise the marginal likelihood with respect to this weighting. A more pure Bayesian might put a prior over the regularisation (or precision) parameter and marginalise over these models. Common to all these approaches is (a) an intuition based on theory of what the parameter does and (b) a practical methodology for dealing with the parameter. In this paper we provide (a), and leave it to the practitioner to decide on (b) based on their philosophy and/or apparatus. Our work shows that GELU and ELU activations can avoid a regularisation mechanism that grows with depth that is \emph{always} implicit in ReLU activations.

We stress that the new kernels and the fixed point mechanisms are neither ``good" nor ``bad" in isolation. The new models should be evaluated according to model selection criteria in their given application, and the fixed point mechanisms describe a regularisation implicit in some kernels.

\begin{figure*}[!ht]
\centering
\includegraphics[scale=0.25]{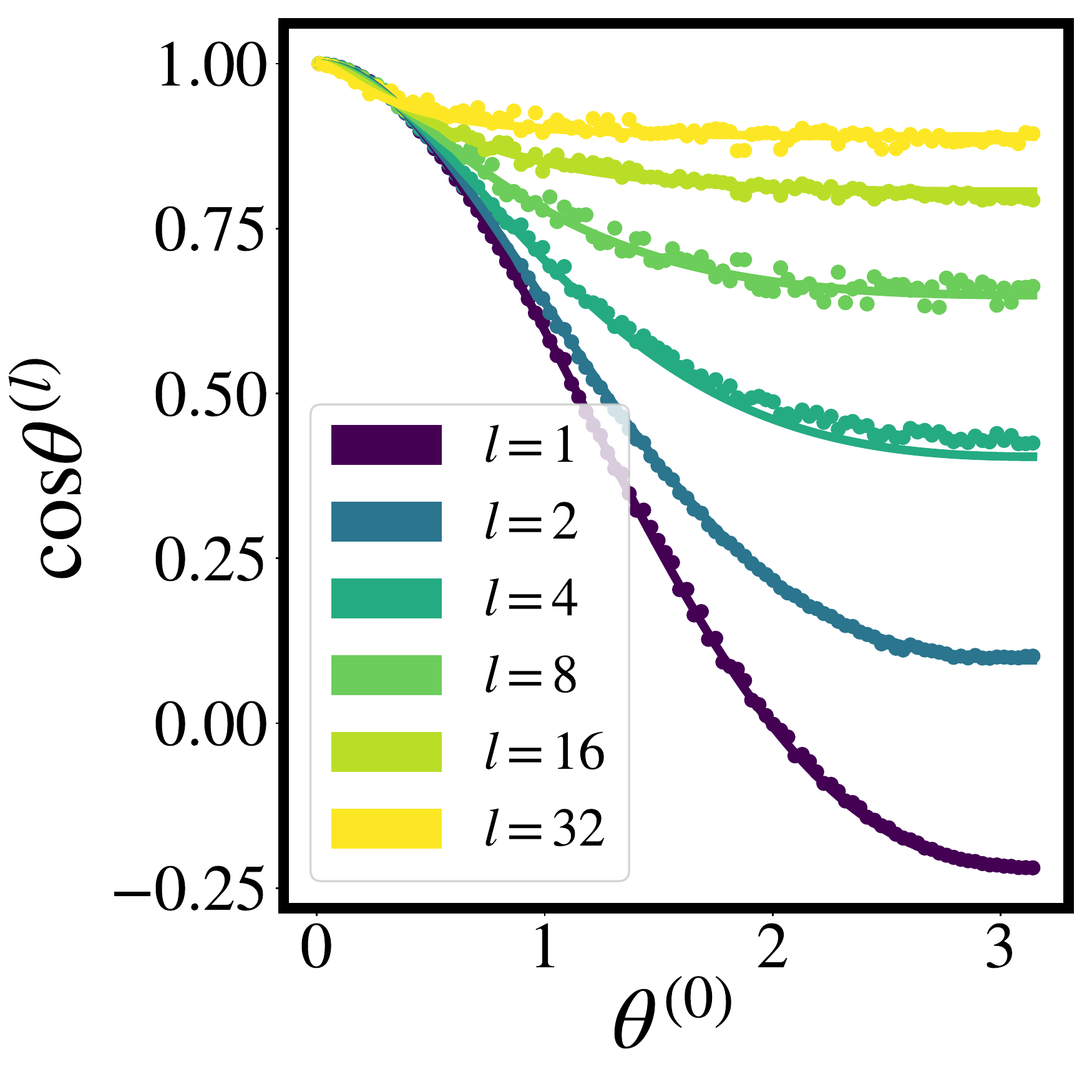}
\includegraphics[scale=0.25]{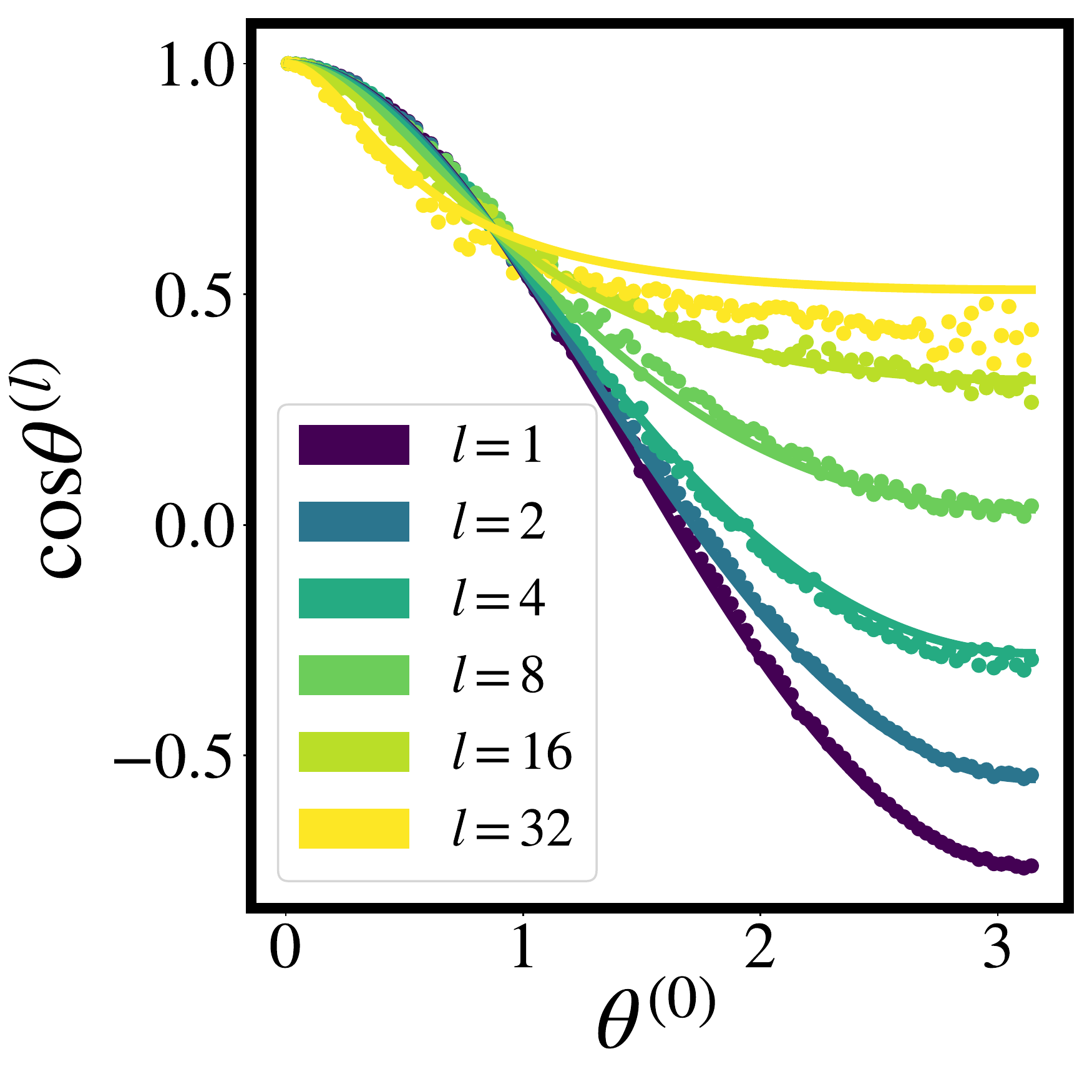}
\includegraphics[scale=0.25]{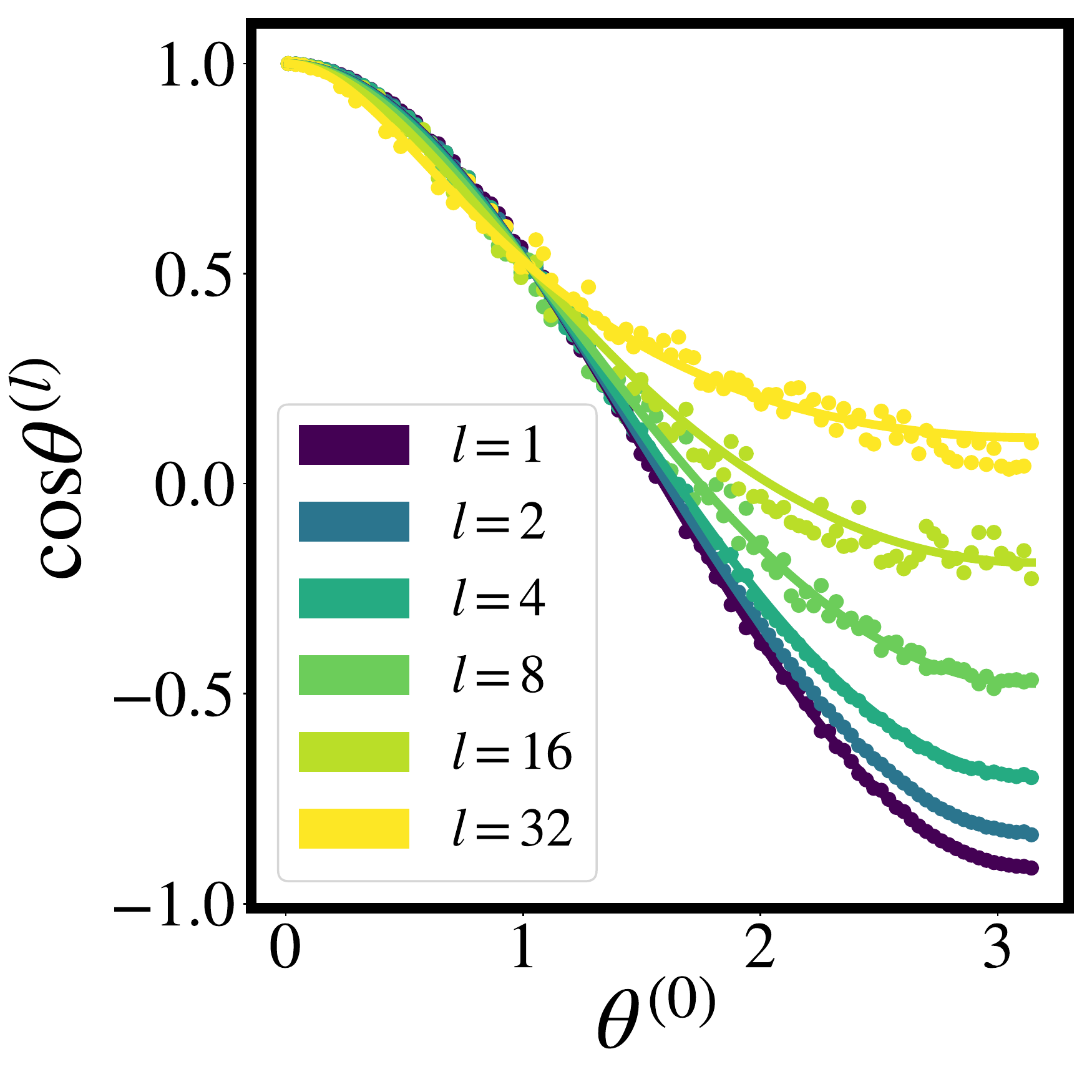}
\caption{As in Figure~\ref{fig:gelu_kernel}, but with ELU $\psi$. (Left-Right) $(\Vert \mathbf{x} \Vert, \sigma_w) = (5, 1.40)$, $(1, 1.26)$, $(0.5, 1.17)$.}
\label{fig:elu_kernel}
\end{figure*}

\section{New kernels}
\label{sec:new_kernel}
\begin{prop}
When $\psi$ is the GELU and $\bm{\mu}=0$, the kernel~\eqref{eq:kernel} is given by the equation in Figure~\ref{fig:gelu_kernel}.
\end{prop}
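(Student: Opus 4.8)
The plan is to reduce the proposition to the evaluation of a single bivariate expectation and then compute it by introducing auxiliary Gaussian variables so that the smoothing CDF becomes an orthant event. Substituting $\psi(z)=z\Phi(z)$ with $\bm{\mu}=0$ into~\eqref{eq:kernel} and pulling the scalars out of the expectation, it suffices to evaluate
$$I=\mathbb{E}\big[Z_1Z_2\,\Phi(s_1Z_1)\Phi(s_2Z_2)\big],\qquad (Z_1,Z_2)\sim\mathcal{N}(\mathbf{0},S),$$
since then $k=\sigma_b^2+\sigma_w^2 s_1 s_2 I$. Writing $\Phi(s_iZ_i)=\mathbb{E}_{U_i}[\mathbf{1}(U_i\le s_iZ_i)\mid Z_i]$ for independent standard normals $U_1,U_2$ and setting $V_i=s_iZ_i-U_i$, this becomes $I=\mathbb{E}\big[Z_1Z_2\,\mathbf{1}(V_1\ge0)\mathbf{1}(V_2\ge0)\big]$, an expectation over the zero-mean Gaussian vector $(Z_1,Z_2,V_1,V_2)$ whose covariances I read off directly: $\mathrm{Var}(V_i)=1+s_i^2$, $\mathrm{Cov}(V_1,V_2)=s_1s_2\cos\theta$, $\mathrm{Cov}(Z_i,V_i)=s_i$, and $\mathrm{Cov}(Z_i,V_j)=s_j\cos\theta$ for $i\ne j$.

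Next I would condition on $(V_1,V_2)$. Because $(Z_1,Z_2,V_1,V_2)$ is jointly Gaussian, the conditional law of $(Z_1,Z_2)$ given $(V_1,V_2)$ has mean linear in $V$, say $\mathbb{E}[Z_i\mid V]=a_{i1}V_1+a_{i2}V_2$ with $A=\Sigma_{ZV}\Sigma_{VV}^{-1}$, and a constant conditional covariance $c=\mathrm{Cov}(Z_1,Z_2\mid V)$. Hence $\mathbb{E}[Z_1Z_2\mid V]=c+\sum_{j,k}a_{1j}a_{2k}V_jV_k$ and
$$I=c\,\Pr(V_1\ge0,V_2\ge0)+\sum_{j,k}a_{1j}a_{2k}\,\mathbb{E}\big[V_jV_k\,\mathbf{1}(V_1\ge0,V_2\ge0)\big].$$
The first factor is the standard bivariate orthant probability $\tfrac14+\tfrac{1}{2\pi}\arcsin\rho_V$, where $\rho_V=s_1s_2\cos\theta/\sqrt{(1+s_1^2)(1+s_2^2)}$ is the correlation of $V_1,V_2$; the remaining pieces are truncated second moments of a bivariate Gaussian over the positive quadrant, each of which has a closed form in terms of $\rho_V$, the boundary density, and elementary rational functions of the variances.

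The concluding step is to assemble these pieces and simplify. The arctangent in Figure~\ref{fig:gelu_kernel} appears through the identity $\arcsin\rho_V=\tan^{-1}\!\big(\rho_V/\sqrt{1-\rho_V^2}\big)$ together with the computation $1-\rho_V^2=\big(1+s_1^2+s_2^2+s_1^2s_2^2\sin^2\theta\big)/\big((1+s_1^2)(1+s_2^2)\big)$, which reproduces exactly the radicand $\sqrt{1+s_1^2+s_2^2+s_1^2s_2^2\sin^2\theta}$ and the $\tfrac{\cos\theta}{s_1s_2}\tan^{-1}(\cdots)$ term; the rational bracket, including the $\tfrac12(\cos2\theta+3)$ numerator, collects the conditional-covariance and truncated-moment contributions over the common denominator $(1+s_1^2)(1+s_2^2)\sqrt{1+s_1^2+s_2^2+s_1^2s_2^2\sin^2\theta}$. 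I expect the main obstacle to be this last bookkeeping: correctly evaluating the truncated second moments $\mathbb{E}[V_jV_k\mathbf{1}(V_1\ge0,V_2\ge0)]$ (which themselves split into arcsine and boundary-density parts) and then verifying that all arcsine parts combine into the single coefficient $\tfrac{s_1s_2\cos\theta}{2\pi}$ while the algebraic remainder collapses to the stated rational expression. As a sanity check I would verify the independent case $\theta=\pi/2$, where the formula must reduce to $\mathbb{E}[\psi(s_1Z_1)]\mathbb{E}[\psi(s_2Z_2)]=s_1^2s_2^2/\big(2\pi\sqrt{(1+s_1^2)(1+s_2^2)}\big)$, and I would cross-check the $\rho$-derivative against Price's theorem, $\partial_\rho\mathbb{E}[\psi(s_1Z_1)\psi(s_2Z_2)]=s_1s_2\,\mathbb{E}[\psi'(s_1Z_1)\psi'(s_2Z_2)]$, which also feeds the NTK term $\dot k$ in~\eqref{eq:ntk}.
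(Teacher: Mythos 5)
Your proposal is correct, but it takes a genuinely different route from the paper's. The paper follows Williams' dummy-parameter technique (Appendix~\ref{app:gelu_kernel}): it defines $\kappa(\beta_1,\beta_2)=s_1s_2\,\mathbb{E}\big[Z_1Z_2\Phi(\beta_1 s_1 Z_1)\Phi(\beta_2 s_2 Z_2)\big]$, computes the mixed derivative $\partial^2\kappa/\partial\beta_1\,\partial\beta_2$ (which turns both CDFs into densities, leaving a Gaussian fourth-moment integral with an explicit closed form), then integrates back using the boundary conditions $\kappa(0,0)=\tfrac{s_1s_2}{4}\cos\theta$ and vanishing first derivatives, and evaluates at $\beta_1=\beta_2=1$; because the intermediate expressions carry factors of $1/\sin\theta$, the endpoints $\theta\in\{0,\pi\}$ require a separate dominated-convergence argument (Appendix~\ref{app:open_interval}). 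You instead absorb each $\Phi$ into an auxiliary independent Gaussian, converting $\mathbb{E}[Z_1Z_2\Phi(s_1Z_1)\Phi(s_2Z_2)]$ into a four-dimensional orthant computation, and then condition on $(V_1,V_2)$ — a route much closer in spirit to the paper's own ELU derivation (Appendix~\ref{app:elu_kernel}), which likewise reduces to truncated bivariate-normal moments \`a la \citet{rosenbaum1961moments}. Your covariance bookkeeping is right, and your route has two advantages worth making explicit: (i) by the law of total covariance, the total coefficient multiplying the orthant probability is $c+(A\Sigma_{VV}A^\top)_{12}=\mathrm{Cov}(Z_1,Z_2)=\cos\theta$, which explains at a structural level why all arcsine contributions collapse to $\tfrac{s_1s_2\cos\theta}{2\pi}\arcsin\rho_V$ plus the $\tfrac{s_1s_2}{4}\cos\theta$ term once multiplied by $\sigma_w^2 s_1s_2$; and (ii) $\det\Sigma_{VV}=1+s_1^2+s_2^2+s_1^2s_2^2\sin^2\theta>0$ for every $\theta$, so your derivation is uniform on $[0,\pi]$ and never needs the paper's separate endpoint argument. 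What you have not done is execute the final step — evaluating the four truncated second moments $\mathbb{E}[V_jV_k\mathbf{1}(V_1\ge0,V_2\ge0)]$ and checking that their boundary-density parts assemble into the rational bracket with numerator $\tfrac12(\cos 2\theta+3)+s_1^2+s_2^2+s_1^2s_2^2\sin^2\theta$; this is real work, but it is standard, and your verified identities (the radicand computation for $1-\rho_V^2$, the independence check at $\theta=\pi/2$, and the Price/Stein cross-check against the NTK term) leave little doubt that the assembly closes.
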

The derivation is in the same spirit as~\citet{williams1997computing}; we introduce dummy parameters $\beta_1$ and $\beta_2$ in the argument of $\Phi$, differentiate with respect to $\beta_1$ and $\beta_2$ to obtain a PDE, then solve the PDE and evaluate the solution at $\beta_1=\beta_2=1$. Complete working is given in Appendix~\ref{app:gelu_kernel}. It is plausible that our method of derivation extends to the case $\bm{\mu}\neq 0$, although the calculations and resulting expression become more complicated. Even with $\bm{\mu}=0$, this kernel has some interesting properties that we discuss in \S~\ref{sec:fixed_point}. Interestingly, unlike the ELU kernel with $\bm{\mu}=0$, the GELU kernel does not contain any hard-to-compute special functions, only some (inverse) trigonometric functions.

Our expression for the ELU kernel is lengthy and we do not assemble it in the main text, but provide a visualisation in the form of Figure~\ref{fig:elu_kernel}.
\begin{prop}
When $\psi$ is the ELU, the kernel~\eqref{eq:kernel} has an analytical expression implemented in software\footref{footnote:software} in terms of the univariate and bivariate normal CDFs.
\end{prop}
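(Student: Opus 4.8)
The plan is to exploit the piecewise structure of the ELU to reduce the expectation in~\eqref{eq:kernel} to a finite sum of integrals, each of which is a truncated, exponentially tilted moment of a bivariate Gaussian. Write $A_i = s_i Z_i + \widetilde{\mu}_i$, so that $(A_1, A_2)$ is jointly Gaussian with mean $m$ and covariance $C$ read off from~\eqref{eq:kernel}. Since $\psi(z) = z$ on $z \geq 0$ and $\psi(z) = e^z - 1$ on $z < 0$, the product $\psi(A_1)\psi(A_2)$ takes one of four forms according to the signs of $A_1$ and $A_2$. Splitting the expectation over the four quadrants $Q_{++}, Q_{+-}, Q_{-+}, Q_{--}$ of the $(A_1, A_2)$-plane therefore expresses $k(\mathbf{x}_1, \mathbf{x}_2)$ as $\sigma_b^2$ plus $\sigma_w^2$ times a sum of quadrant integrals. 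This is the ELU analogue of the ReLU derivation, which only needs the single quadrant $Q_{++}$.

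First I would classify the quadrant integrals by integrand type. Expanding the $(e^{A_i}-1)$ factors, every resulting term has the form $\mathbb{E}\big[A_1^{a} A_2^{b}\, e^{c_1 A_1 + c_2 A_2}\, \mathbf{1}\{(A_1, A_2) \in Q\}\big]$ with $a, b \in \{0,1\}$ and $c_1, c_2 \in \{0,1\}$, so only a truncated first moment ($a+b \le 1$) or the single cross second moment from $Q_{++}$ ever appears. The purely polynomial terms ($c_1 = c_2 = 0$) are truncated quadrant moments of a bivariate Gaussian, which have standard closed forms in terms of the univariate CDF $\Phi$, the bivariate CDF $\Phi_2$, and the Gaussian densities. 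The remaining terms carry exponential tilts.

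The key reduction for the tilted terms is completing the square: for jointly Gaussian $(A_1, A_2)$ the factor $e^{c_1 A_1 + c_2 A_2}$ can be absorbed into the density by shifting the mean, so that $\mathbb{E}\big[e^{c_1 A_1 + c_2 A_2}\, g(A_1, A_2)\, \mathbf{1}\{\cdot \in Q\}\big]$ equals a scalar prefactor, namely the moment generating function value $\exp\!\big(c^\top m + \tfrac{1}{2} c^\top C c\big)$, times the same moment computed under the mean shifted by $C c$. The covariance, and hence the correlation appearing inside $\Phi_2$, is unchanged; only the quadrant thresholds move. Applying this to each tilted quadrant integral collapses it to the polynomial case already handled, so every one of the finitely many terms is elementary times $\Phi$ or $\Phi_2$, and assembling the four quadrants gives the claimed analytical expression.

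The main obstacle is bookkeeping rather than any single hard integral: there are four quadrants, each contributing several tilted truncated moments with different mean shifts, and the signs of the quadrant boundaries interact with those shifts so the arguments fed into $\Phi$ and $\Phi_2$ must be tracked carefully (the truncated second-moment formula itself contributes a density term with its own shifted argument). Because the resulting expression is long and the cancellations are easy to mismanage, I would follow the paper's strategy for the GELU kernel and validate the assembled formula against Monte Carlo estimates of~\eqref{eq:kernel}, together with the limiting checks $\widetilde{\mu}_i = 0$ and $\cos\theta \to 1$, where the integrals simplify.
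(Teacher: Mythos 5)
Your proposal follows essentially the same route as the paper's own derivation in Appendix~\ref{app:elu_kernel}: the paper likewise splits the expectation into the four quadrant terms $E_1,\dots,E_4$ arising from the piecewise form of the ELU, completes the square to absorb each exponential tilt as a mean shift multiplied by an MGF prefactor (leaving the correlation unchanged), and reduces everything to truncated bivariate-Gaussian moments expressed through $\phi$, $\Phi$, and the bivariate normal CDF (the $Q_{++}$ cross second moment via the generalised arc-cosine result of Tsuchida et al., and the truncated first moments via Rosenbaum's formula). The proposal is correct; the only cosmetic difference is that the paper also carries the SELU parameters $\lambda,\alpha$ through the same calculation.
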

Complete working is given in Appendix~\ref{app:elu_kernel}. Unfortunately, the ELU kernel involves exponentiating arguments involving $s_1$ and $s_2$. This can lead to numerical instability in GP regression when many data points are involved. Despite this, having an analytical expression still allows us to gain insights into finite width networks, as we shall see in \S~\ref{sec:fixed_point}. 

The scaled exponential linear unit (SELU)~\citep{klambauer2017self} is a slightly modified version of the ELU which introduces two scaling parameters: one applied over the entire domain and one over the negative domain. Our analysis for the ELU also handles the SELU (see Appendix~\ref{app:elu_kernel}). \citet{klambauer2017self} motivates the SELU by showing that it is able avoid the exploding and vanishing gradient problem by carefully selecting the value of the scaling parameters. An entirely distinct problem is to analyse the \emph{correlations} between signals in the network. Fixed points in signal norms are desirable to maintain the scale of signals as they propagate through the network. Fixed points in signal correlations can be undesirable as they force unrelated inputs to have similar feature representations in deep layers of the network. While~\citet{klambauer2017self} is concerned with obtaining fixed points of signal norms (i.e. our \S~\ref{sec:norm_preserve}), it does not relate to fixed points of signal correlations (i.e. our \S~\ref{sec:fp_general}).

\section{Fixed point analysis}
\label{sec:fixed_point}
In this section, we first analyse the conditions under which the expected squared norm of the signals in each layer are preserved as the signal propagates through the network for a different choices of the activation function (\S~\ref{sec:norm_preserve}). In such a situation, the expected squared norm remains at a fixed point as it passes through the network. Then, more generally, we analyse conditions under which the expected squared norm of any two signals \emph{and} the cosine angle between them approaches a constant as depth increases (i.e., when the kernel has a fixed point). We are especially interested in the case where the cosine angle between the signals converges to a unique fixed point (\S~\ref{sec:fp_general}). Finally, we relate the existence of a unique fixed point to a degenerate, underfitting property of very deep infinitely wide MLPs (\S~\ref{sec:degenerate}).

For \S~\ref{sec:fixed_point},~\ref{sec:ntk_jacob} and~\ref{sec:experiments}, we suppose all the weights have the same variance, and so do all the biases; the first $d$ diagonals of the diagonal matrix $\Sigma$ are $\sigma_w^2$, and the last diagonal is $\sigma_b^2$.
\subsection{Warm-up --- norm preservation}
\label{sec:norm_preserve}

A useful application of the kernel in finite-width iid-initialised networks is to track the expected squared norm of the signals in each layer as the depth of the network increases. This is used in initialisation to avoid exploding or vanishing signals when using gradient optimisers. 

The expected norm squared of the signal in the first hidden layer is $(k^{(1)}(\mathbf{x}_1, \mathbf{x}_1)-\sigma_b^2)/\sigma_w^2$. For the squared norm of the signal in the hidden layer to be the same as the squared norm of the input, we set $\Vert \widetilde{\mathbf{x}}_1 \Vert^2 = (k^{(1)}(\mathbf{x}_1, \mathbf{x}_1)-\sigma_b^2)/\sigma_w^2$. We may then solve this condition to find the hyperparameter values that preserve input norms. For example, using the kernel corresponding to ReLU~\cite{NIPS2009_3628}, one obtains He initialisation~\cite{he2015delving}, that $\sigma_w = \sqrt{2}$, where $\Sigma^{1/2}= \text{diag}(\sigma_w, ..., \sigma_w, 0)^\top$. 

The analogue for GELU is more involved since no \emph{single} $\sigma_w$ preserves the expected square norms of \emph{all} inputs. Setting $\sigma_b=0$, $k(\mathbf{x}, \mathbf{x})/\sigma_w^2 = \Vert \mathbf{x} \Vert^2$ and $s_1 = s_2 = \sigma_w \Vert \mathbf{x} \Vert$ in the equation in Figure~\ref{fig:gelu_kernel}, we find a root $\sigma^*(\Vert \mathbf{x} \Vert)$ of
\begin{align*}
g_{\Vert \mathbf{x} \Vert} (\sigma) &= \frac{\sigma^4 \Vert \mathbf{x} \Vert^2}{\pi (\sigma^2 \Vert \mathbf{x} \Vert^2 + 1) \sqrt{ 2\sigma^2 \Vert \mathbf{x} \Vert^2 + 1}} + \\
&\phantom{{}={}}\frac{\sigma^2}{4} \Big( 1 + \frac{2}{\pi} \sin^{-1} \frac{\sigma^2 \Vert \mathbf{x} \Vert^2}{1 + \sigma^2 \Vert \mathbf{x} \Vert^2} \Big) - 1,
\end{align*}
numerically. Figure~\ref{fig:gelu_elu_roots} shows a plot of $\sigma^*(\Vert \mathbf{x} \Vert)$ as $\Vert \mathbf{x} \Vert$ varies. The root of the limit of $g_{\Vert \mathbf{x} \Vert} (\sigma)$ as $\Vert \mathbf{x} \Vert \to \infty$ is $\sqrt{2}$, which recovers He initialisation. This implies that when data has large norms (such as images or audio files), He initialisation is suitable. The same procedure can be carried out for the ELU kernel as shown in Figure~\ref{fig:gelu_elu_roots}. This procedure may be viewed as a warm-up handling the special case of $\mathbf{x}_1=\mathbf{x}_2$ for our general fixed point analysis.

\subsection{General fixed point analysis}
\label{sec:fp_general}
\begin{figure*}[t]
\centering
\includegraphics[scale=0.23]{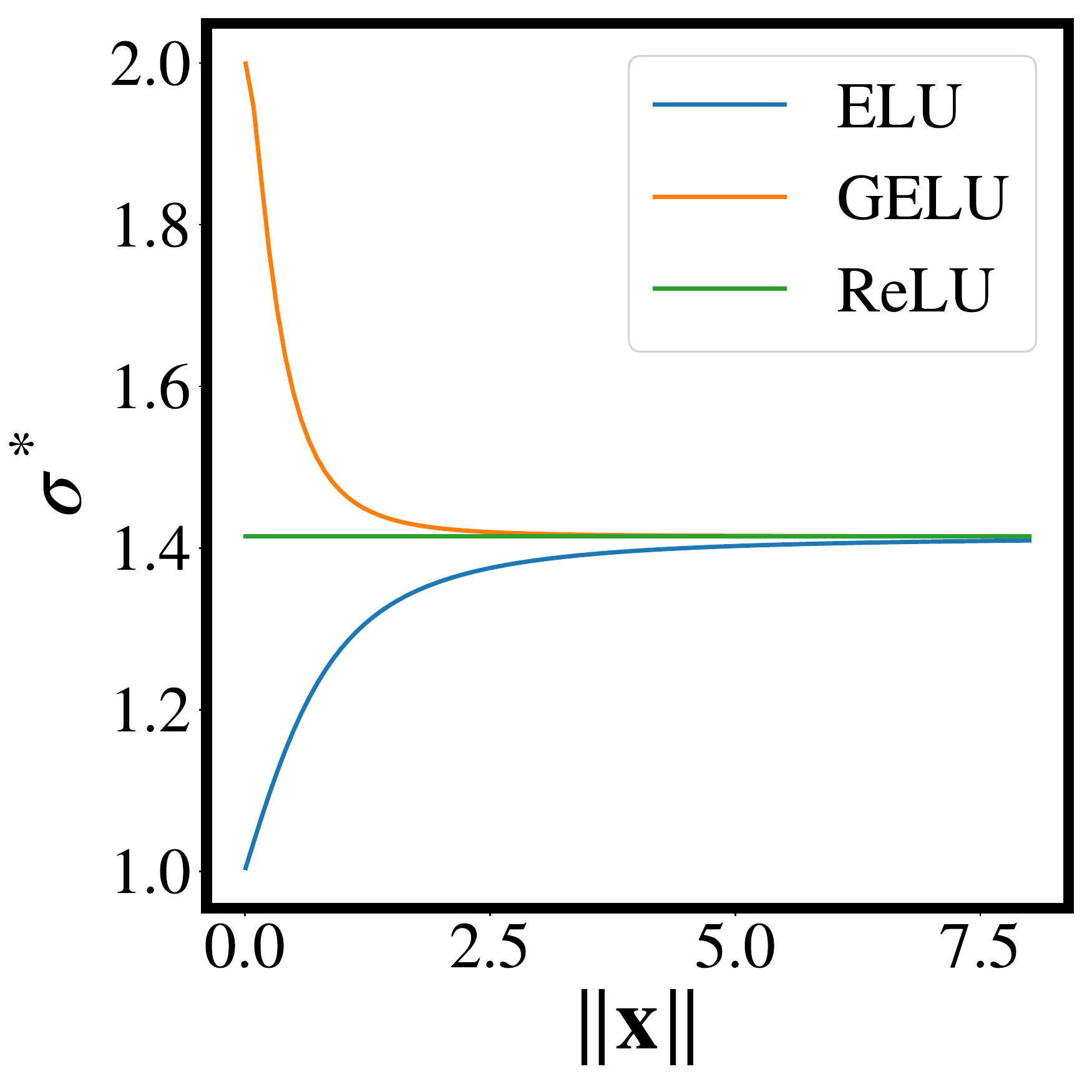}
\includegraphics[scale=0.23]{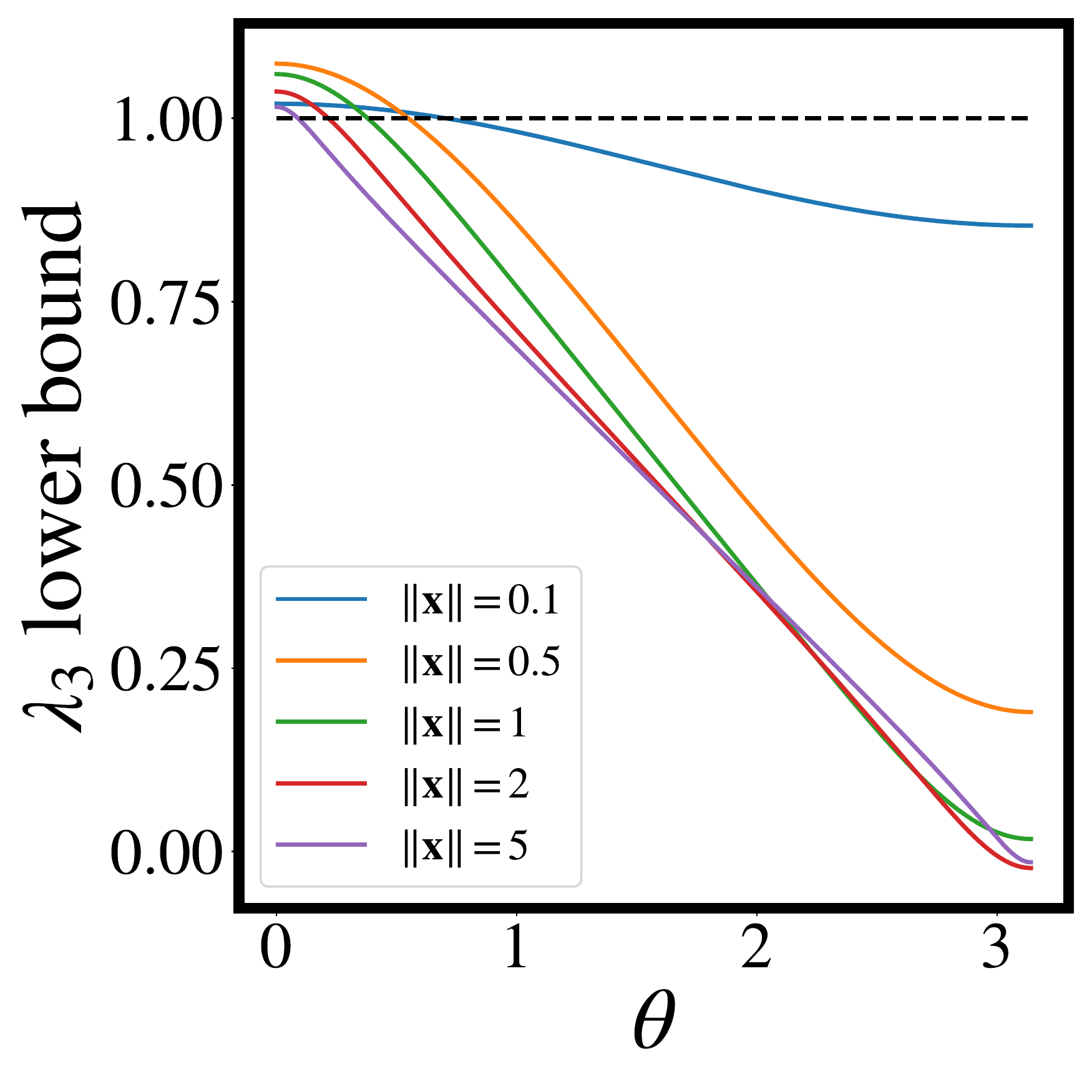}
\includegraphics[scale=0.23]{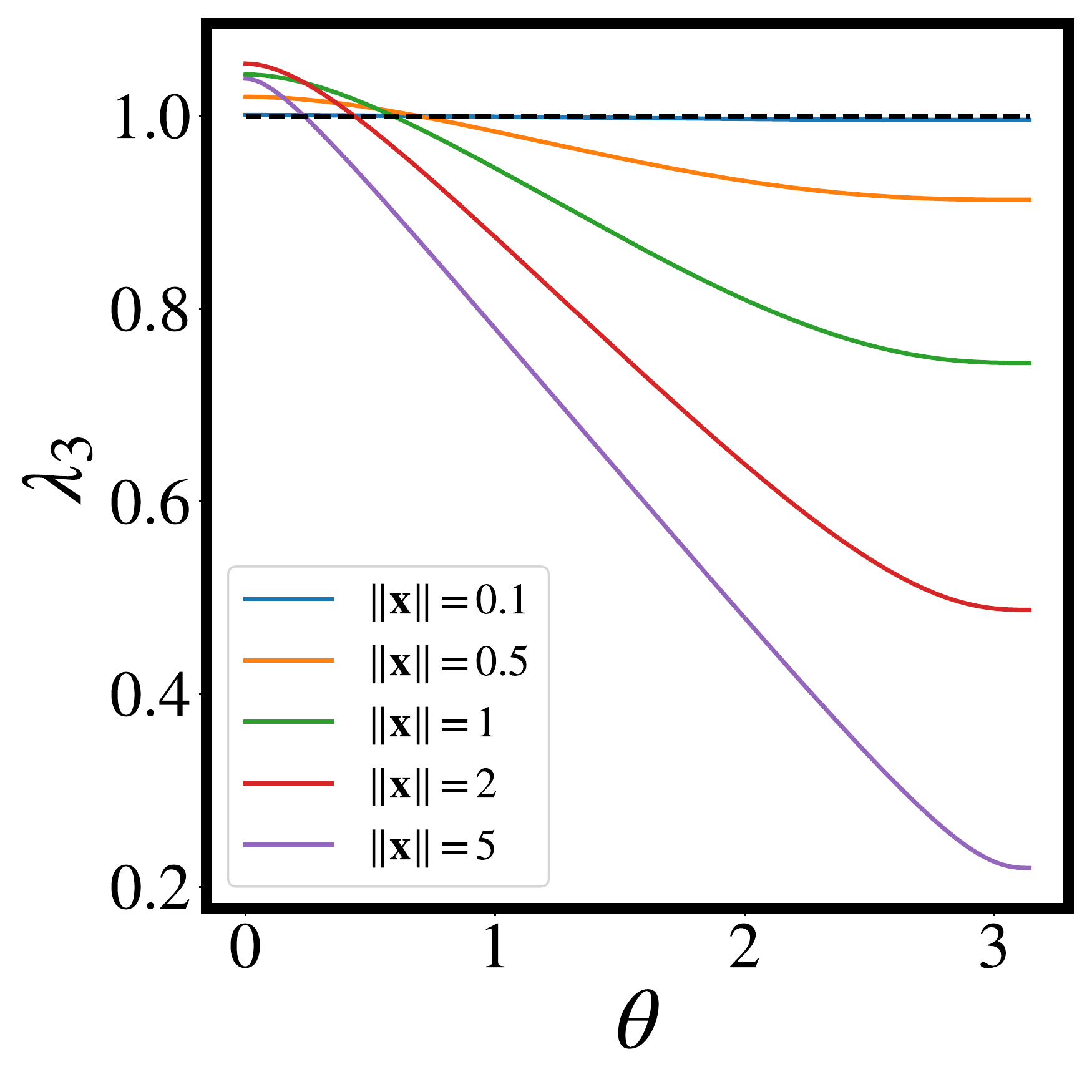}
\caption{(Left) Values $\sigma^*$ that preserve the layer-wise expected square norm of $\Vert \mathbf{x} \Vert$ in MLPs with GELU, ELU and ReLU activations. (Middle) lower bound of $\lambda_3$ for GELU (Right) $\lambda_3$ for ELU. If $\lambda_3 \leq 1$ on $\theta\in(0, \pi)$, a unique fixed point exists.}
\label{fig:gelu_elu_roots}
\end{figure*}

Let $\mathcal{S} \subseteq [0, \infty)\times [0, \infty) \times [-1, 1]$. In the infinitely wide limit, we may view each layer as updating a state $(s_1^2, s_2^2, \cos\theta) \in \mathcal{S}$ containing the expected square norms and the cosine angle between the signals in the hidden layers through a function $\mathbf{g}: \mathcal{S} \to  \mathcal{S}$. Let $(G_1, G_2)^\top \sim \mathcal{N}(\mathbf{0},I)$. We study the fixed-point dynamics of the iterated map $\mathbf{g}$ having components
\begin{align*}
g_1(s_1^2, s_2^2, \rho) &= \sigma_w^2\mathbb{E}\big[ \psi^2( s_1 G_1) \big] + \sigma_b^2, \\
g_2(s_1^2, s_2^2, \rho) &= \sigma_w^2 \mathbb{E}\big[ \psi^2(s_2 G_2) \big] + \sigma_b^2 \\
 g_3(s_1^2, s_2^2, \rho) &= \\
 &\hspace{-5em}\frac{\mathbb{E}\Big[ \sigma_w^2\psi(s_1 G_1) \psi \big(s_2(G_1 \rho + G_2 \sqrt{1-\rho^2}) \big)  + \sigma_b^2 \Big]}{\sqrt{g_1(s_1^2, s_2^2, \rho) g_2(s_1^2, s_2^2, \rho)}} \numberthis \label{eq:g_iterated},
\end{align*}
which track the expected square norms (after a linear transformation involving $\sigma_w^2$ and $\sigma_b^2$) and normalised kernel as the signals propogate through the layers\footnote{We expressed the kernel~\eqref{eq:kernel} in terms of iid Gaussians $\mathbf{G}$ instead of dependent Gaussians $\mathbf{Z}$.}. By inspection, $g_3$ (but not necessarily $\mathbf{g}$) always has an uncountable set of fixed points at $\rho=1$ along $s_1 = s_2$. Banach's fixed point theorem says that if $\mathbf{g}$ is a contraction mapping on a closed set, then $\mathbf{g}$ has a unique fixed point on that set~\cite{agarwal2001fixed}. In a slightly different setting,~\citet[Theorem 2.2.16]{hasselblatt2003first} allows some open sets.
\begin{theorem}
\label{thm:banach}
Let $D\mathbf{g}$ denote the Jacobian of $\mathbf{g}$ and $d'$ denote the metric induced by a norm with induced matrix norm $\Vert \cdot \Vert'$. If $C\subset \mathbb{R}^m$ is an open strictly convex set, $\overline{C}$ is its closure, $\mathbf{g}: \overline{C} \to \overline{C}$ differentiable on $C$ and continuous on $\overline{C}$ with $\Vert D\mathbf{g} \Vert' \leq \lambda < 1$ on $C$, then $\mathbf{g}$ has a unique fixed point $\mathbf{c}_0 \in \overline{C}$ and $d'\big(\mathbf{g}^L(\mathbf{c}), \mathbf{c}_0\big) \leq \lambda^L d'(\mathbf{c}, \mathbf{c}_0)$ for every $\mathbf{c} \in \overline{C}$.
\end{theorem}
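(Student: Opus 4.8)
The plan is to reduce the statement to the classical Banach fixed point theorem, the one-line obstacle being that its standard form assumes a global Lipschitz bound whereas here we are only handed a pointwise bound on the Jacobian, and only on the open set $C$. The first task is therefore to upgrade $\Vert D\mathbf{g} \Vert' \leq \lambda$ into a genuine contraction estimate. I would do this with the mean value inequality for vector-valued maps: given any two points $\mathbf{c}_1, \mathbf{c}_2 \in \overline{C}$, consider the auxiliary curve $\phi(t) = \mathbf{g}\big(\mathbf{c}_2 + t(\mathbf{c}_1 - \mathbf{c}_2)\big)$ for $t \in [0,1]$. Continuity of $\mathbf{g}$ on $\overline{C}$ makes $\phi$ continuous on $[0,1]$, and wherever the segment's interior lies inside $C$ we have $\phi'(t) = D\mathbf{g}(\cdots)(\mathbf{c}_1 - \mathbf{c}_2)$, so submultiplicativity of the induced norm gives $\Vert \phi'(t) \Vert' \leq \lambda \Vert \mathbf{c}_1 - \mathbf{c}_2 \Vert'$. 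The mean value inequality then yields $d'\big(\mathbf{g}(\mathbf{c}_1), \mathbf{g}(\mathbf{c}_2)\big) = \Vert \phi(1) - \phi(0) \Vert' \leq \lambda\, d'(\mathbf{c}_1, \mathbf{c}_2)$, exhibiting $\mathbf{g}$ as a $\lambda$-contraction.

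The role of strict convexity is precisely to license the differentiability hypothesis invoked above, and this is where I expect the main obstacle to lie. I would use the defining property that for a strictly convex set the \emph{open} segment joining any two points of the closure $\overline{C}$ lies entirely in the open set $C$. This is the step that bridges the derivative bound, assumed only on $C$, and the fixed point and iterates, which may live on $\partial C$. Mere convexity would allow the segment connecting two boundary points to remain on $\partial C$, where no derivative control is available; strict convexity rules this out, so $\phi$ is differentiable on all of $(0,1)$ and the contraction estimate holds for \emph{every} pair of points in $\overline{C}$, not merely for interior pairs. Making this transfer rigorous, without resorting to a separate density or approximation argument, is the delicate part of the proof.

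Once $\mathbf{g}$ is known to be a $\lambda$-contraction, the remainder is routine. I would note that $\overline{C}$, as a closed subset of $\mathbb{R}^m$, is a complete metric space under $d'$, and that $\mathbf{g}$ maps $\overline{C}$ into itself by hypothesis, so Banach's theorem supplies a unique fixed point $\mathbf{c}_0 \in \overline{C}$. The convergence rate then follows by iteration: since $\mathbf{g}^L(\mathbf{c}_0) = \mathbf{c}_0$, a straightforward induction using the contraction estimate $L$ times gives $d'\big(\mathbf{g}^L(\mathbf{c}), \mathbf{c}_0\big) = d'\big(\mathbf{g}^L(\mathbf{c}), \mathbf{g}^L(\mathbf{c}_0)\big) \leq \lambda^L d'(\mathbf{c}, \mathbf{c}_0)$ for every $\mathbf{c} \in \overline{C}$, which is the claimed bound.
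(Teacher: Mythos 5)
Your proposal is correct, but note that the paper itself offers no proof of this statement: Theorem~\ref{thm:banach} is imported verbatim from the cited reference (Hasselblatt and Katok, Theorem~2.2.16), so there is no in-paper argument to compare against. What you have written is, in essence, the standard proof of that cited result, and it is complete as outlined. The three ingredients all check out: (i) for an open convex $C$ one has $\mathrm{int}(\overline{C}) = C$, so strict convexity places the \emph{open} segment joining any two points of $\overline{C}$ inside $C$, where $D\mathbf{g}$ is defined and bounded by $\lambda$; (ii) the vector-valued mean value inequality requires only continuity of $\phi$ on $[0,1]$ and differentiability on the open interval $(0,1)$ (e.g.\ by pairing with a norming functional and applying the scalar mean value theorem), which is exactly what (i) delivers, so the contraction estimate $d'(\mathbf{g}(\mathbf{c}_1),\mathbf{g}(\mathbf{c}_2)) \leq \lambda\, d'(\mathbf{c}_1,\mathbf{c}_2)$ holds for \emph{all} pairs in $\overline{C}$, boundary points included; (iii) $\overline{C}$ is closed in $\mathbb{R}^m$, hence complete under $d'$, and $\mathbf{g}(\overline{C}) \subseteq \overline{C}$ by hypothesis, so Banach's theorem applies and the rate follows by iterating the contraction bound against $\mathbf{g}^L(\mathbf{c}_0) = \mathbf{c}_0$. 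One remark: the step you flag as ``the delicate part'' --- transferring the derivative bound to boundary pairs without a density argument --- is already resolved by your own setup, precisely because the mean value inequality does not need differentiability at the endpoints $t \in \{0,1\}$; no further work is required there.
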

We therefore consider the eigenvalues of the Jacobian, proving the following in Appendix~\ref{app:jac_proof}.
\begin{theorem}
\label{thm:jacobian}
Let $\mathbf{g}$ be as in~\eqref{eq:g_iterated}, and suppose the absolute value of $\psi$ is bounded by a polynomial. Let $(Z_1, Z_2)\sim \mathcal{N}(\mathbf{0}, S)$ with covariance $\rho=\cos\theta$ and unit variances. Then for $\rho \in (-1, 1)$ $0<s_1, s_2$, the (unordered) eigenvalues of the Jacobian of $\mathbf{g}$ are
\begin{align*}
\lambda_1 := \frac{\partial g_1}{\partial s_1^2 } &= \sigma_w^2 \mathbb{E}\Big[ \big(Z_1^2-1\big) \psi^2(s_1 Z_1 ) \Big]/(2s_1^2), \\
\lambda_2 := \frac{\partial g_2}{\partial s_2^2 } &= \sigma_w^2 \mathbb{E}\Big[ \big(Z_2^2-1\big) \psi^2(s_2 Z_2 ) \Big]/(2s_2^2),  \quad \text{and}  \\
\lambda_3 := \frac{\partial g_3}{\partial \rho } &= \frac{\sigma_w^2 s_1 s_2}{\sqrt{g_1 g_2}} \mathbb{E} \big[ \psi'( s_1 Z_1)  \psi'(s_2 Z_2)  \big],
\end{align*}
provided the right hand terms are finite, where $\psi'$ is the distributional derivative of $\psi$.
\end{theorem}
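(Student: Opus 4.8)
My plan is to exploit the triangular structure of $D\mathbf{g}$ and then evaluate the three diagonal entries using Gaussian integration by parts. The crucial structural observation is that in the coordinates $(s_1^2, s_2^2, \rho)$, the first component $g_1$ depends only on $s_1^2$ and the second $g_2$ only on $s_2^2$ (each being a single-variable Gaussian expectation), while only $g_3$ depends on all three. Hence $\partial g_1/\partial s_2^2 = \partial g_1/\partial \rho = \partial g_2/\partial s_1^2 = \partial g_2/\partial \rho = 0$, so $D\mathbf{g}$ is lower triangular with diagonal $\big(\partial g_1/\partial s_1^2,\ \partial g_2/\partial s_2^2,\ \partial g_3/\partial \rho\big)$. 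A triangular matrix has its diagonal as its (unordered) spectrum, so it suffices to compute these three partial derivatives; the off-diagonal entries of the third row never enter the eigenvalues.

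For $\lambda_1$ (and $\lambda_2$ by symmetry), I would write $t = s_1^2$ and differentiate $g_1 = \sigma_w^2\,\mathbb{E}[\psi^2(\sqrt{t}\,G_1)] + \sigma_b^2$ under the expectation, giving $\partial g_1/\partial s_1^2 = (\sigma_w^2/s_1)\,\mathbb{E}\big[G_1\,\psi(s_1 G_1)\,\psi'(s_1 G_1)\big]$. To reach the stated $(Z_1^2-1)$ form I would apply the second-order Stein identity, which follows from the density relation $(z^2-1)\phi(z) = -\tfrac{d}{dz}\!\big[z\,\phi(z)\big]$ for the standard normal density $\phi$: integrating by parts yields $\mathbb{E}\big[(G_1^2-1)\,\psi^2(s_1 G_1)\big] = 2 s_1\,\mathbb{E}\big[G_1\,\psi(s_1 G_1)\psi'(s_1 G_1)\big]$, so that $\lambda_1 = \sigma_w^2\,\mathbb{E}\big[(Z_1^2-1)\psi^2(s_1 Z_1)\big]/(2 s_1^2)$, where $Z_1$ is the $\mathcal{N}(0,1)$ marginal.

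For $\lambda_3$, the denominator $\sqrt{g_1 g_2}$ is independent of $\rho$, so I only differentiate the numerator of $g_3$. Writing $Z_1, Z_2$ jointly $\mathcal{N}(\mathbf{0}, S)$ with unit variances and covariance $\rho$, I would invoke Price's theorem, i.e. the PDE $\partial_\rho\, p_\rho = \partial_{z_1}\partial_{z_2}\, p_\rho$ satisfied by the bivariate Gaussian density $p_\rho$. Integrating by parts twice transfers the two spatial derivatives onto $F(z_1,z_2) = \psi(s_1 z_1)\psi(s_2 z_2)$, and since $\partial_{z_1}\partial_{z_2} F = s_1 s_2\,\psi'(s_1 z_1)\psi'(s_2 z_2)$, I obtain $\partial_\rho\,\mathbb{E}\big[\psi(s_1 Z_1)\psi(s_2 Z_2)\big] = s_1 s_2\,\mathbb{E}\big[\psi'(s_1 Z_1)\psi'(s_2 Z_2)\big]$, giving $\lambda_3 = \tfrac{\sigma_w^2 s_1 s_2}{\sqrt{g_1 g_2}}\,\mathbb{E}\big[\psi'(s_1 Z_1)\psi'(s_2 Z_2)\big]$.

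The main obstacle is analytic rather than algebraic: justifying the interchange of differentiation and expectation, and the vanishing of all boundary terms in the integration-by-parts steps, when $\psi$ is only piecewise smooth and $\psi'$ exists merely as a distributional derivative (as for ReLU/ELU). Here the hypothesis that $|\psi|$ is polynomially bounded is what I would lean on: it forces the integrands $z\phi(z)\psi^2(s z)$ and $F\,p_\rho$ to decay at infinity so the boundary terms drop, and it supplies a dominating function (a Gaussian times a polynomial) legitimising differentiation under the integral via dominated convergence. The restriction $\rho \in (-1,1)$ is needed because $p_\rho$ degenerates as $\rho \to \pm 1$, breaking the Price-theorem interchange; and the ``provided the right-hand terms are finite'' proviso is exactly the safeguard ensuring the $\psi'$-expectations (which polynomial growth of $\psi$ alone does not always control) are well defined.
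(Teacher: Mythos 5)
Your proposal is correct, and its skeleton matches the paper's: both rest on the observation that $g_1$ and $g_2$ depend only on their own squared-norm arguments, so $D\mathbf{g}$ is lower triangular and its spectrum is the diagonal, and both obtain $\lambda_1,\lambda_2$ by differentiating under the expectation and converting $\mathbb{E}\big[Z_1\psi(s_1Z_1)\psi'(s_1Z_1)\big]$ into the $(Z_1^2-1)\psi^2$ form by Gaussian integration by parts (the paper symmetrises to $\tfrac{1}{2}\mathbb{E}[(\psi^2)''(s_1Z_1)]$ and moves both derivatives onto the Gaussian; your single application of $(z^2-1)\phi(z)=-\frac{d}{dz}[z\phi(z)]$ is the same identity run in one step). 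Where you genuinely diverge is $\lambda_3$: you invoke Price's theorem, $\partial_\rho p_\rho=\partial_{z_1}\partial_{z_2}p_\rho$, and transfer both spatial derivatives onto the tensor-product integrand, which immediately yields $s_1s_2\,\mathbb{E}[\psi'(s_1Z_1)\psi'(s_2Z_2)]$. The paper instead differentiates the reparametrised expectation $\mathbb{E}\big[\psi(s_1G_1)\psi\big(s_2(G_1\rho+G_2\sqrt{1-\rho^2})\big)\big]$, producing an asymmetric term $(G_1-G_2\cot\theta)$, and then constructs a trivariate Gaussian $(Z_1,Z_2,Z_3)$ and applies a multivariate Stein's lemma (proved for tempered distributions) twice, relying on an exact cancellation of two $\mathbb{E}[\psi\,\psi'']$ terms. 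Your route is more symmetric and avoids both the auxiliary variable and the cancellation; what the paper's route buys is that its Stein lemma and the dual-pairing definition of $\mathbb{E}[T]=\langle T,\phi\rangle$ are developed explicitly, so the ``no boundary terms'' and interchange-of-limits issues you flag in your final paragraph are discharged by construction rather than by a dominated-convergence sketch. To make your argument fully rigorous at the paper's level you would state Price's identity, justify $\partial_\rho\langle T_F,p_\rho\rangle=\langle T_F,\partial_\rho p_\rho\rangle$ (e.g.\ via the paper's mean-value/Schwartz-space argument for infinite differentiability on $\rho\in(-1,1)$), and read the transfer of derivatives onto $F$ as the definition of the distributional derivative paired against the Schwartz function $p_\rho$ — exactly the framework the paper sets up — but these are finishing touches, not gaps in the idea.
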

Our result does not include $\rho \in \{-1, 1\}$. With the additional assumption that $\psi$ is continuous almost everywhere, the expression for $\lambda_3$ is valid on the closed interval $\rho \in [-1, 1]$, as shown in Appendix~\ref{app:endpoints}. We would now like to combine Theorem~\ref{thm:banach} and Theorem~\ref{thm:jacobian} in order to comment on the existence of unique fixed points in some special cases. We consider two general cases in Corollaries~\ref{cor:abs_hom_fp} and~\ref{cor:g3_only} below.
\begin{cor}[Unique fixed point under absolute homogeneity]
\label{cor:abs_hom_fp}
Suppose $\sigma_b^2=0$ (as is common when initialising neural networks) and $\psi$ is absolutely homogeneous, that is, $\psi(|a|z) = |a| \psi(z)$ for any $a \in \mathbb{R}$. Then 
$$ \frac{\partial g_3}{\partial \rho} = \lambda_3 =  \frac{\mathbb{E} \big[ \psi'( Z_1)  \psi'( Z_2)  \big]}{\mathbb{E}[\psi^2(Z_1)]}.$$
Furthermore, if
\begin{itemize}
    \item $\max_{i=1,2,3} |\lambda_i| < 1$ then $\mathbf{g}$ admits a unique fixed point at $\big( s^2 s^2, 1 \big)$ for some $s^2$.
    \item $\lambda_3 < 1$ and $g_1(\cdot, s_2^2, \rho)$ admits a fixed point $s^2$ for any $s_2^2, \rho$, then $g_3(s^2, s^2, \cdot)$ admits a unique fixed point at $1$.
\end{itemize}

\end{cor}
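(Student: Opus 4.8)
The plan is to first reduce $\lambda_3$ to the claimed $s$-independent form, read off the structure of $D\mathbf{g}$, and then split into a contraction argument (first bullet) and a monotonicity argument (second bullet). Starting from the expression for $\lambda_3$ in Theorem~\ref{thm:jacobian}, I would exploit two consequences of absolute homogeneity. Degree-$1$ homogeneity of $\psi$ gives, together with $\sigma_b^2=0$, that $g_1=\sigma_w^2 s_1^2\,\mathbb{E}[\psi^2(Z_1)]$ and $g_2=\sigma_w^2 s_2^2\,\mathbb{E}[\psi^2(Z_2)]$ with $Z_1,Z_2$ standard normal, hence $\sqrt{g_1 g_2}=\sigma_w^2 s_1 s_2\,\mathbb{E}[\psi^2(Z_1)]$. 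Differentiating $\psi(|a|z)=|a|\psi(z)$ in $z$ shows $\psi'$ is degree-$0$ homogeneous, so $\psi'(s_i Z_i)=\psi'(Z_i)$ for $s_i>0$. Substituting both facts into Theorem~\ref{thm:jacobian}, the factor $\sigma_w^2 s_1 s_2$ cancels and leaves $\lambda_3=\mathbb{E}[\psi'(Z_1)\psi'(Z_2)]/\mathbb{E}[\psi^2(Z_1)]$, which depends on the state only through $\rho$; all expectations are finite because homogeneity forces $\psi$ to grow at most linearly.

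The same two simplifications make $g_1,g_2$ linear in $s_1^2,s_2^2$ and make $g_3$ a function of $\rho$ alone, so $D\mathbf{g}$ is \emph{diagonal} with entries $\lambda_1=\lambda_2=\sigma_w^2\mathbb{E}[\psi^2(Z_1)]$ and $\lambda_3(\rho)$. For a diagonal matrix the Euclidean-induced operator norm equals $\max_i|\lambda_i|$, so the hypothesis $\max_i|\lambda_i|<1$ supplies the contraction constant required by Theorem~\ref{thm:banach}. To apply that theorem I would take $C$ to be an open strictly convex set (an ellipsoid) inside $[0,\infty)^2\times[-1,1]$ containing the candidate fixed point $(0,0,1)$ --- a fixed point because $g_1(0)=g_2(0)=0$ and $g_3(1)=1$ --- and mapped into its closure by $\mathbf{g}$; the self-map property follows from contraction of the norm coordinates toward $0$ (since $\lambda_1<1$) and of $\rho$ toward $1$. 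Theorem~\ref{thm:banach} then yields the unique fixed point $(s^2,s^2,1)$ with $s^2=0$ together with the stated geometric convergence rate.

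For the second bullet only $\lambda_3<1$ is available, so I would replace Banach by a monotonicity argument. Given the assumed norm fixed point $s^2$ of $g_1$, the point $(s^2,s^2,1)$ is a fixed point of $\mathbf{g}$ since $g_3(s^2,s^2,1)=\mathbb{E}[\psi^2(Z_1)]/\mathbb{E}[\psi^2(Z_1)]=1$. Setting $h(\rho)=g_3(s^2,s^2,\rho)-\rho$ on $[-1,1]$, we have $h(1)=0$ and $h'(\rho)=\lambda_3(\rho)-1<0$, so $h$ is strictly decreasing and can vanish at most once; hence $\rho=1$ is the unique fixed point of $g_3(s^2,s^2,\cdot)$. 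This monotonicity step is the cleanest part, as it extracts uniqueness directly from the one-sided bound. The delicate points are (i) justifying $\psi'(s_i Z_i)=\psi'(Z_i)$ when $\psi$ is only almost-everywhere differentiable, which must be read in the distributional sense underlying Theorem~\ref{thm:jacobian}, and (ii) in the first bullet, producing a strictly convex open $C$ meeting the hypotheses of Theorem~\ref{thm:banach} near the boundary point $\rho=1$ while ensuring $\mathbf{g}(\overline{C})\subseteq\overline{C}$. I expect (ii) to be the main obstacle, since the natural domain $[0,\infty)^2\times[-1,1]$ is convex but not strictly convex and the fixed point lies on its boundary.
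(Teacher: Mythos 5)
Your reduction of $\lambda_3$ and the diagonalisation of the Jacobian follow the paper's proof exactly: homogeneity makes $g_3$ independent of $s_1^2,s_2^2$ and makes $\psi'$ scale-invariant, so the off-diagonal entries vanish and the Euclidean operator norm is $\max_i|\lambda_i|$. Your additional observations are correct and in fact sharpen the paper: using Euler-type homogeneity one indeed gets $\lambda_1=\lambda_2=\sigma_w^2\mathbb{E}[\psi^2(Z_1)]$, so $g_1$ is linear with slope $\lambda_1<1$ and the first bullet's fixed point is forced to be $(0,0,1)$, pinning down the paper's unspecified ``for some $s^2$'' as $s^2=0$. Where you genuinely diverge is the second bullet: the paper applies Theorem~\ref{thm:banach} to the one-dimensional system $g_3(s^2,s^2,\cdot)$ on $C=(-1,1)$ with induced norm $|\lambda_3|$, whereas you set $h(\rho)=g_3(s^2,s^2,\rho)-\rho$, note $h(1)=0$ and $h'=\lambda_3-1<0$, and conclude $h$ vanishes only at $\rho=1$. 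Your route is better matched to the stated hypothesis: the corollary assumes only the one-sided bound $\lambda_3<1$, which suffices for your mean-value argument but not, strictly speaking, for the contraction requirement $|\lambda_3|\le\lambda<1$ in Theorem~\ref{thm:banach}; the trade-off is that the Banach route also yields geometric convergence of the iterates, which your argument does not deliver --- but only uniqueness is claimed, so this costs nothing. Finally, the obstacle you flag in (ii) --- that $[0,\infty)^2\times[-1,1]$ is convex but not strictly convex, that the fixed point $(0,0,1)$ sits on its boundary, and that no ellipsoid inside the valid domain can contain that point while being mapped into its own closure --- is a real technical issue, but it is not a gap relative to the source: the paper's own proof of the first bullet invokes Theorem~\ref{thm:banach} without exhibiting any admissible set $C$ whatsoever, so on exactly this point you are being more careful than the proof you were asked to reproduce. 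The same holds for your point (i): the paper asserts $\psi'(|a|z)=\psi'(z)$ without comment, while you correctly note it must be read distributionally, consistent with Theorem~\ref{thm:jacobian}.
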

\begin{proof}
Absolute homogeneity implies that
\begin{align*}
    &\phantom{{}={}}g_3(s_1^2, s_2^2, \rho) \\
    &= \frac{\mathbb{E}\Big[ \sigma_w^2\psi(s_1 G_1) \psi \big(s_2(G_1 \rho + G_2 \sqrt{1-\rho^2}) \big)  \Big]}{\sqrt{\mathbb{E}\Big[ \sigma_w^2\psi^2(s_1 G_1)  \big)  \Big]} \sqrt{\mathbb{E}\Big[ \sigma_w^2\psi^2(s_2 G_2)  \big)  \Big]}}  \\ 
    &= \frac{\mathbb{E}\Big[ \psi(G_1) \psi \big(G_1 \rho + G_2 \sqrt{1-\rho^2} \big)  \Big]}{\sqrt{\mathbb{E}\Big[ \psi^2(G_1)  \big)  \Big]} \sqrt{\mathbb{E}\Big[ \psi^2(G_2)  \big)  \Big]}}  \\ 
    &= 0 = \frac{\partial g_3}{\partial s_1^2} = \frac{\partial g_3}{\partial s_2^2}.
\end{align*}
Absolute homogeneity also implies that for all $a \in \mathbb{R}$, $\psi'(|a|z)=\psi'(z)$. Then by Theorem~\ref{thm:jacobian},
\begin{align*}
    \lambda_3 &= \frac{ \sigma_w^2s_1 s_2}{\sqrt{g_1 g_2}} \mathbb{E} \big[ \psi'( s_1 Z_1)  \psi'(s_2 Z_2)  \big] \\
    &= \frac{ \sigma_w^2s_1 s_2}{\sigma_w^2 s_1 s_2 \mathbb{E}\big[ \psi^2( G_1) \big]  } \mathbb{E} \big[ \psi'( Z_1)  \psi'(Z_2)  \big] \\
    &= \frac{\mathbb{E} \big[ \psi'( Z_1)  \psi'(Z_2)  \big] }{\mathbb{E}\big[ \psi^2( G_1) \big] }.
\end{align*}
Note that the Jacobian 
\begin{align*}
    \begin{pmatrix}
    \frac{\partial g_1}{\partial s_1^2} & \frac{\partial g_1}{\partial s_2^2} & \frac{\partial g_1}{\partial \rho} \\
    \frac{\partial g_2}{\partial s_1^2} & \frac{\partial g_2}{\partial s_2^2} & \frac{\partial g_2}{\partial \rho} \\
    \frac{\partial g_3}{\partial s_1^2} & \frac{\partial g_3}{\partial s_2^2} & \frac{\partial g_3}{\partial \rho}
    \end{pmatrix} = 
    \begin{pmatrix}
    \lambda_1 & 0 & 0 \\
    0 & \lambda_2 & 0 \\
    0 & 0 & \lambda_3
    \end{pmatrix}
\end{align*}
is diagonal, and therefore the induced matrix norm (corresponding to a Euclidean vector norm) of the Jacobian, the largest singular value of the Jacobian, is simply the largest absolute value of the diagonal elements. Thus by Theorem~\ref{thm:banach}, if $\max_{i=1,2,3} |\lambda_i| < 1$ then $\mathbf{g}$ has a unique fixed point. 

Alternatively, suppose $\lambda_3 < 1$ and $g_1(\cdot, s_2^2, \rho)$ admits a fixed point at $s^2$ for any $s_2^2, \rho$. Then $g_3\big( s^2, s^2, \cdot \big)$ admits a unique fixed point by applying Theorem~\ref{thm:banach} to the $1$D system $g_3$ with induced matrix norm $|\lambda_3|$. 
\end{proof}

Intuitively but informally, the absolute homogeneity condition in Corollary~\ref{cor:abs_hom_fp} leads to independent updates, so that $\mathbf{g}$ may be thought of as three functions $g_1, g_2, g_3$ whose inputs and outputs do not interact between iterations. When absolute homogeneity is removed, $g_1$ and $g_2$'s inputs and outputs are not affected by $g_3$, but $g_3$'s inputs are affected by the outputs of $g_1$ and $g_2$. This makes it more difficult to analyse exactly the same situation as in Corollary~\ref{cor:abs_hom_fp}. However, if we fix the output of $g_1$ and $g_2$ at some fixed point (which are guaranteed to exist in the cases handled in \S~\ref{sec:norm_preserve}), then we can neglect interactions between the outputs of $g_1, g_2$ and the inputs of $g_3$ and analyse the iterates of $g_3$ as a univariate function. We proceed with this strategy in Corollary~\ref{cor:g3_only}.

\begin{cor}[Unique fixed point of normalised kernel]
\label{cor:g3_only}
If a fixed point $s^2$ of the system only involving $g_1(\cdot, s_2^2, \rho): [0, \infty) \to [0, \infty)$ exists when $\sigma_w=\sigma^*$, we have
\begin{align*}
\frac{\partial g_3}{\partial \rho} = \lambda_3 = (\sigma^*)^2  \mathbb{E} \big[ \psi'( s Z_1)  \psi'( s Z_2)  \big]
\end{align*}
at the fixed point of $g_1(\cdot, s_2^2, \rho)$ and $g_2(s_1^2, \cdot, \rho)$. Furthermore, if $|\lambda_3| < 1$, then $g_3(s^2, s^2, \cdot): [-1, 1] \to [-1, 1]$ admits a unique fixed point at $\rho=1$.
\end{cor}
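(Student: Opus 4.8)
The plan is to exploit the structural fact that $g_1$ and $g_2$ in~\eqref{eq:g_iterated} depend neither on $\rho$ nor on each other's norm argument, so that once the norms are frozen at the common fixed point $s_1^2 = s_2^2 = s^2$ guaranteed by the hypothesis (with $\sigma_w = \sigma^*$), the remaining dynamics in $\rho$ reduce to a genuinely one-dimensional self-map $g_3(s^2, s^2, \cdot): [-1,1] \to [-1,1]$, to which the scalar version of Theorem~\ref{thm:banach} applies. This mirrors the second bullet of Corollary~\ref{cor:abs_hom_fp}, but now the decoupling is justified by freezing the norms rather than by absolute homogeneity.

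First I would simplify $\lambda_3$. Starting from the expression in Theorem~\ref{thm:jacobian}, I substitute $s_1 = s_2 = s$ and $\sigma_w = \sigma^*$. Because $g_1(s_1^2, s_2^2, \rho)$ depends only on $s_1$ and equals $s^2$ at its fixed point (and likewise $g_2 = s^2$), the denominator $\sqrt{g_1 g_2}$ collapses to $s^2$ independently of $\rho$, so the prefactor $(\sigma^*)^2 s^2/s^2$ reduces to $(\sigma^*)^2$, yielding $\lambda_3 = (\sigma^*)^2\,\mathbb{E}\big[\psi'(sZ_1)\psi'(sZ_2)\big]$ as claimed. This is a routine substitution rather than the crux. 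Next I would verify that $\rho = 1$ is a fixed point of $g_3(s^2, s^2, \cdot)$, consistent with the ``by inspection'' observation preceding Theorem~\ref{thm:banach}: at $\rho = 1$ the two correlated Gaussians coincide, so the numerator becomes $\sigma_w^2 \mathbb{E}[\psi^2(sG_1)] + \sigma_b^2 = g_1 = s^2$, which divided by the $\rho$-independent denominator $s^2$ gives exactly $1$.

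Since $g_3$ is a normalised kernel, i.e. a correlation coefficient, Cauchy--Schwarz guarantees its range lies in $[-1,1]$, so the restricted map is indeed a self-map of the closed convex interval $[-1,1]$. With derivative $\lambda_3$ and induced scalar matrix norm $|\lambda_3|$, the assumption $|\lambda_3| < 1$ makes the map a contraction, and Theorem~\ref{thm:banach} then forces a unique fixed point, which by the previous step must be $\rho = 1$.

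The main obstacle is making the contraction argument rigorous at the endpoints and uniformly in $\rho$. Theorem~\ref{thm:jacobian} supplies the derivative formula only for $\rho \in (-1,1)$, so to invoke Theorem~\ref{thm:banach} on the closed interval I would take $C = (-1,1)$ with $\overline{C} = [-1,1]$, requiring differentiability only on the open set and continuity on the closure, and appeal to the endpoint extension of Appendix~\ref{app:endpoints} (valid when $\psi$ is continuous almost everywhere) to control the behaviour at $\rho = \pm 1$. I would also read the hypothesis ``$|\lambda_3| < 1$'' as the uniform bound $\sup_{\rho}|\lambda_3(\rho)| \le \lambda < 1$, since $\lambda_3$ still varies with $\rho$ through the correlation of $(Z_1, Z_2)$. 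Establishing this uniform bound, together with confirming that the frozen-norm decoupling is legitimate because $g_1, g_2$ genuinely receive no feedback from $g_3$, is where the real care lies; the reduction and the Banach application are then immediate.
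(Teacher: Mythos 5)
Your proof follows essentially the same route as the paper's: substitute the fixed-point values $s_1=s_2=s$ and $\sqrt{g_1 g_2}=s^2$ into the formula from Theorem~\ref{thm:jacobian} to obtain $\lambda_3=(\sigma^*)^2\,\mathbb{E}\big[\psi'(sZ_1)\psi'(sZ_2)\big]$, then apply Theorem~\ref{thm:banach} to the one-dimensional map $g_3(s^2,s^2,\cdot)$ with $C=(-1,1)$ and the Euclidean metric. Your extra care --- verifying that $\rho=1$ is a fixed point, checking the self-map property via Cauchy--Schwarz, handling the endpoints through the Appendix~\ref{app:endpoints} extension, and reading $|\lambda_3|<1$ as a uniform bound in $\rho$ --- makes explicit details the paper leaves implicit, but it is the same argument.
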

\begin{proof}
By Theorem~\ref{thm:jacobian}, we have
\begin{align*}
    \lambda_3 &= \frac{ \sigma_w^2s_1 s_2}{\sqrt{g_1 g_2}} \mathbb{E} \big[ \psi'( s_1 Z_1)  \psi'(s_2 Z_2)  \big] \\
    &=  (\sigma^*)^2  \mathbb{E} \big[ \psi'( s Z_1)  \psi'(s Z_2)  \big].
\end{align*}
$g_3(s^2, s^2, \cdot)$ admits a unique fixed point by taking $C=(-1,1)$ and $d$ as the the Euclidean metric in Theorem~\ref{thm:banach}.
\end{proof}

\subsection{Degenerate priors and posteriors}
\label{sec:degenerate}
Having established conditions under which unique fixed points exist, we now examine what a unique fixed point implies for the limiting prior and posterior. The prior and posteriors are degenerate in the sense that they are almost surely constant over subsets of the input space. We first consider the limiting prior as the depth goes to infinity.
\begin{restatable}{prop}{propDegeneratePrior}
\label{propDegeneratePrior}
Let $\{f^{(L)}(\mathbf{x})\}_{\mathbf{x} \in \mathcal{X}}$ be a Gaussian process with mean zero and covariance function $k^{(L)}$. Suppose that $\lim\limits_{L \to \infty} k^{(L)}(\mathbf{x}_1, \mathbf{x}_2)=\lim\limits_{L \to \infty} k^{(L)}(\mathbf{x}_1, \mathbf{x}_1)=\lim\limits_{L \to \infty} k^{(L)}(\mathbf{x}_2, \mathbf{x}_2)<\infty$ for every $\mathbf{x}_1$, $\mathbf{x}_2 \in \mathcal{X}_*$. Then $$\lim\limits_{L \to \infty} f^{(L)}(\mathbf{x}_1) - f^{(L)}(\mathbf{x}_2)=0$$ almost surely. That is, all draws from the limiting prior are almost surely a constant function.
\end{restatable}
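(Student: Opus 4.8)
The plan is to reduce the whole statement to the behaviour of the single scalar random variable $Z_L := f^{(L)}(\mathbf{x}_1) - f^{(L)}(\mathbf{x}_2)$ and to exploit that a mean-zero Gaussian is completely determined by its variance. Since $f^{(L)}$ is a mean-zero Gaussian process, the pair $\big(f^{(L)}(\mathbf{x}_1), f^{(L)}(\mathbf{x}_2)\big)$ is jointly Gaussian, so $Z_L$ is itself a mean-zero Gaussian with variance
\[ V_L = k^{(L)}(\mathbf{x}_1, \mathbf{x}_1) + k^{(L)}(\mathbf{x}_2, \mathbf{x}_2) - 2\,k^{(L)}(\mathbf{x}_1, \mathbf{x}_2). \]
By the hypothesis that the three kernel evaluations share a common finite limit $c$, I immediately get $V_L \to c + c - 2c = 0$. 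This variance computation is the routine core of the argument.

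The second step is to turn $V_L \to 0$ into the almost-sure conclusion, and here I would be careful about the mode of convergence. A Gaussian tail bound gives, for every $\epsilon>0$, $\mathbb{P}(|Z_L| > \epsilon) \le \sqrt{2/\pi}\,(\sqrt{V_L}/\epsilon)\exp\big(-\epsilon^2/(2 V_L)\big) \to 0$, so $Z_L \to 0$ in probability (equivalently in $L^2$, and in distribution to the point mass at $0$). To strengthen this to almost-sure convergence along the depth index I would invoke the geometric rate supplied by Theorem~\ref{thm:banach}: under the contraction hypothesis the normalised kernel, hence $V_L$, approaches its fixed point like $\lambda^L$, which makes $\sum_L \mathbb{P}(|Z_L| > \epsilon)$ finite, so Borel--Cantelli yields $Z_L \to 0$ almost surely. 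Equivalently and more transparently, one may work directly with the limiting prior --- the Gaussian process whose covariance is $k^{(\infty)} := \lim_L k^{(L)}$ --- under which $f^{(\infty)}(\mathbf{x}_1) - f^{(\infty)}(\mathbf{x}_2)$ has variance exactly $0$ and is therefore almost surely zero.

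The final, and I expect hardest, step is to promote this per-pair statement into the claim that a typical \emph{sample path} is constant on $\mathcal{X}_*$. For each fixed pair we obtain a probability-one event on which $f(\mathbf{x}_1)=f(\mathbf{x}_2)$, but the naive intersection over the uncountable family of pairs in $\mathcal{X}_*$ need not be measurable, let alone have full measure. I would resolve this in the standard way: fix a reference point $\mathbf{x}_0$, restrict to a countable dense subset $D \subseteq \mathcal{X}_*$, take the (countable, hence full-measure) intersection of the events $\{f(\mathbf{x}) = f(\mathbf{x}_0)\}$ over $\mathbf{x} \in D$, and extend constancy from $D$ to all of $\mathcal{X}_*$ using almost-sure continuity of a modification of the sample paths. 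The only regularity input needed is enough smoothness of $k^{(\infty)}$ to guarantee such a continuous modification, which holds for the kernels under study.
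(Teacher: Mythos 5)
Your first two steps are exactly the paper's argument: the paper computes the same variance $(\sigma^{(L)})^2 = k^{(L)}(\mathbf{x}_1,\mathbf{x}_1)+k^{(L)}(\mathbf{x}_2,\mathbf{x}_2)-2k^{(L)}(\mathbf{x}_1,\mathbf{x}_2)\to 0$, and then, in place of your Gaussian tail bound, notes that the characteristic function $e^{-\frac{1}{2}(\sigma^{(L)}t)^2}\to 1$ pointwise and invokes L\'evy's continuity theorem, concluding that $f^{(L)}(\mathbf{x}_1)-f^{(L)}(\mathbf{x}_2)$ converges \emph{in distribution} to the random variable that is almost surely $0$. That is the entirety of the paper's proof: the ``almost surely'' in the statement refers to the limiting random variable being degenerate at $0$, not to almost-sure convergence of the sequence. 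Since convergence in law to a constant coincides with convergence in probability, your tail-bound route is an equivalent substitute, and up to this point you match the paper.

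The gap is your Borel--Cantelli step. First, the proposition's hypotheses give only that the three kernel limits coincide and are finite; no rate is assumed. The contraction hypothesis of Theorem~\ref{thm:banach} is \emph{not} part of this proposition --- the proposition is invoked downstream in situations where a contraction has separately been verified (e.g.\ Corollary~\ref{corDegenerateLReLU}), but it is stated, and must be proved, without it --- so the geometric decay $\lambda^L$ you need to make $\sum_L \mathbb{P}(|Z_L|>\epsilon)$ finite is simply unavailable. Second, and more fundamentally, almost-sure convergence of $Z_L$ in $L$ is not even well posed here: the statement specifies only the marginal law of each $f^{(L)}$ (a mean-zero GP with covariance $k^{(L)}$) and no joint coupling of the processes across depths, so there is no common probability space on which $\lim_L Z_L$ can be discussed pathwise. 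Your fallback --- pass to the limiting prior with covariance $k^{(\infty)}$, under which the difference has variance zero --- is valid and is effectively what the paper does, but it is not ``equivalent'' to almost-sure convergence along $L$; it is the weaker (and correct) reading of the claim. Finally, your last paragraph, promoting the pairwise statement to constancy of sample paths over the uncountable set $\mathcal{X}_*$ via a countable dense subset and a continuous modification, addresses a real imprecision in the paper's informal gloss that the paper's proof never touches; it is a sensible addition, though it requires regularity of the limiting kernel that neither you nor the paper verifies, and the formal content of the proposition is only the pairwise claim.
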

The proof is given in Appendix~\ref{app:degenerate}. Suppose we take \emph{a priori} the Gaussian process in Proposition~\ref{propDegeneratePrior} \emph{before the limit is taken}, update our belief after observing some data to obtain the posterior \emph{and then} take the limit. An interesting question is whether the limit commutes with the Bayesian update.

\begin{restatable}{prop}{propDegeneratePosterior}
\label{propDegeneratePosterior}
Let $\{f^{(L)}(\mathbf{x})\}_{\mathbf{x} \in \mathcal{X}}$ be a Gaussian process prior with mean zero and covariance function $k^{(L)}$. Fix some $\mathcal{X}_* \subseteq \mathcal{X}$ such that for every $\mathbf{x}_1$, $\mathbf{x}_2 \in \mathcal{X}_* \subseteq \mathcal{X}$ and $\mathbf{x}_3, \mathbf{x}_4 \in \mathcal{X}$,
\begin{itemize}
\item$\lim\limits_{L \to \infty} k^{(L)}(\mathbf{x}_1, \mathbf{x}_2)=\lim\limits_{L \to \infty} k^{(L)}(\mathbf{x}_1, \mathbf{x}_1)=\lim\limits_{L \to \infty} k^{(L)}(\mathbf{x}_2, \mathbf{x}_2)=C<\infty$, 
\item $\lim\limits_{L \to \infty} k^{(L)}(\mathbf{x}_1, \mathbf{x}_3)=\lim\limits_{L \to \infty} k^{(L)}(\mathbf{x}_2, \mathbf{x}_3)=D(\mathbf{x}_3)$, and
\item $\lim\limits_{L \to \infty} k^{(L)}(\mathbf{x}_3, \mathbf{x}_4) < \infty$ exists
\end{itemize}
where $C \in \mathbb{R}$ and $D:\mathcal{X} \to \mathbb{R}$ may depend on $\mathcal{X}_*$. Fix some dataset $\mathbf{X}, \mathbf{Y}$, where each row $\mathbf{X}_i$ of $\mathbf{X}$ is in $\mathcal{X}$.

Then under Bayesian Gaussian process regression with Gaussian likelihood and strictly positive noise variance $\sigma_n^2>0$, all draws from the limiting posterior predictive distribution given observations $\mathbf{X}, \mathbf{Y}$ over $\mathcal{X}_*$ as $L \to \infty$ are almost surely a constant function.
\end{restatable}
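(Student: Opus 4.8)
The plan is to reduce the claim to Proposition~\ref{propDegeneratePrior} by showing that the \emph{limit} of the layer-$L$ posterior predictive process, restricted to $\mathcal{X}_*$, is a Gaussian process whose mean is constant on $\mathcal{X}_*$ and whose covariance $k^{(\infty)}_{\mathrm{post}}$ obeys $k^{(\infty)}_{\mathrm{post}}(\mathbf{x}_1,\mathbf{x}_2)=k^{(\infty)}_{\mathrm{post}}(\mathbf{x}_1,\mathbf{x}_1)=k^{(\infty)}_{\mathrm{post}}(\mathbf{x}_2,\mathbf{x}_2)$ for all $\mathbf{x}_1,\mathbf{x}_2\in\mathcal{X}_*$. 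Once this is established, the argument of Proposition~\ref{propDegeneratePrior} applies verbatim: $f(\mathbf{x}_1)-f(\mathbf{x}_2)$ is Gaussian with variance $k^{(\infty)}_{\mathrm{post}}(\mathbf{x}_1,\mathbf{x}_1)+k^{(\infty)}_{\mathrm{post}}(\mathbf{x}_2,\mathbf{x}_2)-2k^{(\infty)}_{\mathrm{post}}(\mathbf{x}_1,\mathbf{x}_2)=0$ and, because the two means agree, mean zero, hence it is almost surely zero.

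First I would write down the exact posterior predictive formulas at finite depth. With $K^{(L)}=k^{(L)}(\mathbf{X},\mathbf{X})$ the training Gram matrix and $\mathbf{k}^{(L)}(\mathbf{x})=k^{(L)}(\mathbf{x},\mathbf{X})$ the cross-covariance vector, these read
\begin{align*}
\mu^{(L)}_{\mathrm{post}}(\mathbf{x}) &= \mathbf{k}^{(L)}(\mathbf{x})^\top\big(K^{(L)}+\sigma_n^2 I\big)^{-1}\mathbf{Y}, \\
k^{(L)}_{\mathrm{post}}(\mathbf{x},\mathbf{x}') &= k^{(L)}(\mathbf{x},\mathbf{x}') - \mathbf{k}^{(L)}(\mathbf{x})^\top\big(K^{(L)}+\sigma_n^2 I\big)^{-1}\mathbf{k}^{(L)}(\mathbf{x}').
\end{align*}
Next I would pass to the limit $L\to\infty$ term by term. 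The third hypothesis makes every entry of $K^{(L)}$ converge to a limit $K^{\infty}$, which is positive semidefinite as a limit of positive semidefinite matrices; since $\sigma_n^2>0$, the matrix $K^{\infty}+\sigma_n^2 I$ is strictly positive definite and hence invertible, and by continuity of matrix inversion on invertible matrices $(K^{(L)}+\sigma_n^2 I)^{-1}\to(K^{\infty}+\sigma_n^2 I)^{-1}$. The crucial step uses the second hypothesis: because each training input lies in $\mathcal{X}$, the cross-covariance vectors at $\mathbf{x}_1$ and at $\mathbf{x}_2$ converge to the \emph{same} vector $\mathbf{D}=(D(\mathbf{X}_1),\dots,D(\mathbf{X}_N))^\top$.

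Combining these limits, I expect the posterior means at $\mathbf{x}_1$ and $\mathbf{x}_2$ to converge to the common constant $\mathbf{D}^\top(K^{\infty}+\sigma_n^2 I)^{-1}\mathbf{Y}$, giving a constant limiting mean on $\mathcal{X}_*$, while the first hypothesis ($k^{(L)}(\mathbf{x}_i,\mathbf{x}_j)\to C$ for inputs in $\mathcal{X}_*$) forces each of $k^{(\infty)}_{\mathrm{post}}(\mathbf{x}_1,\mathbf{x}_2)$, $k^{(\infty)}_{\mathrm{post}}(\mathbf{x}_1,\mathbf{x}_1)$ and $k^{(\infty)}_{\mathrm{post}}(\mathbf{x}_2,\mathbf{x}_2)$ to equal $C-\mathbf{D}^\top(K^{\infty}+\sigma_n^2 I)^{-1}\mathbf{D}$, so they coincide. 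As a pointwise limit of valid covariances, $k^{(\infty)}_{\mathrm{post}}$ is positive semidefinite, so the limiting posterior is a bona fide Gaussian process satisfying the degeneracy condition, and I would conclude via the Proposition~\ref{propDegeneratePrior} argument.

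The hard part will be the matrix inverse: one must guarantee that the training Gram matrix stays invertible after the limit, and this is exactly where the assumption $\sigma_n^2>0$ is indispensable---without the regulariser $\sigma_n^2 I$ a degenerate limiting $K^{\infty}$ could be singular. A secondary technical point is justifying that the limit of the finite-depth posteriors is the Gaussian process one analyses; this should follow because any finite collection of evaluations has Gaussian law with mean and covariance converging entrywise, so the limiting finite-dimensional marginals are exactly the Gaussian marginals of $k^{(\infty)}_{\mathrm{post}}$.
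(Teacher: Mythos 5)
Your proposal is correct and follows essentially the same route as the paper's proof: both write the exact finite-depth posterior predictive mean and covariance, pass to the limit entrywise using the hypotheses together with the invertibility of $k^{(L)}(\mathbf{X},\mathbf{X})+\sigma_n^2 I$ guaranteed by $\sigma_n^2>0$ and continuity of matrix inversion, and observe that the cross-covariance vectors at $\mathbf{x}_1$ and $\mathbf{x}_2$ share a common limit $\mathbf{D}$, forcing the difference $f_*(\mathbf{x}_1)-f_*(\mathbf{x}_2)$ to have vanishing mean and variance. The only cosmetic difference is that you first assemble the limiting posterior as a Gaussian process and then invoke the argument of Proposition~\ref{propDegeneratePrior}, whereas the paper applies L\'evy's continuity theorem directly to the sequence of differences; these are the same argument in a different order.
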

The proof is given in Appendix~\ref{app:degenerate}. We are now ready to relate the existence of unique fixed points to degenerate priors and posteriors for some specific examples.

\subsubsection{Example, LReLU}
\label{sec:relu_fixed_point}
Taking $\mathcal{X}_*$ to be any (subset of a) hypersphere in Propositions~\ref{propDegeneratePrior} and~\ref{propDegeneratePosterior}, we have the following result, the proof of which is given in Appendix~\ref{app:degenerate}.

\begin{restatable}{cor}{corDegenerateLReLU}
\label{corDegenerateLReLU}
Let $\mathcal{X}_*$ be any hypersphere. Define a Gaussian process prior with a covariance function corresponding to an infinitely wide MLP with LReLU activations, $\bm{\mu}=\mathbf{0}$, $\sigma_w^2=2$ and depth $L$. Then as $L \to \infty$, draws from the prior and posterior predictive distributions are almost surely constant over $\mathcal{X}_*$.
\end{restatable}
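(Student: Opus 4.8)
The plan is to show that under the stated parameters the normalised kernel on $\mathcal{X}_*$ converges to $1$ as $L\to\infty$, and then feed this into Propositions~\ref{propDegeneratePrior} and~\ref{propDegeneratePosterior}. Since $\mathcal{X}_*$ is a hypersphere, every input shares a common norm, and since LReLU is absolutely homogeneous with $\sigma_b=0$ and $\sigma_w^2=2$ (the norm-preserving He value of \S\ref{sec:norm_preserve}; for slope $a$ one would instead take $\sigma_w^2=2/(1+a^2)$, of which $\sigma_w^2=2$ is the ReLU instance), the map $g_1$ fixes every $s^2$. Hence the self-kernels $k^{(L)}(\mathbf{x},\mathbf{x})$ are constant in $L$ and equal across all $\mathbf{x}\in\mathcal{X}_*$, and the only quantity evolving with depth is the normalised kernel $\cos\theta^{(L)}$, governed by the one-dimensional map $g_3(s^2,s^2,\cdot)$.

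The core of the argument is the fixed-point behaviour of $g_3$. First I would record, as in the paper, that $g_3$ always fixes $\rho=1$. Then, using Corollary~\ref{cor:abs_hom_fp} together with the ReLU orthant probability $\mathbb{P}(Z_1>0,Z_2>0)=\tfrac14+\tfrac{1}{2\pi}\arcsin\rho$ and $\mathbb{E}[\psi^2(Z_1)]=\tfrac12$, I would evaluate $\lambda_3=g_3'(\rho)=\tfrac12+\tfrac1\pi\arcsin\rho$. This is positive, continuous and strictly increasing, with $\lambda_3<1$ for $\rho\in(-1,1)$ and $\lambda_3=1$ exactly at $\rho=1$. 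From $g_3'>0$ the map is increasing, and since $g_3(\rho)-\rho=\int_\rho^1\big(1-g_3'(t)\big)\,dt>0$ for every $\rho<1$ while $g_3(\rho)<g_3(1)=1$, the iterates $\cos\theta^{(L)}$ are monotonically increasing and bounded above by $1$; they therefore converge to the unique fixed point $\rho=1$ from any starting angle.

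The main obstacle is precisely that $\lambda_3=1$ at the fixed point, so Theorem~\ref{thm:banach} supplies no uniform contraction constant $\lambda<1$ on $(-1,1)$ and cannot be applied at the endpoint; the monotone-convergence argument above is what circumvents this neutral, sub-geometric fixed point. I would flag this explicitly, since the neutrality is exactly what distinguishes ReLU from the GELU and ELU kernels studied elsewhere in the paper.

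Finally I would assemble the degeneracy statements. For the prior, on $\mathcal{X}_*$ we have $k^{(L)}(\mathbf{x}_1,\mathbf{x}_1)=k^{(L)}(\mathbf{x}_2,\mathbf{x}_2)=C$ for all $L$ and $k^{(L)}(\mathbf{x}_1,\mathbf{x}_2)=C\cos\theta^{(L)}\to C$, which verifies the hypothesis of Proposition~\ref{propDegeneratePrior} and yields almost-sure constancy. For the posterior I must also control cross-kernels with arbitrary $\mathbf{x}_3,\mathbf{x}_4\in\mathcal{X}$: norm preservation gives $k^{(L)}(\mathbf{x}_3,\mathbf{x}_3)\to\|\mathbf{x}_3\|^2=:C_3$, the same fixed-point argument forces the normalised kernel between $\mathbf{x}_1\in\mathcal{X}_*$ and $\mathbf{x}_3$ to $1$, so $k^{(L)}(\mathbf{x}_1,\mathbf{x}_3)\to\sqrt{C\,C_3}=:D(\mathbf{x}_3)$, which depends only on $\mathbf{x}_3$ because all points of $\mathcal{X}_*$ share the norm $\sqrt{C}$, and likewise $k^{(L)}(\mathbf{x}_3,\mathbf{x}_4)\to\sqrt{C_3 C_4}<\infty$ exists. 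With all three hypotheses of Proposition~\ref{propDegeneratePosterior} verified, the limiting posterior predictive over $\mathcal{X}_*$ is almost surely constant, completing the proof.
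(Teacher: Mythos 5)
Your proposal is correct, and at the level of overall strategy it mirrors the paper's proof: fix the norm via \S~\ref{sec:norm_preserve} so that $g_1$ fixes $s^2$, compute $\lambda_3$ for the absolutely homogeneous case, show the normalised kernel converges to $1$ under iteration of $g_3$, and then verify the hypotheses of Propositions~\ref{propDegeneratePrior} and~\ref{propDegeneratePosterior}, including the cross-kernel limits $D(\mathbf{x}_3)$. Where you genuinely diverge is the central convergence step. The paper computes $\mathbb{E}[\psi'(Z_1)\psi'(Z_2)]$ and $\mathbb{E}[\psi^2(Z_1)]$ for general LReLU slope $a$, notes $\lambda_3<1$, and then cites Corollary~\ref{cor:abs_hom_fp} (hence Theorem~\ref{thm:banach}) both for uniqueness of the fixed point at $\rho=1$ and for convergence of the iterates. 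As you observe, this appeal is delicate: Theorem~\ref{thm:banach} requires a \emph{uniform} bound $\Vert D\mathbf{g}\Vert'\le\lambda<1$, whereas for (L)ReLU $\lambda_3(\rho)\to 1$ as $\rho\to 1$, so no such $\lambda$ exists on $(-1,1)$, nor on any interval whose closure contains $1$; the fixed point is neutral. Your replacement argument --- $g_3$ is increasing, $g_3(\rho)-\rho=\int_\rho^1\big(1-g_3'(t)\big)\,dt>0$ for $\rho<1$, so the iterates increase monotonically and converge to the unique fixed point $\rho=1$ --- delivers exactly what the citation of Corollary~\ref{cor:abs_hom_fp} was meant to deliver, without any contraction constant, and correctly reflects that convergence here is sub-geometric rather than the geometric rate Banach would assert. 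In this sense your route repairs a real gap in the paper's own chain of reasoning, at the cost of specialising the computation of $\lambda_3$ to $a=0$ (the paper carries general $a$); your remark that the stated $\sigma_w^2=2$ is norm-preserving only for $a=0$, with general slope requiring $\sigma_w^2=2/(1+a^2)$, is also a point the paper's proof glosses over when it asserts that $s^2$ is a fixed point of $g_1$. Finally, your explicit verification of the third hypothesis of Proposition~\ref{propDegeneratePosterior} (existence of $\lim_{L\to\infty}k^{(L)}(\mathbf{x}_3,\mathbf{x}_4)$) is left implicit in the paper; including it makes the application of that proposition airtight.
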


\subsubsection{Examples, GELU and ELU}
\label{sec:gelu_elu_eigs}
In contrast with LReLU activations, such a degeneracy guarantee does not exist for GELU and ELU activations. For both the GELU and ELU, we consider the dynamics on a ball of constant $ \Vert \mathbf{x} \Vert$, where $\sigma_w$ is chosen such that $g_1 = \Vert \mathbf{x} \Vert$. In Figure~\ref{fig:gelu_elu_roots}, for different values of $\Vert \mathbf{x} \Vert$ in the context of Corollary~\ref{cor:g3_only}, we evaluate a lower bound for $\lambda_3$ in the case of GELU and $\lambda_3$ exactly in the case of ELU. Full working is given in Appendices~\ref{app:gelu_eigen} and~\ref{app:elu_eigen}. We observe that each exceeds $1$ \emph{at some point on the} (but not over the whole) interval, and is therefore not a contraction mapping and hence not guaranteed to have a unique fixed point. This is consistent with Figures~\ref{fig:gelu_kernel} and~\ref{fig:elu_kernel}, where fixed points are shown by intersecting curves.

\section{Extension of theoretical results to NTK}
\label{sec:ntk_jacob}
We may also study kernel fixed points of infinitely wide neural networks trained under gradient flow. This amounts to studying the fixed point properties of the neural tangent kernel (NTK). If such a unique fixed point exists, this implies that the functions obtained by applying gradient flow to an infinitely wide, infinitely deep MLP are also degenerate. In this section, we briefly sketch how such a result may be obtained. We leave the presentation of formal results and empirical evaluations for future work.

Informally, the value of the eigenvalue $\lambda_3$ can still predict the fixed point behaviour of the NTK. Formally, the result is slightly more involved, see Appendix~\ref{app:extension_ntk}. Consider a state space $\mathcal{S}\subset \mathbb{R}^4$ containing states $(s_1^2, s_2^2, k, T)$ consisting of squared norms for both inputs, the kernel, and the NTK. We update the states through $\mathbf{h}:\mathcal{S} \to \mathcal{S}$:
\allowdisplaybreaks
\begin{align*}
h_i(s_1^2, s_2^2, k, T) &= \sigma_w^2\mathbb{E}\big[ \psi^2( s_i Z_i) \big] + \sigma_b^2, \quad i=i,2\\
h_3(s_1^2, s_2^2, k, T) &= \sigma_w^2\mathbb{E}\big[ \psi( s_1 Z_1) \psi( s_2 Z_2) \big] + \sigma_b^2 , \\
h_4(s_1^2, s_2^2, k, T) &= T\sigma_w^2 \mathbb{E}\big[ \psi'( s_1 Z_1) \psi'( s_2 Z_2) \big] + \\
&\phantom{{}={}} \sigma_w^2\mathbb{E}\big[ \psi( s_1 Z_1) \psi( s_2 Z_2) \big] + \sigma_b^2, 
\end{align*} 
where $\text{Cov}(Z_1, Z_2) = k/(s_1s_2)$, $\mathbb{E}[Z_1]=\mathbb{E}[Z_2]=0$. As in Corollary~\ref{cor:g3_only}, if a fixed point of the system only involving $s_1$ and $h_1$ exists at a value $\sigma_w=\sigma^*$ and $s_1=s_2=s$, then the system reduces to a $2$ dimensional update involving only $h_3$ and $h_4$ along $s_1=s_2=s$. The Jacobian is diagonal,
\begin{align*}
J = \begin{pmatrix}
\frac{\partial h_3}{\partial k}  & \frac{\partial h_3}{\partial T} \\
\frac{\partial h_4}{\partial k}  & \frac{\partial h_4}{\partial T}
\end{pmatrix} &= \begin{pmatrix}
\frac{\partial h_3}{\partial k}  & 0 \\
\frac{\partial h_4}{\partial k}  & \frac{\partial h_4}{\partial T}
\end{pmatrix},
\end{align*}
so the eigenvalues are  $\frac{\partial h_3}{\partial k}$ and $\frac{\partial h_4}{\partial T}$. By Theorem~\ref{thm:jacobian}, 
\begin{align*}
&\frac{\partial h_3}{\partial k}  = \frac{\partial h_3 }{\partial \rho }\Big( \frac{\partial k}{\partial \rho} \Big)^{-1} = \frac{\partial g_3 }{\partial \rho }\sqrt{h_1 h_2} (s_1 s_2)^{-1}  \\
&= (\sigma^*)^2 \mathbb{E}\big[ \psi'( s_1 Z_1) \psi'( s_2 Z_2) \big] = \frac{\partial h_4}{\partial T}=\lambda_3.
\end{align*}

\section{Gaussian process experiments}
\label{sec:experiments}

We perform two sets of experiments. In the first, we investigate the performance of GPs with various neural network kernels. In the second, we observe the degree of implicit regularisation that is obtained using GPs of finite depth.

\subsection{Benchmarking} 
We provide a software implementation of our new kernels. To demonstrate usage of our covariance functions, first we compare the performance of GP regression models using ReLU, LReLU, ERF and GELU kernels on a popular Bayesian deep learning benchmark~\cite{HernandezLobato2015}. The purpose of these experiments is not to showcase the superiority of one prior over another, but rather to provide a sample implementation. This implementation has already been ported over to another framework in concurrent work by others~\citep{neuraltangents2020}. The ELU kernel was not included in our experiments (see \S~\ref{sec:new_kernel}). We perform separate experiments on shallow models having $1$ hidden layer and deep models having up to $32$ hidden layers. All data was standardised to have mean $0$ and variance $1$. 

\begin{figure}[t]
\centering
	 \includegraphics[scale=0.49]{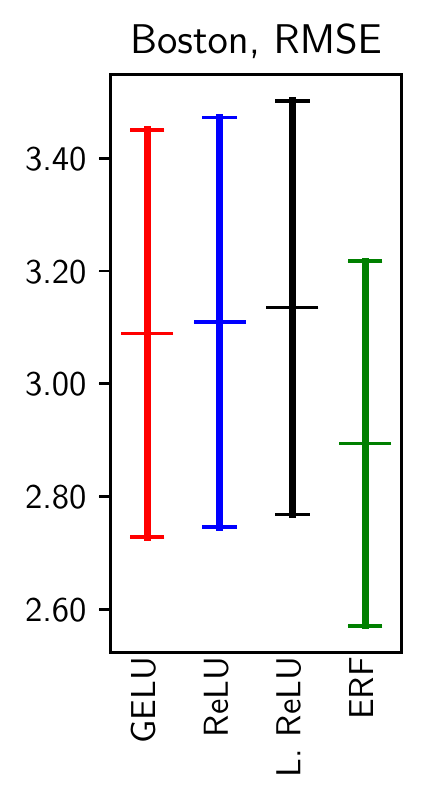}
	 \includegraphics[scale=0.49]{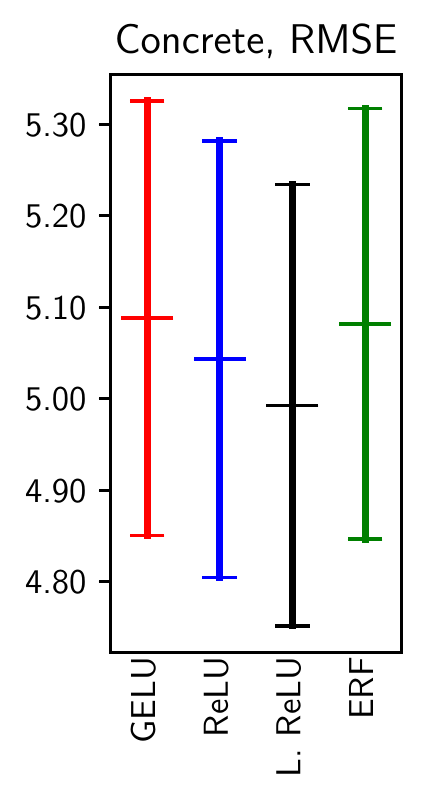}
	 \includegraphics[scale=0.49]{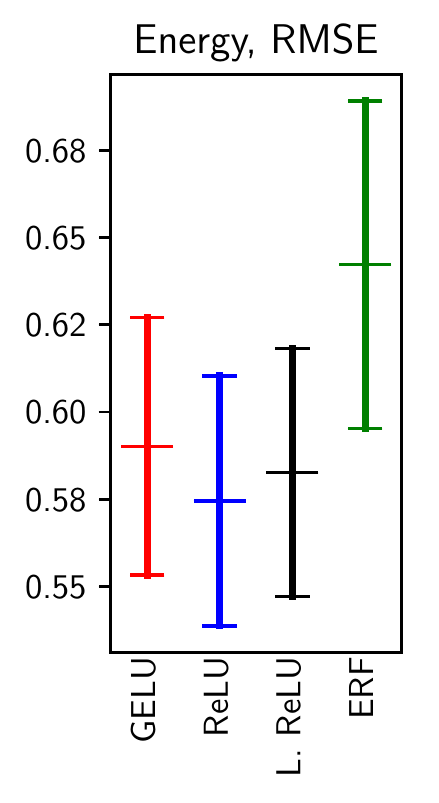} \\
	 \includegraphics[scale=0.49]{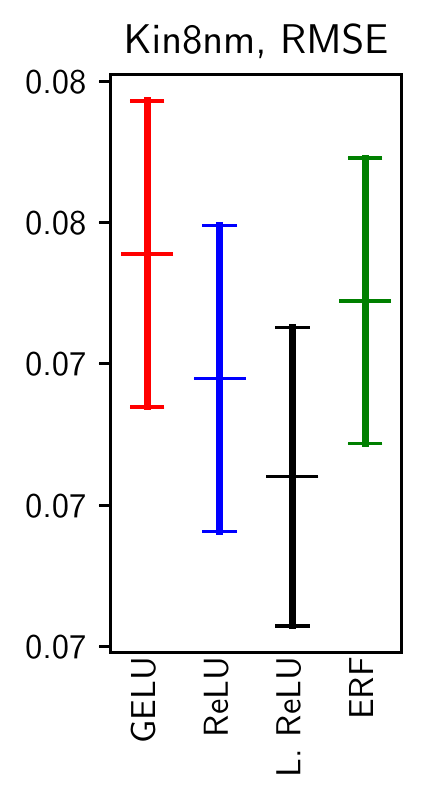}\hspace{-0.06in}
	 \includegraphics[scale=0.49]{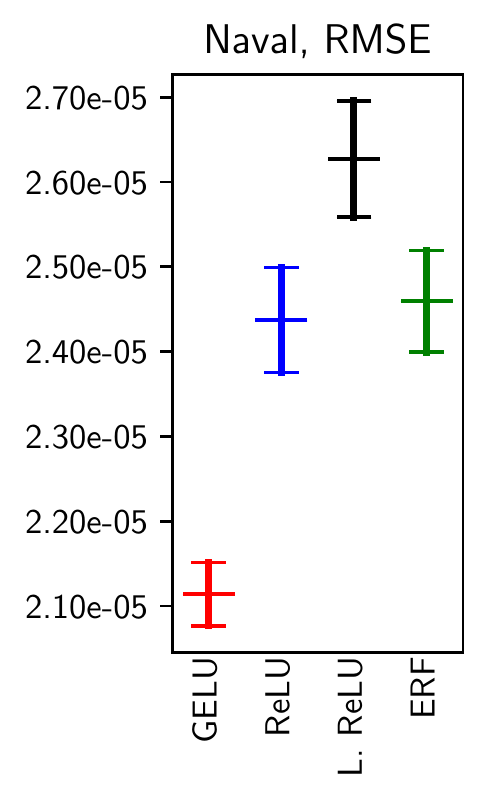}\hspace{-0.06in}
	 \includegraphics[scale=0.49]{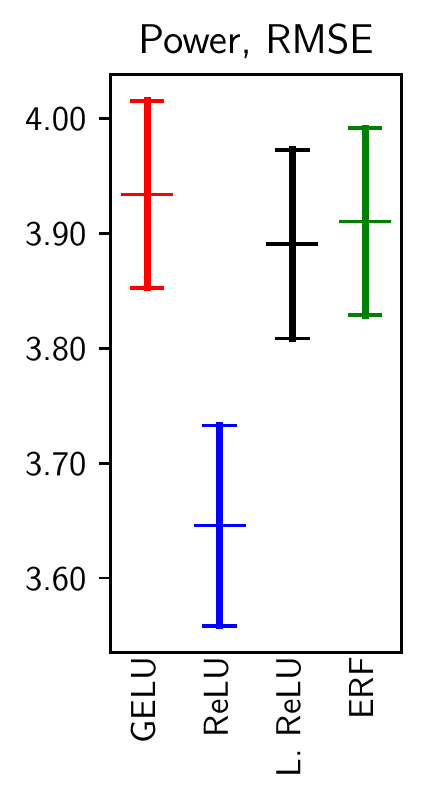} \\
	 \includegraphics[scale=0.49]{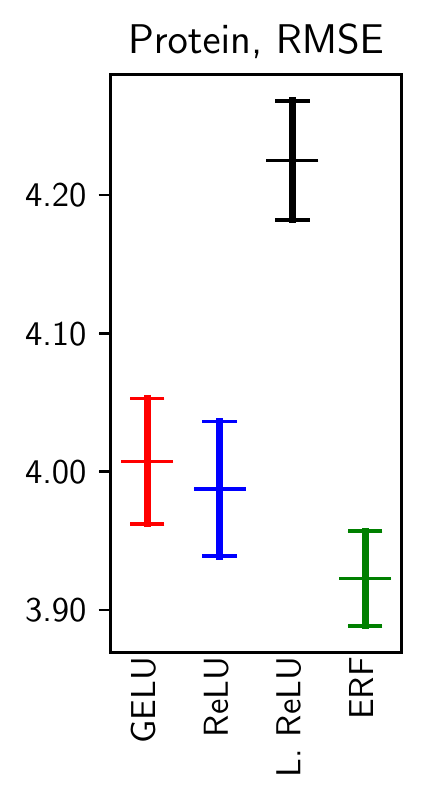}
	 \includegraphics[scale=0.49]{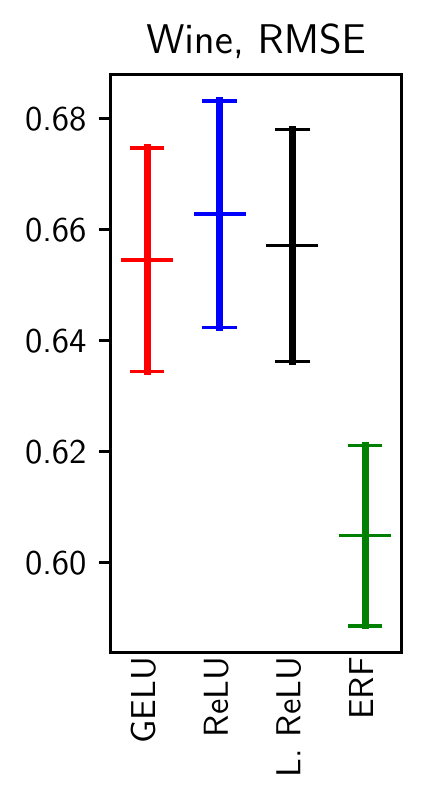}
	 \includegraphics[scale=0.49]{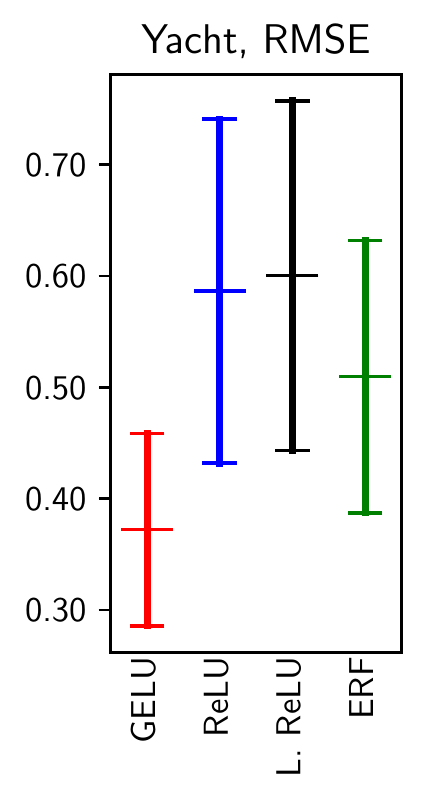}
\caption{RMSE for equivalent single-hidden-layer GPs. Mean $\pm 2$ standard errors (over 20 runs).}
\label{fig:bench_regression_RMSE}
\end{figure}

\begin{figure}[t]
\centering
\includegraphics[scale=0.35]{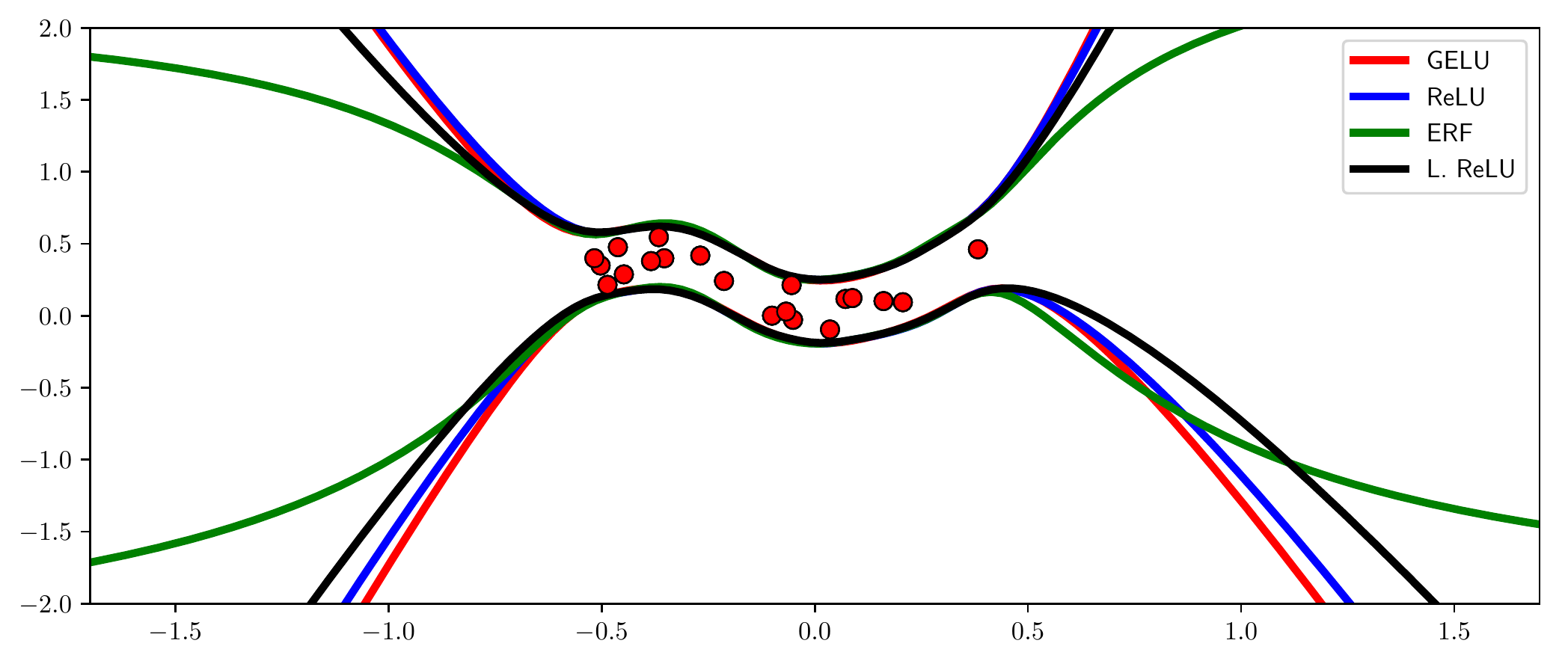}
\caption{Posterior predictive $\pm 2$ standard deviations.}
\label{fig:bench_regression_toy}
\end{figure}

\begin{figure}[t]
\centering
\begin{minipage}{0.9\columnwidth}
\includegraphics[scale=0.2]{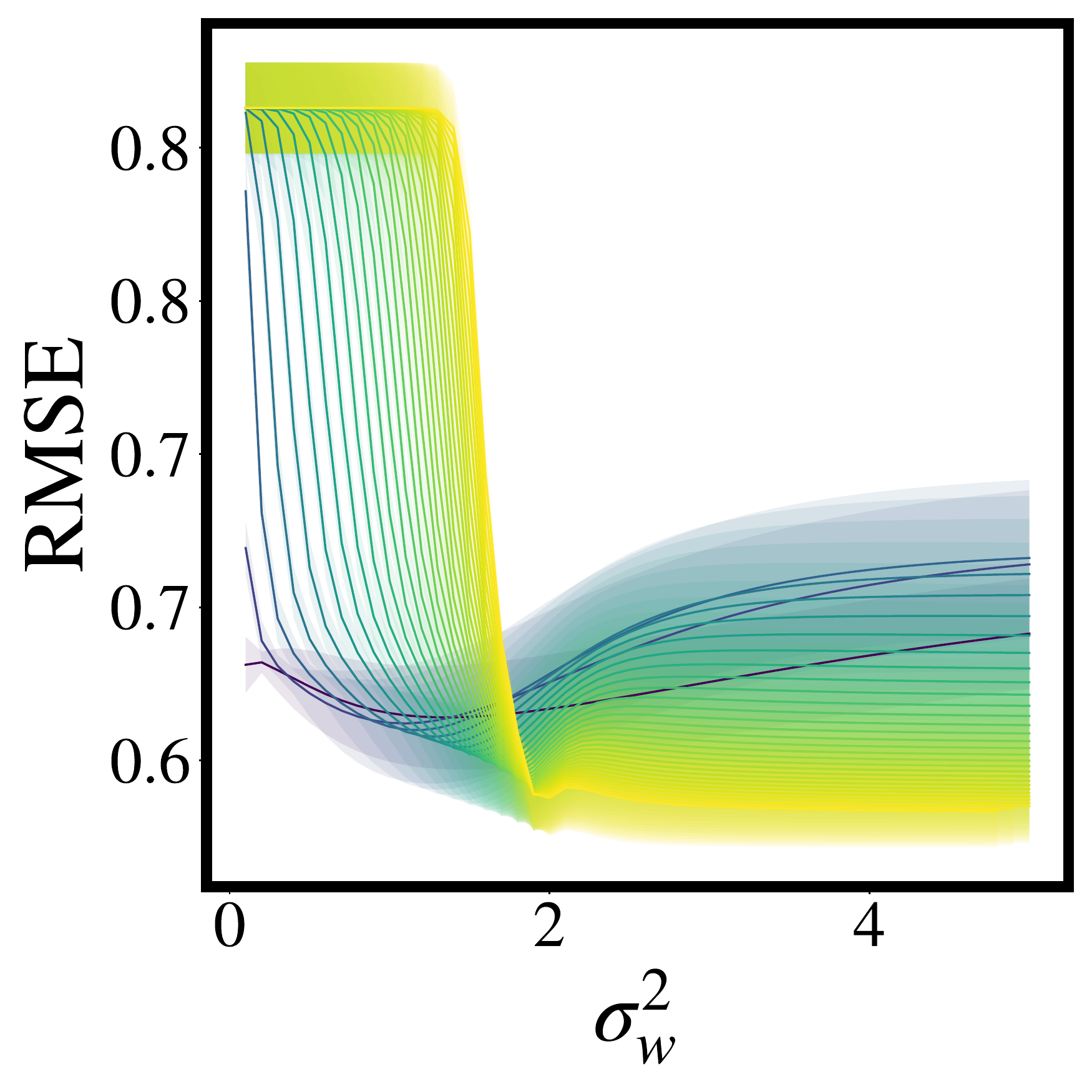} 
\includegraphics[scale=0.2]{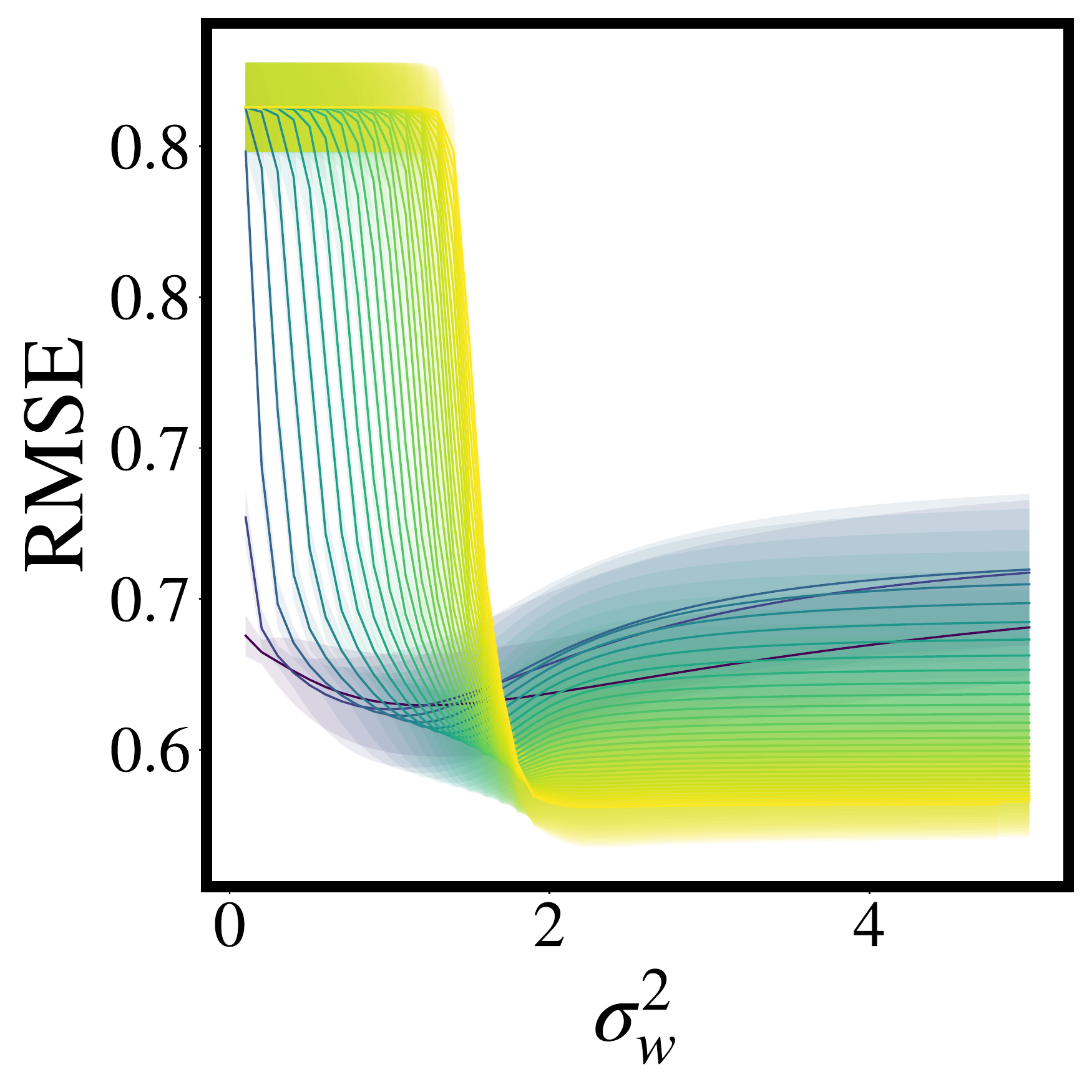}  \\
\includegraphics[scale=0.2]{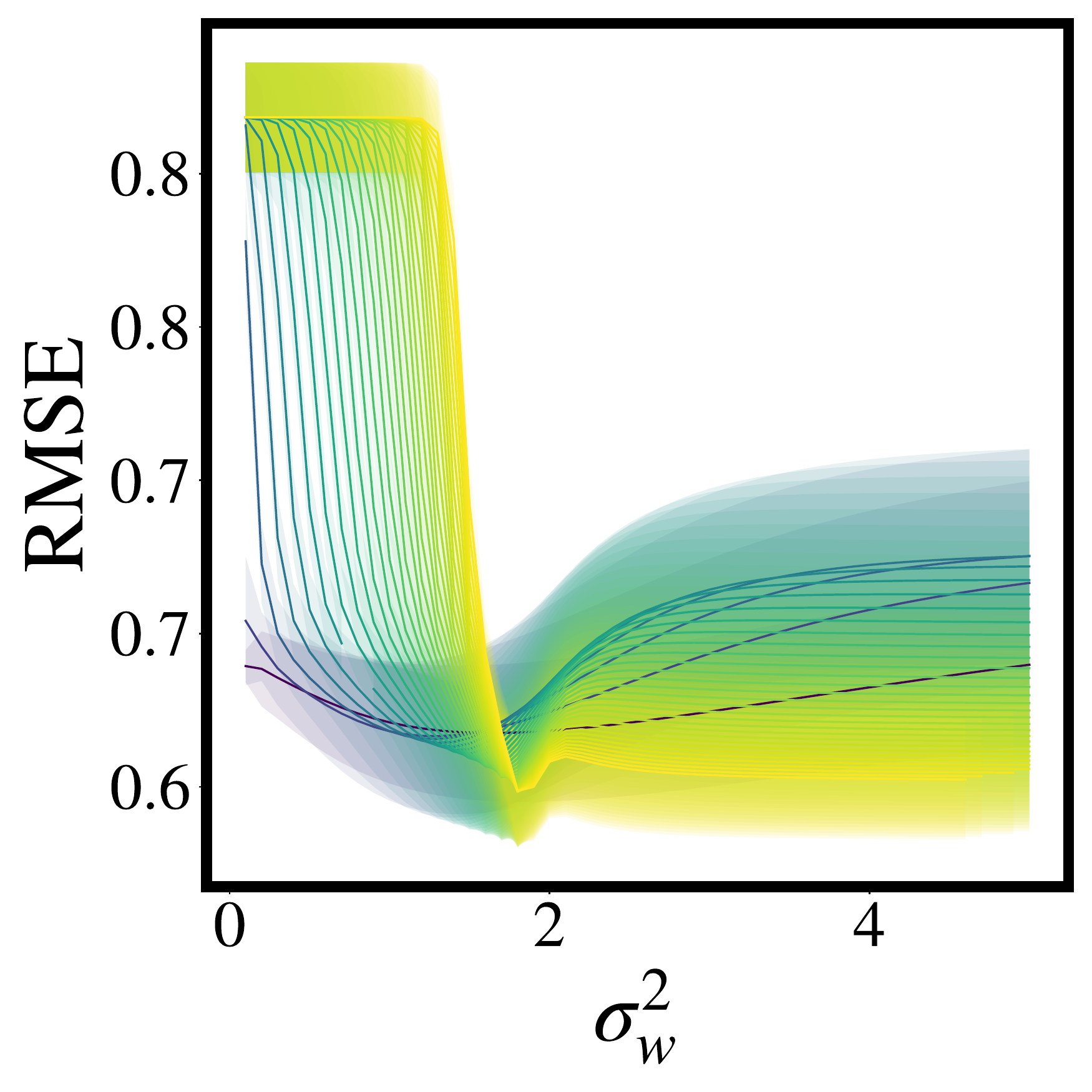} 
\includegraphics[scale=0.2]{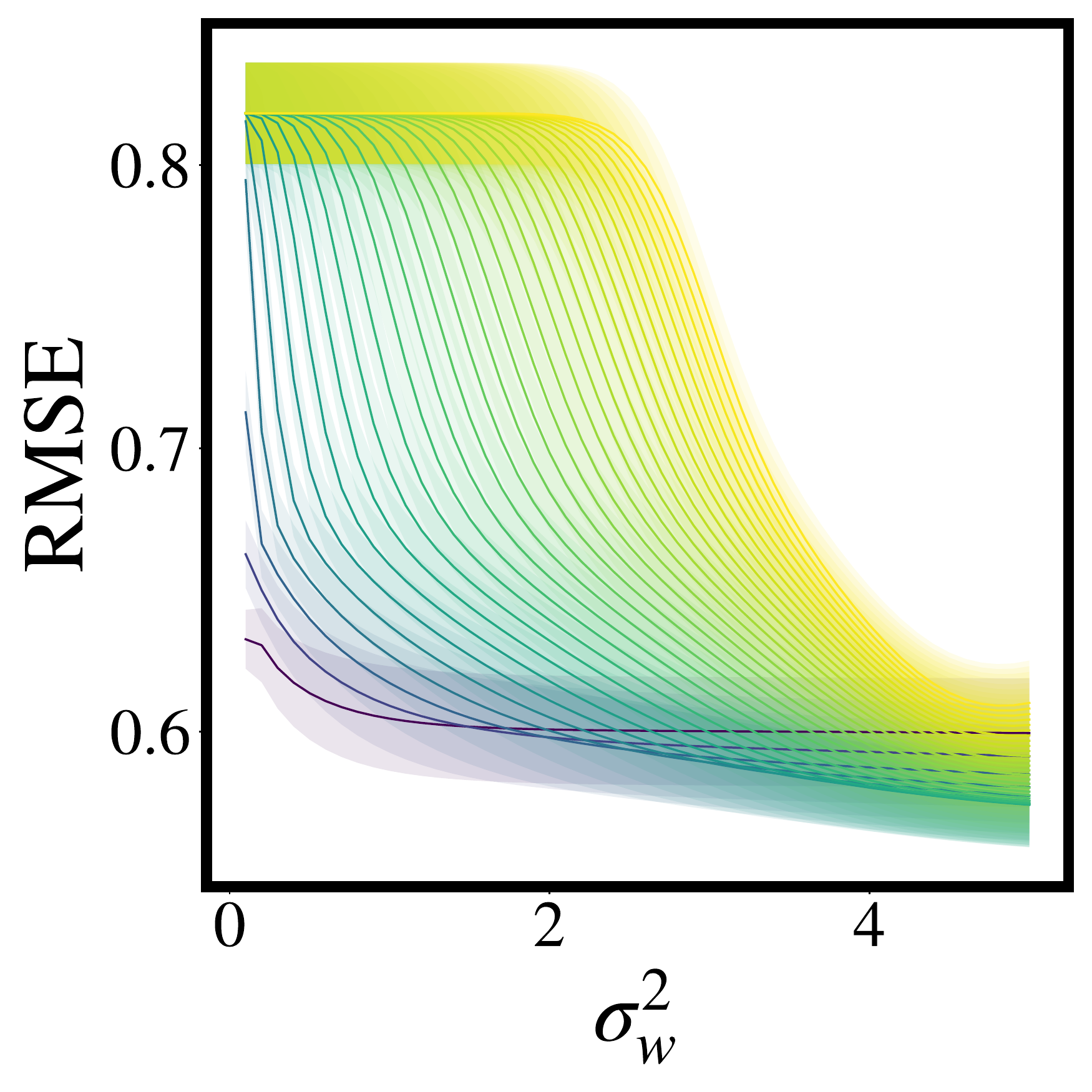} 
\end{minipage}
\begin{minipage}{0.09\columnwidth}
\hspace{-0.5cm}\includegraphics[scale=0.17]{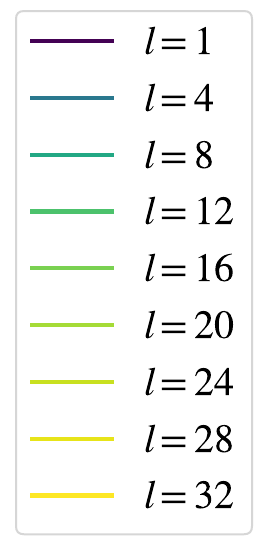}
\end{minipage}
\caption{ RMSE against $\sigma_w^2$ for equivalent $l$ layer GPs, Wine dataset. Shaded region shows $\pm 1$ standard deviation. (Clockwise from top left) ReLU, GELU, LReLU, ERF.}
\label{fig:bench_regression_deep}
\end{figure}

\begin{table*}
\centering
\caption{Best performing models for each kernel over the grid search.}
\resizebox{\textwidth}{!}{
\begin{tabular}{l|ccc|ccc|ccc|ccc}
& &ReLU & &&GELU & && LReLU && & ERF & \\
         & RMSE         & $\sigma_w^2$ & $\ell$  & RMSE         & $\sigma_w^2$ & $\ell$ & RMSE         & $\sigma_w^2$ & $\ell$ & RMSE         & $\sigma_w^2$ & $\ell$\\ \hline
Boston & $ 2.85 \pm 0.64 $ & $ 1.90 $ & $ 7 $ & $ 2.86 \pm 0.65 $ & $ 1.80 $ & $ 6 $ & $ \mathbf{2.60 \pm 1.07} $ & $ \mathbf{2.00} $ & $ \mathbf{32} $ & $ 2.69 \pm 0.95 $ & $ 5.00 $ & $ 2 $ \\
Concrete & $ 5.22 \pm 0.55 $ & $ 5.00 $ & $ 2 $ & $ \mathbf{5.21 \pm 0.56} $ & $ \mathbf{5.00} $ & $ \mathbf{2} $ & $ 5.23 \pm 0.44 $ & $ 3.30 $ & $ 3 $ & $ 5.63 \pm 0.46 $ & $ 5.00 $ & $ 2 $ \\
Energy & $ \mathbf{0.89 \pm 0.11} $ & $ \mathbf{5.00} $ & $ \mathbf{2} $ & $ 0.92 \pm 0.12 $ & $ 5.00 $ & $ 2 $ & $ 2.77 \pm 0.31 $ & $ 0.10 $ & $ 1 $ & $ 2.79 \pm 0.23 $ & $ 0.10 $ & $ 1 $  \\
Wine & $ 1.15 \pm 0.13 $ & $ 5.00 $ & $ 4 $ & $ 1.17 \pm 0.14 $ & $ 5.00 $ & $ 4 $ & $ \mathbf{1.04 \pm 0.12} $ & $\mathbf{ 5.00} $ & $ \mathbf{5} $ & $ 3.96 \pm 0.75 $ & $ 5.00 $ & $ 1 $ \\
Yacht & $ 0.58 \pm 0.01 $ & $ 4.80 $ & $ 32 $ & $ 0.58 \pm 0.01 $ & $ 2.30 $ & $ 32 $ & $ 0.60 \pm 0.02 $ & $ 1.80 $ & $ 29 $ & $ \mathbf{0.57 \pm 0.02} $ & $ \mathbf{5.00} $ & $ \mathbf{8} $
\end{tabular}}
\label{tab:deep}
\end{table*}
\normalsize 

\paragraph{Shallow models.} \emph{Do differences in priors induced by the various activation functions affect empirical performance?} Using the limiting GP allows us to remove the interaction between $\psi$ and optimisation, and purely consider the effect of $\psi$ on the functional prior. Figure \ref{fig:bench_regression_toy} shows the predictive distribution of GPs with GELU, ReLU, LReLU and ERF kernels on a toy regression task. ERF has different extrapolation properties due to being a bounded activation, whilst the others appear qualitatively similar, though with extrapolation variance decreasing in the order GELU/ReLU/LReLU.

Figure~\ref{fig:bench_regression_RMSE} shows benchmark results for single-hidden-layer GPs using a 90\%/10\% training/test split. See Appendix \ref{sec:appendix_shallow_models} for more details and plots. All kernels perform comparably; gains can be made by selecting a kernel suited to the dataset. Results are most different for ERF --- either negatively (Concrete, Energy) or positively (Boston, Protein, Wine). Differences are observed between GELU/ReLU/LReLU. For example, GELU offers an advantage in Naval and Yacht, and LReLU performs poorly on Protein. 

None of the kernels consistently outperform the others. This is expected behaviour, similar to how a Matern kernel might outperform a squared exponential kernel only some of the time on real-world datasets. The purpose of the experiment was to evaluate whether different $\psi$ result in a strong enough difference in priors that empirical performance differences can be observed. Having answered in the positive, we posit that for finite-width networks, the difference in performance found by varying $\psi$ may partially derive from differences in the induced prior. This is in contrast to previously cited reasons such as bias shift and its relation to natural gradient~\cite{clevert2015fast}.

\paragraph{Deep models. }\emph{How does the performance of models vary with depth?}
We randomly shuffled the data into an $80/20\%$ train/test split $5$ times. For each split, we ran GP regression with an additive iid Gaussian noise model having variance fixed at $0.1$. We varied the depth $\ell\in [1,32]$ in steps of $1$ and the weight and bias variances (which were constrained to be equal in each layer) $\sigma_w^2\in [0.1, 5]$ in steps of $0.1$. For each setting, we measured the RMSE between the mean of the GP prediction and the true regression targets.
\interfootnotelinepenalty=10000
Figure~\ref{fig:bench_regression_deep} shows the average RMSE on the Wine dataset over $5$ shuffles. Other datasets are given in Appendix~\ref{app:deep}. We make two qualitative observations. Firstly, $\ell>1$ models out-perform $\ell=1$ models. Secondly, the RMSE changes smoothly in both depth and $\sigma_w^2$. The visual smoothness is \emph{not} due to averaging over $5$ trials; smoothness is also observed when we plot results from only $1$ random shuffling. Table~\ref{tab:deep} shows the best models obtained over the grid search for each kernel.

\subsection{Overfitting and underfitting}
We empirically investigate the relationship between depth and training/testing error. When the covariance function has a unique fixed point, we expect to see underfitting at large depth, since large depth will push the kernel towards the unique fixed point, that is, a constant normalised kernel. On the other hand, when the covariance function does not have a unique fixed point, we might expect to see overfitting as model complexity may increase with depth. 

We build predictor variables of a training dataset by uniformly sampling $\mathbf{x} \in \mathbb{R}^2$ on the unit disc at heading $\gamma$. We then sample training targets through the mapping $y=f(\gamma) + \epsilon$ for a number of different choices of $f$, where $\epsilon$ is additive Gaussian noise with variance fixed at $0.1$. We find the posterior predictive mean of a Gaussian process with iterated GELU and ReLU covariance functions of depth $L$ between $1$ and $100$ with a choice of $\sigma_w$ according to Figure~\ref{fig:gelu_elu_roots}. We repeat this process with a new random training set $10$ times. Each repetition, we find the mean-squared error (MSE) between the posterior mean of the Gaussian process over the training set and a test set built from a (deterministic) uniform grid of size $100$. Figure~\ref{fig:test_train_overfit} shows the resulting train and test errors on one choice of $f$, and Appendix~\ref{app:train_test_curves} shows other choices of $f$. Figure~\ref{fig:simplicity} shows a more direct illustration of the function fit on one of the random data samples, with other choices of $f$ shown in Appendix~\ref{app:train_test_curves}.

\begin{figure}[t]
    \centering
    \includegraphics[scale=0.25]{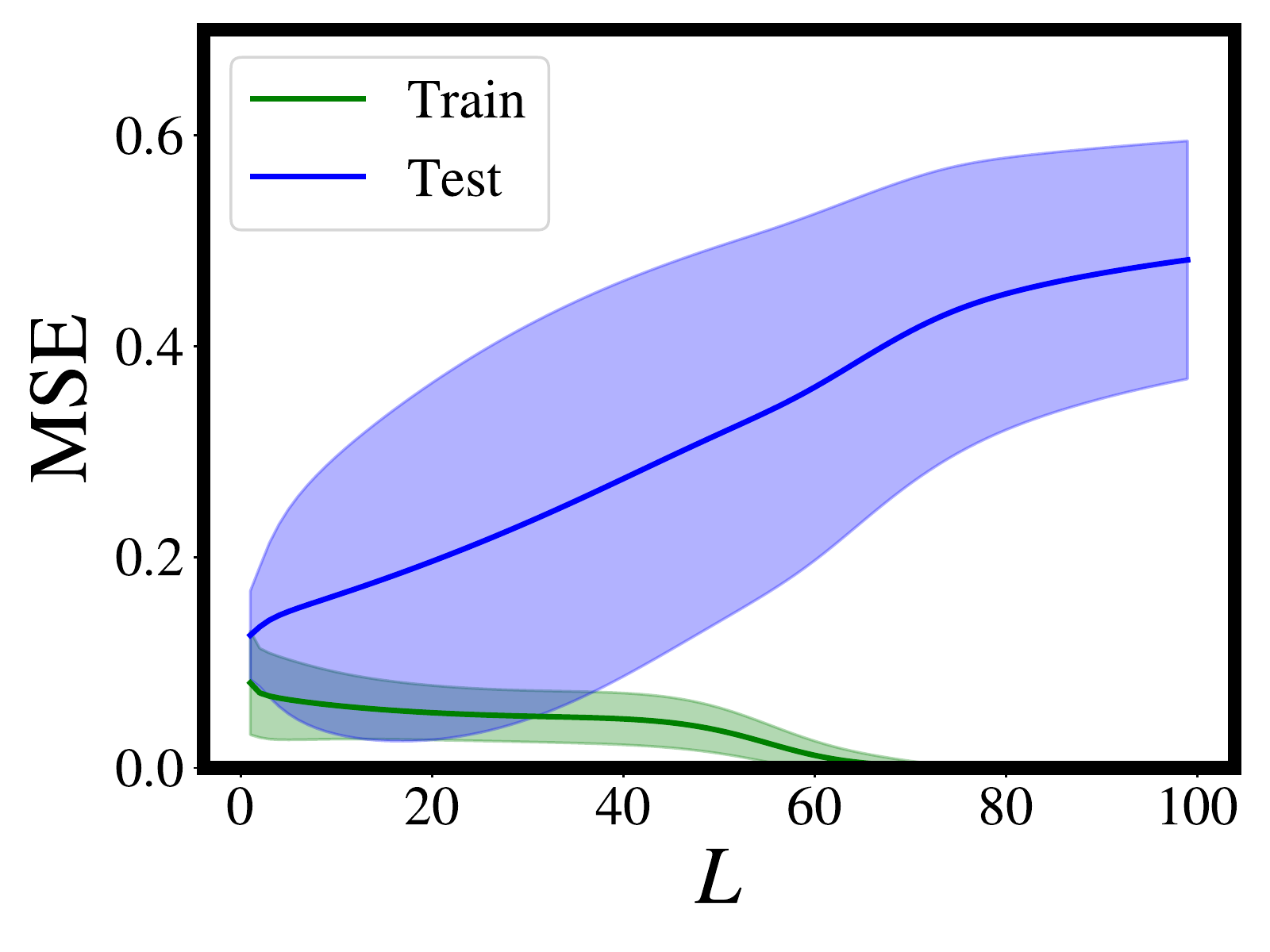}
    \includegraphics[scale=0.25]{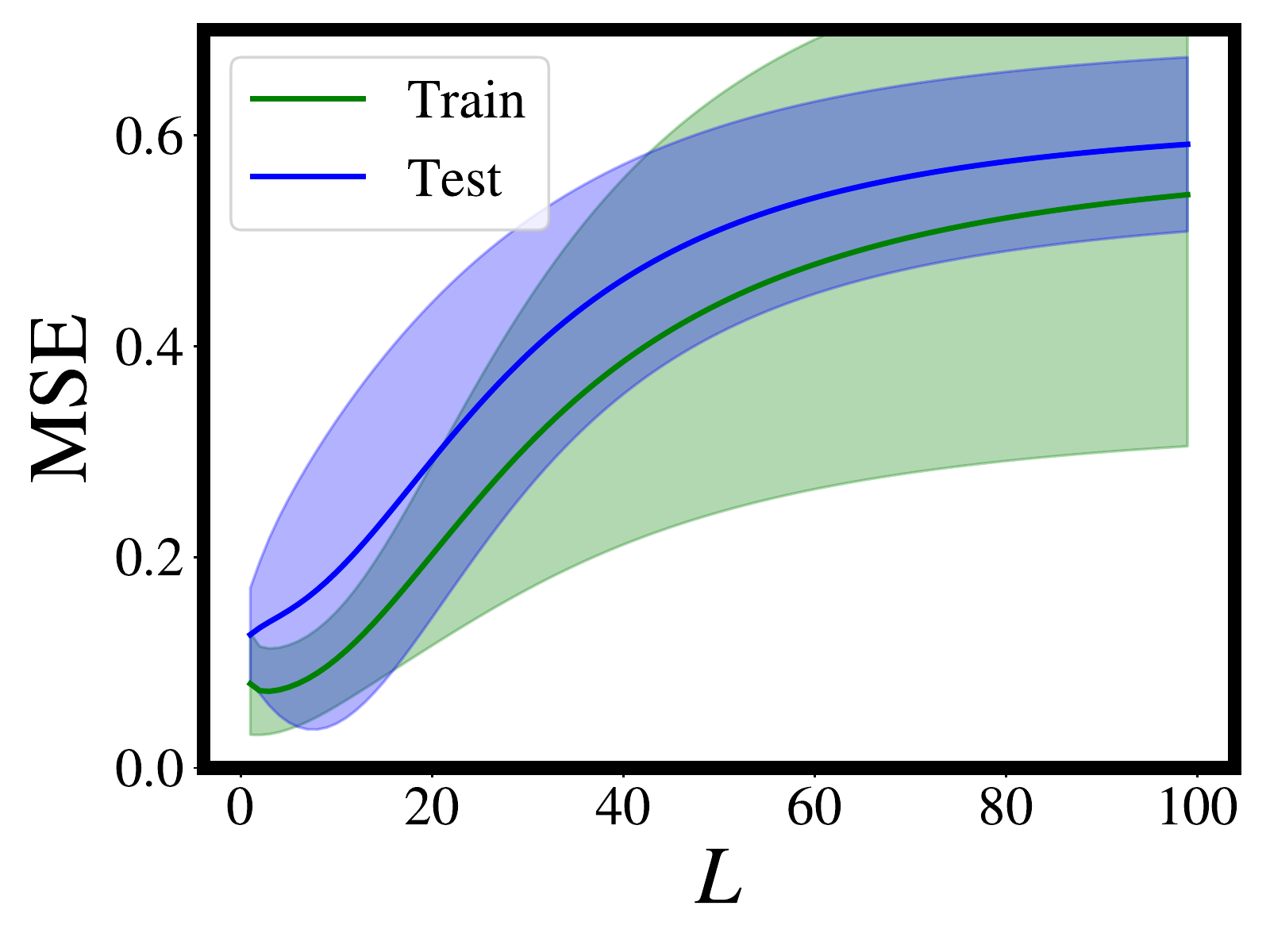}
    \caption{Training and testing errors for GPs with covariance functions corresponding to infinitely wide MLPs of increasing depth $L$ using the same example as in Figure~\ref{fig:simplicity}. Solid curve shows the mean over $10$ training data samples, and the shaded region shows $\pm$ two standard deviations. By examining whether the training and testing error increases or decreases with depth, we observe that the GELU and ReLU respectively overfit and underfit with depth. See Appendix~\ref{app:train_test_curves} for more error curves. (Left) GELU (Right) ReLU. }
    \label{fig:test_train_overfit}
\end{figure}

\section{Discussion and conclusion}
We introduced two new positive semi-definite kernels arising from the infinite-width limit of Bayesian GELU or ELU NNs. We provided visualisations of these kernels for varying depths. We  introduced a general framework for understanding the fixed-point dynamics of such kernels and their NTK counterparts. Using this framework, we showed that unlike the ReLU, the GELU and ELU kernels are able to avoid unique fixed points. We empirically verified that finite-width NNs are able to avoid unique kernel fixed points in Figures~\ref{fig:gelu_kernel} and~\ref{fig:elu_kernel}. We applied our kernels in the setting of shallow and deep GP regression, finding that for some problems specific kernels are more appropriate, and that the GELU kernel is competitive with the ReLU kernel.

Investigations into implicit regularisation consider the role of one or all of (a) the architecture, (b) the learning algorithm and (c) the data sampling process. \citet{neyshabur2014search} argue that (b) leads implicitly to low-norm solutions, explaining the generalisation ability of deep NNs. On the other hand,~\citet{derezinski2019exact} construct a data sampling distribution (c) that explains double descent and implicit regularisation in linear models. Similar to our work but not considering the NTK, the signal propagation literature~\cite{schoenholz2016deep, NIPS2016_6322} explains simplicity biases in randomly initialised networks (a). They develop objects similar to $\frac{\partial g_3}{\partial \rho}$, but require bounded activations, and seem to also require some notion of differentiability. Our analysis considers (a) and (b) but not (c). We have recently been made aware of the concurrent work of~\cite{huang2020deep}, who also study the degeneracy of processes induced through ReLU activations. Their focus is on both (a) and (b), arguing that the NTK for residual architectures does not suffer from this degeneracy.

Knowing the gradient of the kernel with respect to $(\sigma_w^{(l)})^2$ and $(\sigma_b^{(l)})^2$ is useful for both empirical Bayesian methodologies (e.g. optimising the marginal likelihood using LBFGS) and hierarchical models (e.g. using HMC to integrate out the hyperprior). As we detail in Appendix~\ref{app:chainrule}, our results may be used to find this gradient. Theorem~\ref{thm:jacobian} provides $7$ of the $9$ elements of the Jacobian. The other $2$ can only easily be evaluated in special cases (e.g. LReLU results in a diagonal Jacobian). In future work, it may be interesting to extend Theorem~\ref{thm:jacobian} to cover the remaining elements of the Jacobian.

While~\citet{lee2019wide} found close agreement between NNs and their corresponding limiting GPs, several authors~\cite{neal1995bayesian, mackay2003information, der2006beyond, matthews2018gaussian, chizat2019lazy, tsuchida2019richer, allen2019learning, favaro2020stable, aitchison2020bigger} have argued against the use of GP models as a means to understand the success of deep learning. If deep learning's performance can be explained using GPs, why do NN models outperform their limiting GP counterparts?~\citet{arora2019exact} attain $77\%$ test accuracy on CIFAR10 using a limiting GP arising from a trained CNN, while a ResNet is able to achieve $96\%$~\cite{springenberg2015striving}. On the other hand,~\citet{arora2019harnessing} find that GP models are competitive on \emph{small} datasets. It remains to determine if this difference in performance is due to the tricks for which equivalence in the GP setting have not yet been fully explored, or if it is the result of some deeper property of GPs.~\citet{lee2020finite} empirically explore some of these questions including the effects of finite width, architectures, weight decay and the performance difference between infinite Bayesian and NTK models. While we acknowledge the limitations of the infinitely wide approach, we believe it warrants further exploration, if not to understand the power of deep learning, at least to investigate its generalisation abilities. The purpose of our study was not to optimise any architecture for performance on a particular problem, but rather to develop results under the GP framework that contribute to our understanding of generalisation in the overparameterised setting.

\section*{Acknowledgements}
This work was partially funded by CSIRO's Machine Learning and Artificial Intelligence Future Science Platform. TP was funded through EPSRC (EP/N509620/1). CvdH was supported by ACEMS under ARC grant number CE140100049. We would like to thank Bob Williamson for a helpful discussion.

\begin{quote}
\begin{small}
\bibliography{main}
\end{small}
\end{quote}
\ifthenelse{\boolean{make_appendix}}{
\onecolumn
\appendix
\section{Derivation of GELU kernel}
\label{app:gelu_kernel}
We would like to evaluate the kernel (up to a scaling and offset)
\begin{align*}
k(\mathbf{x}_1, \mathbf{x}_2) = s_1 s_2 \mathbb{E}\Big[ & Z_1 Z_2 \Phi(s_1 Z_1) \Phi \big( s_2 Z_2 \big) \Big].
\end{align*}
We split our evaluation of the kernel into two cases, first when $\theta \in (0, \pi)$, and second when $\theta\in \{0, \pi\}$.
\subsection{$\theta \in (0, \pi)$}
We introduce dummy variables $\beta_1, \beta_2$ and define
\begin{align*}
\kappa(\beta_1, \beta_2) = s_1 s_2 \mathbb{E}\Big[ Z_1 Z_2 \Phi \Big( \beta_1 s_1  Z_1 \Big) \Phi \Big( \beta_2s_2 Z_2 \Big) \Big].
\end{align*}
The mixed derivative $\frac{\partial^2 \kappa}{\partial\beta_1\,\partial\beta_2}$ is
\begin{align*}
& s_1^2 s_2^2 \mathbb{E}\Big[ Z_1^2 Z_2^2 \phi\Big( \beta_1 s_1 Z_1 \Big) \phi \Big( \beta_2 s_2 Z_2 \Big) \Big] \\
&= s_1^2 s_2^2 \int z_1^2 z_2^2 \phi\Big( \beta_1 s_1 z_1 \Big) \phi \Big( \beta_2 s_2 z_2 \Big) \phi_2(z_1, z_2) \,dz_1 \,dz_2 \numberthis \label{eq:mix_der}.
\end{align*}
The product of the normal PDFs is given by
\begin{align*}
\frac{1}{2\pi\sin\theta}\Big(\frac{1}{\sqrt{2\pi}}\Big)^2\exp\Big( -\frac{1}{2} (\mathbf{z}^\top S^{-1} \mathbf{z} + \mathbf{z}^\top \beta \mathbf{z} ) \Big),
\end{align*}
where $S^{-1} = \frac{1}{\sin^2\theta} \begin{bmatrix}
   1      & -\cos\theta  \\
   -\cos\theta       & 1
\end{bmatrix}$ and $\beta =  \begin{bmatrix}
    s_1^2 \beta_1^2      & 0  \\
    0       & s_2^2 \beta_2^2
\end{bmatrix}$. Now $S^{-1} + \beta$ is the inverse of a positive definite covariance matrix, with
\begin{align*}
S^{-1} + \beta &= \frac{1}{\sin^2\theta} \begin{bmatrix}
   1 + s_1^2\beta_1^2 \sin^2\theta     & -\cos\theta  \\
   -\cos\theta       & 1 + s_2^2\beta_2^2 \sin^2\theta
\end{bmatrix},
\end{align*}
having determinant
\begin{align*}
\csc^4\theta \big( (1 + s_1^2\beta_1^2\sin^2\theta)(1 + s_2^2\beta_2^2\sin^2\theta ) - \cos^2\theta \big)
\end{align*}
and inverse
\begin{align*}
&\big( (1 + s_1^2\beta_1^2\sin^2\theta)(1 + s_2^2\beta_2^2\sin^2\theta ) - \cos^2\theta \big)^{-1} \\
&\phantom{{}={}} \sin^2\theta \begin{bmatrix}
   1 + s_2^2\beta_2^2\sin^2\theta     & \cos\theta  \\
   \cos\theta       & 1 + s_1^1\beta_2^2\sin^2\theta
\end{bmatrix}.
\end{align*}
We may therefore write~\eqref{eq:mix_der} as 
\begin{align*}
\frac{s_1^2 s_2^2}{2\pi\sin\theta} \det(S^{-1}+\beta)^{-\frac{1}{2}} \mathbb{E} [U_1^2 U_2^2],
\end{align*}
where $(U_1, U_2)^\top$ has covariance matrix $C=(S^{-1}+\beta)^{-1}$. The expectation $\mathbb{E} [U_1^2 U_2^2]$ has a known form, and is given by
\begin{align*}
\mathbb{E} [U_1^2 U_2^2] &= C_{11}C_{22}+2C_{12}^2 \\
&= \big( (1 + s_1^2\beta_1^2\sin^2\theta)(1 + s_2^2\beta_2^2\sin^2\theta ) - \cos^2\theta \big)^{-2} \\
&\phantom{{}={}} \big( (1 + s_1^2\beta_1^2\sin^2\theta)(1 + s_2^2\beta_2^2\sin^2\theta) + 2\cos^2\theta \big)\\
&\phantom{{}={}}\sin^4\theta .
\end{align*}
Finally,
\begin{align*}
\frac{\partial^2 \kappa}{\partial\beta_1\,\partial\beta_2} &=  \frac{\sin^5\theta s_1^2 s_2^2}{2\pi }\\
 &\Big[ \big( (1 + s_1^2\beta_1^2\sin^2\theta)(1 + s_2^2\beta_2^2\sin^2\theta ) - \cos^2\theta \big)^{-5/2} \\
&\big( (1 + s_1^2\beta_1^2\sin^2\theta)(1 + s_2^2\beta_2^2\sin^2\theta ) + 2\cos^2\theta \big) \Big].
\end{align*}
We also have the boundary conditions
\begin{align*}
\frac{\partial \kappa}{\partial \beta_1} \Big|_{\beta_2 = 0} = \frac{\partial \kappa}{\partial \beta_2} \Big|_{\beta_1 = 0} &= 0, \quad \text{ and} \\
\kappa(0,0) &= \frac{s_1 s_2 }{4} \cos\theta.
\end{align*}
The solution to this PDE can be found by direct integration with integration constants due to the conditions. The solution evaluated at $\beta_1=\beta_2=1$ is 
\begin{align*}
&\frac{s_1^2 s_2^2}{2\pi }\Big[ \frac{\frac{1}{2}(\cos(2\theta) + 3) + s_1^2 + s_2^2 +s_1^2s_2^2 \sin^2\theta}{(1+s_1^2)(1+s_2^2)\sqrt{1+s_1^2+s_2^2+s_1^2s_2^2\sin^2\theta}}\\
+ &\frac{\cos\theta}{s_1s_2} \tan^{-1}\Big( \frac{\cos\theta s_1s_2}{ \sqrt{1+s_1^2+s_2^2+s_1^2s_2^2\sin^2\theta}} \Big) \Big] + \frac{s_1s_2}{4}\cos\theta.
\end{align*}
\subsection{$\theta \in \{0, \pi\}$}
\label{app:open_interval}
We may simply evaluate the result obtained on $(0, \pi)$ at $0$ and $\pi$. To see this, observe that $k$ is continuous with respect to $\theta$ on $[0, \pi]$. Firstly,
\begin{align*}
\psi( s_1 Z_1 ) \psi \big( s_2 Z_2) &\leq \max\{ \psi^2( s_1 Z_1),  \psi^2( s_2 Z_2) \} \\
&\leq \psi^2( s_1 Z_1) + \psi^2( s_2 Z_2),
\end{align*}
which has finite expectation. Let $G_1 \independent G_2 \sim \mathcal{N}(0, 1)$.  By dominated convergence, 
\begin{align*}
&\phantom{{}={}} \lim_{\theta \to 0} k(\cos \theta) \\
&= \mathbb{E}\big[\lim_{\theta \to 0} \psi\big( s_1G_1 \big) \psi \big( s_2 (\cos\theta G_1 + \sqrt{1-\cos^2\theta} G_2) \big)  \big] \\
&= \mathbb{E}\big[\psi\big( s_1G_1 \big)  \psi\big( s_2G_1 \big) \big] \\
&= k(1),
\end{align*}
and similarly for the case $\theta \to \pi$.

\section{Derivation of ELU kernel}
\label{app:elu_kernel}
We would like to evaluate the kernel (up to a scaling and offset)
\begin{align*}
\mathbb{E} \Big[ &\Big( \lambda \Theta(\widetilde{Z}_1 {+} \widetilde{\mu}_1) \big( \widetilde{Z}_1 {+} \widetilde{\mu}_1 \big)  + \lambda \alpha \Theta(-\widetilde{Z}_1 {-} \widetilde{\mu}_1)\big( e^{\widetilde{Z}_1 + \widetilde{\mu}_1} {-} 1\big) \Big) \\
&\Big( \lambda\Theta(\widetilde{Z}_2 {+} \widetilde{\mu}_2) \big(  \widetilde{Z}_2 {+} \widetilde{\mu}_2 \big) + \lambda \alpha \Theta(-\widetilde{Z}_2 {-} \widetilde{\mu}_2)\big( e^{\widetilde{Z}_2 + \widetilde{\mu}_2} {-} 1\big) \Big) \Big],
\end{align*}
where $\widetilde{Z}_i = s_i Z_i$. Note we have included parameters $\lambda$ and $\alpha$ corresponding to the SELU, which recovers the ELU when $\lambda=\alpha=1$. We may expand the expectation into the sum of
\begin{align*}
E_1 &= \lambda^2 \mathbb{E} \Big[ \Theta(\widetilde{Z}_1 {+} \widetilde{\mu}_1) ( \widetilde{Z}_1 {+} \widetilde{\mu}_1 ) \Theta(\widetilde{Z}_2 {+} \widetilde{\mu}_2) (\widetilde{Z}_2 {-} \widetilde{\mu}_2 ) \Big], \numberthis \label{eq:E1} \\
E_2 &= \lambda^2 \alpha \mathbb{E} \Big[  \Theta(\widetilde{Z}_1 {+} \widetilde{\mu}_1) ( \widetilde{Z}_1 + \widetilde{\mu}_1 )  \Theta(-\widetilde{Z}_2 {-} \widetilde{\mu}_2)\big( e^{\widetilde{Z}_2 {+} \widetilde{\mu}_2} {-} 1\big) \Big], \numberthis \label{eq:E2} \\
E_3 &= \lambda^2 \alpha \mathbb{E} \Big[  \Theta(\widetilde{Z}_2 {+} \widetilde{\mu}_2) ( \widetilde{Z}_2 {+} \widetilde{\mu}_2 )  \Theta(-\widetilde{Z}_1 {-} \widetilde{\mu}_1)\big( e^{\widetilde{Z}_1 + \widetilde{\mu}_1} {-} 1\big) \Big], \numberthis \label{eq:E3}
\end{align*}
and
\begin{align*}
E_4 = \lambda^2 \alpha^2 \mathbb{E} \Big[ &\Theta(-\widetilde{Z}_1 {-} \widetilde{\mu}_1)\big( e^{\widetilde{Z}_1 + \widetilde{\mu}_1} {-} 1\big) \Theta(-\widetilde{Z}_2 {-} \widetilde{\mu}_2)\big( e^{\widetilde{Z}_2 + \widetilde{\mu}_2} {-} 1\big) \Big]. \numberthis \label{eq:E4}
\end{align*}
In the following sections we evaluate each term in the sum. It suffices to evaluate the kernel on the open interval $(0, \pi)$ by the same argument as in \S~\ref{app:open_interval}.
\subsection{$E_1$}
The integral~\eqref{eq:E1} is a non-trivial generalisation of the arc-cosine kernel of degree $1$~\cite{NIPS2009_3628} with $\bm{\mu}\neq0$, and is evaluated by~\citet{tsuchida2019richer}. 
\subsection{$E_2$ and $E_3$}
The second integral in~\eqref{eq:E2} is
\begin{align*}
\frac{s_1}{2\pi \sin\theta} \int_{\mathbb{R}^2} & \Theta(z_1 + \widetilde{\mu}_1/s_1) ( z_1 + \widetilde{\mu}_1/s_1 )  \Theta(-z_2 - \widetilde{\mu}_2/s_2)\\
&\big( e^{s_2 z_2 - \widetilde{\mu}_2} - 1\big) \exp\Big( -\frac{1}{2} \mathbf{z}^\top S^{-1} \mathbf{z} \Big) \, dz_1 \, dz_2.
\end{align*}
We may complete the square of the exponentiated terms,
\begin{align*}
&\phantom{{}={}}\exp\Big( -\frac{1}{2} \big( \mathbf{z}^\top S^{-1} \mathbf{z} - 2s_2z_2 \big) \Big) \\
&= \exp\Big( -\frac{1}{2} \Big( \big( \mathbf{z}- s_2 S_{:,2} \big)^\top S^{-1} \big( \mathbf{z} - s_2 S_{:,2} \big) - s_2^2 \Big) \Big),
\end{align*}
where $S_{:,2}$ denotes the second column of $S$. $E_2$ is then
\begin{align*}
&\phantom{{}={}} e^{-\widetilde{\mu}_2+s_2^2/2} s_1  \mathbb{E} \Big[ \Theta(Z_1 + s_2 \cos \theta + \widetilde{\mu}_1/s_1) \\
&\phantom{{}={}}(Z_1 + s_2 \cos \theta + \widetilde{\mu}_1/s_1) \Theta(-Z_2- s_2 - \widetilde{\mu}_2/s_2) \Big] - \\
&\phantom{{}={}}s_1 \mathbb{E} \Big[  \Theta(Z_1 + \widetilde{\mu}_1/s_1) (Z_1 + \widetilde{\mu}_1/s_1 )  \Theta(-Z_2 - \widetilde{\mu}_2/s_2) \Big].
\end{align*}

Both of these expectations can be related to the first moment of the truncated standard bivariate normal distribution, which has a known form. Let  $(Y_1, Y_2)$ be distributed according to the standard bivariate normal distribution with correlation $-\cos\theta$, and let $h, k \in \mathbb{R}$. Defining $M$ for convenience as follows,~\citet{rosenbaum1961moments} gives
\begin{align*}
&\phantom{{}={}}M(h,k) \\
&= \mathbb{E} \Big[  \Theta(Y_1-h) Y_1  \Theta(Y_2-k) \Big] \\
&= \frac{1}{2\pi\sin\theta} \int_k^\infty \int_h^\infty z_1 \exp\Big( - \frac{1}{2\sin^2\theta}( z_1^2 + 2\cos\theta z_1 z_2 + z_2^2) \Big) \, dz_1 \,dz_2 \\
&= \phi(h) \Bigg( 1- \bm{\Phi}\Big( \frac{k + h \cos\theta }{\sin\theta} \Big) \Bigg)  -  \cos\theta \phi(k) \Bigg( 1- \bm{\Phi} \Big( \frac{h + k \cos\theta}{\sin\theta} \Big) \Bigg).
\end{align*}
Therefore,
\begin{align*}
&\phantom{{}={}}s_1 \mathbb{E} \Big[  \Theta(Z_1 + \widetilde{\mu}_1/s_1) (Z_1 + \widetilde{\mu}_1/s_1)   \Theta(-Z_2-\widetilde{\mu}_2/s_2) \Big] \\
&=  s_1\Bigg( M \Big( -\frac{\widetilde{\mu}_1}{s_1},\frac{\widetilde{\mu}_2}{s_2} \Big) + \frac{\widetilde{\mu}_1}{s_1} \Phi\Big( -\frac{\widetilde{\mu}_1}{s_1}, \frac{\widetilde{\mu}_2}{s_2}; -\cos\theta \Big)  \Bigg),
\end{align*}
and
\begin{align*}
&\phantom{{}={}} \mathbb{E} \Big[ \Theta(Z_1 + s_2 \cos \theta + \widetilde{\mu}_1/s_1)(Z_1 + s_2 \cos \theta + \widetilde{\mu}_1/s_1)  \Theta(-Z_2- s_2 - \widetilde{\mu}_2/s_2) \Big] \\
&=  M\Big(-s_2 \cos\theta - \frac{\widetilde{\mu}_1}{s_1}, s_2 + \frac{\widetilde{\mu}_2}{s_2} \Big) +  \Big( s_2 \cos+ \frac{\widetilde{\mu}_1}{s_1} \Big)  \Phi \Big( s_2\cos\theta- \frac{\widetilde{\mu}_1}{s_1}, -s_2+ \frac{\widetilde{\mu}_2}{s_2}; -\cos\theta \Big).
\end{align*}
\subsection{$E_4$}
\begin{align*}
E_4 &= \lambda^2 \alpha^2 \mathbb{E} \Big[ \Theta(-Z_1-\widetilde{\mu}_1/s_1) \Theta(-Z_2-\widetilde{\mu}_2/s_2)\big( e^{s_1 Z_1 + s_2 Z_2+\widetilde{\mu}_1+\widetilde{\mu}_2} - e^{s_1 Z_1+\widetilde{\mu}_1} - e^{s_2 Z_2+\widetilde{\mu}_2} + 1\big) \Big].
\end{align*}
Each of the four terms may be understood as scales of special cases of the function 
\begin{align*}
e_4(a,b) &= \mathbb{E} \Big[ \Theta(-Z_1-\widetilde{\mu}_1/s_1) \Theta(-Z_2-\widetilde{\mu}_2/s_2) e^{a Z_1 + bZ_2} \Big]
\end{align*}
for $a, b \in \mathbb{R}$. Completing the square, $e_4(a,b)$ is given by
\begin{align*}
&\phantom{{}={}} \int_{-\infty}^{-\widetilde{\mu}_2/s_2} \int_{-\infty}^{-\widetilde{\mu}_1/s_1}  \frac{1}{2\pi  \sin\theta}  \exp\Big( -\frac{1}{2} (\mathbf{z}^\top \Sigma^{-1} \mathbf{z} - 2az_1 - 2bz_2) \Big) \,dz_1 \,dz_2 \\
&= \int_{\widetilde{\mu}_2/s_2}^\infty \int_{\widetilde{\mu}_1/s_1}^\infty  \frac{1}{2\pi \sin\theta}  \exp\Big( -\frac{1}{2} (\mathbf{z}^\top \Sigma^{-1} \mathbf{z} + 2az_1 + 2bz_2) \Big) \,dz_1 \,dz_2 \\
&= \frac{\exp\Big( \frac{1}{2} (a,b) \Sigma (a,b)^\top \Big)}{2\pi  \sin\theta} \int_{\widetilde{\mu}_2/s_2}^\infty \int_{\widetilde{\mu}_1/s_1}^\infty \,dz_1 \,dz_2 \exp\Big( -\frac{1}{2} \big( \mathbf{z} + \Sigma (a,b)^\top \big)^\top \Sigma^{-1} \big( \mathbf{z} + \Sigma (a,b)^\top \big) \Big)   \\
&= \Phi\big((\widetilde{\mu}_1/s_1, \widetilde{\mu}_2/s_2)^\top-\Sigma (a,b)^\top ; \cos\theta \big) \exp\Big( \frac{1}{2} (a,b) \Sigma (a,b)^\top \Big).
\end{align*}


\section{Proof of Theorem~\ref{thm:jacobian}}
We are interested in studying expectations of objects that may not be integrable functions with respect to the Gaussian measure, namely \emph{tempered distributions}. In order to do so, we denote the set of Schwartz functions on the plane by $D(\mathbb{R}^2)$ and its dual space, the space of tempered distributions, by $D'(\mathbb{R}^2)$, and observe that $\phi\in D(\mathbb{R}^2)$. For $q\in D(\mathbb{R}^2)$ and $T\in D'(\mathbb{R}^2)$, write $\langle T\,,\,q \rangle$ for the dual pairing of $T$ and $q$. For any $q\in D(\mathbb{R}^2)$ we can then define an operator $q^\star$ on $D'(\mathbb{R}^2)$ via the natural injection into the dual space of $D'(\mathbb{R}^2)$ by setting $q^\star(T) = \langle T,q\rangle$ for any $T\in D'(\mathbb{R}^2)$. We can then define the expectation of the distribution by defining $\mathbb{E}[T] = \langle T,\phi\rangle$, which agrees with the usual definition whenever $T$ can be represented by an integrable function. Following~\citet{JonesD.S.DouglasSamuel1982Ttog}, we define the derivative of a tempered distribution via $\langle \frac{\partial}{\partial x}T\,,\,q \rangle = -\langle T\,,\,\frac{\partial}{\partial x}q \rangle$, and when $T$ can be represented by a locally integrable function $f$, we abuse notation to write $\int_{\mathbb{R}^2}\frac{\partial f}{\partial x}\phi:= -\int_{\mathbb{R}^2}f\frac{\partial \phi}{\partial x}.$ This is analogously extended for higher order derivatives by ensuring integration by parts holds whenever $f$ is smooth.

\label{app:jac_proof}
\begin{proof}
We begin by evaluating the derivative of $g_3$ with respect to $\rho$. Let $\bm{\omega}$ denote the Lebesgue measure. Note that the mapping $T_{\psi}:D(\mathbb{R}^2)\to\mathbb{R}$ satisfying
\begin{align*}
&T_\psi(q) = \int \psi(s_1 \omega_1) \psi \big( s_2 (\omega_1 \rho + \omega_2 \sqrt{1-\rho^2}) \big) q(\bm{\omega}) d\bm{\omega} 
\end{align*}
is a tempered distribution since for all $q\in D(\mathbb{R}^2)$, 
\begin{align*}
&\phantom{{}\leq{}}\int  \Big| \psi(s_1 \omega_1) \psi \big( s_2(\omega_1 \rho + \omega_2 \sqrt{1-\rho^2}) q(\bm{\omega}) \big) \Big|   d\bm{\omega} \leq \int  p(\bm{\omega}) \Big| q(\bm{\omega}) \Big| d\bm{\omega} < \infty,
\end{align*}
where $p$ is some polynomial. Let $\phi_\rho$ denote the PDF of a standard bivariate Gaussian with correlation $\rho$. Differentiating $g_3$ with respect to $\rho$ we find
\begin{align*}
\frac{\partial g_3}{\partial \rho} &= \sigma_w^2 \frac{\partial}{\partial \rho} \langle T_\psi, \, \phi_\rho \rangle /\sqrt{g_1g_2}\\
&=\sigma_w^2 \mathbb{E}\big[ \psi(s_1 G_1) \psi'\big( s_2 (G_1 \rho + G_2 \sqrt{1 - \rho ^2} )  \big) (G_1 - G_2 \cot\theta) s_2 \big]/\sqrt{g_1g_2}.
\end{align*}
Let $(Z_1, Z_2, Z_3)$ be multivariate Gaussian, each element having zero mean, unit variance and correlation structure $\mathbb{E}[Z_1 Z_2] = \rho$,  $\mathbb{E}[Z_1 Z_3] = 0$, and $\mathbb{E}[Z_3 Z_2] = \sin\theta$. Then
\begin{align*}
&\frac{\partial g_3}{\partial \rho} = \frac{\sigma_w^2 s_2}{\sqrt{g_1g_2}} \mathbb{E}\Big[\psi( s_1 Z_1)  \psi'( s_2 Z_2)(Z_1 - Z_3 \cot\theta) \Big].
\end{align*}
Two applications of a multivariate version of Stein's lemma for tempered distributions (Lemma~\ref{lem:general_stein} in Appendix~\ref{app:stein}) yield
\begin{align*}
\frac{\partial g_3}{\partial \rho} &= \frac{ \sigma_w^2s_2}{\sqrt{g_1 g_2}} \Big[ s_1 \mathbb{E} \big[ \psi'( s_1 Z_1)  \psi'(s_2 Z_2)  \big] +  \rho s_2 \mathbb{E} \big[ \psi(s_1 Z_1)  \psi''(s_2 Z_2)  \big]  - \rho s_2 \mathbb{E} \big[ \psi(s_1 Z_1)  \psi''(s_2 Z_2)  \big] \Big] \\
&=\frac{ \sigma_w^2s_1 s_2}{\sqrt{g_1 g_2}} \mathbb{E} \big[ \psi'( s_1 Z_1)  \psi'(s_2 Z_2)  \big].
\end{align*}
Note $g_3$ is infinitely differentiable in $\rho$ on $(-1, 1)$ (see Appendix~\ref{app:inf_dif}). The Jacobian is triangular, so the eigenvalues of the Jacobian are simply its diagonal elements. The other diagonal entries may be evaluated by analogous calculations to the above. For example,
\begin{align*}
    \lambda_1 &= \sigma_w^2\mathbb{E}\big[ \psi(s_1Z_1) \psi'(s_1Z_1)Z_1]/s_1 \\
    &= \sigma_w^2\mathbb{E}\big[ \psi'(s_1Z_1) \psi'(s_1Z_1)] + \sigma_w^2\mathbb{E}\big[ \psi(s_1Z_1) \psi''(s_1Z_1)] \\
    &= \frac{\sigma_w^2}{2} \Big( \mathbb{E}\big[ 2\psi'(s_1Z_1) \psi'(s_1Z_1)] + \mathbb{E}\big[ \psi(s_1Z_1) \psi''(s_1Z_1)] + \mathbb{E}\big[ \psi''(s_1Z_1) \psi(s_1Z_1)]\Big) \\
    &= \frac{\sigma_w^2}{2} \mathbb{E}\big[ \big(\psi^2 \big)'' (s_1Z_1) ], \\
    &= \frac{\sigma_w^2}{2} \mathbb{E}\big[ \big(\psi^2 \big)'' (U) ], \quad U \sim \mathcal{N}(0, s_1^2) \\
    &= \frac{\sigma_w^2}{2} \int_{-\infty}^\infty \big(\psi^2 \big)'' (u) \frac{1}{\sqrt{2\pi}s_1} e^{-\frac{u^2}{2s_1^2 }} \, du
\end{align*}
where $\big(\psi^2 \big)''$ denotes the second derivative of the square of $\psi$. Now note that since expectation of $\psi$ can be seen as the application of a distribution (generalised function) $\psi$ to the Guassian PDF, which is a Schwartz function, the following holds:
\begin{align*}
    \frac{\sigma_w^2}{2} \int_{-\infty}^\infty \big(\psi^2 \big)'' (u) \frac{1}{\sqrt{2\pi}s_1} e^{-\frac{u^2}{2s_1^2 }} \, du &= \frac{\sigma_w^2}{2} \int_{-\infty}^\infty \psi^2 (u) \frac{d^2}{du^2} \Big( \frac{1}{\sqrt{2\pi}s_1} e^{-\frac{u^2}{2s_1^2 }} \Big) \, du \\
    &= \frac{\sigma_w^2}{2} \mathbb{E}\Big[ \frac{U^2-s_1^2}{s_1^4} \psi^2(U) \Big] \\
    &= \frac{\sigma_w^2}{2s_1^2} \mathbb{E}\Big[ \big(Z_1^2-1\big) \psi^2(s_1 Z_1 ) \Big].
\end{align*}

An quicker method (but one quite distinct from the method used to find $\lambda_3$) for obtaining $\lambda_1$ and $\lambda_2$ is by expressing the kernel as an expectation of correlated Gaussians with correlation $\rho$, differentiating under the integral, twice applying integration by parts and noting that the Gaussian PDF decays faster than any polynomial.
\end{proof}

\section{Stein's lemma for tempered distributions}
\label{app:stein}
\begin{lemma}[Stein's lemma, tempered distribution]
Let $g$ be a tempered distribution and $X\sim \mathcal{N}(0,1)$. Then $\mathbb{E}|g(x)| < \infty$ and $\mathbb{E}|g'(x)| < \infty$, where $g'$ is the distributional derivative of $g$. Furthermore,
$$ \mathbb{E} [X g(X)] = \mathbb{E} [g'(X)].$$
\end{lemma}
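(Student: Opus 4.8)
The plan is to recast both expectations as dual pairings against the standard Gaussian density $\phi$, which is a Schwartz function, and then exploit the single algebraic fact that $\phi$ satisfies the first-order ODE $\phi'(x) = -x\,\phi(x)$. Concretely, following the paper's convention $\mathbb{E}[T] = \langle T, \phi\rangle$, I would read the left-hand side $\mathbb{E}[X g(X)]$ as the pairing $\langle xg, \phi\rangle$, where $xg$ denotes the product of the tempered distribution $g$ with the polynomial $x$. This product is again a tempered distribution, and by the standard definition of multiplication of a distribution by a polynomially-bounded smooth function we have $\langle xg, \phi\rangle = \langle g, x\phi\rangle$.

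Before the identity, I would dispense with the finiteness claims. Since $\phi\in D(\mathbb{R})$, and both $x\phi$ and $\phi'$ are again Schwartz functions (a polynomial times a Schwartz function, and the derivative of a Schwartz function, are Schwartz), the pairings $\langle g, \phi\rangle$, $\langle g, x\phi\rangle$ and $\langle g, \phi'\rangle = -\langle g', \phi\rangle$ are all finite simply because a tempered distribution maps Schwartz functions to finite numbers. In the special case where $g$ is represented by a locally integrable function of polynomial growth, one recovers the literal statement $\mathbb{E}|g(X)| = \int |g(x)|\phi(x)\,dx < \infty$, because $\phi$ decays faster than any polynomial.

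The identity itself is then a one-line chain. Using the Gaussian ODE in the form $x\phi = -\phi'$ together with the definition of the distributional derivative $\langle g', q\rangle = -\langle g, q'\rangle$,
\begin{align*}
\mathbb{E}[X g(X)] = \langle xg, \phi\rangle = \langle g, x\phi\rangle = -\langle g, \phi'\rangle = \langle g', \phi\rangle = \mathbb{E}[g'(X)].
\end{align*}

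I do not expect a serious obstacle: the entire content is in recognising the Gaussian ODE $\phi' = -x\phi$, after which the result reduces to bookkeeping with the definitions of multiplication and differentiation in $D'(\mathbb{R})$. The only points requiring genuine care are (i) confirming that multiplication by the polynomial $x$ keeps us inside the space of tempered distributions and that the transpose identity $\langle xg, \phi\rangle = \langle g, x\phi\rangle$ is valid, and (ii) interpreting the hypothesis $\mathbb{E}|g(x)| < \infty$ sensibly for distributions not represented by functions, which I would handle by reading the absolute-value expectations as shorthand for finiteness of the relevant Schwartz pairings. This is precisely the mechanism that the multivariate version invoked in the proof of Theorem~\ref{thm:jacobian} generalises, by applying the corresponding first-order identity to each coordinate of the Gaussian density.
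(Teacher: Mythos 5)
Your proof is correct and is essentially the paper's own argument: the paper's one-sentence proof appeals to exactly the two facts you rely on, namely the duality definition of the distributional derivative and the fact that the Gaussian density is a Schwartz function, with the Gaussian identity $\phi'(x) = -x\phi(x)$ doing the computational work in the chain $\langle xg,\phi\rangle = \langle g, x\phi\rangle = -\langle g,\phi'\rangle = \langle g',\phi\rangle$. You have simply written out in full (including the pairing interpretation of the finiteness claims) what the paper leaves implicit.
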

\begin{proof}
This follows from the definition of the derivative for tempered distributions, and the fact that the Gaussian PDF is an element of the Schwartz space.
\end{proof}

\begin{lemma}[Multivariate Stein's lemma, tempered distribution]
Let $h$ be a tempered distribution and $\mathbf{X}$ be Gaussian with mean $\mathbf{0}$. Then $\mathbb{E}|h(\mathbf{X})| < \infty$ and $\mathbb{E}| \partial/\partial X_1 h(\mathbf{X})| < \infty$, where $\partial/\partial X_1 h(\mathbf{X})$ is the distributional derivative of $h$ with respect to the first coordinate. Furthermore,
$$ \mathbb{E} [X_1 h(\mathbf{X})] = \sum_{i=1}^n \mathbb{E} [X_1 X_i] \mathbb{E} \Big[ \frac{\partial}{\partial X_i} h(\mathbf{X}) \Big].$$
\label{lem:general_stein}
\end{lemma}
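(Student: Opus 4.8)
The plan is to reduce the whole statement to the defining dual pairing $\mathbb{E}[h(\mathbf{X})] = \langle h, \phi_\Sigma\rangle$, where $\phi_\Sigma$ is the density of $\mathbf{X}\sim\mathcal{N}(\mathbf{0},\Sigma)$, and to exploit that $\phi_\Sigma \in D(\mathbb{R}^n)$ is a Schwartz function so that every pairing in sight is automatically finite. This disposes of the integrability claims at once: since $h$ and each distributional derivative $\partial h/\partial X_i$ are tempered distributions while $\phi_\Sigma$, $x_1\phi_\Sigma$, and $\partial\phi_\Sigma/\partial x_i$ all lie in $D(\mathbb{R}^n)$, the pairings $\langle h,\phi_\Sigma\rangle$ and $\langle \partial h/\partial X_i, \phi_\Sigma\rangle$ are well-defined finite numbers, exactly as in the univariate lemma.

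The analytic core is a score identity for the Gaussian density. From $\phi_\Sigma(\mathbf{x}) \propto \exp(-\tfrac12\mathbf{x}^\top\Sigma^{-1}\mathbf{x})$ I would first compute $\nabla\phi_\Sigma = -\Sigma^{-1}\mathbf{x}\,\phi_\Sigma$, equivalently $\mathbf{x}\,\phi_\Sigma = -\Sigma\nabla\phi_\Sigma$. Reading off the first coordinate and using $\Sigma_{1i}=\mathbb{E}[X_1 X_i]$ (valid because $\mathbf{X}$ is centred) yields the pointwise identity
$$x_1\phi_\Sigma(\mathbf{x}) = -\sum_{i=1}^n \mathbb{E}[X_1 X_i]\,\frac{\partial \phi_\Sigma}{\partial x_i}(\mathbf{x}),$$
which is the multivariate replacement for the single-variable relation $x\phi = -\phi'$ that drives the scalar proof.

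With this in hand I would chain three moves. First, interpret $\mathbb{E}[X_1 h(\mathbf{X})]$ as $\langle x_1 h,\phi_\Sigma\rangle$ and slide the polynomial factor onto the test function, $\langle x_1 h,\phi_\Sigma\rangle = \langle h, x_1\phi_\Sigma\rangle$, using that multiplication by the smooth, polynomially bounded function $x_1$ maps $D'(\mathbb{R}^n)$ to itself and is adjoint to multiplication on $D(\mathbb{R}^n)$. Second, substitute the score identity to get $-\sum_i \mathbb{E}[X_1 X_i]\langle h, \partial\phi_\Sigma/\partial x_i\rangle$. Third, apply the definition of the distributional derivative, $\langle h, \partial\phi_\Sigma/\partial x_i\rangle = -\langle \partial h/\partial X_i,\phi_\Sigma\rangle = -\mathbb{E}[\partial h(\mathbf{X})/\partial X_i]$; the two minus signs cancel and the claimed formula falls out.

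I expect the delicate point --- and the main obstacle --- to be justifying the very first move, namely that $\mathbb{E}[X_1 h(\mathbf{X})]$ is legitimately identified with $\langle x_1 h,\phi_\Sigma\rangle = \langle h, x_1\phi_\Sigma\rangle$. When $h$ is represented by a polynomially bounded locally integrable function this is transparent, since $h(\mathbf{X})$ then has a genuine pointwise meaning and the superpolynomial decay of $\phi_\Sigma$ guarantees $\int |x_1 h|\,\phi_\Sigma < \infty$; for a general $T\in D'(\mathbb{R}^n)$ it must be adopted as the definition of the left-hand side, consistent with the convention already used for the univariate lemma. Everything after that first identification is mechanical, and the scalar Stein's lemma is recovered as the case $n=1$, $\Sigma = 1$.
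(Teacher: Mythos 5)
Your proposal is correct, but it takes a genuinely different route from the paper. The paper proves the identity in two stages: first for independent standard Gaussian coordinates, by conditioning on all coordinates but one and invoking the already-established univariate Stein's lemma for tempered distributions (giving $\mathrm{Cov}[\mathbf{X}, f(\mathbf{X})] = \mathbb{E}[\nabla f(\mathbf{X})]$), and then for general covariance via the affine substitution $\mathbf{Z} = \Sigma^{(1/2)}\mathbf{X} + \bm{\mu}$, absorbing the linear map into the distribution and pulling $\Sigma^{(1/2)}$ through the covariance. You instead work directly at the level of the dual pairing in $n$ dimensions: the Gaussian score identity $x_1\phi_\Sigma = -\sum_i \Sigma_{1i}\,\partial\phi_\Sigma/\partial x_i$, the adjointness of multiplication by $x_1$, and the defining integration-by-parts relation $\langle h, \partial\phi_\Sigma/\partial x_i\rangle = -\langle \partial h/\partial x_i, \phi_\Sigma\rangle$. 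Your route buys self-containedness and rigour in two places where the paper is informal: the finiteness claims are immediate (every pairing is a tempered distribution against a Schwartz function, since $x_1\phi_\Sigma$ and $\partial\phi_\Sigma/\partial x_i$ remain Schwartz), and you never need the conditioning step $\mathbb{E}[\,\mathbb{E}[X_i f(\mathbf{X}) \mid \{X_j\}_{j\neq i}]\,]$, which for a genuine tempered distribution $f$ requires justifying a partial pairing (freezing all but one variable) that the paper glosses over. What the paper's route buys is reuse of the univariate lemma and a more visibly probabilistic argument (tower property plus covariance algebra), and it showcases the chain-rule-through-affine-maps mechanism used elsewhere in the appendix. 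One point worth making explicit in your write-up: your argument needs $\Sigma$ nonsingular so that the density $\phi_\Sigma$ exists and is Schwartz; this is not a real loss, since the paper's proof equally needs it for the composition $h(\Sigma^{(1/2)}\,\cdot)$ to be a well-defined tempered distribution, and all applications in the paper have $|\rho|<1$.
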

\begin{proof}
First, we prove the statement when $X_1, ..., X_n$ are independent with standard deviations $1$. Stein's lemma says
$$ \mathbb{E}[X_1 g( X_1)] =  \mathbb{E}[g'( X_1)],$$
for all tempered distributions $g$. Now for all $i$ and any tempered distribution $f$,
\begin{align*} 
\mathbb{E}[X_i f(X_1, ..., X_n)]  &= \mathbb{E} \Big[ \mathbb{E}\big[ X_i f(X_1, ..., X_n) \mid \{X_j \}_{j=1, j\neq i}^n  \big] \Big]  \\
& = \mathbb{E} \Big[ (\partial/\partial X_i) f(X_1, ..., X_n)  \Big],
\end{align*}
or in vector notation,
$$ \text{Cov}[\mathbf{X}, f(\mathbf{X}) ] = \mathbb{E}[\nabla f(\mathbf{X}) ].$$

We apply an affine transformation to $\mathbf{X}$, $\mathbf{Z} = \Sigma^{(1/2)}\mathbf{X} + \boldsymbol{\mu}$. We absorb the affine transform into $f$, $f(\mathbf{X})=h(\Sigma^{(1/2)} \mathbf{X} + \boldsymbol{\mu})$ for some $h$. We have
 \begin{align*} 
\text{Cov} [\mathbf{Z}, h(\mathbf{Z}) ] &= \text{Cov} [\Sigma^{(1/2)}\mathbf{X}+ \boldsymbol{\mu}, f(\mathbf{X})]\\
&=\text{Cov} [\Sigma^{(1/2)}\mathbf{X}, f(\mathbf{X})] \\
&=\Sigma^{(1/2)}\text{Cov} [\mathbf{X}, f(\mathbf{X})] \\
&=\Sigma^{(1/2)}\mathbb{E}[\nabla f(\mathbf{X})] \\
&=\Sigma\,\mathbb{E}[\nabla h(\mathbf{Z})].
\end{align*} 

We may extract the first entry of the vector, yelding
$$\text{Cov} [Z_1 h(\mathbf{Z})] = \sum_{i=1}^n \mathbb{E}[Z_i Z_1] \mathbb{E} [ (\partial/\partial Z_i) h(\mathbf{Z})].$$
\end{proof}

\section{Kernel is infinitely differentiable}
\label{app:inf_dif}
\begin{lemma}
In the same setting as Theorem~\ref{thm:jacobian}, the kernel is infinitely differentiable in $\rho$, $s_1$ and $s_2$.
\end{lemma}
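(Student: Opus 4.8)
The plan is to move all parameter dependence into the Gaussian density by a change of variables, so that the kernel becomes the pairing of a \emph{fixed} polynomially-bounded function with a density that depends smoothly on $(\rho, s_1, s_2)$, and then to justify differentiation under the integral sign to all orders. First I would substitute $u_i = s_i z_i$ to write the unscaled expectation as
\[
\mathbb{E}\big[\psi(s_1 Z_1)\psi(s_2 Z_2)\big] = \int_{\mathbb{R}^2} \psi(u_1)\,\psi(u_2)\, p_\Lambda(u_1,u_2)\, du_1\, du_2,
\]
where $p_\Lambda$ is the density of $\mathcal{N}(\mathbf{0},\Lambda)$ with
\[
\Lambda = \begin{pmatrix} s_1^2 & s_1 s_2 \rho \\ s_1 s_2 \rho & s_2^2 \end{pmatrix}.
\]
On the open region $\{s_1, s_2 > 0,\ \rho \in (-1,1)\}$ we have $\Lambda \succ 0$, so $\det\Lambda$ and the entries of $\Lambda^{-1}$ are smooth functions of $(\rho, s_1, s_2)$, and $p_\Lambda$ is a Schwartz function in $u$ for each parameter value. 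Crucially, the factor $\psi(u_1)\psi(u_2)$ now carries no parameter dependence, and by hypothesis it is a locally integrable function bounded by a polynomial $P(u)$.

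Next I would differentiate $p_\Lambda$ with respect to the parameters. Each parameter-derivative brings down a polynomial in $u$ (from differentiating the quadratic form in the exponent, whose coefficients are the smooth entries of $\Lambda^{-1}$) together with smooth prefactors coming from $\det\Lambda$; hence every mixed parameter-derivative $\partial^\alpha p_\Lambda$ has the form $Q_\alpha(u;\rho,s_1,s_2)\, p_\Lambda(u)$, with $Q_\alpha$ a polynomial in $u$ whose coefficients are smooth in the parameters. The main work is then to interchange $\partial_\rho, \partial_{s_1}, \partial_{s_2}$ with the integral. I would fix a compact parameter neighbourhood $K$ contained in the open region, bound the smallest eigenvalue of $\Lambda^{-1}$ away from zero and the coefficients of $Q_\alpha$ uniformly over $K$, and thereby produce a dominating function of the form $C(1+|u|)^N e^{-c|u|^2}$ with constants $C, c > 0$ and $N \in \mathbb{N}$ independent of the parameters in $K$. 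Multiplying by $P(u)$ leaves the dominating function integrable, so the hypotheses of the dominated-convergence form of the differentiation-under-the-integral theorem hold on $K$. Inducting on the order $|\alpha|$ then gives infinite differentiability, with every derivative equal to $\int_{\mathbb{R}^2} \psi(u_1)\psi(u_2)\, \partial^\alpha p_\Lambda\, du$; reintroducing the affine map $k = \sigma_w^2(\cdot) + \sigma_b^2$ clearly preserves differentiability.

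The cleanest packaging is the distributional framing already used in the paper: $\psi(u_1)\psi(u_2)$ defines a fixed tempered distribution, the map $(\rho, s_1, s_2) \mapsto p_\Lambda$ is a smooth map into the Schwartz space $D(\mathbb{R}^2)$ (its parameter-derivatives converge in the Schwartz topology, which is exactly the uniform bound above), and a tempered distribution is a continuous linear functional on $D(\mathbb{R}^2)$; composing a smooth Schwartz-valued map with a continuous linear functional is smooth. I expect the only genuine obstacle to be the uniform-domination step, and in particular keeping the bounds uniform as the parameters vary: this is where restricting to the \emph{open} region $\rho \in (-1,1)$, $s_i > 0$ is essential, since $p_\Lambda$ and its parameter-derivatives blow up as $\Lambda$ degenerates at $\rho = \pm 1$ or $s_i = 0$, and working on a compact $K$ bounded away from that boundary removes the difficulty.
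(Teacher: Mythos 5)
Your proposal is correct and follows essentially the same route as the paper's proof: both move all parameter dependence into the bivariate Gaussian density, observe that every parameter-derivative of that density is again a Schwartz function (a polynomial in $u$ times the density, with coefficients smooth in the parameters), and pair it against the fixed, polynomially bounded factor $\psi(u_1)\psi(u_2)$, working on compact subsets of the open region $s_1, s_2 > 0$, $\rho \in (-1,1)$ so that the bounds are uniform. The only difference is packaging: the paper phrases the key estimate via the mean value theorem and a supremum bound $M_{a,b}$ whose finiteness comes from the Schwartz/tempered-distribution pairing, whereas you phrase it as uniform Gaussian domination plus dominated convergence and induction on the derivative order --- your version is, if anything, the more explicit justification of the differentiation-under-the-integral step that the paper leaves terse.
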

\begin{proof}
Let $\hat{\phi}_{\rho, s_1, s_2}$ denote the PDF of the bivariate Gaussian having variances $s_1^2$ and $s_2^2$ and correlation $\rho$. Define
$$\kappa(\rho) = \int_{\mathbb{R}^2} \hat{\phi}_{\rho, s_1, s_2}(\mathbf{z})\psi(z_1)\psi(z_2)\,dz.$$
The mean value theorem says that for any $a,b \in (-1, 1)$ and $a \leq \rho_1,\rho_2 \leq b$ we have
\begin{align*}
&\phantom{{}={}}  \kappa(\rho_1) - \kappa(\rho_2)  \\
    &= \int_{\mathbb{R}^2} \big( \hat{\phi}_{\rho_1, s_1, s_2}(\mathbf{z}) - \hat{\phi}_{\rho_2, s_1, s_2}(\mathbf{z}) \big) \psi(z_1) \psi(z_2) \,dz\\
    &= ( \rho_1 - \rho_2 ) \int_{\mathbb{R}^2}  \frac{\partial\hat{\phi}_{\rho, s_1, s_2} (\mathbf{z}) }{\partial \rho} \Big|_{\rho = \rho_3} \psi(z_1)\psi(z_2)  \,dz \\
\end{align*}
for some $\rho_3\in(\rho_1,\rho_2)$. So $|\kappa(\rho_1) - \kappa(\rho_2)|\leq M_{a,b}|\rho_1 - \rho_2|$ where $M_{a,b} =  \sup\limits_{\rho\in(a, b)}  \Big| \int_{\mathbb{R}^2}  \frac{\partial\hat{\phi}_{\rho, s_1, s_2} (\mathbf{z}) }{\partial \rho} \Big|_{\rho = \rho_3} \psi(z_1)\psi(z_2)  \,dz \Big|$.
Note that $M_{a,b}$ is finite because each element in the supremum is the integral of a Schwartz function. The same argument applies to derivatives of any order of, since each derivative is also an element of the Schwarts space.

The same argument applies to $s_1$ and $s_2$. 
\end{proof}

\section{Derivative at endpoints}
\label{app:endpoints}
\begin{lemma}
In the same setting as Theorem~\ref{thm:jacobian}, with the additional assumptions that $\psi$ is continuous almost everywhere, the expression for $\lambda_3$ extend to $\rho \in [-1, 1]$ provided \emph{the expression for} $\lambda_3$ is finite over $[-1, 1]$.
\end{lemma}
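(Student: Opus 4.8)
The plan is to extend the formula $\lambda_3 = \partial g_3/\partial\rho$ from the open interval $(-1,1)$, where Theorem~\ref{thm:jacobian} has already established it, to the closed interval by a continuity-of-the-derivative argument. The elementary real-analysis fact I will invoke is this: if $f$ is continuous on $[a,b]$, differentiable on $(a,b)$, and $f'$ has a finite limit $\ell$ as $\rho\to a^+$, then the one-sided derivative $f_+'(a)$ exists and equals $\ell$ (a direct consequence of the mean value theorem, writing $\tfrac{f(a+h)-f(a)}{h}=f'(\xi_h)$ with $\xi_h\to a^+$). Applying this with $f=g_3$ at $\rho=\pm 1$ reduces the lemma to two continuity claims: (i) $g_3$ is continuous on $[-1,1]$, and (ii) $\lambda_3(\rho)$, read off as the stated Gaussian expectation, extends continuously to $\rho=\pm 1$ with the finite value guaranteed by hypothesis. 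I note that $g_1$ and $g_2$ do not depend on $\rho$, so the only $\rho$-dependence in both $g_3$ and $\lambda_3$ enters through a single bivariate Gaussian expectation, which keeps the prefactor $\sigma_w^2 s_1 s_2 / \sqrt{g_1 g_2}$ inert under the limit.

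For claim (i) I would reuse the dominated-convergence argument already given for the kernel in \S~\ref{app:open_interval}. Writing $Z_2 = \rho Z_1 + \sqrt{1-\rho^2}\,Z_2'$ with $Z_1 \independent Z_2' \sim \mathcal{N}(0,1)$ removes the degeneracy of the joint density at $\rho=\pm 1$. The integrand obeys $|\psi(s_1 Z_1)\psi(s_2 Z_2)| \le \psi^2(s_1 Z_1)+\psi^2(s_2 Z_2)$, and since $|\psi|$ is polynomially bounded and $|Z_2|\le |Z_1|+|Z_2'|$, this is bounded by a fixed, $\rho$-free, Gaussian-integrable polynomial in $(|Z_1|,|Z_2'|)$. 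As $Z_2\to\pm Z_1$ almost surely when $\rho\to\pm 1$, dominated convergence yields continuity of $g_3$ up to the endpoints.

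For claim (ii) I would run the same scheme on $\mathbb{E}[\psi'(s_1 Z_1)\psi'(s_2 Z_2)]$. Under the representation above, $s_2 Z_2 \to \pm s_2 Z_1$ almost surely, so $\psi'(s_2 Z_2)\to\psi'(\pm s_2 Z_1)$ almost surely at every point where $\psi'$ is continuous. This is exactly where the hypothesis that $\psi$ is continuous almost everywhere enters: it ensures the distributional derivative $\psi'$ is represented almost everywhere by an ordinary function whose discontinuity set is Lebesgue-null, and since $Z_1$ has a density the pointwise convergence therefore holds almost surely. A polynomial domination of $|\psi'(s_1 Z_1)\psi'(s_2 Z_2)|$, uniform in $\rho$ by the same $|Z_2|\le |Z_1|+|Z_2'|$ bound, then gives $\lambda_3(\rho)\to\lambda_3(\pm 1)$ by dominated convergence, with the endpoint value finite by the standing assumption. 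Combining (i), (ii), and the real-analysis fact shows that the expression for $\lambda_3$ equals the one-sided derivative of $g_3$ at $\rho=\pm 1$, which is the assertion.

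The main obstacle I anticipate is the domination step in (ii): unlike in (i), $\psi'$ is only a distributional derivative, so I must first argue that under ``$\psi$ continuous a.e.'' it is represented by a locally bounded function controlled by a polynomial (so the uniform-in-$\rho$ dominating function exists), and confirm that no mass concentrates from jumps of $\psi$ as the covariance degenerates. The finiteness hypothesis on the expression for $\lambda_3$ over $[-1,1]$ is precisely the clause that excludes such pathologies: a Heaviside-type $\psi$, whose $\psi'$ is a Dirac mass, would make the endpoint expectation $\mathbb{E}[\psi'(s_1 Z_1)\psi'(\pm s_2 Z_1)]$ infinite and is thereby ruled out, so the two hypotheses together are exactly what legitimise the dominated-convergence passage.
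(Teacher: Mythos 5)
Your overall skeleton is the same as the paper's: continuity of $g_3$ (equivalently of $\kappa(\rho)=\mathbb{E}[\psi(s_1Z_1)\psi(s_2Z_2)]$) on $[-1,1]$ via dominated convergence, plus a mean-value-theorem argument to push the derivative formula to the endpoints; your claim (i) is essentially verbatim the paper's argument. The genuine gap is in your claim (ii). Your dominated-convergence passage for $\lambda_3$ requires three properties of $\psi'$: that it is represented (a.e.) by an ordinary function, that this function is continuous a.e.\ (so that $\psi'(s_2Z_2)\to\psi'(\pm s_2Z_1)$ almost surely), and that it is polynomially bounded (so that a $\rho$-free dominating function exists). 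None of these follow from the lemma's hypotheses. The specific assertion you lean on --- that ``$\psi$ continuous a.e.'' ensures $\psi'$ is a.e.\ an ordinary function with null discontinuity set --- is false: the Heaviside function is continuous a.e.\ with $\psi'=\delta$, and the Cantor function is continuous everywhere with $\psi'$ a singular measure. Your fallback, that the finiteness proviso ``excludes such pathologies'', is asserted rather than proved, and it is in fact insufficient.

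Concretely, let $\psi$ be an antiderivative of the indicator $1_F$ of a fat Cantor set $F$ (closed, positive Lebesgue measure, empty interior). Then $\psi$ is Lipschitz, continuous everywhere and polynomially bounded, and $\psi'=1_F$ is bounded by $1$, so the expression for $\lambda_3$ is finite on all of $[-1,1]$: every hypothesis of the lemma holds. But $1_F$ is discontinuous at every point of $F$ (each such point is a limit of points of $F^c$), a set that $s_2Z_1$ hits with positive probability, so the almost-sure convergence $\psi'(s_2Z_2)\to\psi'(\pm s_2Z_1)$ driving your dominated-convergence step is unavailable, even though the lemma's conclusion still holds for this $\psi$ (for instance via the Hermite--Mehler expansion $\mathbb{E}[f(Z_1)g(Z_2)]=\sum_k\rho^k\langle f,h_k\rangle\langle g,h_k\rangle$, valid for square-integrable $f,g$, which gives continuity of $\lambda_3$ on $[-1,1]$ by Abel's theorem). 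This is exactly why the paper's proof is structured differently: it never evaluates $\psi'$ pointwise, but keeps it inside the distributional pairing at interior $\rho$, uses two applications of Stein's lemma to identify $M_{-1,1}=\sup_{\rho\in(-1,1)}\big|s_1s_2\mathbb{E}[\psi'(s_1Z_1)\psi'(s_2Z_2)]\big|$, concludes that $\kappa$ is Lipschitz on $[-1,1]$ with this constant, and only then extends the derivative formula; the finiteness proviso is consumed there, not in a pointwise limit of $\psi'$. To repair your claim (ii) you would need a $\rho$-uniform integrability argument that avoids pointwise continuity of $\psi'$ --- e.g.\ dominating $|\psi'(s_1Z_1)\psi'(s_2Z_2)|$ by $\tfrac12\big(\psi'(s_1Z_1)^2+\psi'(s_2Z_2)^2\big)$, whose expectation is constant in $\rho$, combined with a Pratt-type generalised dominated convergence, or the spectral argument above --- and that is a different idea from the one you propose.
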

\begin{proof}
Under the same definition for $\kappa$ as in Appendix~\ref{app:inf_dif}
$$\kappa(\rho) = \int_{\mathbb{R}^2} \hat{\phi}_{\rho, s_1, s_2}(\mathbf{z})\psi(z_1)\psi(z_2)\,dz = \int_{\mathbb{R}^2} \phi(\mathbf{z}) \psi(s_1 z_1) \psi\big( s_2(z_1\rho + z_2\sqrt{1-\rho^2}) \big)\,d\mathbf{z},$$
where $\phi$ is the PDF of uncorrelated standard bivariate Gaussian random variables. Note that $\kappa$ is continuous in $\rho$ since
\begin{align*}
&\phantom{{}={}} \lim\limits_{\rho \to c} \int_{\mathbb{R}^2} \phi(\mathbf{z}) \psi(s_1 z_1) \psi\big( s_2(z_1\rho + z_2\sqrt{1-\rho^2}) \big)\,d\mathbf{z} \\
&= \int_{\mathbb{R}^2} \phi(\mathbf{z}) \psi(s_1 z_1) \lim\limits_{\rho \to c} \psi\big( s_2(z_1\rho + z_2\sqrt{1-\rho^2}) \big)\,d\mathbf{z},
\end{align*}
with the interchange of the limit being justified by dominated convergence, since
\begin{align*}
&\phantom{{}={}}\phi(\mathbf{z}) \psi(s_1 z_1) \psi\big( s_2(z_1\rho + z_2\sqrt{1-\rho^2}) \big) \\
&\leq \phi(\mathbf{z}) \text{max}\big\{ \psi^2(s_1 z_1), \psi^2\big(s_2(z_1\rho + z_2\sqrt{1-\rho^2}) \big\}  \\
&\leq \phi(\mathbf{z}) \Big(\psi^2(s_1 z_1)+ \psi^2\big(s_2(z_1\rho + z_2\sqrt{1-\rho^2}) \Big),
\end{align*}
which has integral $\mathbb{E} [\psi^2(s_1Z_1)] + \mathbb{E} [\psi^2(s_2Z_1)]$.

Then supposing WLOG $\rho_0 < \rho_1$ for $\rho_0, \rho_1 \in [-1, 1]$,
\begin{align*}
\big| \kappa(\rho_1) - \kappa(\rho_0) \big| &= \lim\limits_{a \to \rho_0^+}  \lim\limits_{b \to \rho_1^-} \big| \kappa(b) - \kappa(a) \big| \\
&\leq \lim\limits_{a \to \rho_0^+}  \lim\limits_{b \to \rho_1^-} M_{-1,1}|b - a| \\
&= M_{-1,1}|\rho_1 - \rho_0|,
\end{align*}
where $\kappa$ and $M_{-1,1}$ are as in Appendix~\ref{app:inf_dif}. Note $M_{-1,1}$ is finite since
\begin{align*}
M_{-1,1} &= \sup\limits_{\rho\in(-1, 1)} \Big| \frac{\partial}{\partial \rho} \int_{\mathbb{R}^2}\hat{\phi}_{1, 1, 1}(\mathbf{z})\psi(s_1z_1) \psi\big(s_2(z_1\rho + z_2\sqrt{1-\rho^2} z_2)\big)\,dz\Big| \\
&= \sup\limits_{\rho\in(-1, 1)} \Big| \int_{\mathbb{R}^2} \phi(\mathbf{z}) \psi(s_1 z_1) \psi'\big( s_2(z_1\rho + z_2\sqrt{1-\rho^2}) \big)s_2 \Big( z_1 + \frac{z_2\rho}{(1-\rho^2)^{1/2}} \Big)\,d\mathbf{z} \Big| \\
&= \sup\limits_{\rho\in(-1, 1)}\Big| s_1 s_2 \mathbb{E}[ \psi'(s_1 Z_1) \psi'(s_2 Z_2)]\Big| < \infty,
\end{align*}
where the last equality is due to two applications of Stein's lemma, as in the proof of Theorem~\ref{thm:jacobian}.

$\kappa$ therefore has finite derivative on $[-1, 1]$.
We have
\begin{align*}
\frac{d\kappa}{d\rho} &= \int_{\mathbb{R}^2} \phi(\mathbf{z}) \psi(s_1 z_1) \psi'\big( s_2(z_1\rho + z_2\sqrt{1-\rho^2}) \big)s_2 \Big( z_1 + \frac{z_2\rho}{(1-\rho^2)^{1/2}} \Big)\,d\mathbf{z}.
\end{align*}
Two applications of Stein's lemma as in the proof of Theorem~\ref{thm:jacobian} gives the desired result.
\end{proof}

\section{Example eigenvalues}
\label{app:eigen}
\subsection{GELU}
\label{app:gelu_eigen}
We would like to evaluate
\begin{align*}
\dot{k}(\mathbf{x}, \mathbf{x}') &= \sigma^2 \mathbb{E} \big[ \big(\Phi(\sigma \Vert \mathbf{x} \Vert Z_1) + (\sigma \Vert \mathbf{x} \Vert Z_1)\phi(\sigma \Vert \mathbf{x} \Vert Z_1) \big) \big(\Phi(\sigma \Vert \mathbf{x}' \Vert Z_2) + (\sigma \Vert \mathbf{x}' \Vert Z_2)\phi(\sigma \Vert \mathbf{x}' \Vert Z_2) \big) \big] \\
&=\sigma^2 \Bigg( \mathbb{E} \big[ \Phi(\sigma \Vert \mathbf{x} \Vert Z_1)\Phi(\sigma \Vert \mathbf{x}' \Vert Z_2) \big] + \mathbb{E} \big[ (\Phi(\sigma \Vert \mathbf{x} \Vert Z_1) (\sigma \Vert \mathbf{x}' \Vert Z_2)\phi(\sigma \Vert \mathbf{x}' \Vert Z_2) \big]  + \\
& \phantom{{}=\Bigg( \sigma^2 } \mathbb{E} \big[ (\Phi(\sigma \Vert \mathbf{x}' \Vert Z_1) (\sigma \Vert \mathbf{x} \Vert Z_2)\phi(\sigma \Vert \mathbf{x} \Vert Z_2) \big] + \mathbb{E}\big[ (\sigma \Vert \mathbf{x} \Vert Z_1)\phi(\sigma \Vert \mathbf{x} \Vert Z_1) (\sigma \Vert \mathbf{x}' \Vert Z_2)\phi(\sigma \Vert \mathbf{x}' \Vert Z_2) \big] \Bigg)
\end{align*}
The first term may be related to the result of~\citet{williams1997computing} since
\begin{align*}
&\phantom{{}={}}\mathbb{E}\big[ \Phi(\sigma \Vert \mathbf{x} \Vert Z_1) \Phi(\sigma \Vert \mathbf{x}' \Vert Z_2) \big] \\
&= \frac{1}{4}\mathbb{E} \Big[ \big(1 + \erf (\sigma \Vert \mathbf{x} \Vert Z_1/\sqrt{2}) \big)\big(1 + \erf (\sigma \Vert \mathbf{x}' \Vert Z_2/\sqrt{2}) \big)\Big] \\
&= \frac{1}{4} \Bigg( 1 + \mathbb{E}  \Big[  \erf(\sigma \Vert \mathbf{x} \Vert Z_1 /\sqrt{2}) \erf(\sigma \Vert \mathbf{x}' \Vert Z_2/\sqrt{2})     \Big] \Bigg) \\
&= \frac{1}{4} \Bigg( 1 + \frac{2}{\pi}\sin^{-1}\frac{\sigma^2 \Vert \mathbf{x} \Vert \Vert \mathbf{x}' \Vert \cos\theta}{\sqrt{1 + \sigma^2 \Vert \mathbf{x} \Vert^2}\sqrt{1 + \sigma^2 \Vert \mathbf{x}' \Vert^2}} \Bigg).
\end{align*}
The middle cross-terms are equal after permutations of $\mathbf{x}$ and $\mathbf{x}'$ by exchangeability of $Z_1$ and $Z_2$, and are given by
\begin{align*}
h(\beta) = \mathbb{E} \big[\Phi(\beta \sigma \Vert \mathbf{x} \Vert Z_1) (\sigma \Vert \mathbf{x} \Vert Z_2)\phi(\sigma \Vert \mathbf{x} \Vert Z_2) \big]
\end{align*}
evaluated at $\beta=1$. Differentiating under the integral, we obtain the initial value problem
\begin{align*}
\frac{dh}{d\beta} &= \mathbb{E}\big[ \sigma \Vert \mathbf{x} \Vert Z_1 \phi(\beta \sigma \Vert \mathbf{x} \Vert Z_1) \sigma \Vert \mathbf{x}' \Vert Z_2 \phi(\sigma \Vert \mathbf{x}'\Vert Z_2)  \big]\\
&= \frac{\sigma^2 \Vert \mathbf{x} \Vert \Vert \mathbf{x}' \Vert}{2\pi} \int Z_1 Z_2 \frac{1}{2\pi\sin\theta} \exp \Bigg(-\frac{1}{2} \mathbf{z}^\top \frac{1}{\sin^2\theta} \begin{bmatrix}
1 & -\cos\theta \\
-\cos\theta & 1
\end{bmatrix} \mathbf{z} \Bigg) \\
& \phantom{{}=\frac{\sigma^2 \Vert \mathbf{x} \Vert \Vert \mathbf{x}' \Vert}{2\pi} \int Z_1 Z_2 \frac{1}{2\pi\sin\theta} } \exp \Bigg(-\frac{1}{2} \mathbf{z}^\top \begin{bmatrix}
\beta^2 \sigma^2 \Vert \mathbf{x} \Vert^2  &0 \\
0 & \sigma^2 \Vert \mathbf{x}' \Vert^2
\end{bmatrix} \mathbf{z} \Bigg) \,d\mathbf{z} \\
&= \frac{\sigma^2 \Vert \mathbf{x} \Vert \Vert \mathbf{x}' \Vert}{2\pi} \int Z_1 Z_2 \frac{1}{2\pi\sin\theta} \exp \Bigg(-\frac{1}{2} \mathbf{z}^\top S^{-1} \mathbf{z} \Bigg)  \,d\mathbf{z},
\end{align*}
where
\begin{align*}
S^{-1} &= \frac{1}{\sin^2\theta} \begin{bmatrix}
1+\sin^2\theta \beta^2 \sigma^2 \Vert \mathbf{x} \Vert^2 & -\cos\theta \\
-\cos\theta & 1 + \sin^2\theta \sigma^2 \Vert \mathbf{x}' \Vert^2
\end{bmatrix}. \\
\implies S &= \frac{1}{1+\beta^2\sigma^2 \Vert \mathbf{x} \Vert^2 + \sigma^2 \Vert \mathbf{x}' \Vert^2 + \sin^2\theta \sigma^4 \Vert \mathbf{x} \Vert^2 \Vert \mathbf{x}' \Vert^2 } \begin{bmatrix}
1+\sin^2\theta \beta^2 \sigma^2 \Vert \mathbf{x} \Vert^2 & -\cos\theta \\
-\cos\theta & 1 + \sin^2\theta \sigma^2 \Vert \mathbf{x}' \Vert^2
\end{bmatrix} \\
\det S &= \frac{\sin^2\theta}{1+\beta^2 \sigma^2 \Vert \mathbf{x} \Vert^2 + \sigma^2 \Vert \mathbf{x}' \Vert^2 + \sin^2\theta \sigma^4 \Vert \mathbf{x} \Vert^2 \Vert \mathbf{x}' \Vert^2 \beta^2}.
\end{align*}
We then have that
\begin{align*}
\frac{dh}{d\beta} &= \frac{\sigma^2 \Vert \mathbf{x} \Vert \Vert \mathbf{x}' \Vert \sqrt{\det S}}{2\pi\sin\theta} \int Z_1 Z_2 \frac{1}{2\pi\sqrt{\det S}} \exp \Bigg(-\frac{1}{2} \mathbf{z}^\top S^{-1} \mathbf{z} \Bigg)  \,d\mathbf{z} \\
&= \frac{\sigma^2 \Vert \mathbf{x} \Vert \Vert \mathbf{x}' \Vert  \sqrt{\det S}}{2\pi\sin\theta} \cos\theta \sqrt{S_{11} S_{22}} \\
&=\frac{\sigma^2 \Vert \mathbf{x} \Vert \Vert \mathbf{x}' \Vert \cos\theta}{2\pi} \big( 1+\beta^2\sigma^2 \Vert \mathbf{x} \Vert^2 + \sigma^2 \Vert \mathbf{x}' \Vert^2 + \sin^2\theta \sigma^4 \Vert \mathbf{x} \Vert^2 \Vert \mathbf{x}' \Vert^2  \big)^{-3/2}  \\
&\phantom{{}={}} \sqrt{1+\sin^2\theta \beta^2 \sigma^2 \Vert \mathbf{x} \Vert^2} \sqrt{1+\sin^2\theta \sigma^2 \Vert \mathbf{x}' \Vert^2}, \\
&\geq \frac{\sigma^2 \Vert \mathbf{x} \Vert \Vert \mathbf{x}' \Vert \cos\theta}{2\pi} \big( 1+\beta^2\sigma^2 \Vert \mathbf{x} \Vert^2 + \sigma^2 \Vert \mathbf{x}' \Vert^2 + \sin^2\theta \sigma^4 \Vert \mathbf{x} \Vert^2 \Vert \mathbf{x}' \Vert^2  \big)^{-3/2}  \\
\quad h(0) &= 0,
\end{align*}
with solution
\begin{align*}
h(\beta) \geq \frac{\cos\theta \sigma^2 \Vert \mathbf{x} \Vert \Vert \mathbf{x}' \Vert \beta}{2\pi(\sigma^2 \Vert \mathbf{x}' \Vert^2+1)\sqrt{\beta^2 \sigma^4 \Vert \mathbf{x} \Vert^2 \Vert \mathbf{x}' \Vert^2 \sin^2\theta  + \beta^2\sigma^2 \Vert \mathbf{x} \Vert^2 + \sigma^2 \Vert \mathbf{x}' \Vert^2+ 1}}. \numberthis \label{eq:h_int}
\end{align*}

The last term,
\begin{align*}
&\phantom{{}={}}\sigma^2 \Vert \mathbf{x} \Vert  \Vert \mathbf{x}' \Vert \mathbb{E} \big[ Z_1 Z_2 \phi(\sigma \Vert \mathbf{x} \Vert Z_1) \phi(\sigma \Vert \mathbf{x}' \Vert Z_2) \big],
\end{align*}
is simply $\frac{dh}{d\beta}$ evaluated at $\beta=1$, and satisfies
\begin{align*}
\frac{dh}{d\beta} \Big|_{\beta = 1}&\geq \frac{\sigma^2 \Vert \mathbf{x} \Vert  \Vert \mathbf{x}' \Vert}{2\pi} \frac{\cos\theta}{\big(1+\sigma^2 \Vert \mathbf{x} \Vert^2 + \sigma^2 \Vert \mathbf{x}' \Vert^2 + \sigma^4 \Vert \mathbf{x} \Vert^2  \Vert \mathbf{x}' \Vert^2 \sin^2\theta\big)^{3/2}}.
\end{align*}

\subsection{ELU}
\label{app:elu_eigen}
The generalised derivative of the ELU is
$$\phi'(z) = \Theta(z) + \delta(z)z + \Theta(-z)e^z + \delta(-z)(1-e^z).$$
The second and last terms may be treated as zero since they vanish under integration. We would like to evaluate
\begin{align*}
\dot{k}(\mathbf{x}, \mathbf{x}') &= \sigma^2 \mathbb{E} \big[ \big( \Theta(\sigma \Vert \mathbf{x} \Vert Z_1) + \Theta(-\sigma \Vert \mathbf{x} \Vert Z_1)e^{\sigma \Vert \mathbf{x} \Vert Z_1} \big) \big( \Theta(\sigma \Vert \mathbf{x}' \Vert Z_2) + \Theta(-\sigma \Vert \mathbf{x}' \Vert Z_2)e^{\sigma \Vert \mathbf{x}' \Vert Z_2} \big)\big] \\
&= \sigma^2 \Bigg( \mathbb{E} \big[ \Theta(\sigma \Vert \mathbf{x} \Vert Z_1)  \Theta(\sigma \Vert \mathbf{x}' \Vert Z_2) \big] +  \mathbb{E} \big[  \Theta(\sigma \Vert \mathbf{x} \Vert Z_1)  \Theta(-\sigma \Vert \mathbf{x}' \Vert Z_2)e^{\sigma \Vert \mathbf{x}' \Vert Z_2} \big] + \\
&\phantom{{}=\sigma^2 \Bigg(} \mathbb{E} \big[ \Theta(-\sigma \Vert \mathbf{x} \Vert Z_1)e^{\sigma \Vert \mathbf{x} \Vert Z_1} \Theta(\sigma \Vert \mathbf{x}' \Vert Z_2) + \big] + \mathbb{E} \big[ \Theta(-\sigma \Vert \mathbf{x} \Vert Z_1)e^{\sigma \Vert \mathbf{x} \Vert Z_1} \Theta(-\sigma \Vert \mathbf{x}' \Vert Z_2)e^{\sigma \Vert \mathbf{x}' \Vert Z_2} \big]  \Bigg)
\end{align*}
The first term is given by Sheppard's identity~\citep{sheppard1899iii}, or as an arc-cosine kernel~\citep{NIPS2009_3628}.

The cross-terms are equal after permuting $\mathbf{x}$ and $\mathbf{x}'$ by exchangeability of $Z_1$ and $Z_2$, and can be evaluated by completing the square of the exponential terms.
\begin{align*}
&\phantom{{}={}} \mathbb{E} \big[  \Theta(\sigma \Vert \mathbf{x} \Vert Z_1)\Theta(-\sigma \Vert \mathbf{x}' \Vert Z_2)e^{\sigma \Vert \mathbf{x}' \Vert Z_2} \big] \\
&= \frac{1}{2\pi\sin\theta} \int \Theta(z_1) \Theta(-z_2) \exp\Bigg( -\frac{1}{2} \bigg( \mathbf{z}^\top S^{-1} \mathbf{z} - 2 \sigma \Vert \mathbf{x}' \Vert z_2 \bigg) \Bigg) \, d\mathbf{z}, \qquad S = \begin{bmatrix}
1 & \cos\theta \\
\cos\theta & 1
\end{bmatrix} \\
&=  \frac{1}{2\pi\sin\theta} \int \Theta(z_1) \Theta(-z_2) \exp\Bigg( -\frac{1}{2} \bigg( \mathbf{z}-S \begin{pmatrix}
0 \\
\sigma \Vert \mathbf{x}' \Vert
\end{pmatrix} \bigg)^\top S^{-1}  \bigg( \mathbf{z}-S \begin{pmatrix}
0 \\
\sigma \Vert \mathbf{x}' \Vert
\end{pmatrix} \bigg)  \Bigg) \, d\mathbf{z} \exp\Big( \frac{1}{2} \sigma^2 \Vert \mathbf{x}' \Vert^2 \Big) \\
&= \mathbb{E} \big[ \Theta(\hat{Z}_1) \Theta(\hat{Z}_2) \big] e^{\frac{1}{2} \sigma^2 \Vert \mathbf{x}' \Vert^2 }, \qquad (\hat{Z}_1, \hat{Z}_2) \sim \mathcal{N}\Big( \begin{pmatrix} \cos\theta \sigma \Vert \mathbf{x}' \Vert \\ \sigma \Vert \mathbf{x}' \Vert \end{pmatrix},  S\Big) \\
&= \mathbb{E} \big[ \Theta(Z_1 + \sigma \Vert \mathbf{x}' \Vert \cos\theta) \Theta(-Z_2 - \sigma \Vert \mathbf{x}' \Vert )\big] e^{\frac{1}{2} \sigma^2 \Vert \mathbf{x}' \Vert^2 } \\
&= e^{\frac{1}{2} \sigma^2 \Vert \mathbf{x}' \Vert^2 } \Phi(\sigma \Vert \mathbf{x}' \Vert \cos\theta, -\sigma \Vert \mathbf{x}' \Vert; -\cos\theta).
\end{align*}

The last term is evaluated similarly,
\begin{align*}
&\phantom{{}={}} \mathbb{E} \big[  \Theta(-\sigma \Vert \mathbf{x} \Vert Z_1)\Theta(-\sigma \Vert \mathbf{x}' \Vert Z_2)e^{\sigma  (\Vert \mathbf{x} \Vert Z_1 + \Vert \mathbf{x}' \Vert Z_2)} \big] \\
&=\frac{1}{2\pi \sin\theta} \int \Theta(-z_1) \Theta(-z_2) \exp\Bigg( -\frac{1}{2} \Big( \mathbf{z}^\top S^{-1} \mathbf{z}  - 2\begin{pmatrix}
\sigma \Vert \mathbf{x} \Vert \\ \sigma \Vert \mathbf{x}' \Vert
\end{pmatrix} \mathbf{z} 
\Big) \Bigg) \,d\mathbf{z}  \\
&=\frac{1}{2\pi \sin\theta} \int \Theta(-z_1) \Theta(-z_2) \exp\Bigg( -\frac{1}{2} \Bigg( \mathbf{z} - S \begin{pmatrix}
\sigma \Vert \mathbf{x} \Vert \\
\sigma \Vert \mathbf{x}' \Vert
\end{pmatrix} \Bigg)^\top S^{-1} \Bigg( \mathbf{z} - S \begin{pmatrix}
\sigma \Vert \mathbf{x} \Vert \\
\sigma \Vert \mathbf{x}' \Vert
\end{pmatrix} \Bigg) \Bigg) \,d\mathbf{z}  \\
&= \mathbb{E} \big[ \Theta(-\hat{Z}_1) \Theta(-\hat{Z}_2) \big] e^{\frac{1}{2} \begin{pmatrix}
\sigma \Vert \mathbf{x} \Vert \\ \sigma \Vert \mathbf{x}' \Vert
\end{pmatrix}^\top S \begin{pmatrix}
\sigma \Vert \mathbf{x} \Vert \\ \sigma \Vert \mathbf{x}' \Vert
\end{pmatrix} }, \qquad (\hat{Z}_1, \hat{Z}_2) \sim \mathcal{N}\Big( S\begin{pmatrix} \sigma \Vert \mathbf{x} \Vert \\ \sigma \Vert \mathbf{x}' \Vert \end{pmatrix},  S\Big) \text{ and if } \Vert \mathbf{x} \Vert = \Vert \mathbf{x}' \Vert, \\
&= e^{\sigma^2 \Vert \mathbf{x} \Vert^2(1+\cos\theta) } \mathbb{E} \big[ \Theta \big(-Z_1 - \sigma \Vert \mathbf{x} \Vert (1+\cos\theta) \big)  \Theta\big(-Z_2 - \sigma \Vert \mathbf{x} \Vert(1+\cos\theta) \big)\big]  \\
&= e^{\sigma^2 \Vert \mathbf{x} \Vert^2(1+\cos\theta) } \Phi \Big( -\sigma \Vert \mathbf{x} \Vert (1+\cos\theta), -\sigma \Vert \mathbf{x} \Vert(1+\cos\theta); \cos\theta \Big).
\end{align*}

\section{Degenerate priors and posteriors}
\label{app:degenerate}

\propDegeneratePrior*
\begin{proof}
The variance of the Gaussian random variable $f^{(L)}(\mathbf{x}_1) - f^{(L)}(\mathbf{x}_2)$ is
\begin{align*}
    (\sigma^{(L)})^2 := \text{Var}\big( f^{(L)}(\mathbf{x}_1) - f^{(L)}(\mathbf{x}_2) \big) &= \text{Var} f^{(L)}(\mathbf{x}_1) + \text{Var} f^{(L)}(\mathbf{x}_2) - 2\text{Cov} \big( f^{(L)}(\mathbf{x}_1), f^{(L)}(\mathbf{x}_2) \big) \\
    &= k^{(L)}(\mathbf{x}_1, \mathbf{x}_1) + k^{(L)}(\mathbf{x}_2, \mathbf{x}_2) - 2 k^{(L)}(\mathbf{x}_1, \mathbf{x}_2).
\end{align*}
Taking the limit as $L \to \infty$, we obtain that $\lim\limits_{L \to \infty} \text{Var}\big( f^{(L)}(\mathbf{x}_1) - f^{(L)}(\mathbf{x}_2) \big) =0$. The mean is $0$ by assumption. The characteristic function of the sequence of random variables $\{f^{(L)}(\mathbf{x}_1) - f^{(L)}(\mathbf{x}_2) \}_{L=1}^\infty$ therefore converges to
\begin{align*}
    \lim\limits_{L \to \infty} e^{-\frac{1}{2}(\sigma^{(L)}t)^2} = 1.
\end{align*}
We may therefore apply L\'evy's continuity theorem~\citep[Theorem 26.3]{billingsley2008probability} (which among other things says that pointwise convergence of the characteristic function implies convergence in distribution), concluding that $f^{(L)}(\mathbf{x}_1) - f^{(L)}(\mathbf{x}_2)$ converges in distribution to the random variable that is almost surely $0$.
\end{proof}

\propDegeneratePosterior*
\begin{proof}
Let $\mathbf{X}_*$ be the $2 \times n^{(0)}$ matrix containing $\mathbf{x}_1^\top$ as the first row and $\mathbf{x}_2^\top$ as the second row. The posterior predictive is
\begin{align*}
 \mathbf{f}^{(L)}(\mathbf{X}_*) \mid \mathbf{X}, \mathbf{Y}, \mathbf{X}_* &\sim  \mathcal{N}\big(\overline{\mathbf{f}}^{(L)}_*, \text{cov}(\mathbf{f}^{(L)}_*) \big), \quad \text{ where}
 \end{align*}
 \begin{align*}
    \overline{\mathbf{f}}^{(L)}_* &= k^{(L)}(\mathbf{X}_*, \mathbf{X}) \big( k^{(L)} (\mathbf{X}, \mathbf{X}) + \sigma_n^2 I \big)^{-1} \mathbf{Y}, \quad \text{ and} \\
    \text{cov}(\mathbf{f}_*^{(L)}) &= k^{(L)} k^{(L)}(\mathbf{X}_*, \mathbf{X}_*) - k^{(L)} k^{(L)}(\mathbf{X}_*, \mathbf{X})  \big( k^{(L)} (\mathbf{X}, \mathbf{X}) + \sigma_n^2 I \big)^{-1} k^{(L)}(\mathbf{X}, \mathbf{X}_*) .
\end{align*}
 Denote by $D_{2 \times N}(\mathbf{X})$ the $2 \times N$ matrix with $ij$th entry being equal to $D(\mathbf{r}_j)$ where $\mathbf{r}^\top_j$ is the $j$th row of $\mathbf{X}$. Let $D_{N \times 2}(\mathbf{X}) = D_{2 \times N}(\mathbf{X})^\top$. Note both rows of $D_{2 \times N}(\mathbf{X})$ are the same. Similarly, both columns of $D_{N \times 2}(\mathbf{X})$ are the same. Denote by $f^{(L)}_*(\mathbf{x}_1)$ and $f^{(L)}_*(\mathbf{x}_2)$ the random variables representing the posterior predictive process evaluated at $\mathbf{x}_1$ and $\mathbf{x}_2$ respectively. The mean of the Gaussian random variable $f^{(L)}_*(\mathbf{x}_1) - f^{(L)}_*(\mathbf{x}_2)$ satisfies
\begin{align*}
    \mu^{(L)} :&= \Big( k^{(L)}(\mathbf{x}_1, \mathbf{X}) - k^{(L)}(\mathbf{x}_2, \mathbf{X}) \Big) \big( k^{(L)} (\mathbf{X}, \mathbf{X}) + \sigma_n^2 I \big)^{-1} \mathbf{Y} \\
    \lim\limits_{L \to \infty} \mu^{(L)} &= \Big( D_{1 \times N}(\mathbf{X}) - D_{1 \times N}(\mathbf{X}) \Big) \lim\limits_{L \to \infty}  \big( k^{(L)} (\mathbf{X}, \mathbf{X}) + \sigma_n^2 I \big)^{-1} \mathbf{Y} = 0.
\end{align*}
Here on the second line we used the product rule for limits. Note that $\lim\limits_{L \to \infty} \big( k^{(L)} (\mathbf{X}, \mathbf{X}) + \sigma_n^2 I \big)^{-1}$ exists and every element is finite. This is because $k^{(L)} (\mathbf{X}, \mathbf{X}) + \sigma_n^2 I $ is a positive definite matrix (with non-zero determinant), so that the inverse operation is continuous with respect to $L$.

Letting $C_{2 \times 2}$ denote the $2 \times 2$ matrix where every element is $C$, the covariance matrix of the posterior process satisfies
\begin{align*}
    \lim\limits_{L \to \infty} \text{cov}(\mathbf{f}_*) &= \lim\limits_{L \to \infty} k^{(L)}(\mathbf{X}_*, \mathbf{X}_*) - \lim\limits_{L \to \infty} k^{(L)}(\mathbf{X}_*, \mathbf{X})  \big( k^{(L)} (\mathbf{X}, \mathbf{X}) + \sigma_n^2 I \big)^{-1} k^{(L)}(\mathbf{X}, \mathbf{X}_*) \\
    &= C_{2 \times 2} - D_{2 \times N}(\mathbf{X}) \lim\limits_{L \to \infty} \big( k^{(L)} (\mathbf{X}, \mathbf{X}) + \sigma_n^2 I \big)^{-1} D_{N \times 2}(\mathbf{X}).
\end{align*}
Note that every element of this matrix is the same. Therefore, $\lim\limits_{L \to \infty} \text{Var} \big(f^{(L)}_*(\mathbf{x}_1) - f^{(L)}_*(\mathbf{x}_2)\big) = 0.$ The characteristic function of $f^{(L)}_*(\mathbf{x}_1) - f^{(L)}_*(\mathbf{x}_2)$ therefore converges to
\begin{align*}
    \lim\limits_{L \to \infty} e^{-i\mu^{(L)}t} e^{-\frac{1}{2}\Big(\text{Var} \big(f^{(L)}_*(\mathbf{x}_1) - f^{(L)}_*(\mathbf{x}_2)\big) t \Big)^2} = 1.
\end{align*}
We may therefore apply L\'evy's continuity theorem~\citep[Theorem 26.3]{billingsley2008probability} (which among other things says that pointwise convergence of the characteristic function implies convergence in distribution), concluding that $f^{(L)}(\mathbf{x}_1) - f^{(L)}(\mathbf{x}_2)$ converges in distribution to the random variable that is almost surely $0$.
\end{proof}

\corDegenerateLReLU*
\begin{proof}
We have that
\begin{align*}
    \mathbb{E} \big[ \psi'( Z_1)  \psi'( Z_2)  \big] &= (1-a)^2 \mathbb{P}(Z_1 > 0, Z_2 >0) + \\
    &\phantom{{}={}} a(1-a) \mathbb{P}(Z_1 > 0) + a(1-a) \mathbb{P}(Z_2 > 0) + a^2 \\
    &= \frac{(1-a)^2}{2\pi}(\pi - \theta) + a,
\end{align*}
and
\begin{align*}
\mathbb{E}[\psi^2(Z_1)] &= (1 - a)^2 \frac{1}{2} + 2(1-a)a \mathbb{E}[\Theta(Z_1) Z_1^2]+ a^2 \\
&= (1 - a)^2 \frac{1}{2} + (1-a)a + a^2 \\
&=  \frac{(1 - a)^2}{2} + a,
\end{align*}
implying by Theorem~\ref{thm:jacobian} that $\lambda_3 < 1$. Then by Corollary~\ref{cor:abs_hom_fp}, $g_3(s^2, s^2, \cdot)$ admits a unique fixed point at $1$.

Let $\mathbf{x}_1, \mathbf{x}_2 \in \mathcal{X}_*$ and let $\Vert \mathbf{x}_1 \Vert^2 = s^2$ on the chosen hypersphere. As shown above, $g_3(s^2, s^2, \cdot)$ admits a unique fixed point at $1$ and $s^2$ is a fixed point of $g_1(\cdot, s_2^2, \rho)$ for any $s_2^2$ and $\rho$. Starting from any $\rho = \cos^{-1}\frac{\mathbf{x}_1 \cdot \mathbf{x}_2}{\Vert \mathbf{x}_1 \Vert \Vert \mathbf{x}_2 \Vert} \in [-1, 1]$, we have that $L$ compositions of $\mathbf{g}$ applied to $(s^2, s^2, \rho)$ converges to $(s^2, s^2, 1)$ as $L \to \infty$. 

With a slight abuse of notation, denote $L$ compositions of $g_1(\cdot, s_2^2, \rho)$ (which does not depend on $s_2^2$ or $\rho$) by $g_1^{(L)}$, and likewise for $g_2^{(L)}$. Also denote by $g_3^{(L)}$ the $L$-times composition of $g_3$.

We have that
\begin{align*}
    \lim\limits_{L \to \infty} k^{(L)} (\mathbf{x}_1, \mathbf{x}_2) &= \lim\limits_{L \to \infty} \sqrt{g_1^{(L)}(s^2)g_2^{(L)}(s^2)} \lim\limits_{L \to \infty} g_3^{(L)}(s^2, s^2, \rho) \\
    &= s^2.
\end{align*}

Also, for any $\mathbf{x}_3 \in \mathcal{X} \supseteq \mathcal{X}_*$, noting that $\Vert\mathbf{x}_3 \Vert^2$ is a fixed point of $g_1$ and $g_2$,
    \begin{align*}
    \lim\limits_{L \to \infty} k^{(L)} (\mathbf{x}_1, \mathbf{x}_3) = \lim\limits_{L \to \infty} k^{(L)} (\mathbf{x}_2, \mathbf{x}_3) &= \lim\limits_{L \to \infty} \sqrt{g_1^{(L)}(s^2)g_2^{(L)}(\Vert \mathbf{x}_3 \Vert^2)} \lim\limits_{L \to \infty} g_3^{(L)}(s^2, \Vert \mathbf{x}_3 \Vert^2, \rho) \\
    &= s \Vert \mathbf{x}_3 \Vert.
\end{align*}
We may now apply Propositions~\ref{propDegeneratePrior} and~\ref{propDegeneratePosterior}.
\end{proof}

\section{Extension to NTK}
\label{app:extension_ntk}
The problem in applying Theorem~\ref{thm:banach} to \S~\ref{sec:ntk_jacob} is that Theorem~\ref{thm:banach} requires our iterates to be over a closure of a strictly convex set. Inspecting~\eqref{eq:ntk}, we see that the NTK $T^{(l+1)}(\mathbf{x}_1, \mathbf{x}_2)$ is a sum of $l+1$ terms
\begin{align*}
T^{(l+1)}(\mathbf{x}_1, \mathbf{x}_2) &= k^{(l+1)}(\mathbf{x}_1, \mathbf{x}_2) + \sum_{j=1}^{l} k^{(j)}(\mathbf{x}_1, \mathbf{x}_2) \prod_{p=j+1}^{l+1} \dot{k}^{(p)}(\mathbf{x}_1, \mathbf{x}_2),
\end{align*}
which can grow like $l$ and push iterates outside of a bounded set. We are therefore motivated to consider a rescaled NTK, 
\begin{align*}
T^{(l+1)}(\mathbf{x}_1, \mathbf{x}_2) &:= \frac{1}{l+1} \Big( k^{(l+1)}(\mathbf{x}_1, \mathbf{x}_2) + \sum_{j=1}^{l} k^{(j)}(\mathbf{x}_1, \mathbf{x}_2) \prod_{p=j+1}^{l+1} \dot{k}^{(p)}(\mathbf{x}_1, \mathbf{x}_2) \Big), \\
&= \frac{1}{l+1} \Big( l T^{(l)}\dot{k}^{(l+1)}(\mathbf{x}_1, \mathbf{x}_2) + k^{(l+1)}(\mathbf{x}_1, \mathbf{x}_2) \Big)
\end{align*}
to ensure that the sum stays of the right order across iterates of depth. In order to do so, we construct an augmented state space that additionally allows us to track the reciprocal $\tau$ of $l+1$,
\begin{align*}
h_3(k, T, \tau) &= \sigma_w^2\mathbb{E}\big[ \psi( s Z_1) \psi( s Z_2) \big] + \sigma_b^2 , \\
h_4(k, T, \tau) &= \tau \Big( \big(\tau^{-1}-1\big) T\sigma_w^2 \mathbb{E}\big[ \psi'( s Z_1) \psi'( s Z_2) \big] + \sigma_w^2\mathbb{E}\big[ \psi( s Z_1) \psi( s Z_2) \big] + \sigma_b^2 \Big), \\
h_5(k, T, \tau)&= \frac{1}{\frac{1}{\tau} + 1},
\end{align*} 
so that repeated applications of $h_5(\cdot, \cdot, \tau)$ generates the sequence $\frac{1}{2}, \frac{1}{3}, \frac{1}{4}, \dotsc$. Here we have assumed $h_1$ and $h_2$ stay constant through the iterates, as in \S~\ref{sec:ntk_jacob}. If $|\lambda_3| < 1$, then $|h_4(k, T, \tau)| < 1$. This allows us to consider $(k, T, \tau)$ as elements in the closure of $ \mathcal{S}$, where $\mathcal{S}=(-s^2, s^2)^2 \times (0, \frac{1}{2}] $. The Jacobian is then
\begin{align*}
J = \begin{pmatrix}
\frac{\partial h_3}{\partial k}  & \frac{\partial h_3}{\partial T} & \frac{\partial h_3}{\partial \tau} \\[6pt]
\frac{\partial h_4}{\partial k}  & \frac{\partial h_4}{\partial T} & \frac{\partial h_4}{\partial \tau} \\[6pt]
\frac{\partial h_5}{\partial k}  & \frac{\partial h_5}{\partial T} & \frac{\partial h_5}{\partial \tau} 
\end{pmatrix} &= \begin{pmatrix}
\frac{\partial h_3}{\partial k}  & 0 & 0 \\[6pt]
\frac{\partial h_4}{\partial k}  & \frac{\partial h_4}{\partial T} & \frac{\partial h_4}{\partial \tau} \\[6pt]
0  & 0 & \frac{1}{(1+\tau)^2}
\end{pmatrix},
\end{align*}
with eigenvalues given by the diagonal elements,
\begin{align*}
\frac{\partial h_3}{\partial k}  &= \frac{\partial h_3 }{\partial \rho }\big( \frac{\partial k}{\partial \rho} \big)^{-1} = \frac{\partial g_3 }{\partial \rho }\sqrt{h_1 h_2} (s_1 s_2)^{-1}  = (\sigma^*)^2 \mathbb{E}\big[ \psi'( s_1 Z_1) \psi'( s_2 Z_2) \big] =\lambda_3, \\
 \frac{\partial h_4}{\partial T} &= (1-\tau) \lambda_3 < \lambda_3, \text{ and} \\
 \frac{1}{(1+\tau)^2} &< 1.
\end{align*}

\section{Experimental details}
\citet{HernandezLobato2015} introduce a \emph{regression} benchmark consisting of $9$ datasets. These datasets contain $N$ datapoints and $d$ dimensions, as indicated in Table~\ref{tab_bench_all}.
\subsection{Shallow models}
\label{sec:appendix_shallow_models}

We largely followed the experimental protocol originally established by \citet{HernandezLobato2015} which was designed to assess uncertainty estimates of Bayesian NNs. Since we used GPs, we found it necessary to deviate in several places as follows.

We performed exact GP regression, and hence didn't scale up to the larger datasets using sparse methods. We excluded Song Year (the largest dataset with $N=515,345$) and uniformly subsampled the Protein dataset down to $N=10,000$ datapoints. We had four hyperparameters to tune; first layer weight variance, first layer bias variance, final layer weight variance, data noise variance. We excluded the final bias to reduce the search space; this bias term should not be required due to our standardisation prepossessing. We allowed these to vary in the below ranges.

$\sigma^2_{w0} \in \{ 10,3.5,3,2.5,2,1.5,1,0.5,0.1\}$

$\sigma^2_{b0} \in \{ 10,3.5,3,2.5,2,1.5,1,0.5,0.1\}$

$\sigma^2_{w1} \in \{ 10,3.5,3,2.5,2,1.5,1,0.5,0.1\}$

$\sigma^2_\epsilon \in \{ 1e0,1e-1,1e-2,1e-3,1e-4,1e-5\}$

To choose hyperparameters, we ran a full grid search over these values using $70\%$ for training and $20\%$ as a validation set. We selected according to NLL on the validation set (setting according to marginal likelihood produced comparable though slightly worse results). Unlike the original protocol, we performed this tuning only on the first train/test split of each dataset, holding hyperparameters constant for the remaining splits. The leaky ReLU kernel has an additional parameter, $\alpha$, corresponding the negative slope. This was fixed at $\alpha=0.2$.

Test/train splits were randomly shuffled but used the same random seed across kernels. Experiments were repeated 20 times for all datasets except Protein which was only repeated 5 times. Following previous works reporting on the benchmark, we include results in tabular format in Table \ref{tab_bench_all}. These are consistent with Figures \ref{fig:bench_regression_RMSE} and~\ref{fig:bench_regression_nll}. The experimental evaluation of~\citet{bui2016deep} performs the same benchmark on a number of other methods. In order to compare our results to others, we include their benchmark results in Table~\ref{tab:bench_other}.

\begin{table*}%
\caption{Performance on benchmark regression datasets using infinite wide Bayesian neural networks. Mean $\pm$ 1 standard error.}
\vskip 0.15in
\begin{center}
\resizebox{\textwidth}{!}{
\begin{tabular}{ l c c | r r r r | r r r r}
\multicolumn{3}{c}{}  & \multicolumn{4}{c}{\textbf{NLL}} &
\multicolumn{4}{|c}{\textbf{RMSE}} \\

\multicolumn{1}{c}{} 
& \multicolumn{1}{c}{$N$}
& \multicolumn{1}{c}{$d$}
& \multicolumn{1}{c}{GELU}
&  \multicolumn{1}{c}{ReLU} 
&  \multicolumn{1}{c}{L. ReLU} 
&  \multicolumn{1}{c}{ERF}
& \multicolumn{1}{|c}{GELU}
&  \multicolumn{1}{c}{ReLU} 
&  \multicolumn{1}{c}{L. ReLU} 
&  \multicolumn{1}{c}{ERF} \\ 
 
\hline 
Boston & 506 & 13 & 2.54 $\pm$ 0.05 & 2.54 $\pm$ 0.05 & 2.55 $\pm$ 0.05 & 2.48 $\pm$ 0.05 & 3.09 $\pm$ 0.18 & 3.11 $\pm$ 0.18 & 3.14 $\pm$ 0.18 & 2.89 $\pm$ 0.16 \\
Concrete & 1030 & 8 & 3.05 $\pm$ 0.01 & 3.05 $\pm$ 0.01 & 3.05 $\pm$ 0.01 & 3.07 $\pm$ 0.01 & 5.09 $\pm$ 0.12 & 5.04 $\pm$ 0.12 & 4.99 $\pm$ 0.12 & 5.08 $\pm$ 0.12 \\
Energy & 768 & 8 & 0.82 $\pm$ 0.02 & 0.83 $\pm$ 0.02 & 0.86 $\pm$ 0.02 & 0.88 $\pm$ 0.03 & 0.59 $\pm$ 0.02 & 0.57 $\pm$ 0.02 & 0.58 $\pm$ 0.02 & 0.64 $\pm$ 0.02 \\
Kin8nm & 8192 & 8 & -1.18 $\pm$ 0.01 & -1.19 $\pm$ 0.01 & -1.20 $\pm$ 0.01 & -1.18 $\pm$ 0.01 & 0.07 $\pm$ 0.00 & 0.07 $\pm$ 0.00 & 0.07 $\pm$ 0.00 & 0.07 $\pm$ 0.00 \\
Naval & 11,934 & 16 & -7.89 $\pm$ 0.00 & -7.87 $\pm$ 0.00 & -7.87 $\pm$ 0.00 & -7.88 $\pm$ 0.00 & 0.00 $\pm$ 0.00 & 0.00 $\pm$ 0.00 & 0.00 $\pm$ 0.00 & 0.00 $\pm$ 0.00 \\
Power & 9568& 4 & 2.88 $\pm$ 0.01 & 2.97 $\pm$ 0.03 & 2.88 $\pm$ 0.01 & 2.88 $\pm$ 0.01 & 3.93 $\pm$ 0.04 & 3.65 $\pm$ 0.04 & 3.89 $\pm$ 0.04 & 3.91 $\pm$ 0.04 \\
Protein &45,730 & 9 & 2.94 $\pm$ 0.01 & 2.94 $\pm$ 0.01 & 2.99 $\pm$ 0.00 & 2.82 $\pm$ 0.01 & 4.01 $\pm$ 0.02 & 3.99 $\pm$ 0.02 & 4.22 $\pm$ 0.02 & 3.92 $\pm$ 0.02 \\
Wine & 1599 & 11 & -0.04 $\pm$ 0.04 & 0.01 $\pm$ 0.04 & -0.06 $\pm$ 0.04 & -0.13 $\pm$ 0.04 & 0.65 $\pm$ 0.01 & 0.66 $\pm$ 0.01 & 0.66 $\pm$ 0.01 & 0.60 $\pm$ 0.01 \\
Yacht & 308 & 6 & 0.02 $\pm$ 0.07 & 0.48 $\pm$ 0.09 & 0.52 $\pm$ 0.09 & 0.17 $\pm$ 0.07 & 0.37 $\pm$ 0.04 & 0.59 $\pm$ 0.08 & 0.60 $\pm$ 0.08 & 0.51 $\pm$ 0.06 \\
\end{tabular}
}
\end{center}
\vskip -0.1in
\label{tab_bench_all}
\end{table*}

\begin{table*}%
\caption{Performance (RMSE) on benchmark regression datasets of other methods, plus or minus . See~\citet{bui2016deep} for full details}
\vskip 0.15in
\begin{center}
\resizebox{\textwidth}{!}{
\begin{tabular}{ l c c | r r r r r r r r r r r r r r r r}
\multicolumn{1}{c}{} 
& \multicolumn{1}{c}{$N$}
& \multicolumn{1}{c}{$d$}
& \multicolumn{1}{c}{VI(G)-1}
&  \multicolumn{1}{c}{VI(KW)-1} 
&  \multicolumn{1}{c}{VI(KW)-2} 
&  \multicolumn{1}{c}{PBP-1}
&  \multicolumn{1}{c}{Dropout-1}
&  \multicolumn{1}{c}{SGLD-1} 
&  \multicolumn{1}{c}{SGLD-2} 
&  \multicolumn{1}{c}{HMC-1}
&  \multicolumn{1}{c}{GP 50} 
&  \multicolumn{1}{c}{DGP-1 50} 
&  \multicolumn{1}{c}{DGP-2 50} 
&  \multicolumn{1}{c}{DGP-3 50} 
&  \multicolumn{1}{c}{GP 100} 
&  \multicolumn{1}{c}{DGP-1 100} 
&  \multicolumn{1}{c}{DGP-2 100} 
&  \multicolumn{1}{c}{DGP-3 100} 
\\ 
\hline 
Boston & 506 & 13 & 4.32$\pm$ 0.29 & 2.67$\pm$ 0.11 & 3.06$\pm$ 0.13 & 3.01$\pm$ 0.18 & 2.97$\pm$ 0.85 & 2.21$\pm$ 0.10 & 1.96$\pm$ 0.10 & 2.76$\pm$ 0.20 & 2.43$\pm$ 0.12 & 3.02$\pm$ 0.20 & 2.38$\pm$ 0.12 & 2.33$\pm$ 0.12 & 2.39$\pm$ 0.12 & 3.56$\pm$ 0.29 & 2.38$\pm$ 0.11 & 2.38$\pm$ 0.12 \\
Concrete & 1030 & 8 & 7.13$\pm$ 0.12 & 4.99$\pm$ 0.14 & 5.09$\pm$ 0.12 & 5.67$\pm$ 0.09 & 5.23$\pm$ 0.53 & 4.19$\pm$ 0.13 & 3.70$\pm$ 0.13 & 4.12$\pm$ 0.14 & 5.55$\pm$ 0.12 & 7.33$\pm$ 0.25 & 4.64$\pm$ 0.11 & 4.66$\pm$ 0.13 & 4.78$\pm$ 0.12 & 6.03$\pm$ 0.17 & 4.16$\pm$ 0.13 & 4.23$\pm$ 0.12 \\
Energy & 768 & 8 & 2.65$\pm$ 0.08 & 2.50$\pm$ 0.06 & 1.59$\pm$ 0.13 & 1.80$\pm$ 0.05 & 1.66$\pm$ 0.19 & 1.15$\pm$ 0.03 & 0.91$\pm$ 0.03 & 0.48$\pm$ 0.01 & 1.02$\pm$ 0.02 & 0.84$\pm$ 0.03 & 0.57$\pm$ 0.02 & 0.54$\pm$ 0.01 & 0.87$\pm$ 0.03 & 0.80$\pm$ 0.03 & 0.56$\pm$ 0.02 & 0.56$\pm$ 0.01 \\
Kin8nm & 8192 & 8 & 0.10$\pm$ 0.00 & 0.02$\pm$ 0.00 & 0.01$\pm$ 0.00 & 0.10$\pm$ 0.00 & 0.10$\pm$ 0.00 & 0.02$\pm$ 0.00& 0.01$\pm$ 0.00 & 0.06$\pm$ 0.00 & 0.07$\pm$ 0.00 & 0.22$\pm$ 0.02 & 0.05$\pm$ 0.00 & 0.04$\pm$ 0.00 & 0.06$\pm$ 0.00 & 0.15$\pm$ 0.01 & 0.03$\pm$ 0.00 & 0.02$\pm$ 0.00 \\
Naval & 11934 & 16 & 0.01$\pm$ 0.00 & 0.00$\pm$ 0.00 & 0.00$\pm$ 0.00 & 0.01$\pm$ 0.00 & 0.01$\pm$ 0.00 & 0.00$\pm$ 0.00 & 0.00$\pm$ 0.00 & 0.00$\pm$ 0.00 & 0.00$\pm$ 0.00 & 0.01$\pm$ 0.00 & 0.01$\pm$ 0.00 & 0.00$\pm$ 0.00 & 0.00$\pm$ 0.00 & 0.01$\pm$ 0.00 & 0.00$\pm$ 0.00 & 0.00$\pm$ 0.00 \\
Power & 9568 & 4 & 4.33$\pm$ 0.04 & 3.45$\pm$ 0.03 & 2.35$\pm$ 0.05 & 4.12$\pm$ 0.03 & 4.02$\pm$ 0.18 & 2.42$\pm$ 0.02 & 1.25$\pm$ 0.02 & 3.73$\pm$ 0.04 & 3.75$\pm$ 0.03 & 4.71$\pm$ 0.09 & 3.60$\pm$ 0.03 & 3.60$\pm$ 0.04 & 3.60$\pm$ 0.04 & 4.08$\pm$ 0.08 & 3.21$\pm$ 0.06 & 3.18$\pm$ 0.05 \\
Protein & 45730 & 9 & 4.84$\pm$ 0.03 & 1.48$\pm$ 0.08 & 0.39$\pm$ 0.10 & 4.73$\pm$ 0.01 & 4.36$\pm$ 0.04 & 1.07$\pm$ 0.01 & 0.59$\pm$ 0.00 & 3.91$\pm$ 0.02 & 4.83$\pm$ 0.21 & 4.22$\pm$ 0.08 & 3.24$\pm$ 0.10 & 2.89$\pm$ 0.28 & 4.05$\pm$ 0.13 & 3.69$\pm$ 0.19 & 2.19$\pm$ 0.22 & 2.01$\pm$ 0.16 \\
Red wine & 1588 & 11 & 0.65$\pm$ 0.01 & 0.52$\pm$ 0.01 & 0.56$\pm$ 0.01 & 0.64$\pm$ 0.01 & 0.62$\pm$ 0.04 & 0.21$\pm$ 0.00 & 0.10$\pm$ 0.00 & 0.63$\pm$ 0.01 & 0.57$\pm$ 0.01 & 0.62$\pm$ 0.01 & 0.50$\pm$ 0.01 & 0.48$\pm$ 0.01 & 0.55$\pm$ 0.01 & 0.62$\pm$ 0.02 & 0.41$\pm$ 0.01 & 0.43$\pm$ 0.01 \\
Yacht & 308 & 6 & 6.89$\pm$ 0.67 & 1.30$\pm$ 0.08 & 1.55$\pm$ 0.07 & 1.01$\pm$ 0.05 & 1.11$\pm$ 0.38 & 1.32$\pm$ 0.08 & 2.48$\pm$ 0.18 & 0.56$\pm$ 0.05 & 1.15$\pm$ 0.09 & 1.58$\pm$ 0.37 & 0.98$\pm$ 0.09 & 0.93$\pm$ 0.09 & 1.16$\pm$ 0.07 & 1.84$\pm$ 0.26 & 1.06$\pm$ 0.14 & 0.91$\pm$ 0.08 \\
Year & 515345 & 90 & 9.03$\pm$ NA & 1.15$\pm$ NA & 0.70$\pm$ NA&8.88$\pm$ NA&8.85$\pm$ NA&0.07$\pm$ NA&0.04$\pm$ NA&NA$\pm$ NA&0.79$\pm$ NA&5.28$\pm$ NA&0.45$\pm$ NA&0.26$\pm$ NA&0.27$\pm$ NA&0.51$\pm$ NA&0.22$\pm$ NA&0.37$\pm$ NA \\
\end{tabular}
}
\end{center}
\vskip -0.1in
\label{tab:bench_other}
\end{table*}

\begin{figure}
\centering
	 \includegraphics[scale=0.7]{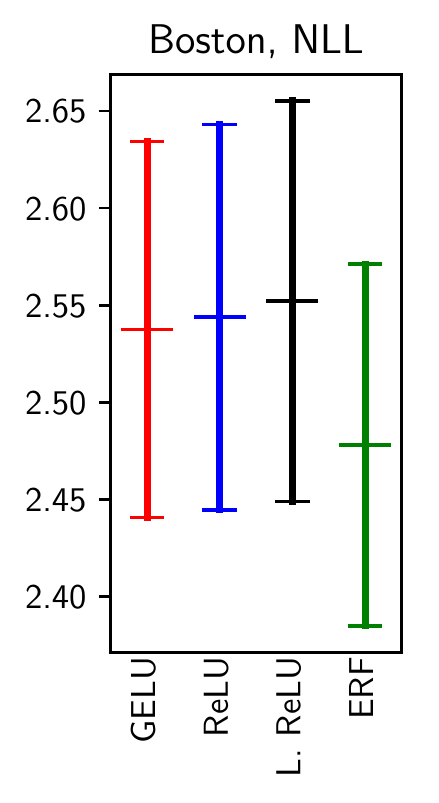}
	 \includegraphics[scale=0.7]{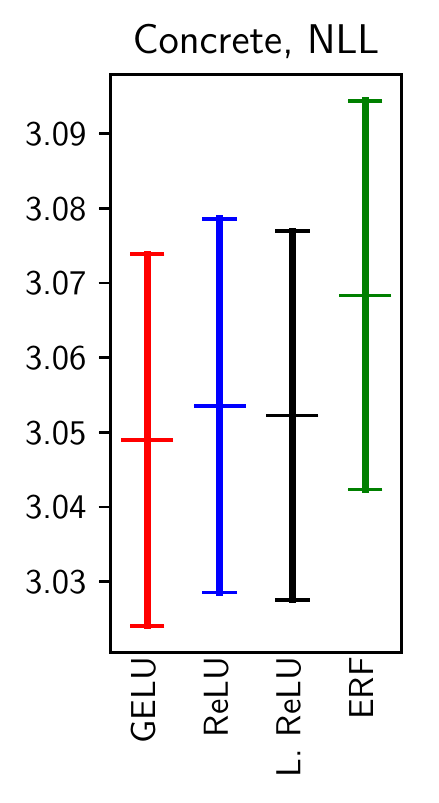}
	 \includegraphics[scale=0.7]{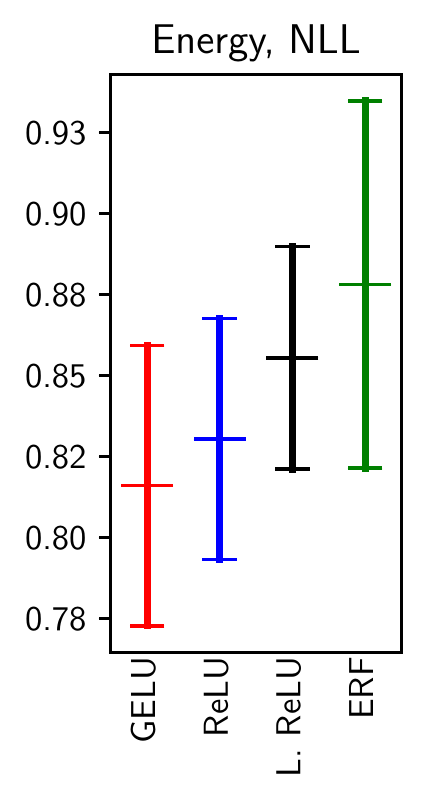} \\
	 \includegraphics[scale=0.7]{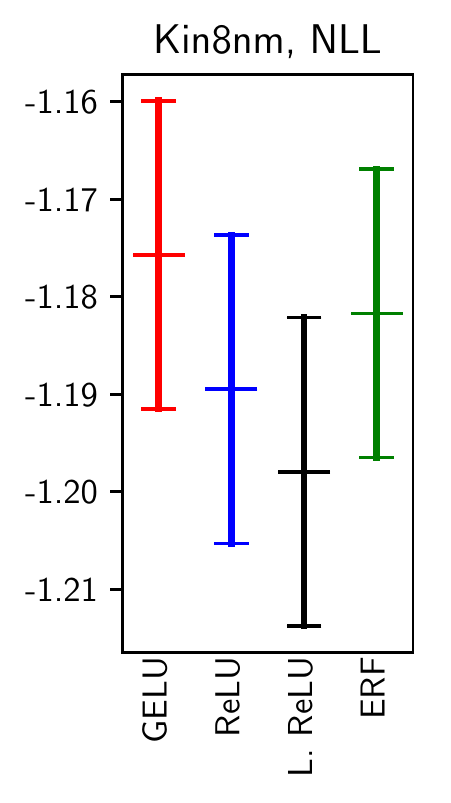}\hspace{-0.06in}
	 \includegraphics[scale=0.7]{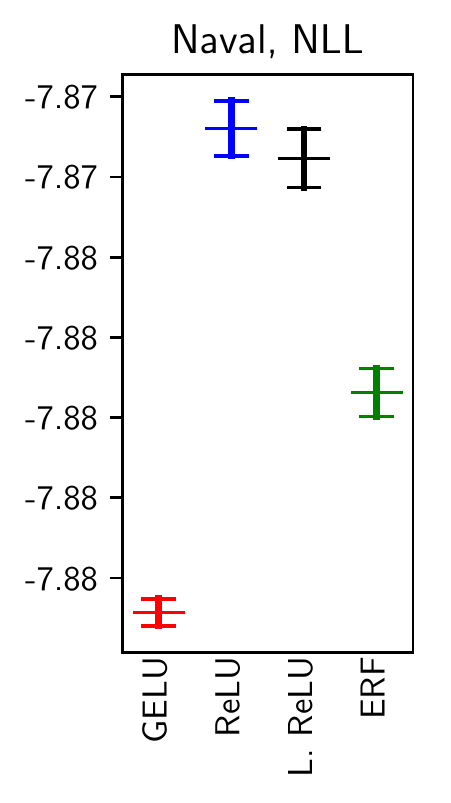}\hspace{-0.06in}
	 \includegraphics[scale=0.7]{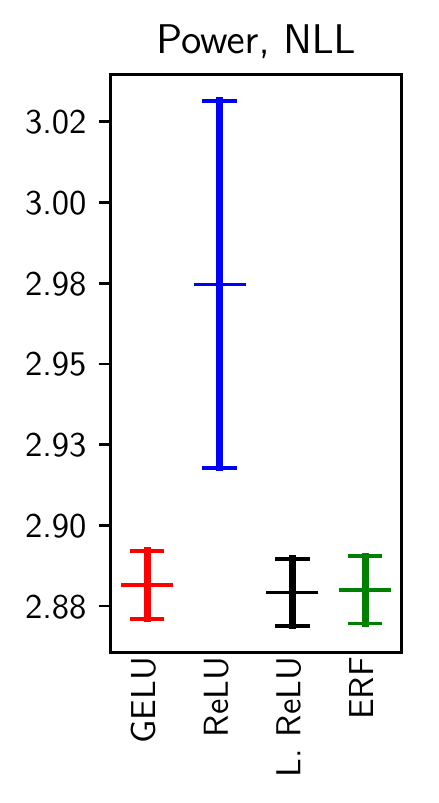} \\
	 \includegraphics[scale=0.7]{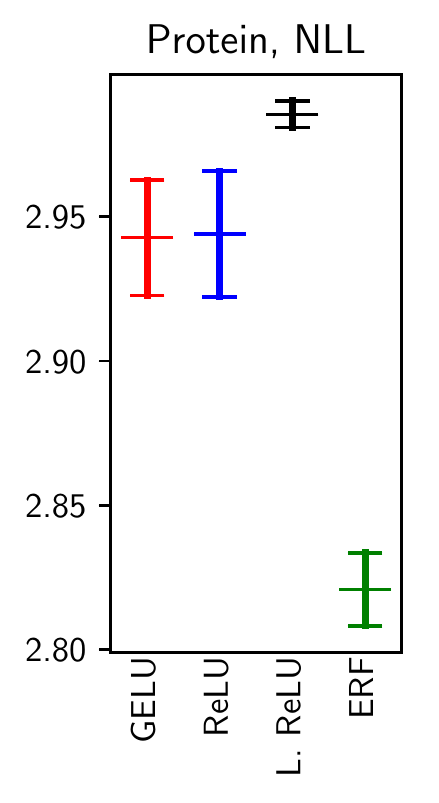}
	 \includegraphics[scale=0.7]{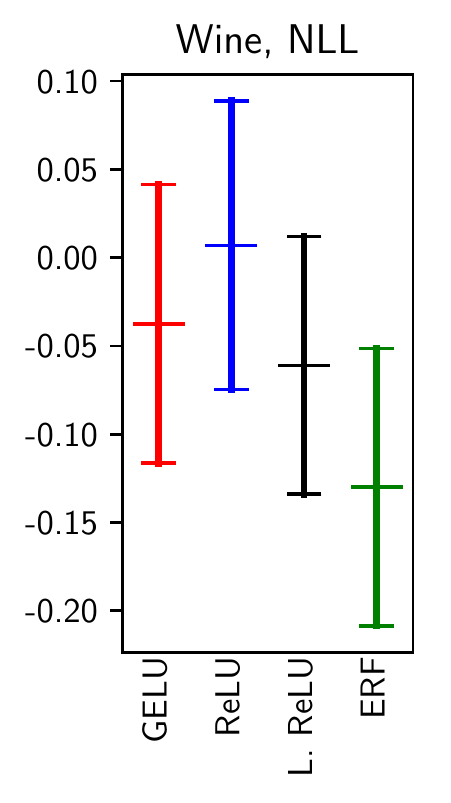}
	 \includegraphics[scale=0.7]{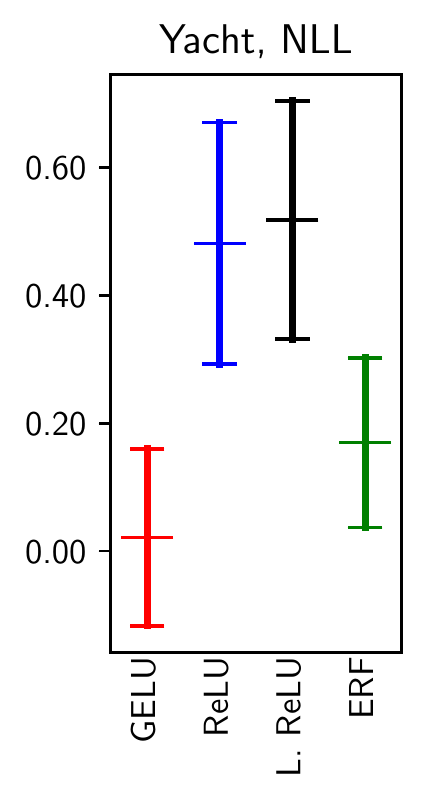}
	 \caption{NLL for single-hidden-layer GPs. Mean $\pm 2$ standard errors (over 20 runs).}
	 	 \label{fig:bench_regression_nll}
\end{figure}

\subsection{Deep models}
\label{app:deep}
We provide RMSE plots for each dataset in Figure~\ref{fig:deep_all}.
\begin{figure*}
\begin{center}
\includegraphics[scale=0.22]{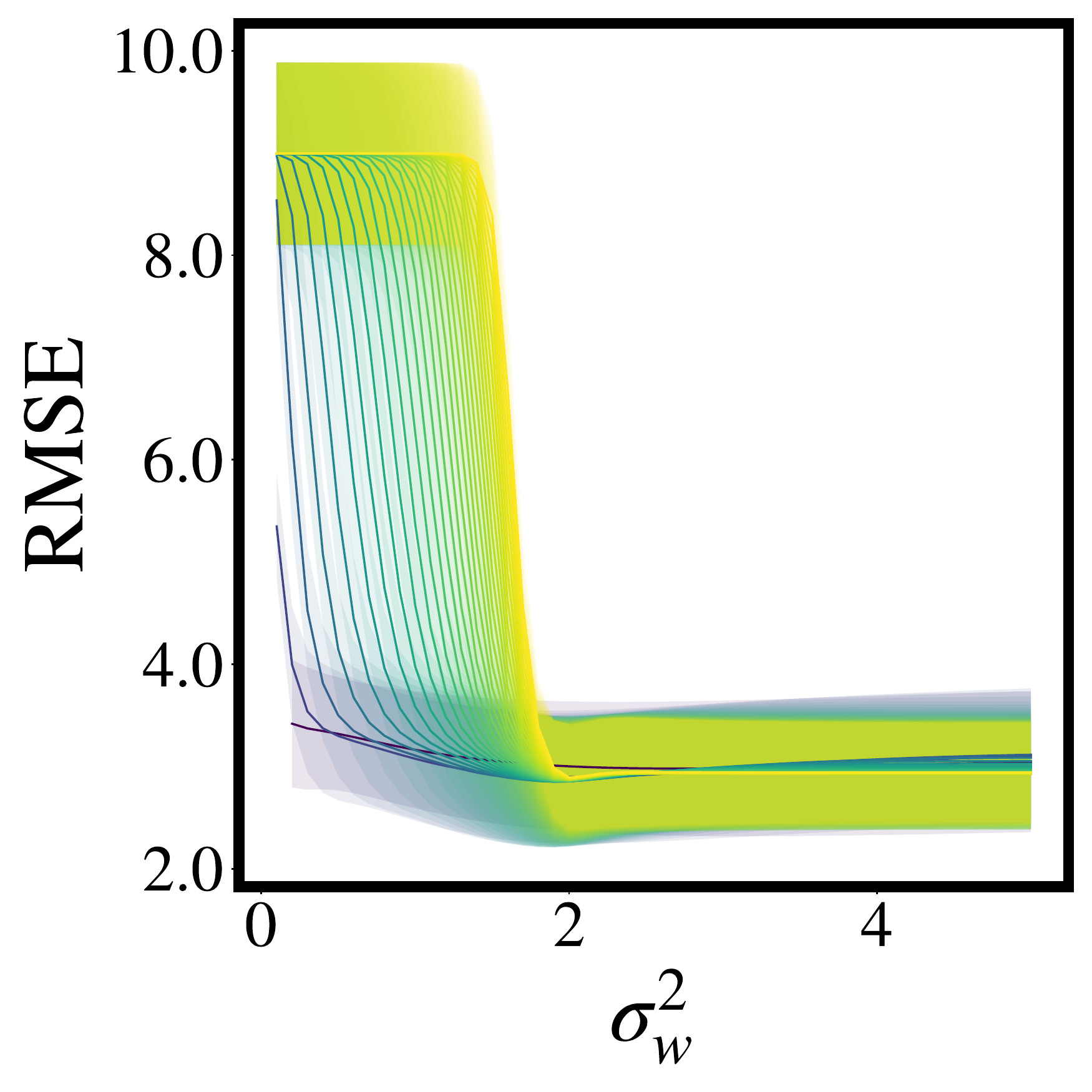}
\includegraphics[scale=0.22]{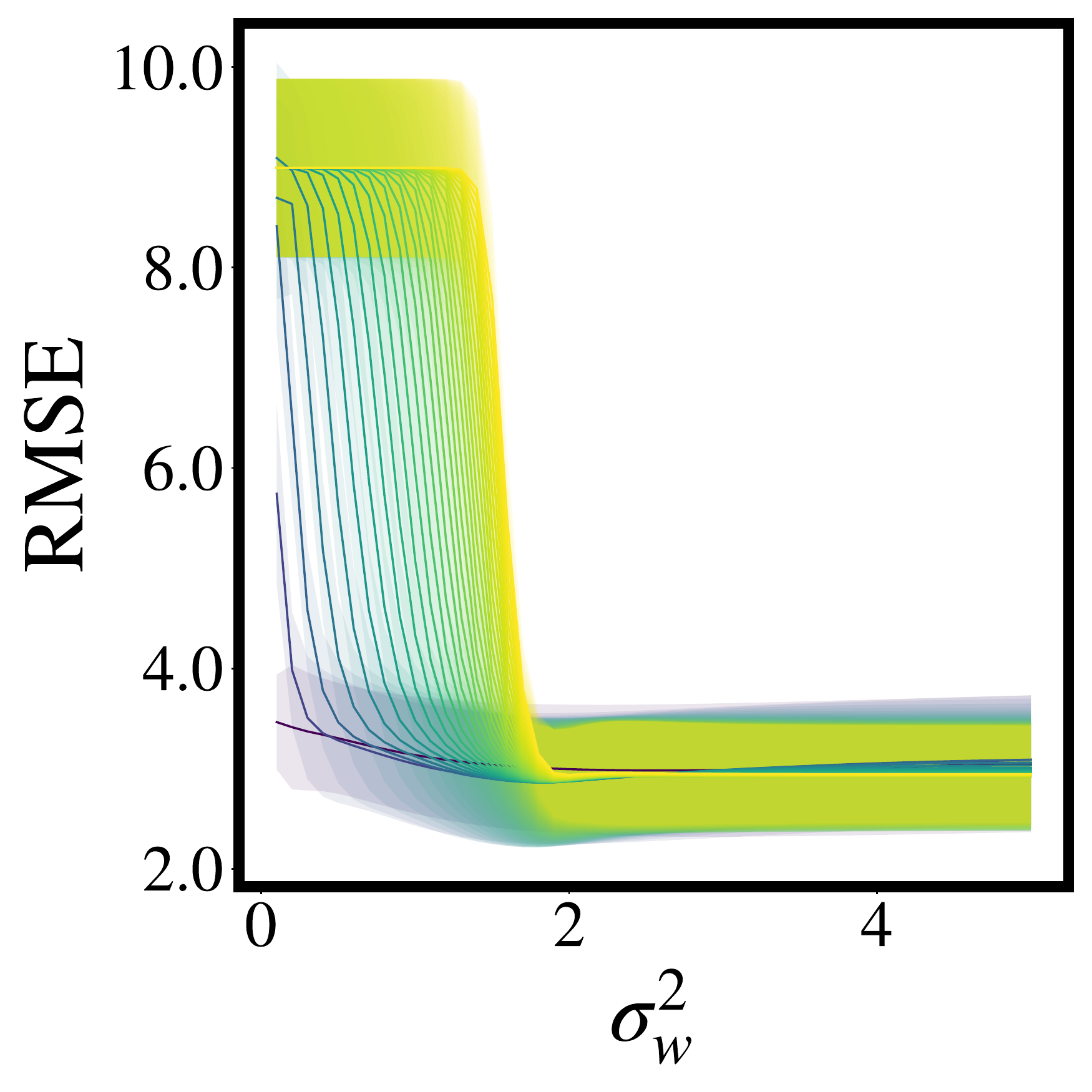}
\includegraphics[scale=0.22]{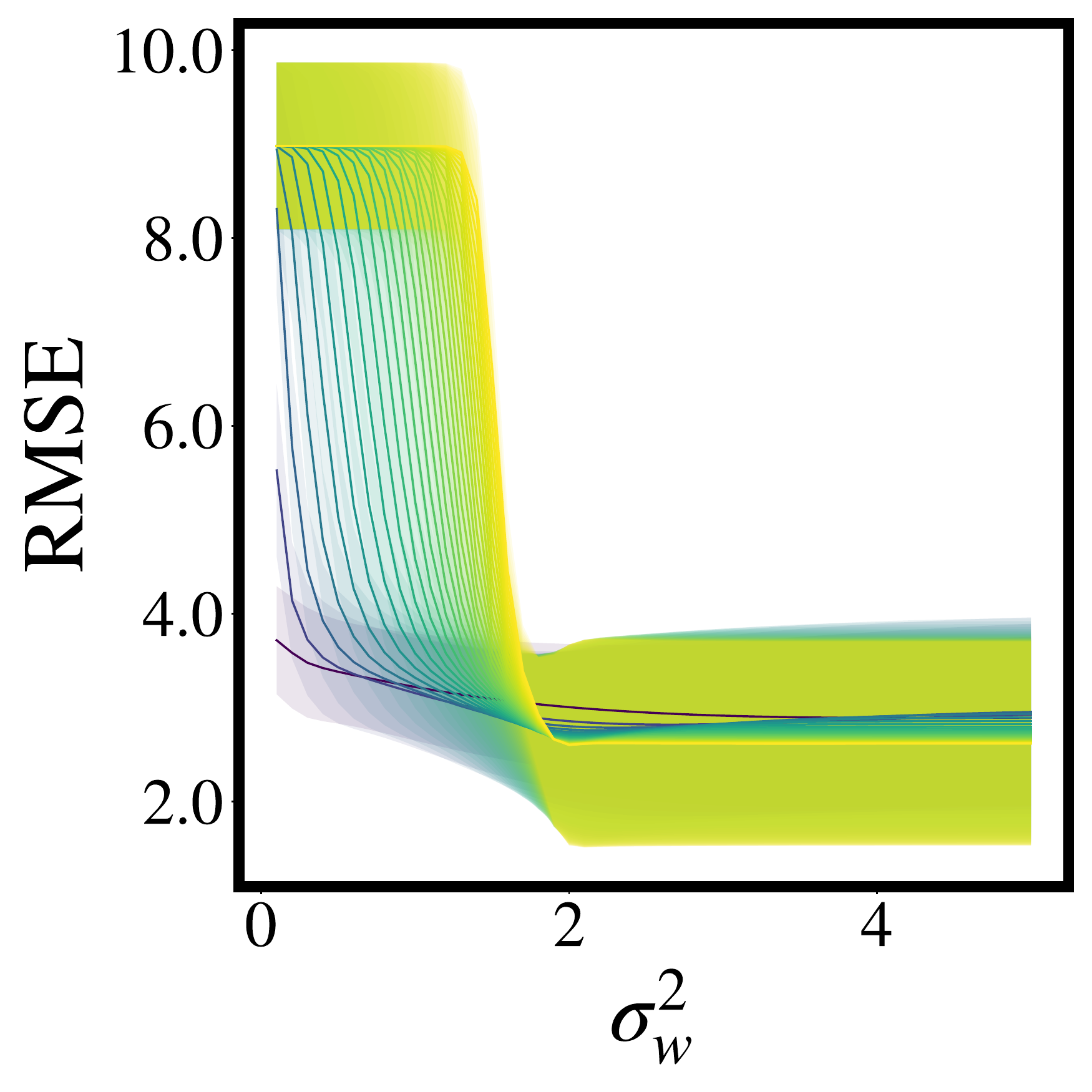}
\includegraphics[scale=0.22]{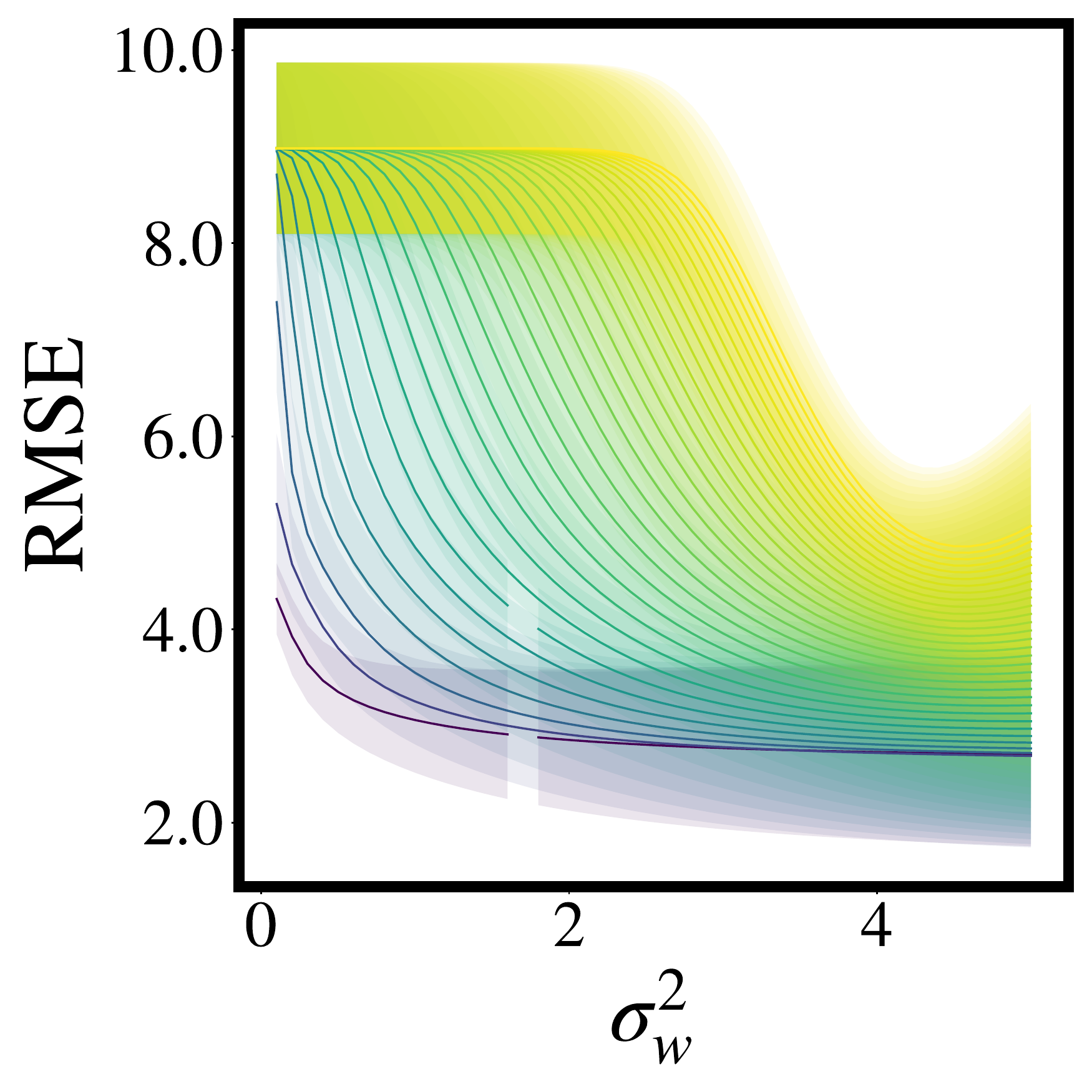} \\

\includegraphics[scale=0.22]{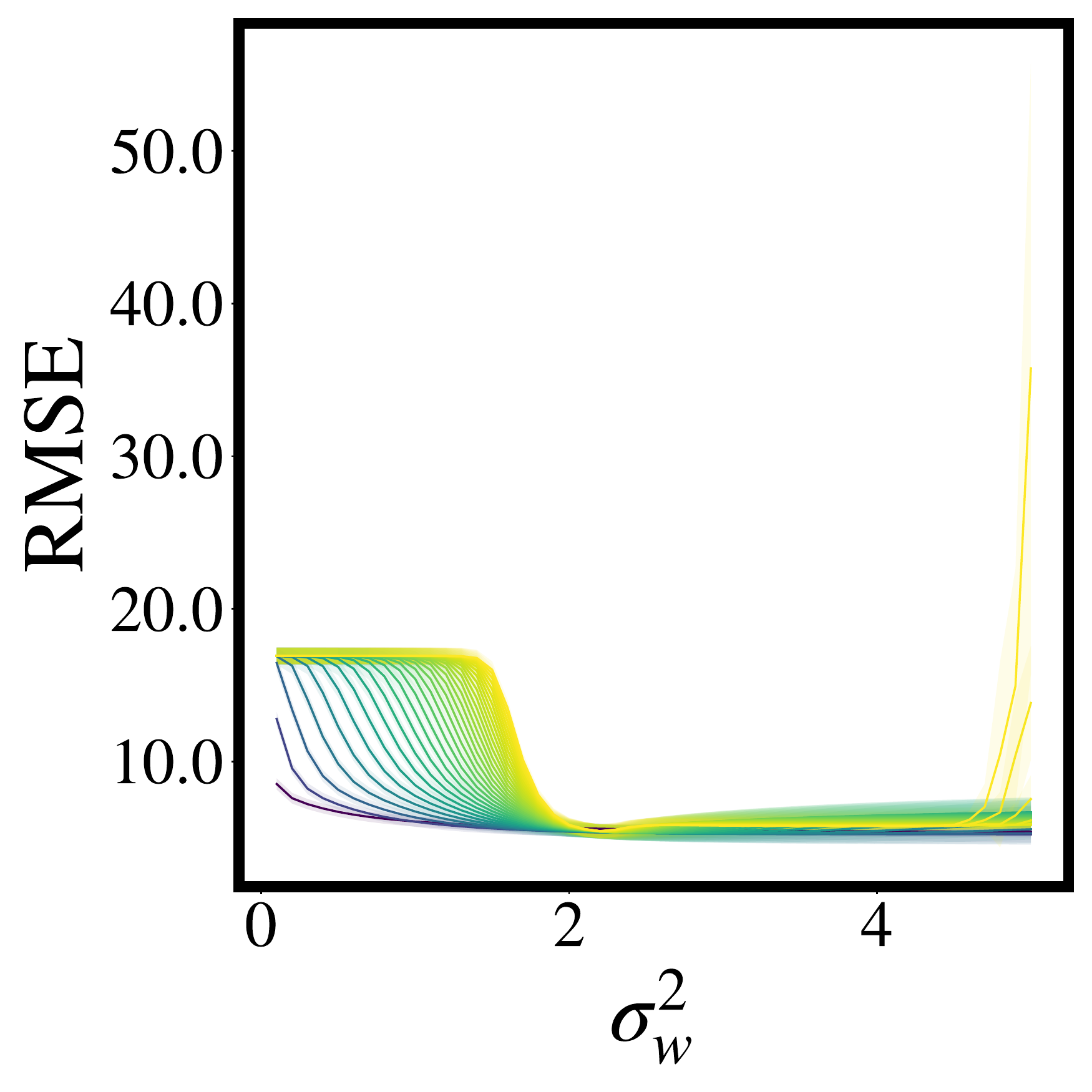}
\includegraphics[scale=0.22]{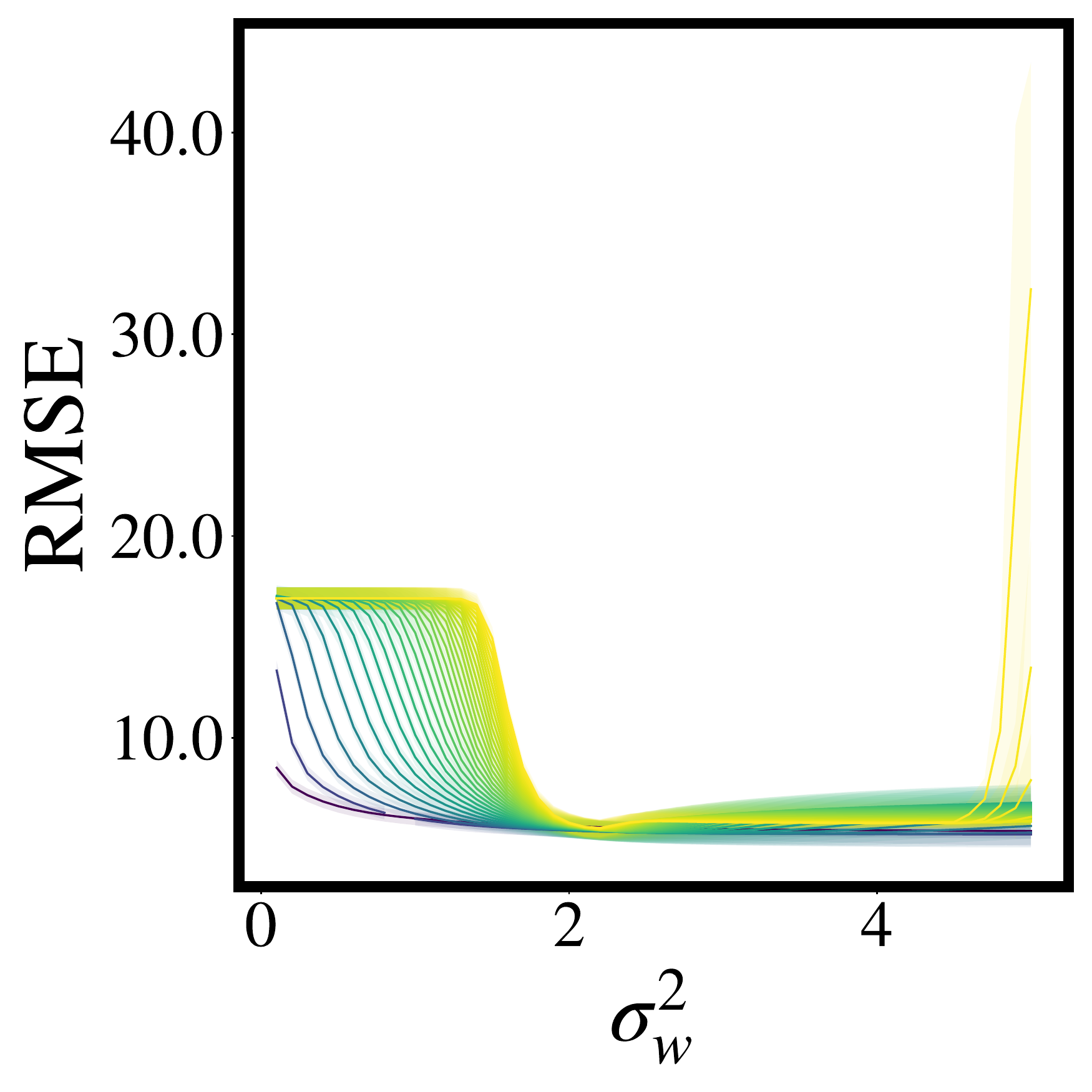}
\includegraphics[scale=0.22]{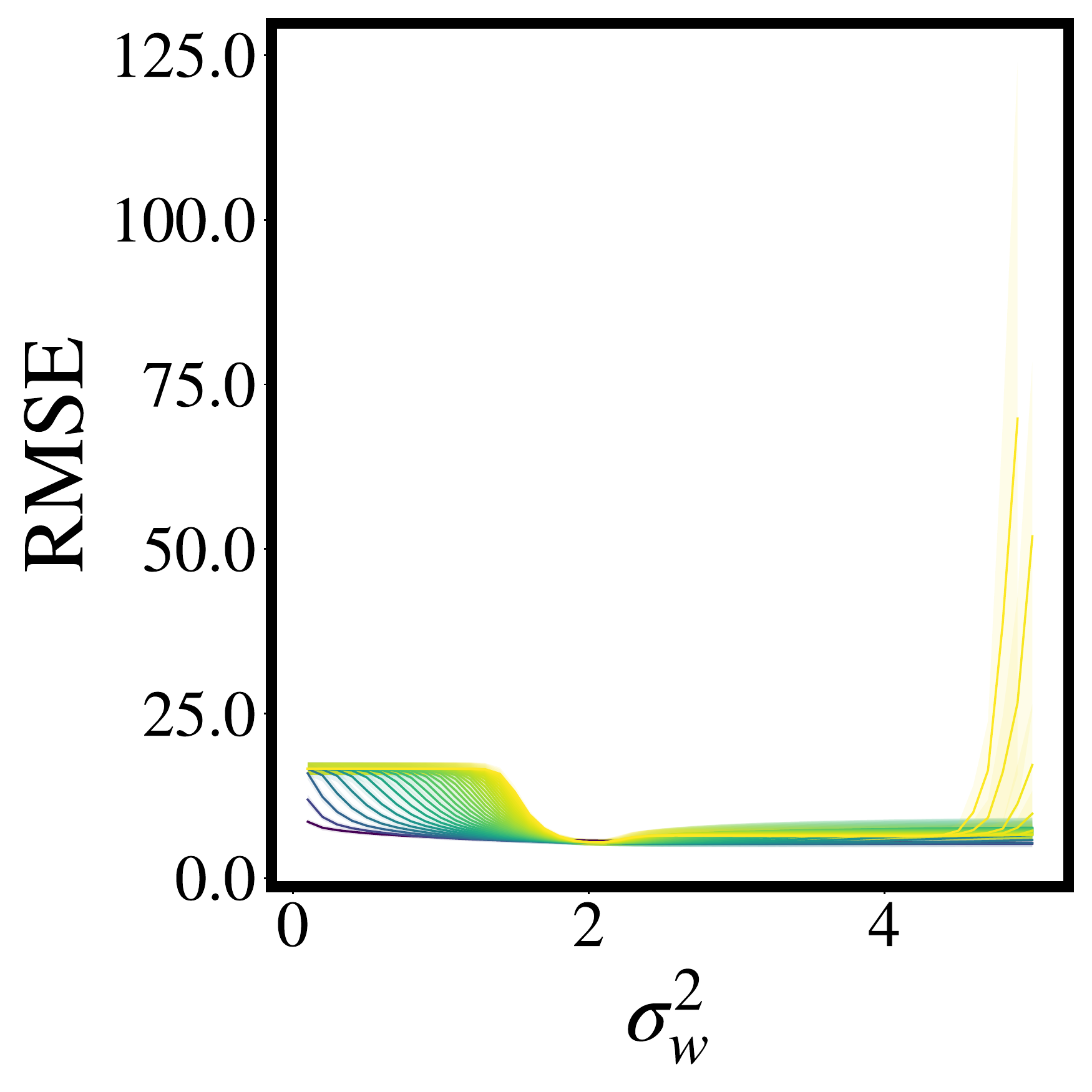}
\includegraphics[scale=0.22]{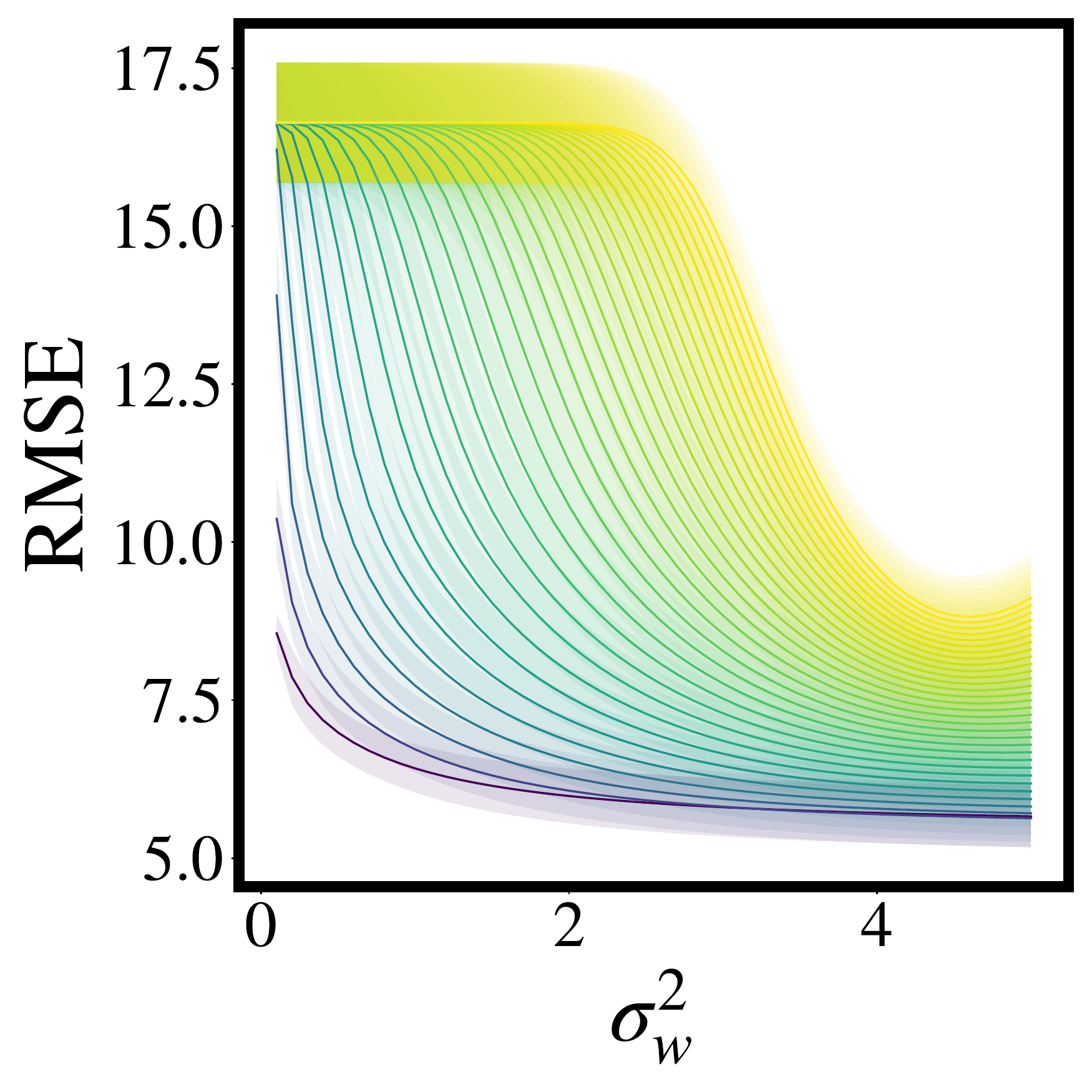}

\includegraphics[scale=0.22]{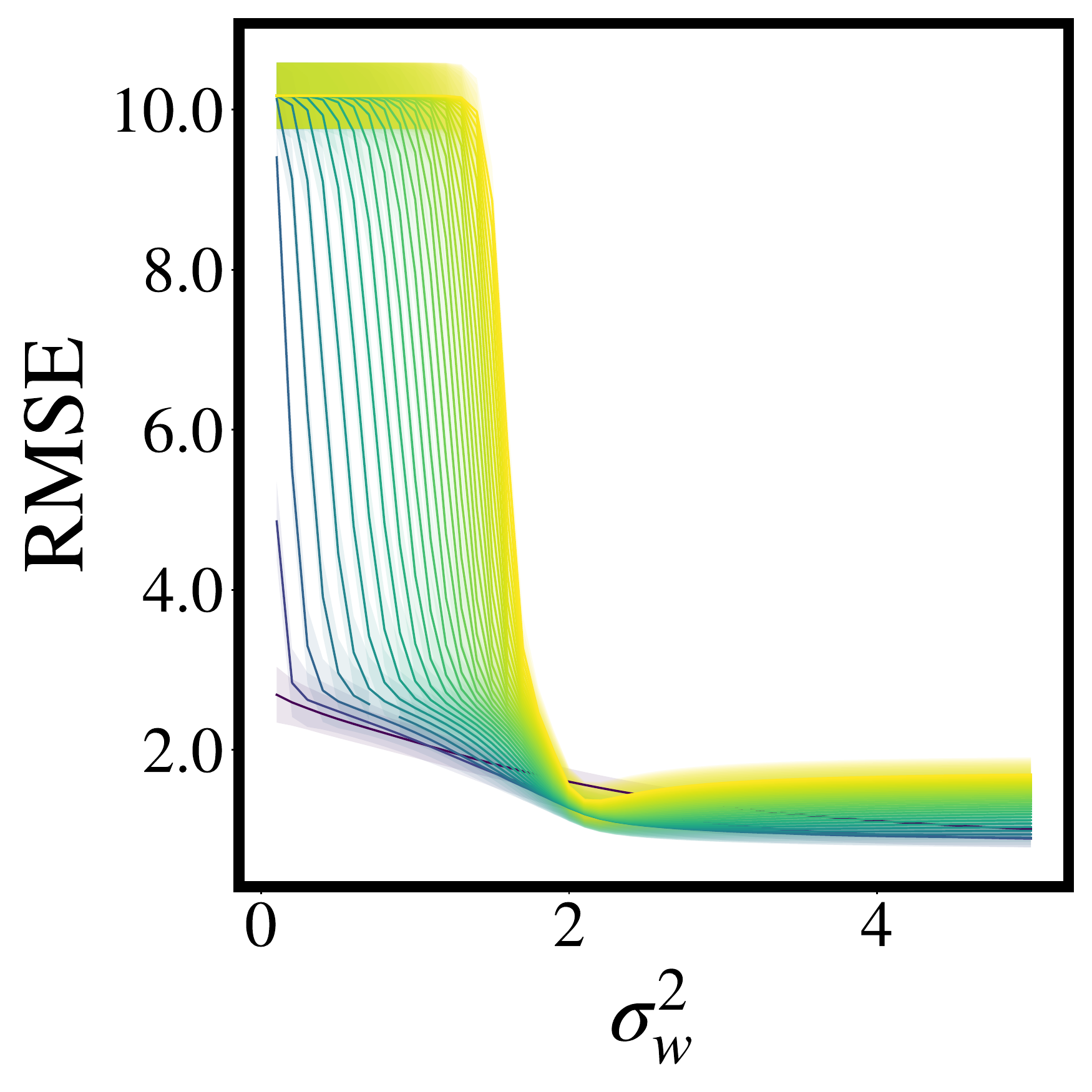}
\includegraphics[scale=0.22]{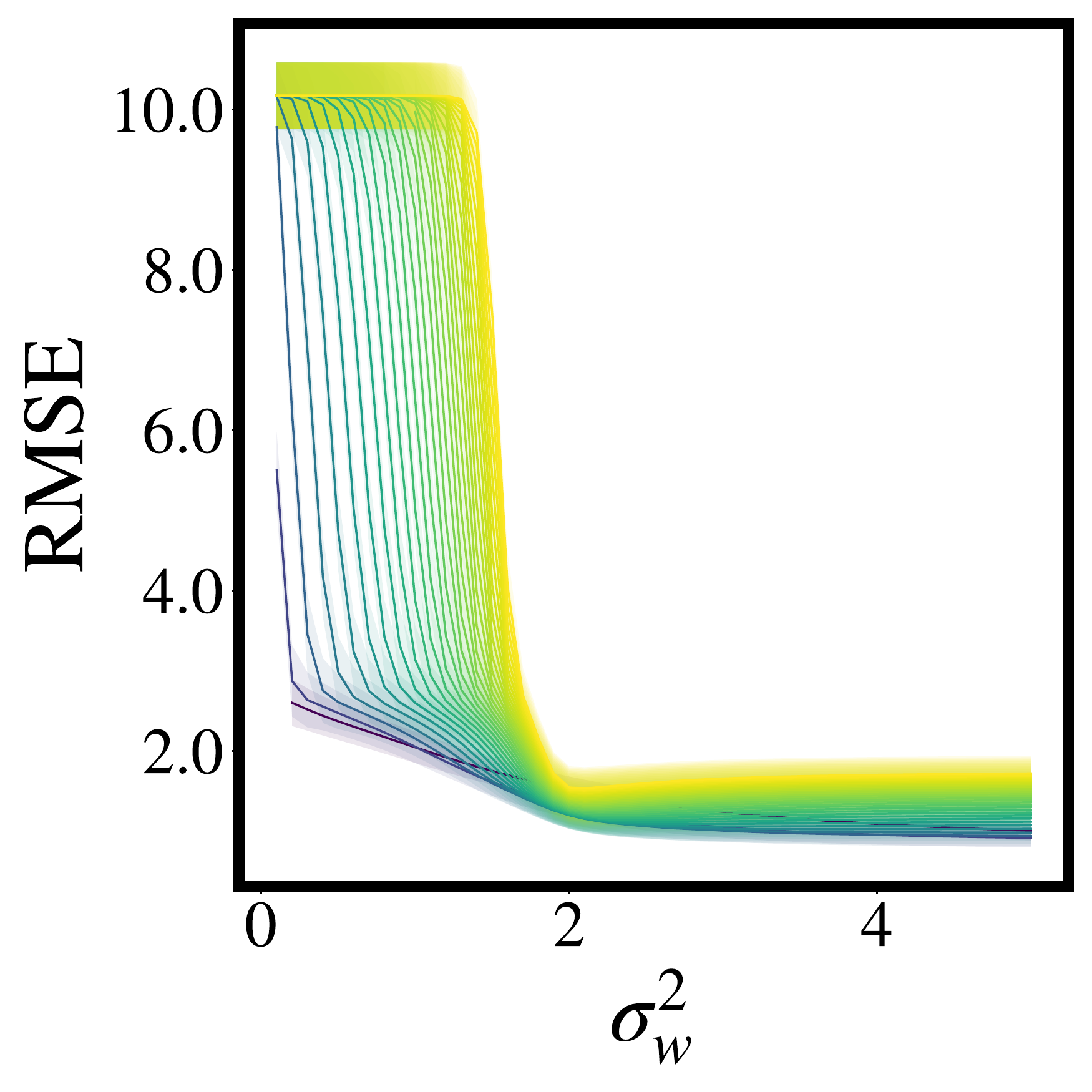}
\includegraphics[scale=0.22]{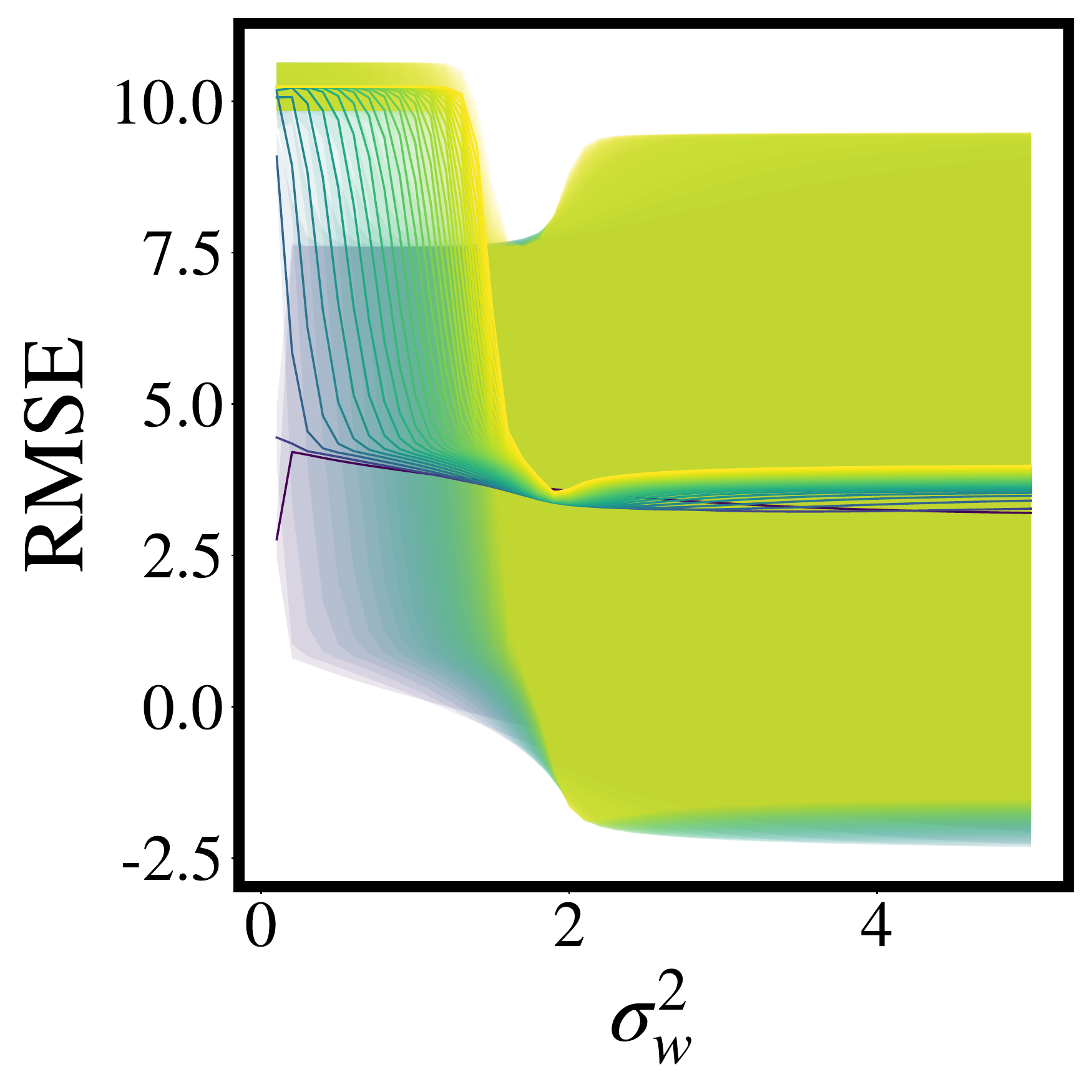}
\includegraphics[scale=0.22]{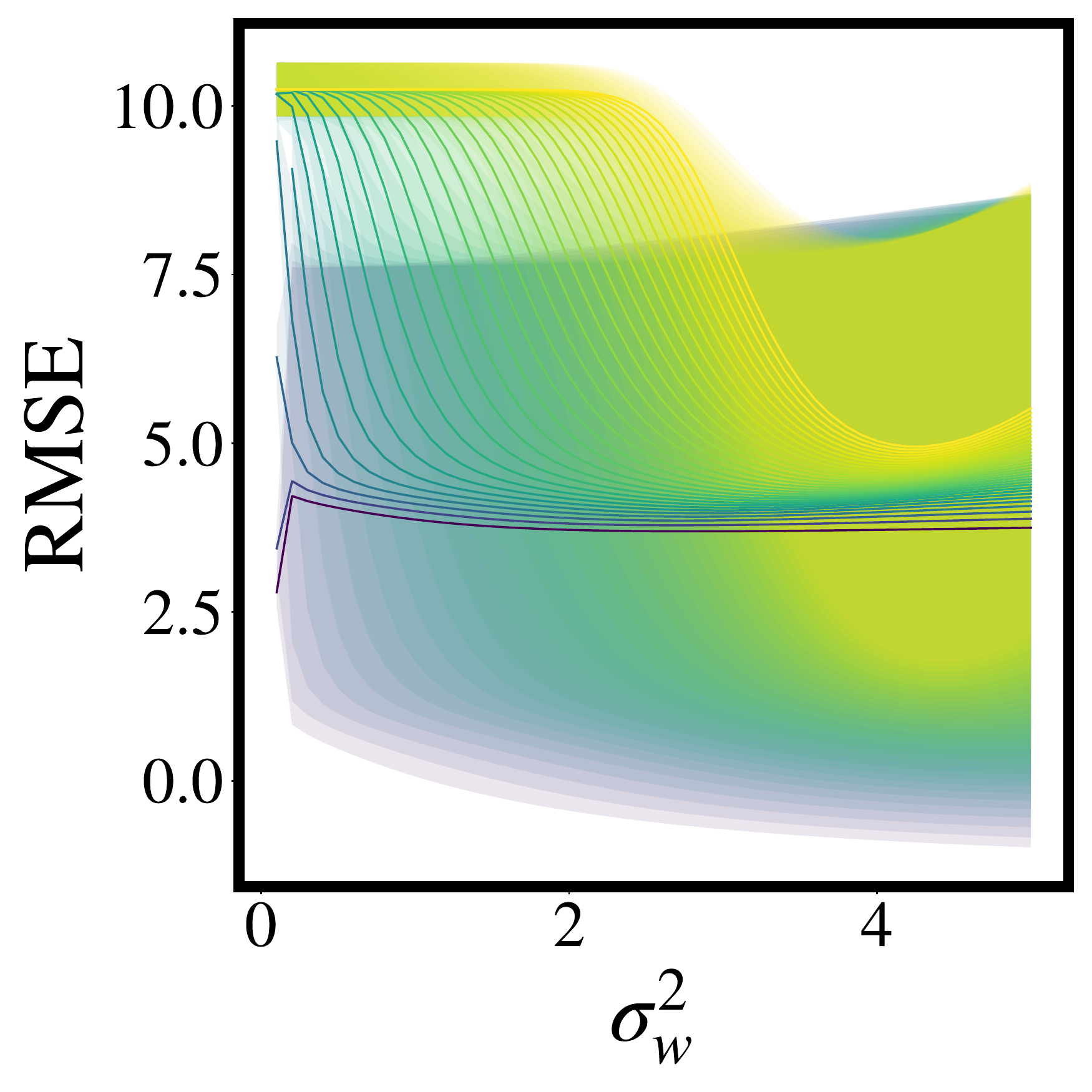}

\includegraphics[scale=0.22]{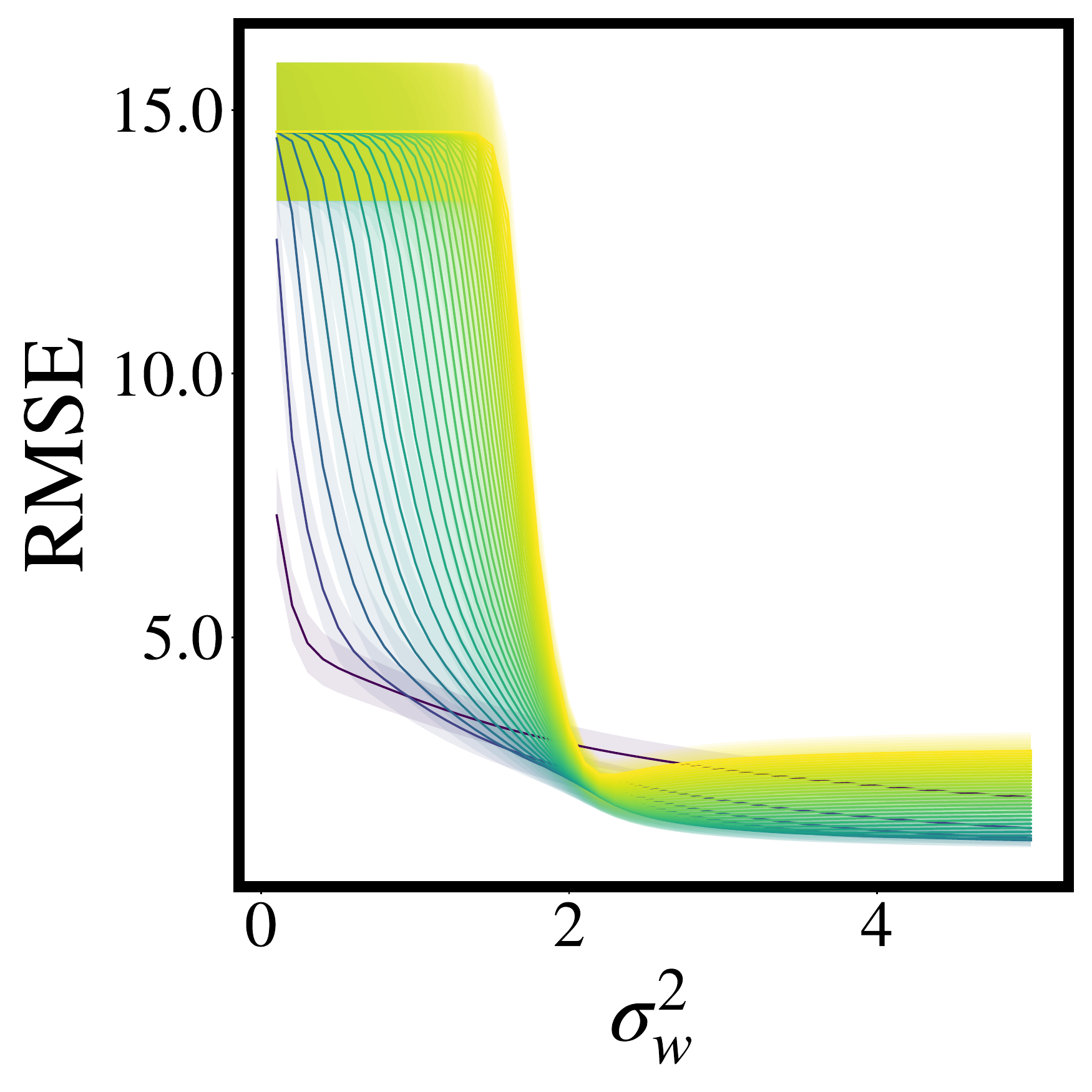}
\includegraphics[scale=0.22]{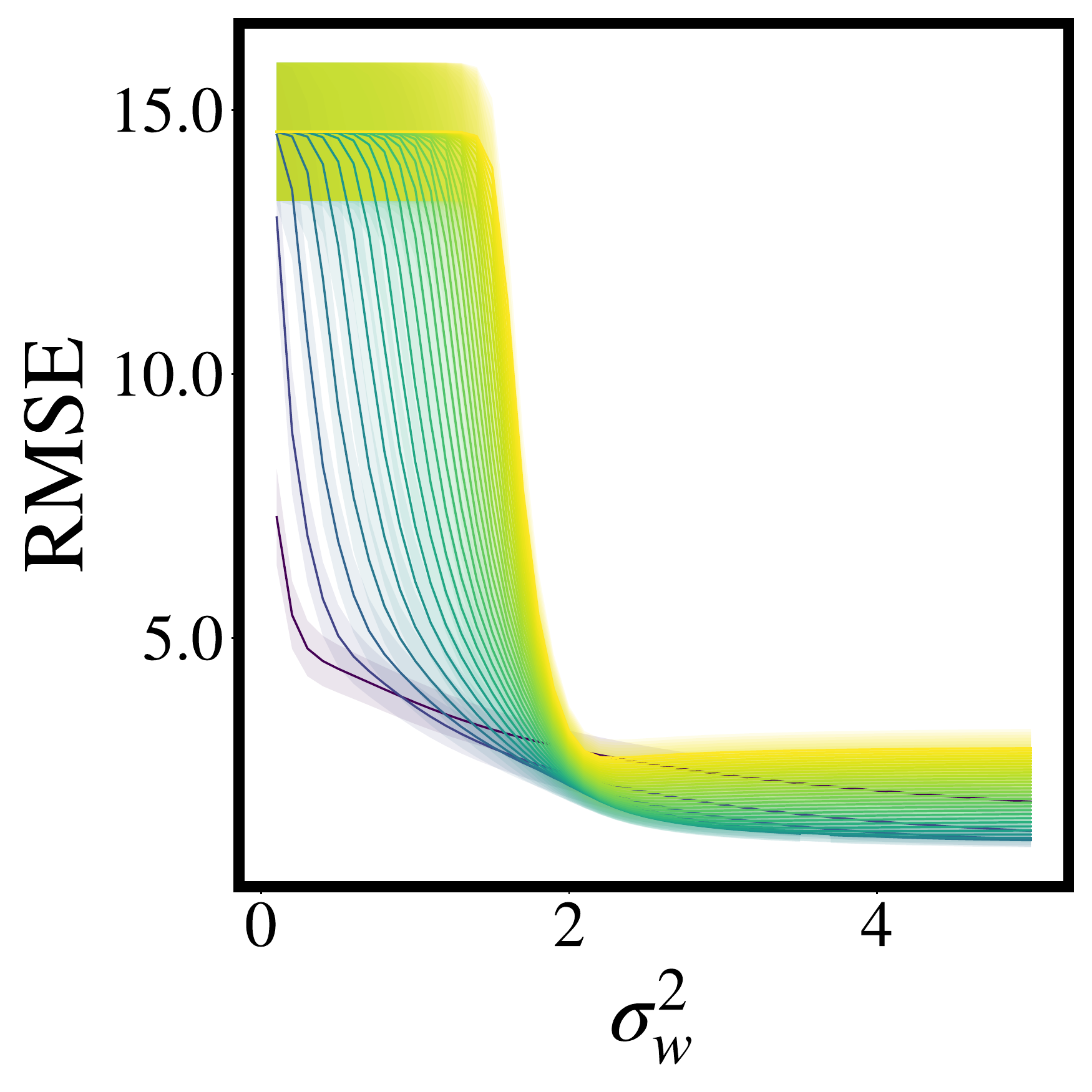}
\includegraphics[scale=0.22]{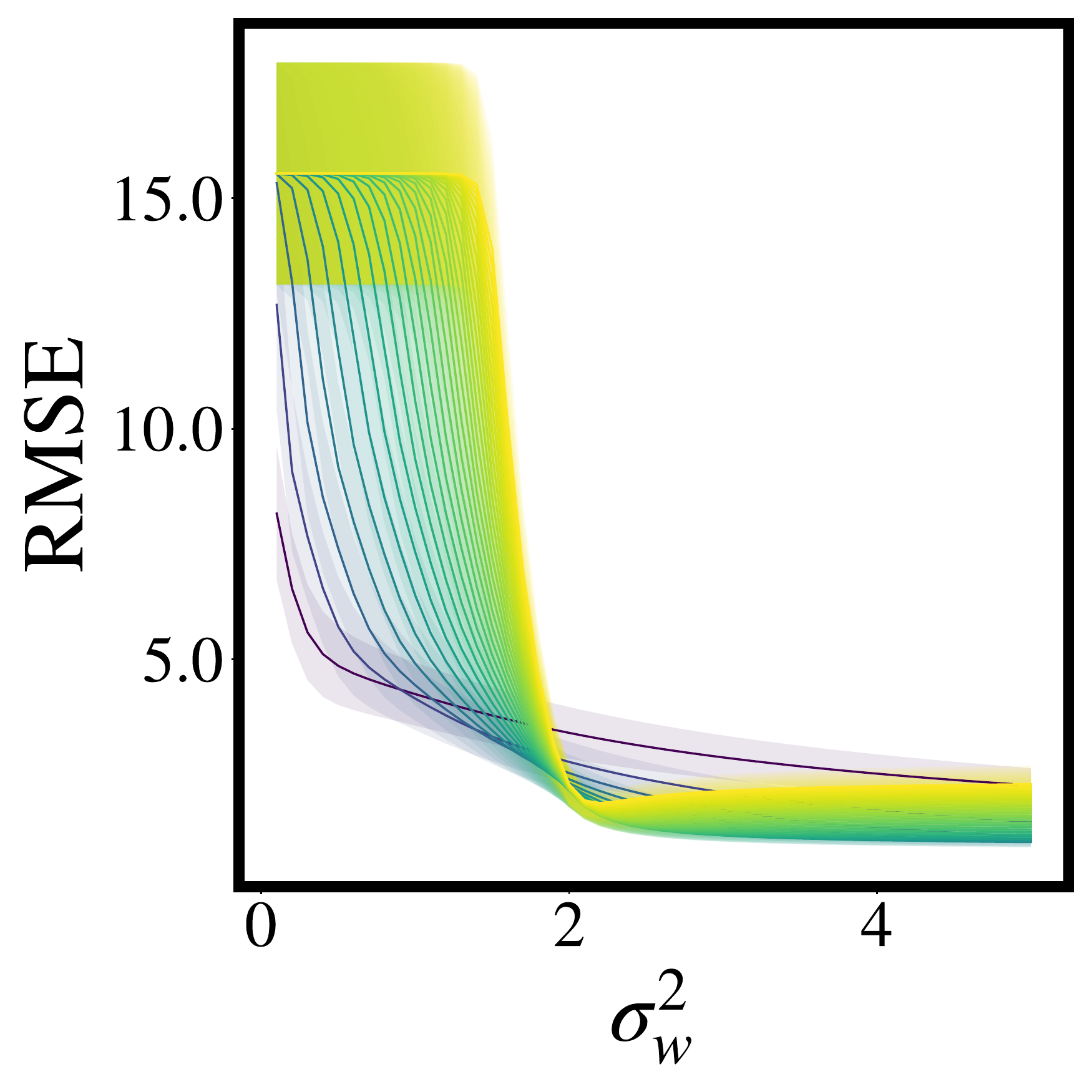}
\includegraphics[scale=0.22]{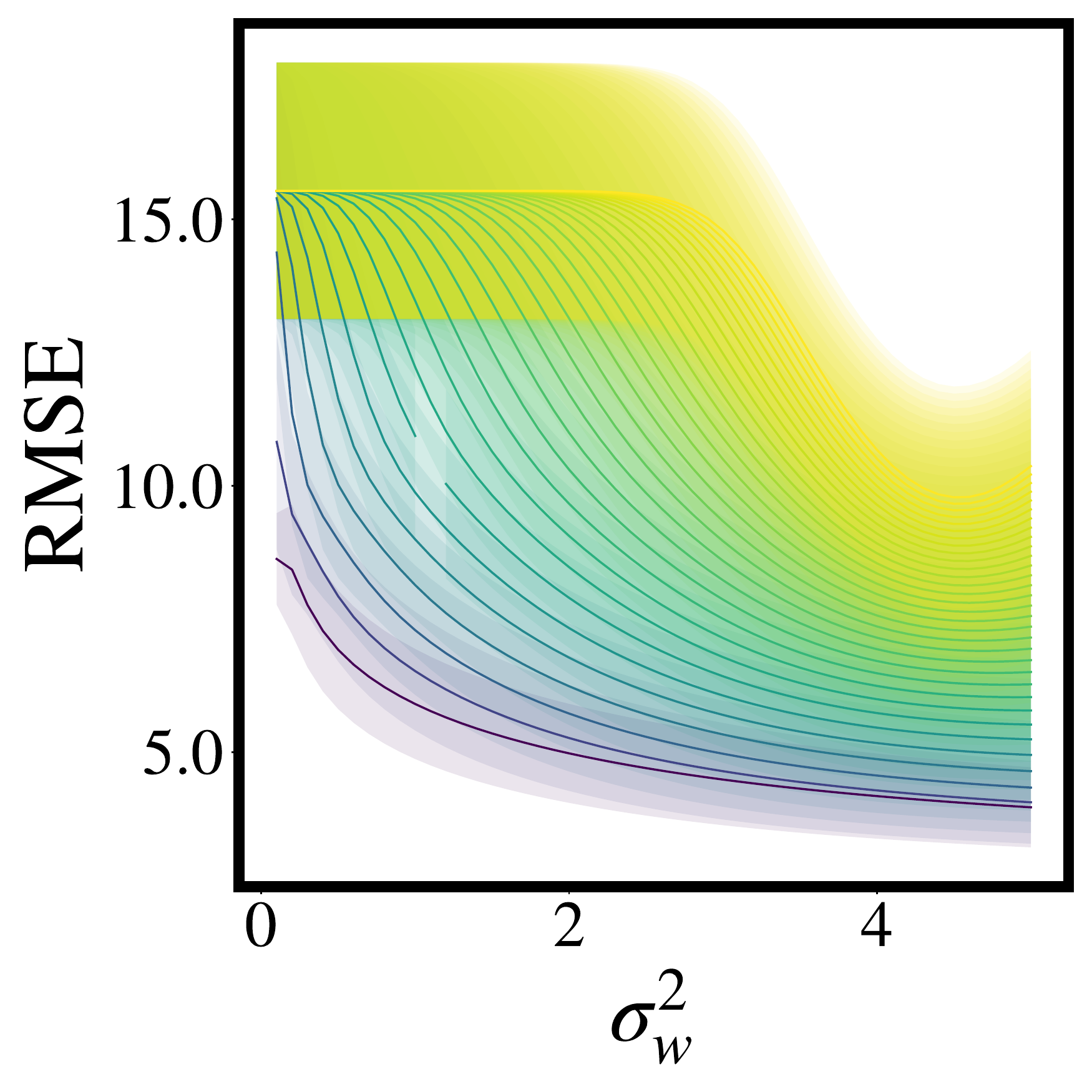}

\includegraphics[scale=0.22]{figures/deep_plots/wine/ReLUWine/rmse_plot}
\includegraphics[scale=0.22]{figures/deep_plots/wine/GELUWine/rmse_plot}
\includegraphics[scale=0.22]{figures/deep_plots/wine/LReLUWine/rmse_plot}
\includegraphics[scale=0.22]{figures/deep_plots/wine/ERFWine/rmse_plot}
\end{center}
\caption{RMSE as depth and $\sigma_w^2$ varies. Legend as in Figure~\ref{fig:bench_regression_deep}. Note that as in the main paper, we plot $\pm 1$ standard deviation using shaded regions, and thus the shaded region can be less than zero. The solid curve in the centre of the shaded region never falls below zero. (Top - Bottom) Boston, Concrete, Energy, Yacht, Wine. (Left - Right) ReLU, GELU, LReLU, ERF.}
\label{fig:deep_all}
\end{figure*}

\newpage
\section{Chain rule for infinitely wide networks}
\label{app:chainrule}
To propagate the gradients through each layer, we will find it more convenient to work with the states $\big((s_1^{(l)})^2, (s_2^{(l)})^2, k^{(l)} \big)$ instead of the states $\big((s_1^{(l)})^2, (s_2^{(l)})^2, \rho^{(l)} \big)$ used in the main text of paper. This just amounts to rescaling the third dimension of the state,
$$ k^{(l)} = \rho^{(l)} s_1^{(l)} s_2^{(l)}.$$

The Jacobian is given by
$$J^{(l+1)} = \begin{bmatrix}
    \frac{\partial (s_1^{(l+1)})^2 }{\partial (s_1^{(l)})^2 } & \frac{\partial (s_1^{(l+1)})^2 }{\partial (s_2^{(l)})^2 } & \frac{\partial (s_1^{(l+1)})^2}{\partial k^{(l)} } \\ 
    \frac{\partial (s_2^{(l+1)})^2 }{\partial (s_1^{(l)})^2 } & \frac{\partial (s_2^{(l+1)})^2 }{\partial (s_2^{(l)})^2 } & \frac{\partial (s_2^{(l+1)})^2}{\partial k^{(l)} } \\ 
    \frac{\partial k^{(l+1)} }{\partial (s_1^{(l)})^2 } & \frac{\partial k^{(l+1)}  }{\partial (s_2^{(l)})^2 } & \frac{\partial k^{(l+1)} }{\partial k^{(l)} } 
\end{bmatrix} = \begin{bmatrix}
    \lambda_1^{(l+1)}  & 0 & 0 \\ 
    0 & \lambda_2^{(l+1)} & 0 \\ 
    \frac{\partial k^{(l+1)} }{\partial (s_1^{(l)})^2 } & \frac{\partial k^{(l+1)}  }{\partial (s_2^{(l)})^2 } & \lambda_3^{(l+1)}\frac{s_1^{(l+1)}s_2^{(l+1)}}{s_1^{(l)}s_2^{(l)}}
\end{bmatrix},$$ 
where $\lambda_i^{(l+1)}$ for $i=1,2,3$ is the quantity obtained by applying Theorem~\ref{thm:jacobian} to layer $l+1$. Given the Jacobians in each layer, the chain rule says that in an $L$-hidden layer network, for $\sigma^{(l)}=\sigma^{(l)}_w$ or $\sigma^{(l)}=\sigma_b^{(l)}$ with $l<L+1$ (the case $l=L+1$ is trivial),
\begin{equation} \frac{\partial k^{(L+1)}}{\partial (\sigma^{(l)})^2} =  \Bigg( \big( J^{(L+1)}... J^{(l+1)}\big)  \frac{\partial \mathbf{s}^{(l)} }{\partial (\sigma^{(l)})^2  } \Bigg)_3
\end{equation}
where
\begin{align*}
\frac{\partial \mathbf{s}^{(l)} }{\partial (\sigma^{(l)})^2  } &= \begin{bmatrix}
   \partial\big( (s_1^{(l)})^2 \big)/\partial \big( (\sigma^{(l)})^2 \big) \\
   \partial\big( (s_2^{(l)})^2 \big)/\partial \big( (\sigma^{(l)})^2  \big) \\
   \partial\big( k^{(l)} \big)/\partial \big( (\sigma^{(l)})^2 \big) 
\end{bmatrix}
\end{align*}
is easily obtained from~\eqref{eq:g_iterated}. Since the Jacobian is lower triangular, Theorem~\ref{thm:jacobian} together with $\frac{\partial (s_i^{(l+1)})^2}{\partial k^{(l)}}=0$ for $i=1,2$ provides $7$ out of the $9$ elements of the Jacobian. The unknown elements are $\frac{\partial k^{(l+1)}}{\partial (s_i^{(l)})^2}$ for $i=1, 2$. For special cases, this is straight-forward to evaluate but a more general expression in the vein of Theorem~\ref{thm:jacobian} is currently elusive. As an example, when $\psi$ is ReLU, due to absolute homogeneity and noting that $k^{(l+1)} = k^{(l+1)} s_1^{(l+1)} s_2^{(l+1)}$, we have that
\begin{align*}
     \frac{\partial k^{(l+1)} }{\partial s_1^{(l)} } &= \frac{ k^{(l+1)} - (\sigma_b^{(l+1)})^2 }{s_1^{(l)} } \\
     \frac{\partial k^{(l+1)} }{\partial (s_1^{(l)})^2 } &= \frac{ k^{(l+1)} - (\sigma_b^{(l+1)})^2 }{2 (s_1^{(l)})^2 }.
\end{align*}

\newpage
\section{Overfitting and underfitting curves}
\label{app:train_test_curves}
\begin{figure}[h]
    \centering
    \includegraphics[scale=0.39]{figures/test_train/0gelutrain_test.pdf}
    \includegraphics[scale=0.39]{figures/test_train/0relutrain_test.pdf} \\ 
    \includegraphics[scale=0.39]{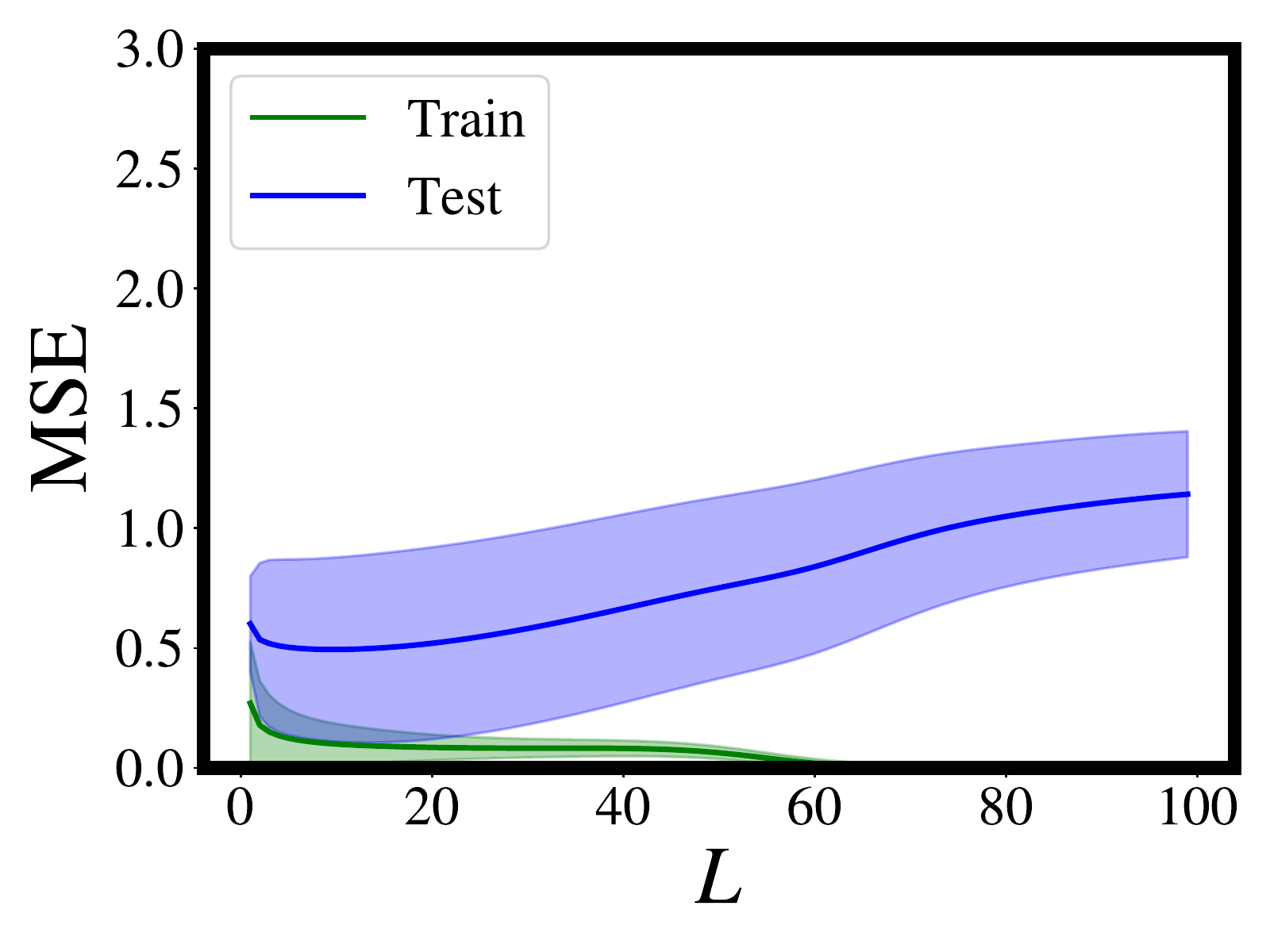}
    \includegraphics[scale=0.39]{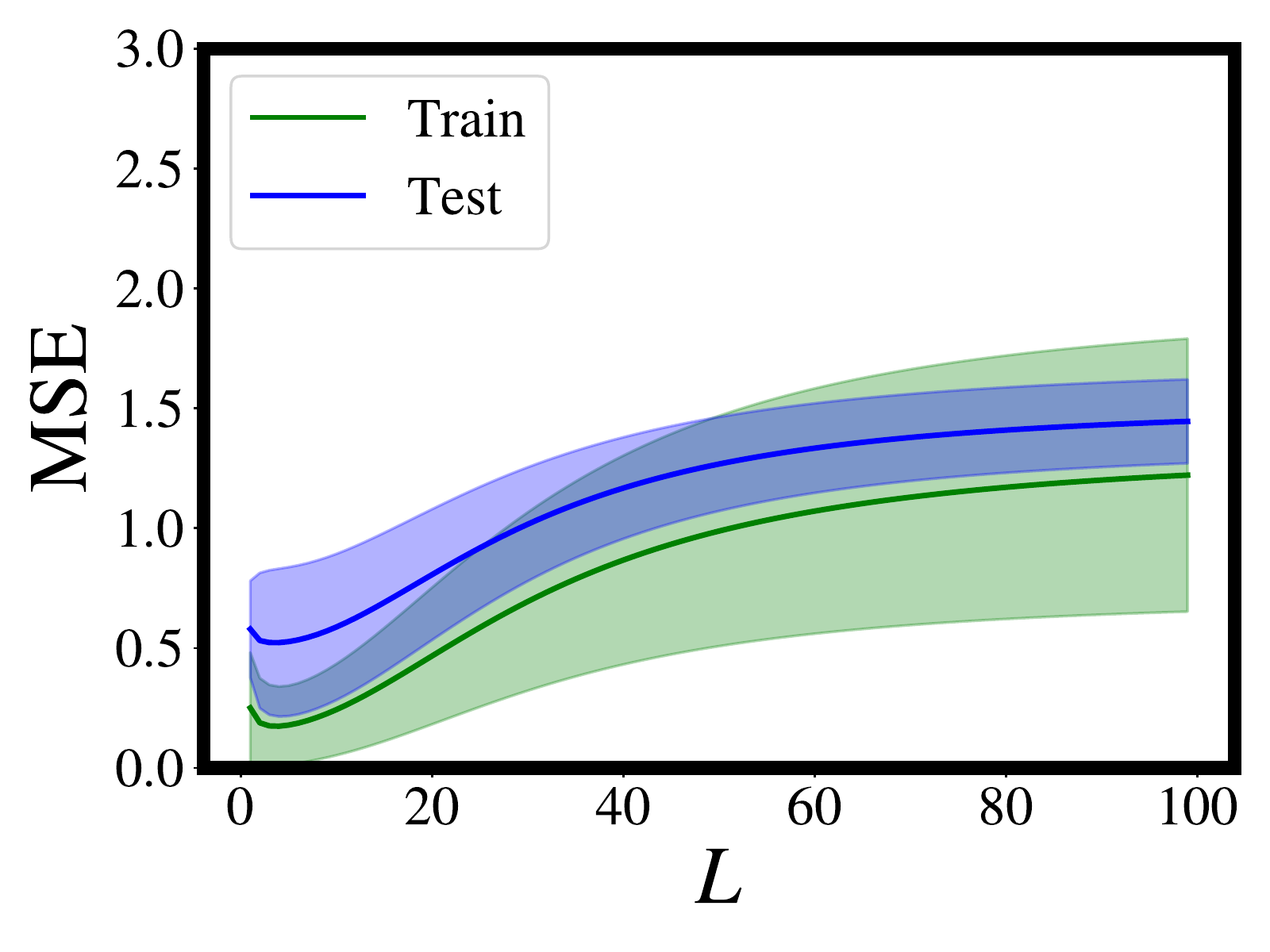} \\ 
    \includegraphics[scale=0.39]{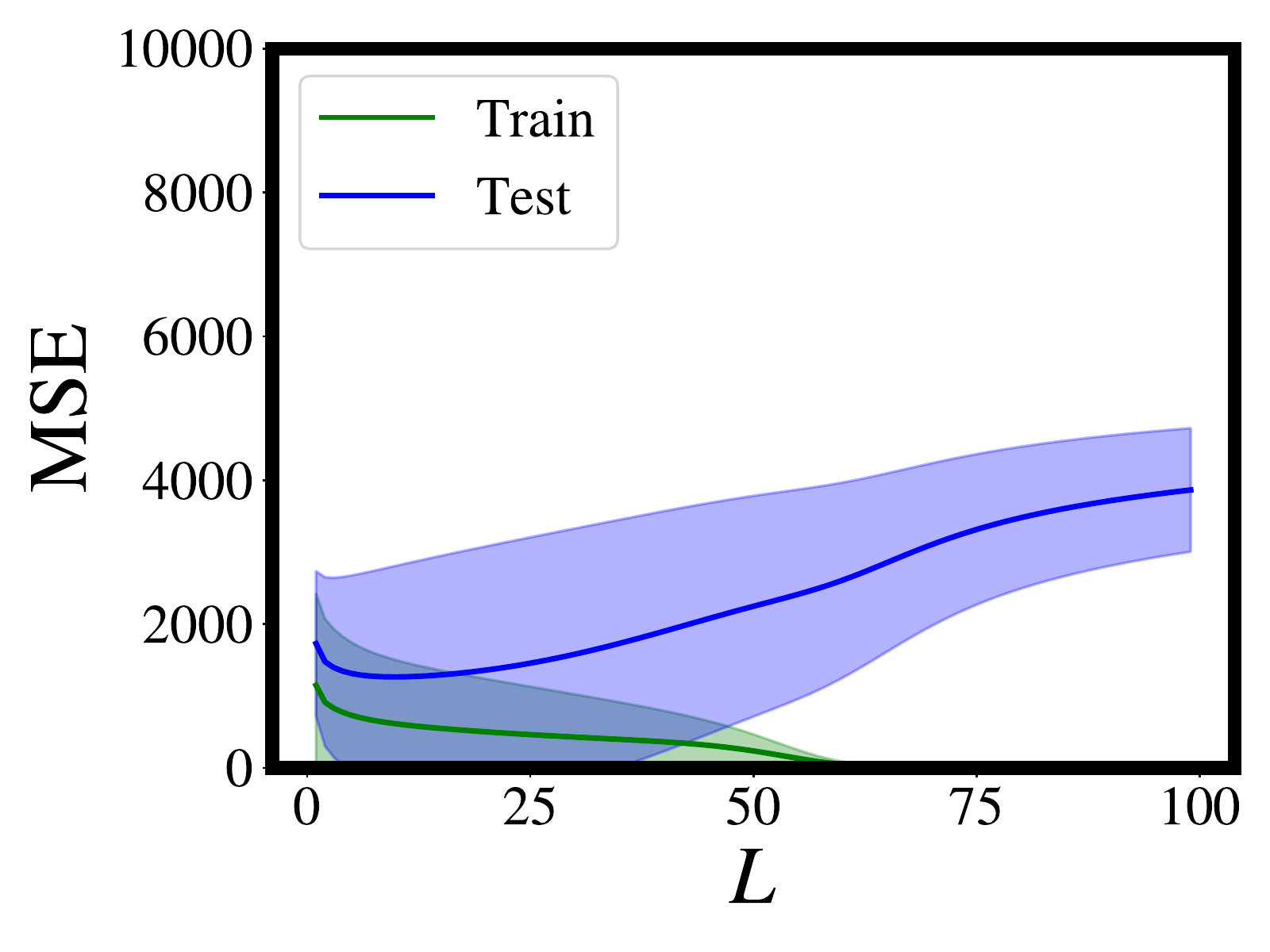}
    \includegraphics[scale=0.39]{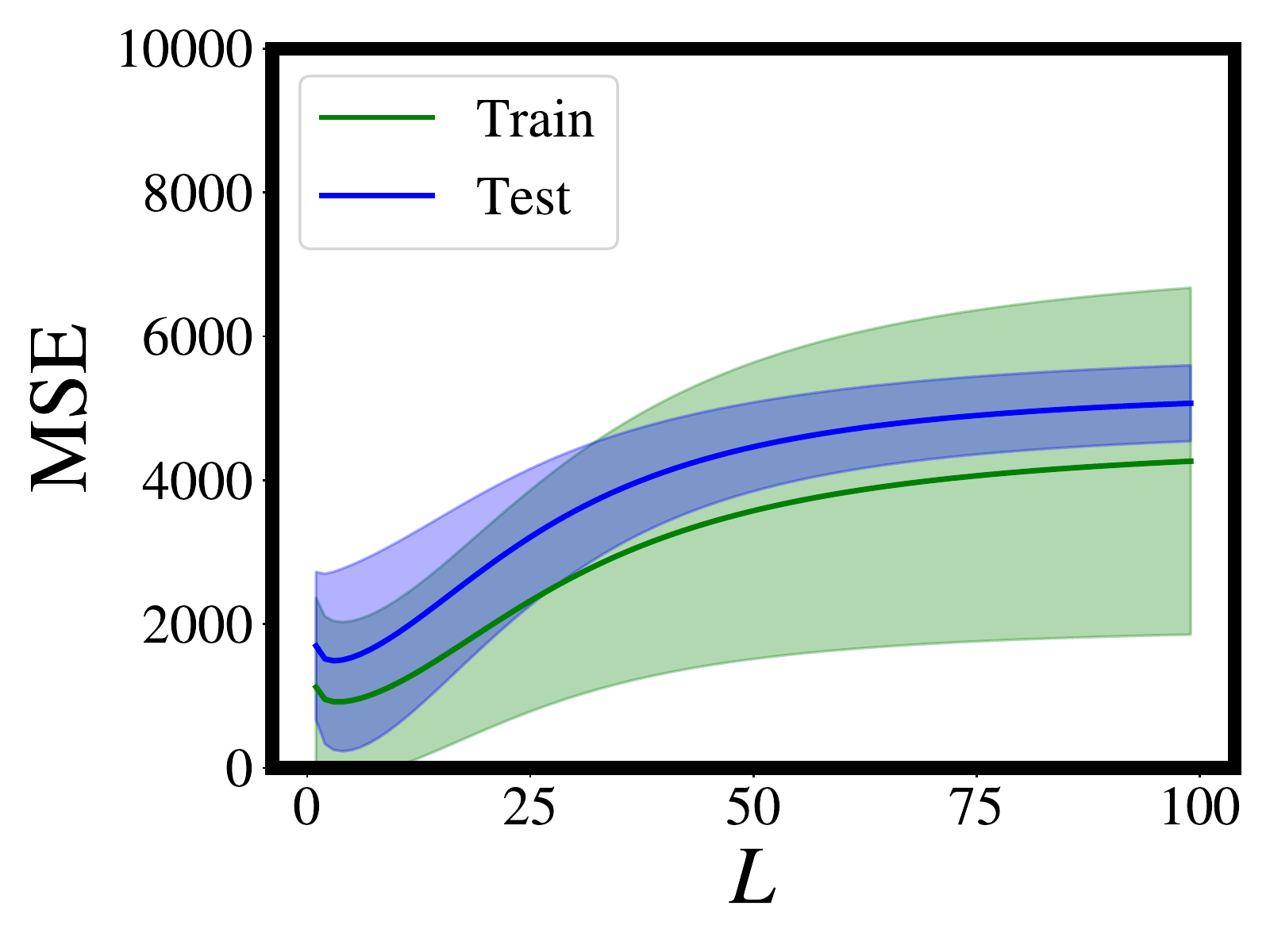}
    \caption{Training and testing errors for Gaussian processes with covariance functions corresponding to infinitely wide MLPs of increasing depth. Solid curve shows the mean over $10$ training data samples, and the shaded region shows $\pm$ two standard deviations. (Left) GELU (Right) ReLU. (Top - Bottom) $f(\gamma)= \sin(\gamma)$, $f(\gamma)=\text{saw}(\gamma)$, $f(\gamma)=\gamma^3-4$. Continues over page...}
    \label{fig:test_train_overfit_app}
\end{figure}

\renewcommand{\thefigure}{\arabic{figure} (Cont.)}
\addtocounter{figure}{-1}
\begin{figure}
    \centering
    \includegraphics[scale=0.4]{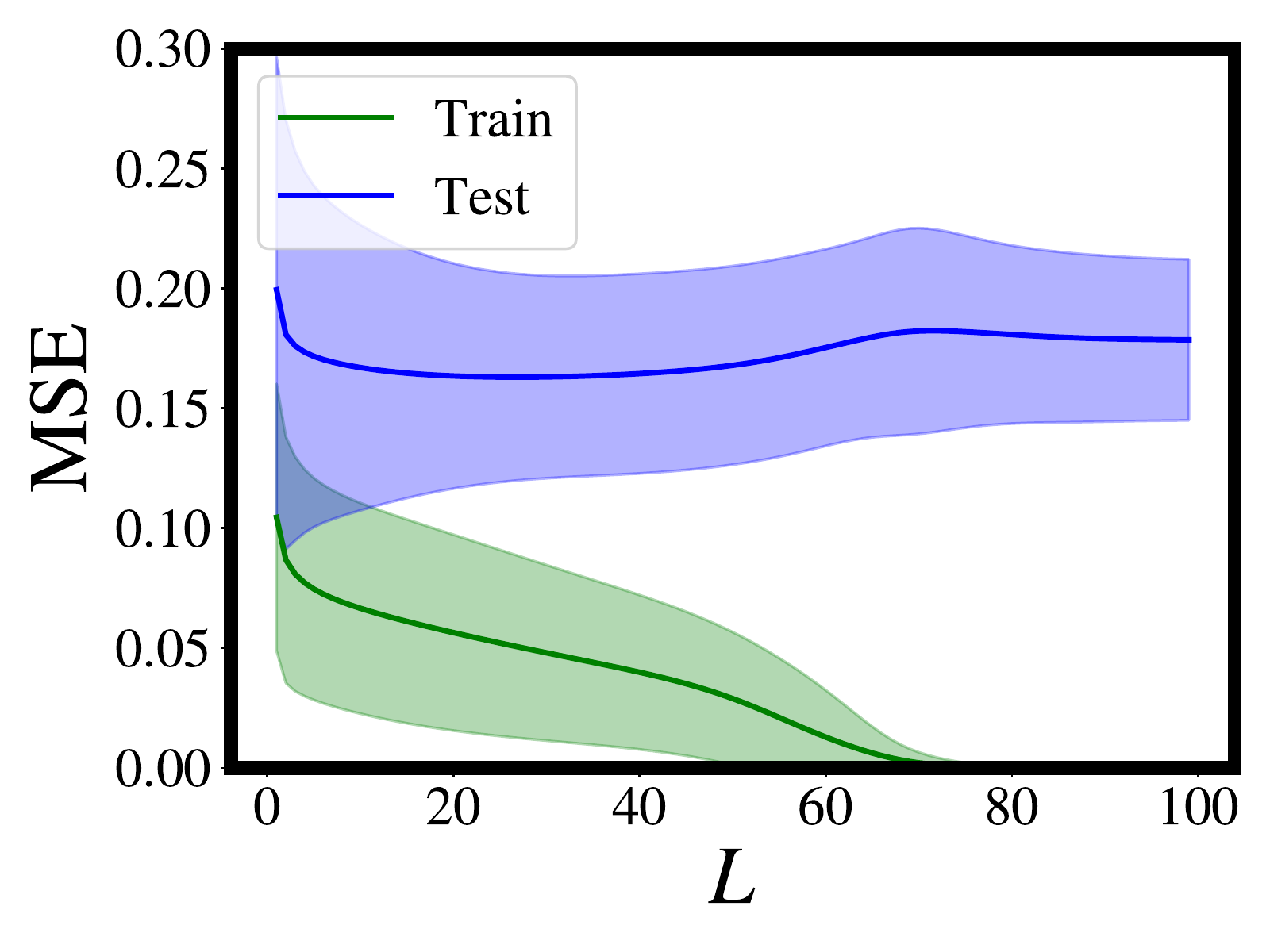}
    \includegraphics[scale=0.4]{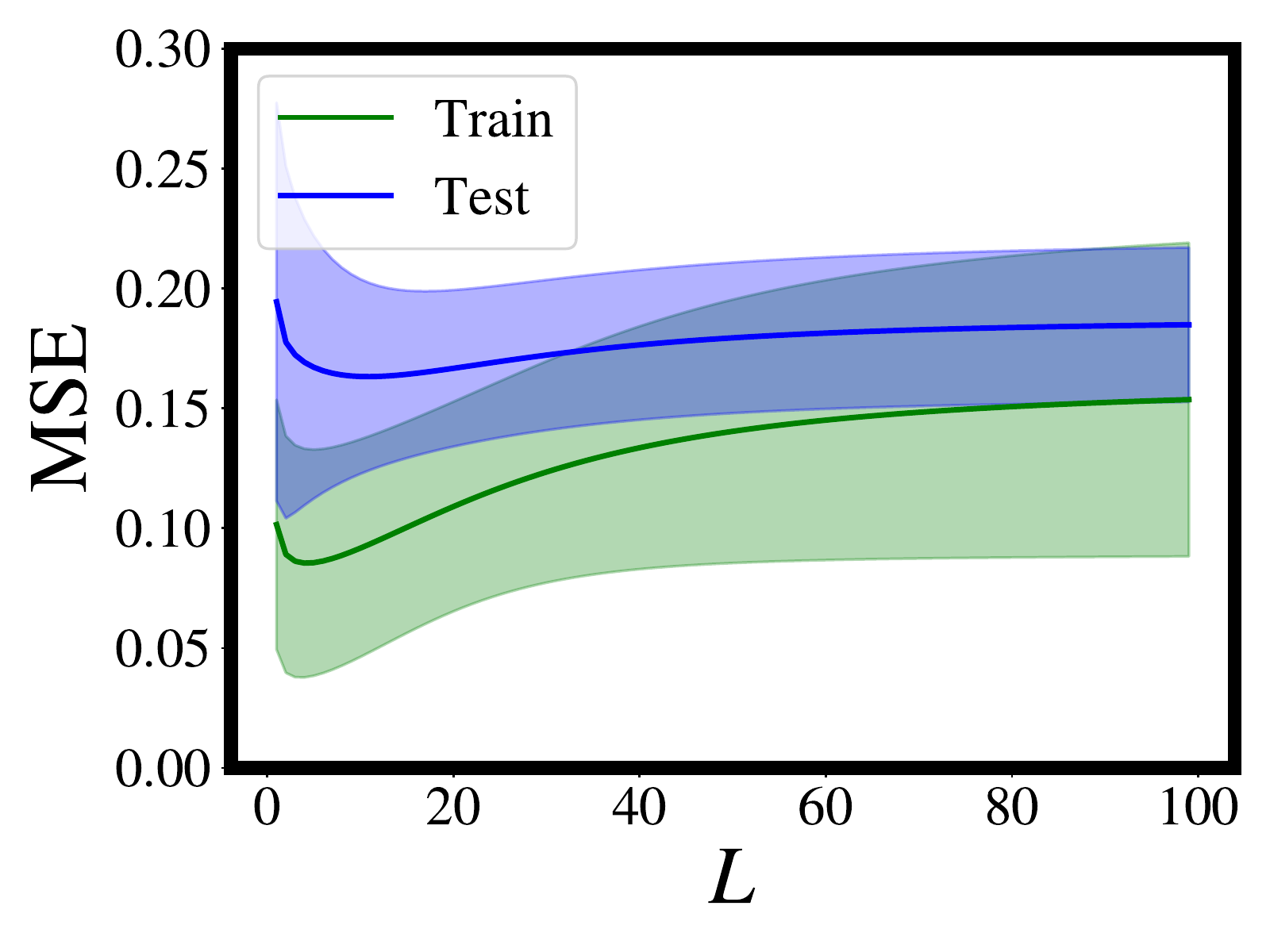} \\ 
    \includegraphics[scale=0.4]{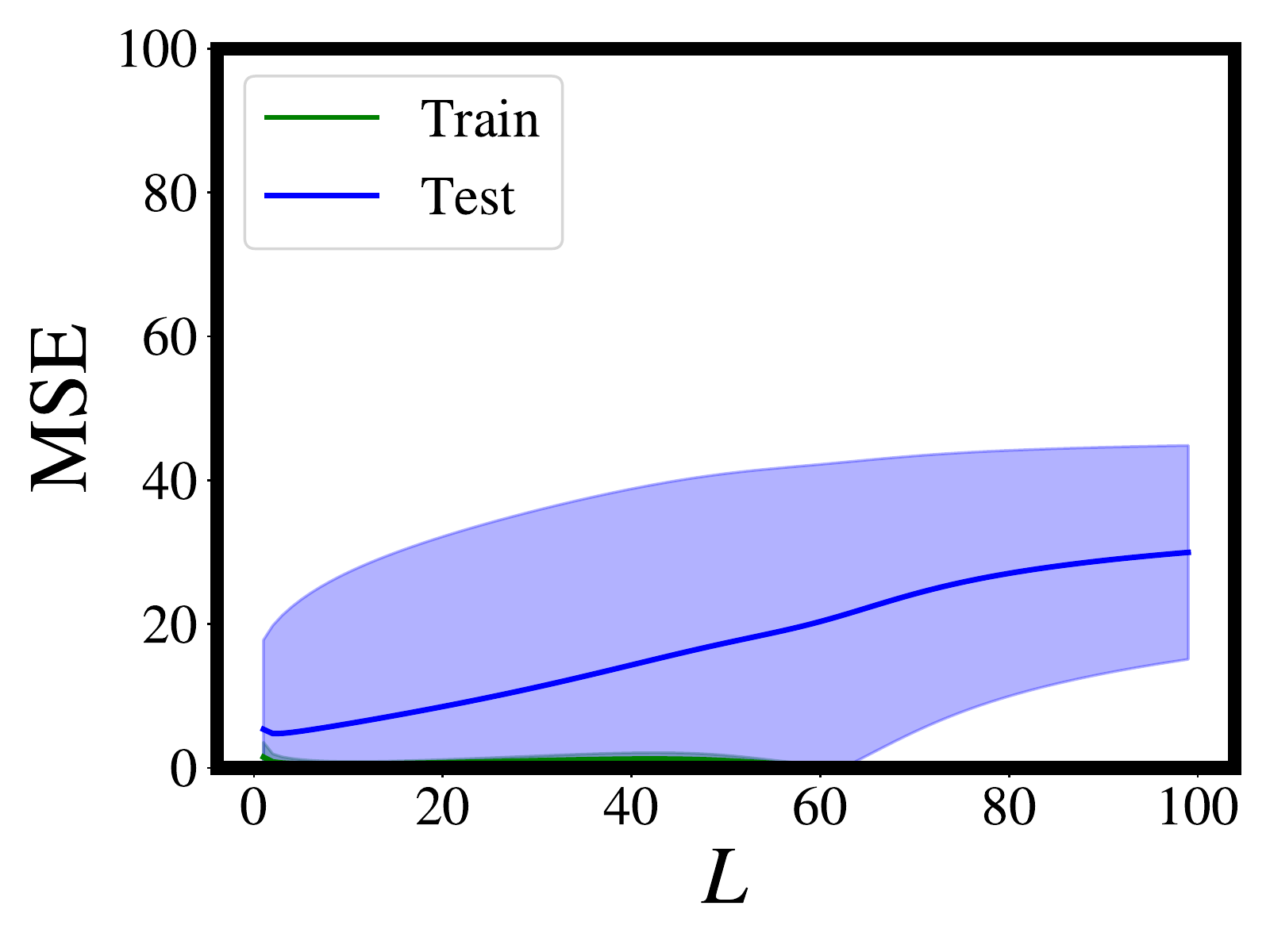}
    \includegraphics[scale=0.4]{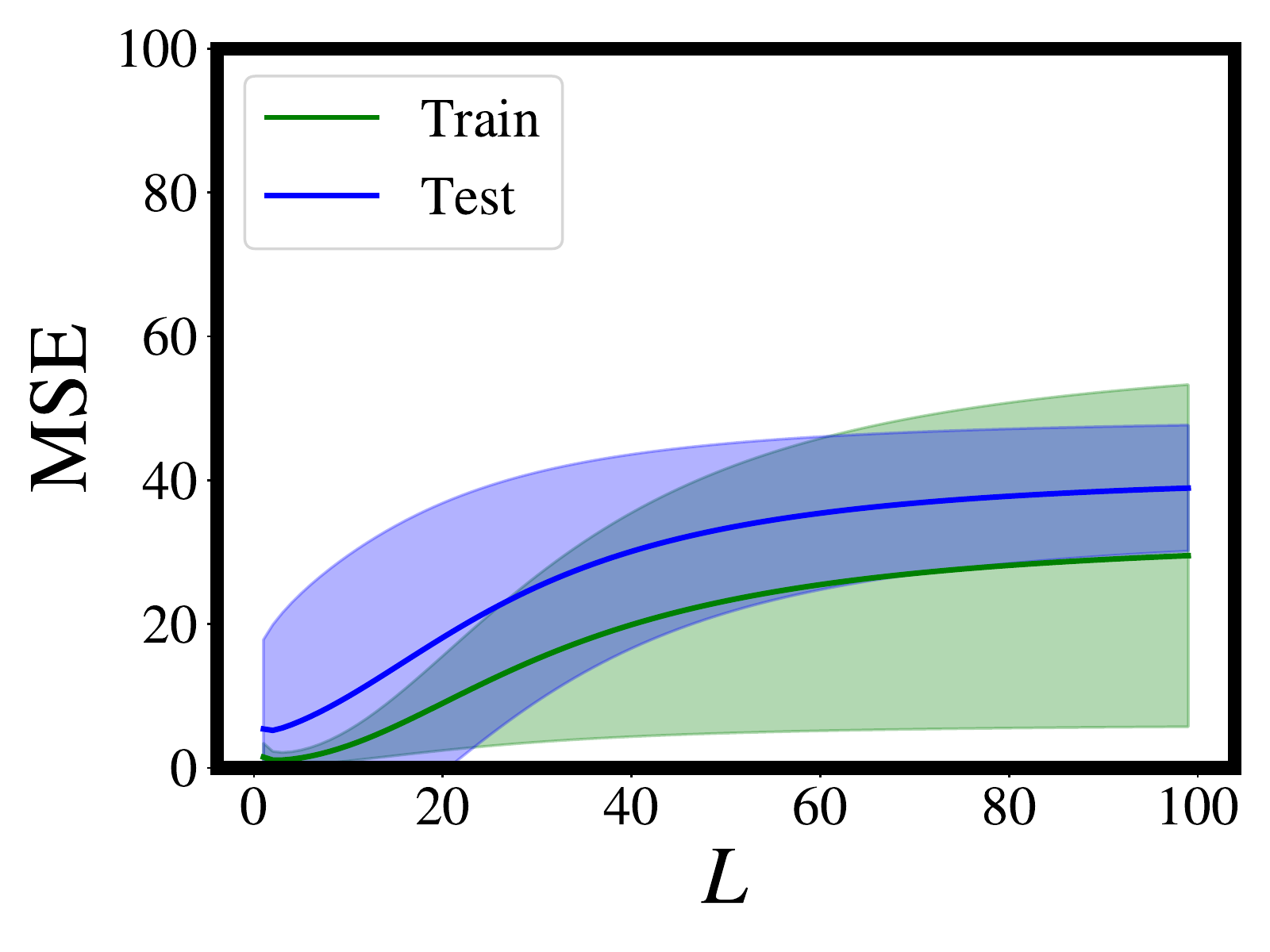} \\ 
    \includegraphics[scale=0.4]{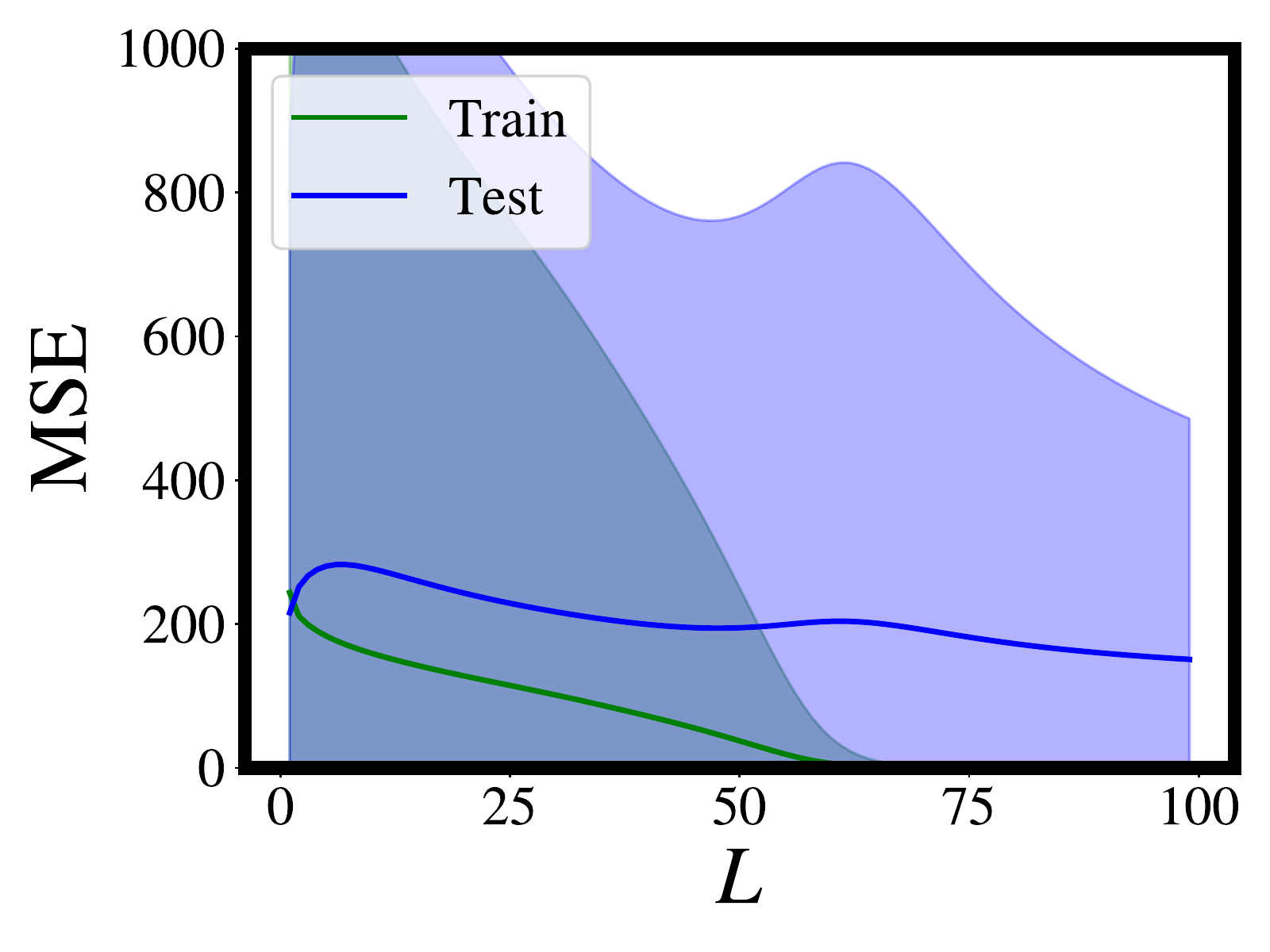}
    \includegraphics[scale=0.4]{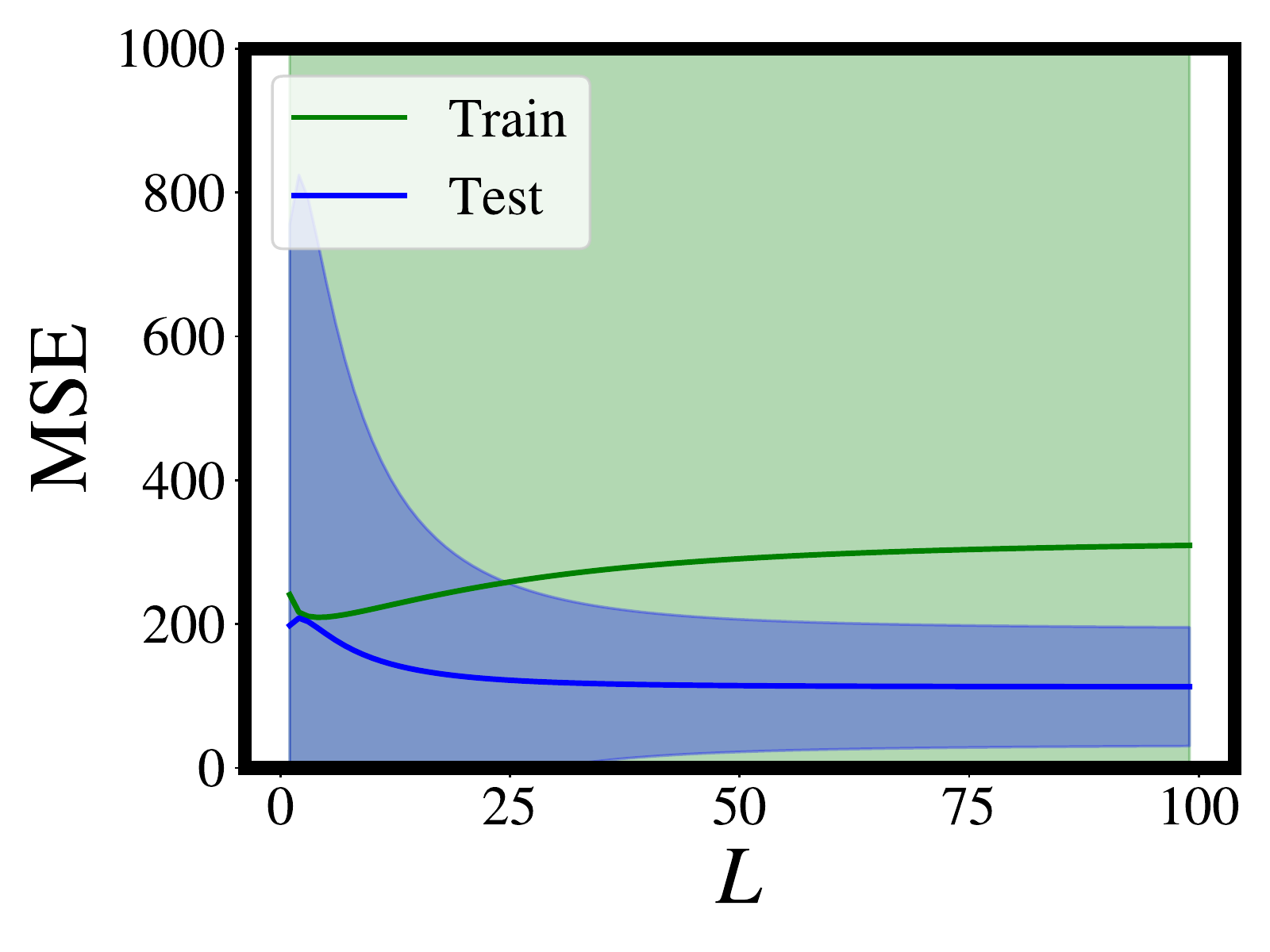} 
    \caption{Training and testing errors for Gaussian processes with covariance functions corresponding to infinitely wide MLPs of increasing depth. Solid curve shows the mean over $10$ training data samples, and the shaded region shows $\pm$ two standard deviations. (Left) GELU (Right) ReLU. (Top - Bottom) $f(\gamma)= \text{sinc}(\gamma)$, $f(\gamma)=\exp |\gamma - \pi |$, $f(\gamma)=\tan \gamma$}
\end{figure}
\renewcommand{\thefigure}{\arabic{figure}}

\newpage
\section{Simplicity bias illustrations}
\label{app:simplicity_bias}
\begin{figure}[!ht]
    \centering
    \begin{alignat*}{2}
    \begin{array}{l}
    \includegraphics[scale=0.3]{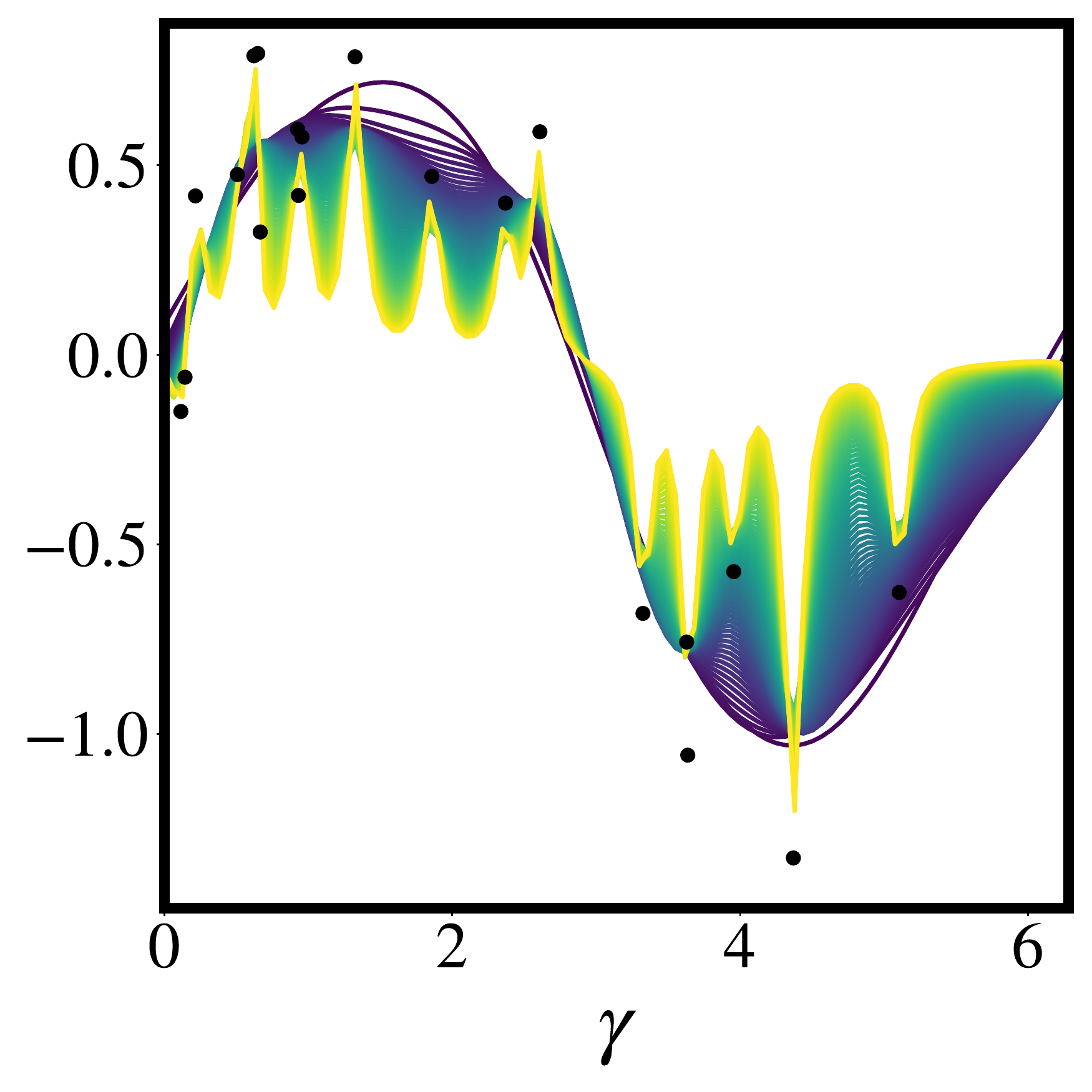}
    \includegraphics[scale=0.3]{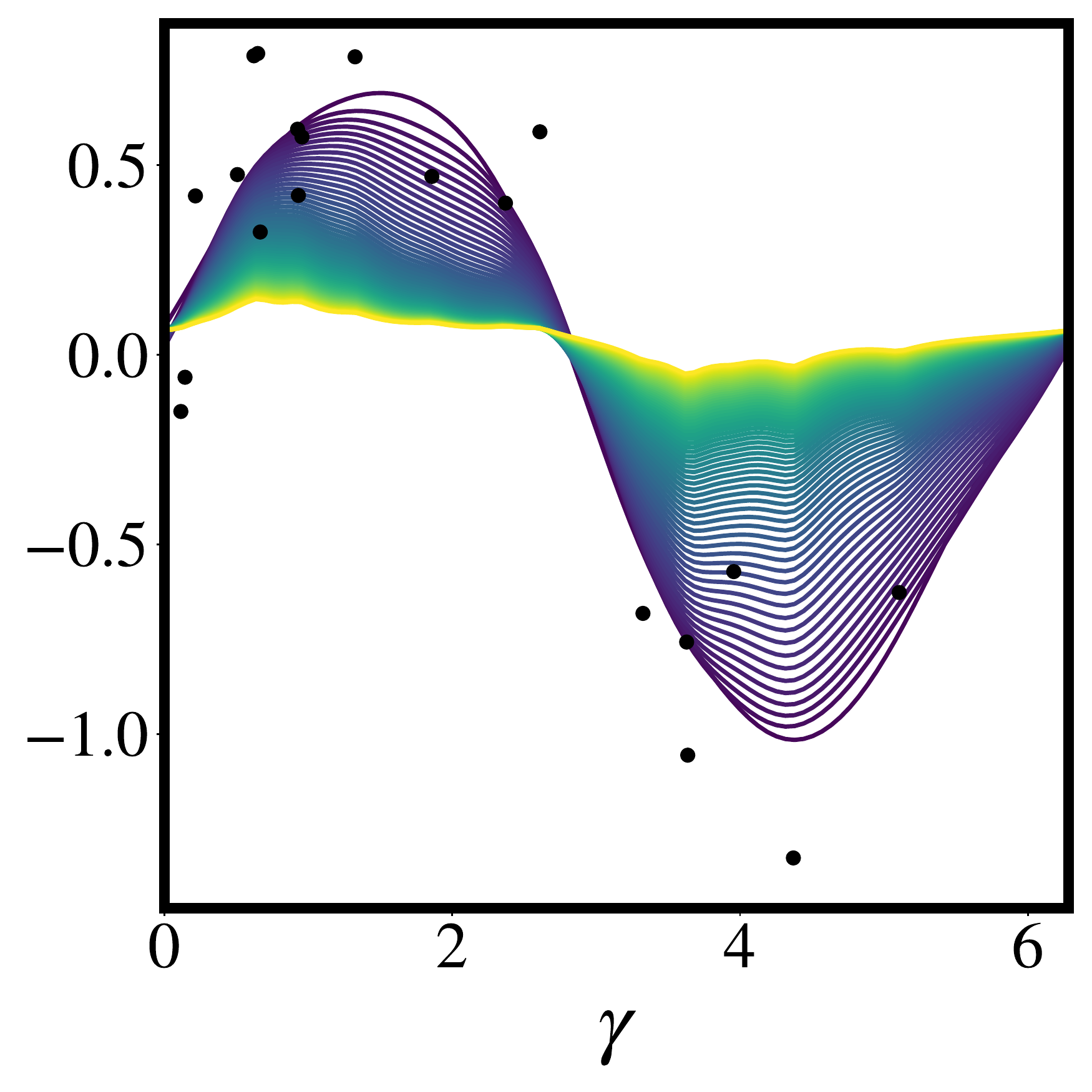} 
    \end{array} &f(\gamma) &&= \sin\gamma \\
    \begin{array}{l}
    \includegraphics[scale=0.3]{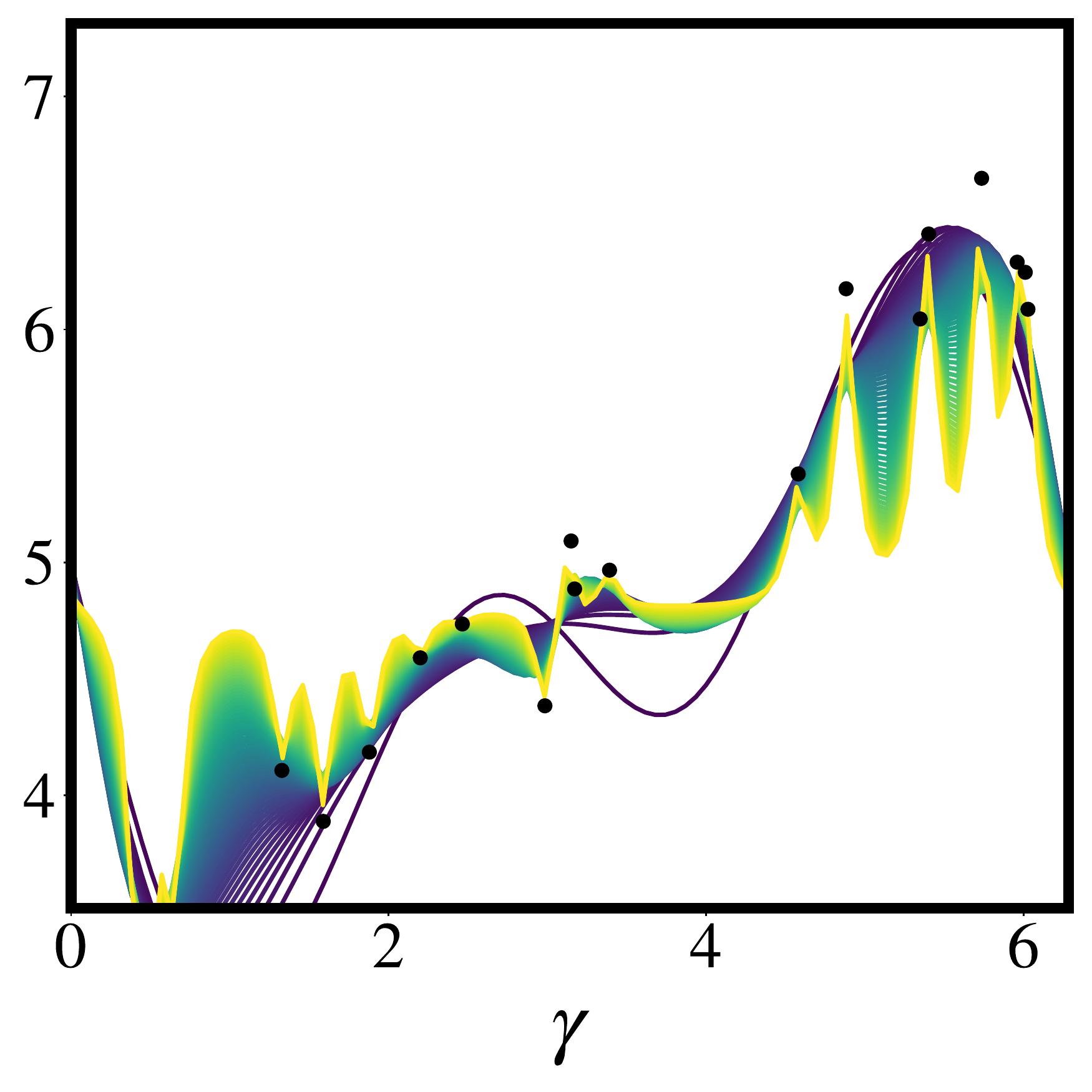}
    \includegraphics[scale=0.3]{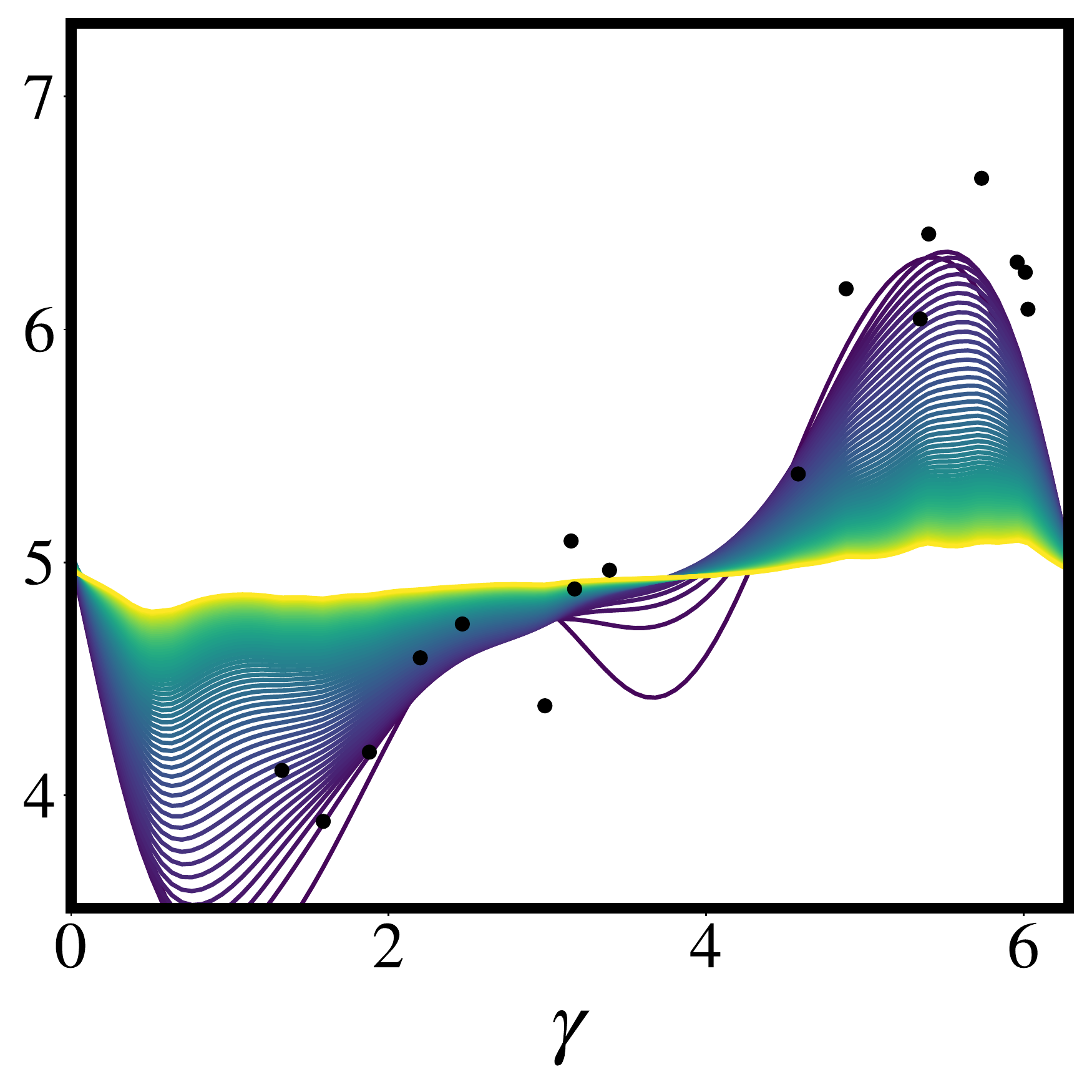} 
    \end{array} &f(\gamma) &&= 2\text{saw}(\gamma) + 5\\
    \begin{array}{l}
    \includegraphics[scale=0.3]{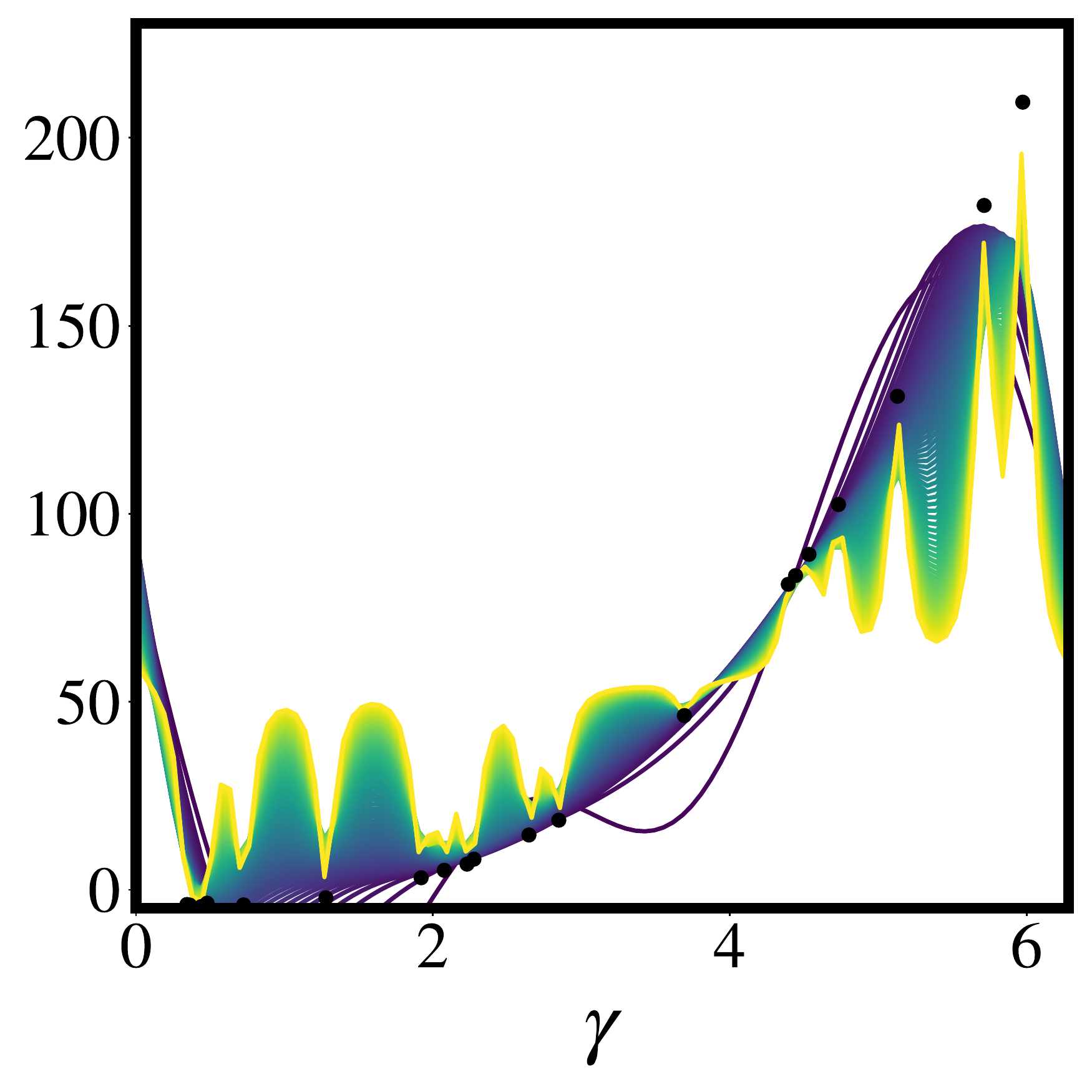}
    \includegraphics[scale=0.3]{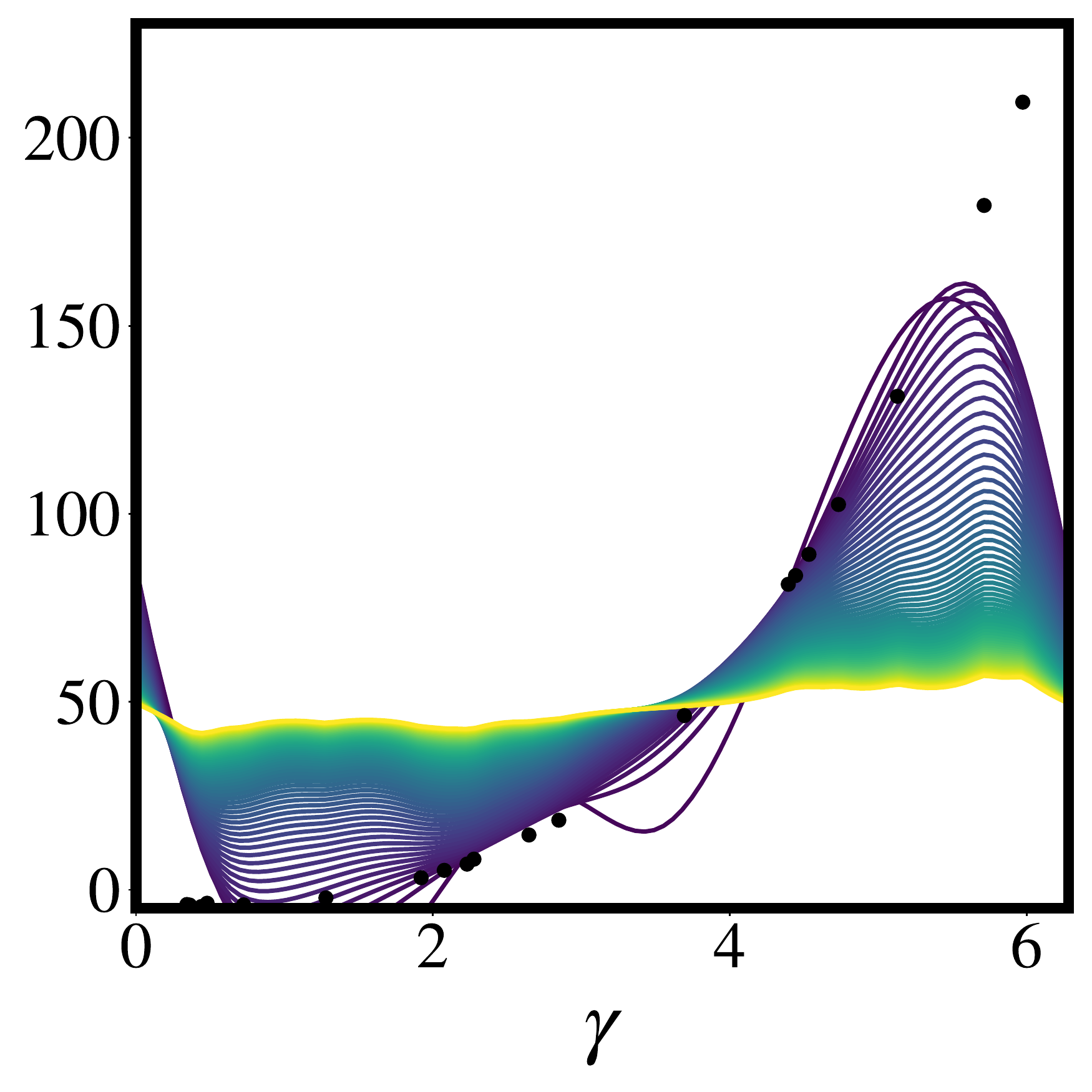} 
    \end{array} &f(\gamma) &&= \gamma^3-4
    \end{alignat*}
    \caption{Illustration of (lack of) simplicity bias due to kernel fixed points. Training data $\mathbf{x} \in \mathbb{R}^2$ is uniformly sampled on the unit disc at heading $\gamma$. Curves show the posterior mean of a GP regression model on $y=f(\gamma) + \epsilon$ with known additive noise variance fixed at $0.1$. $\sigma_w$ is chosen according to Figure 3. Colours move from purple to yellow as depth increases from $1$ to $64$. (Left) GELU without unique kernel fixed point leading to overfitting (Right) ReLU with unique kernel fixed point leading to underfitting. Continues over page...}
    \label{fig:simplicity_app}
\end{figure}

\renewcommand{\thefigure}{\arabic{figure} (Cont.)}
\addtocounter{figure}{-1}
\begin{figure}[!ht]
    \centering
    \begin{alignat*}{2}
    \begin{array}{l}
    \includegraphics[scale=0.3]{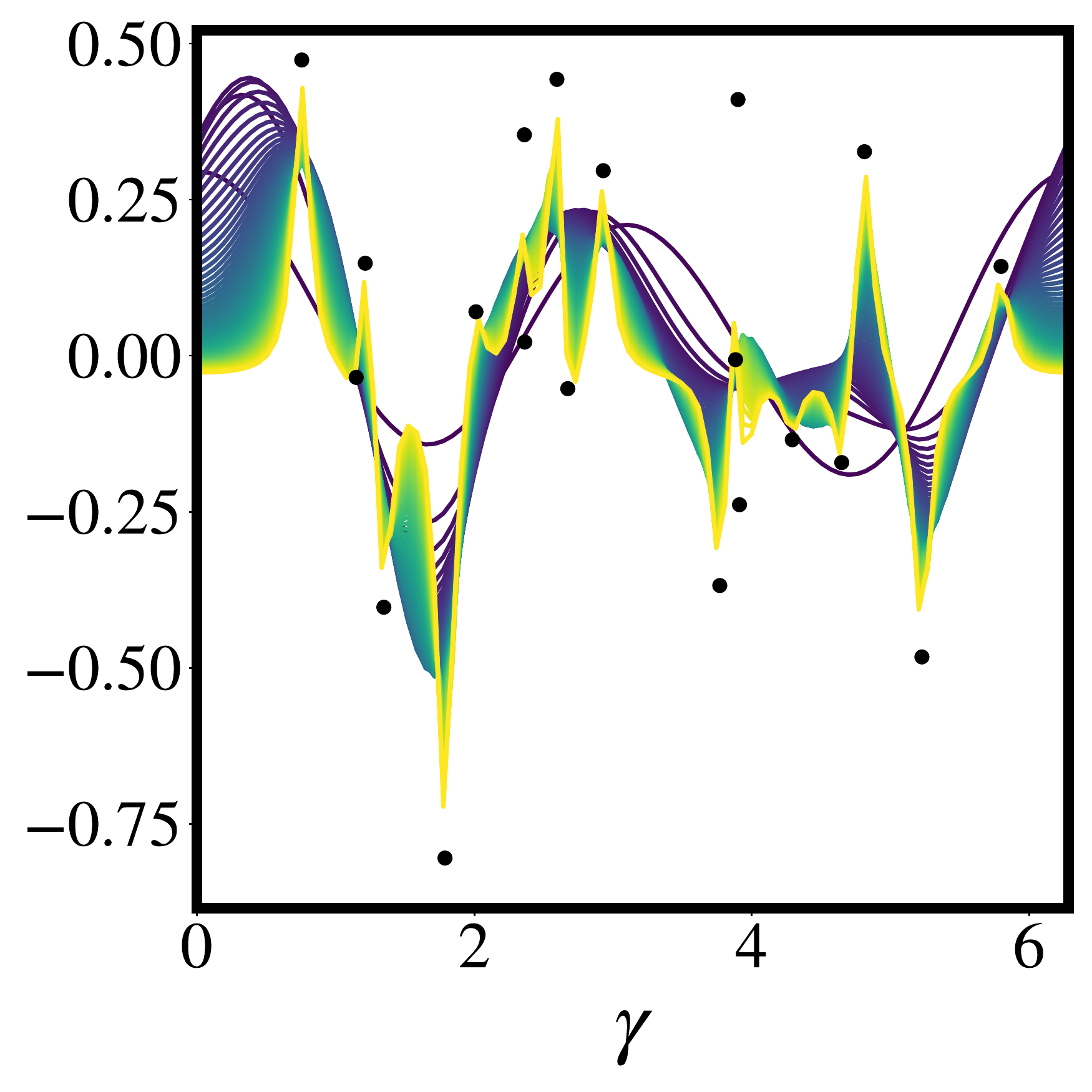}
    \includegraphics[scale=0.3]{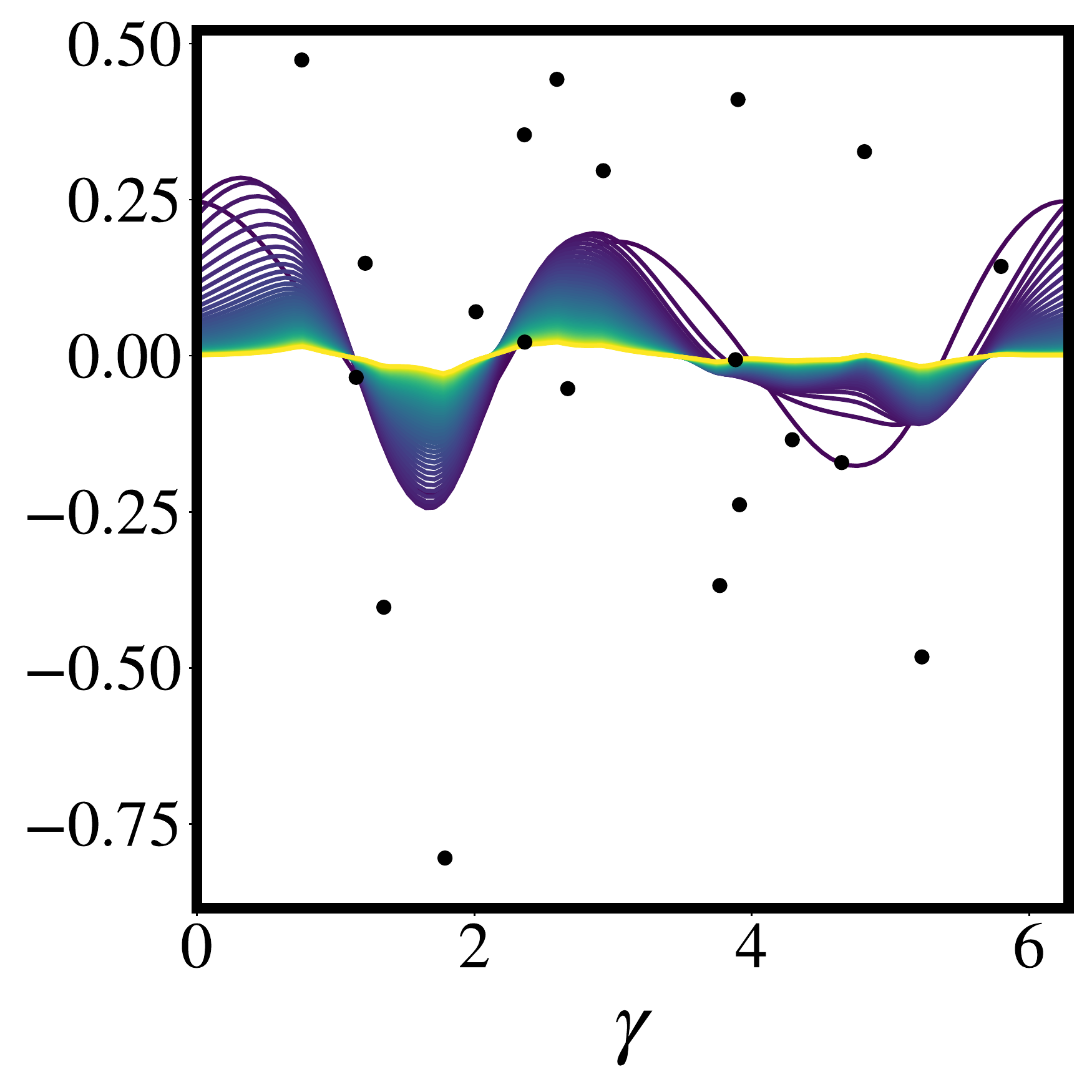} 
    \end{array} &f(\gamma) &&= \text{sinc} \gamma \\
    \begin{array}{l}
    \includegraphics[scale=0.3]{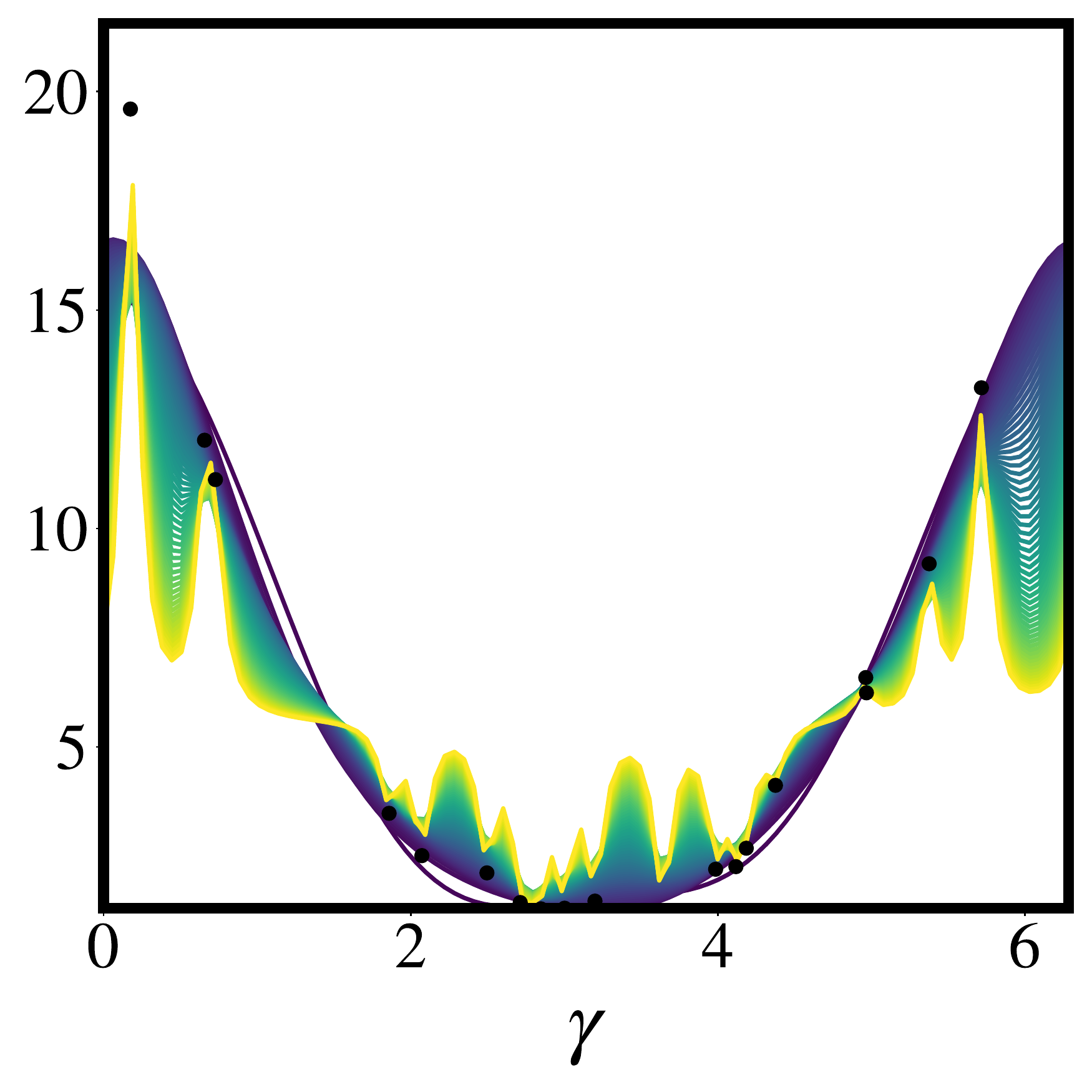}
    \includegraphics[scale=0.3]{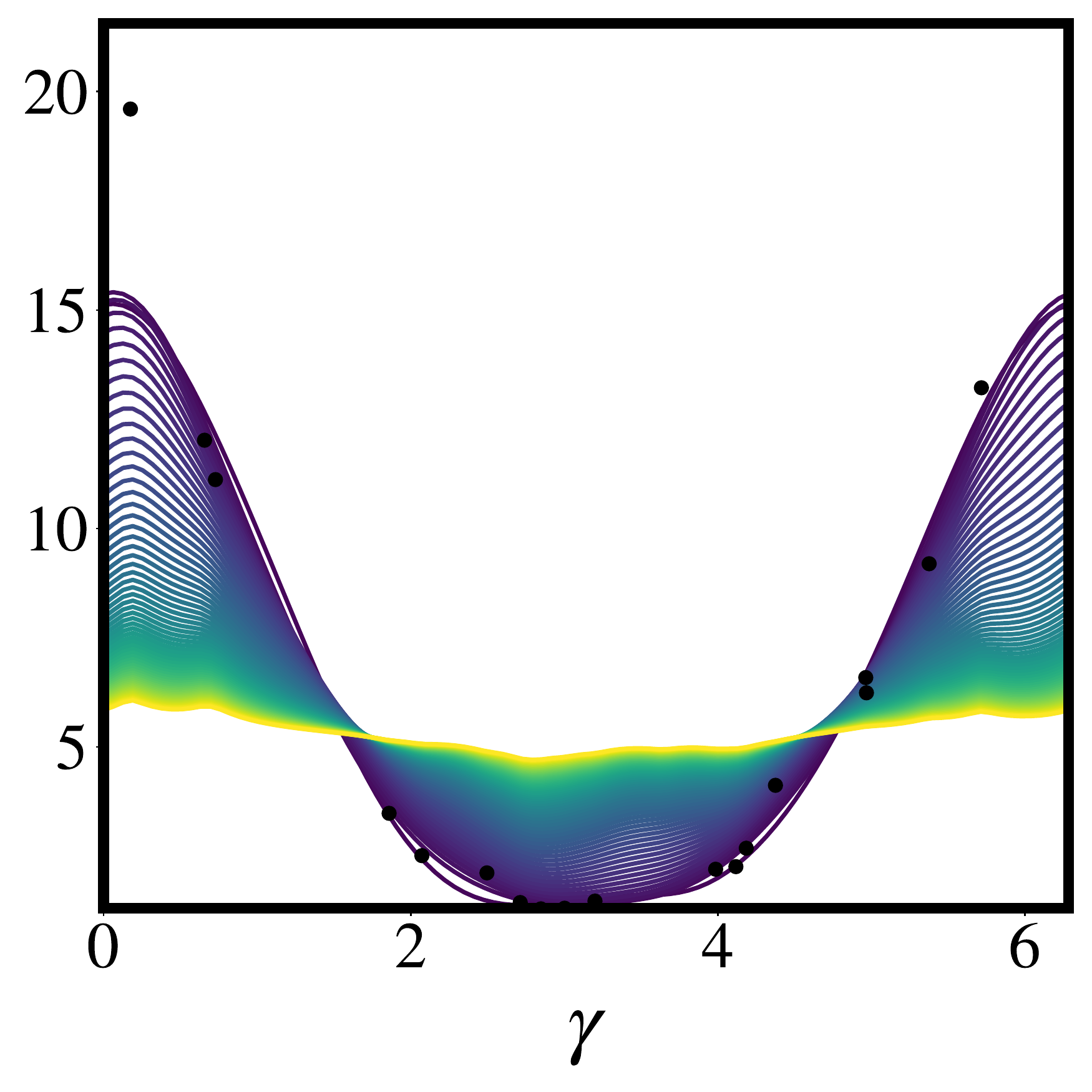} 
    \end{array} &f(\gamma) &&= e^{|\gamma-\pi|} \\
        \begin{array}{l}
    \includegraphics[scale=0.3]{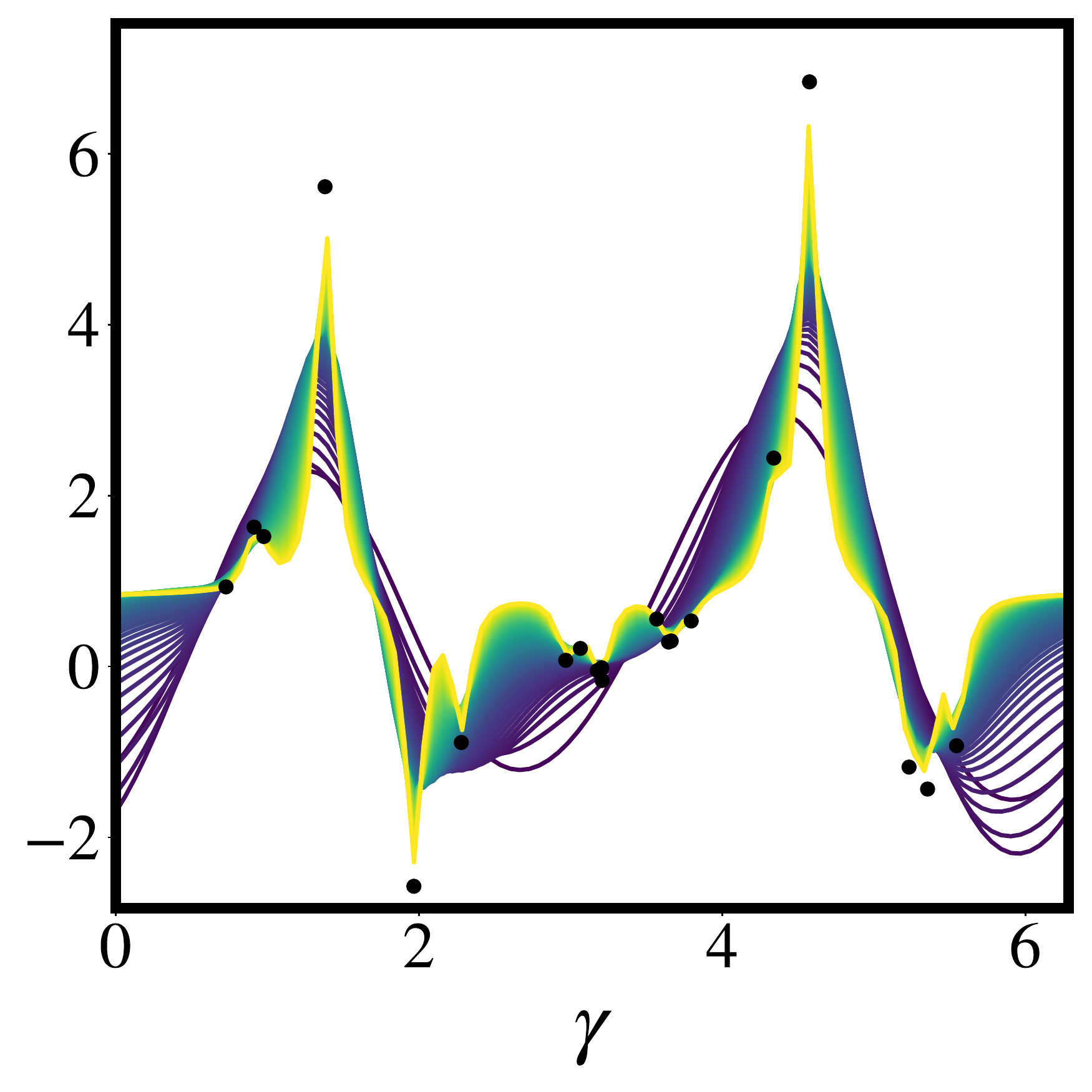}
    \includegraphics[scale=0.3]{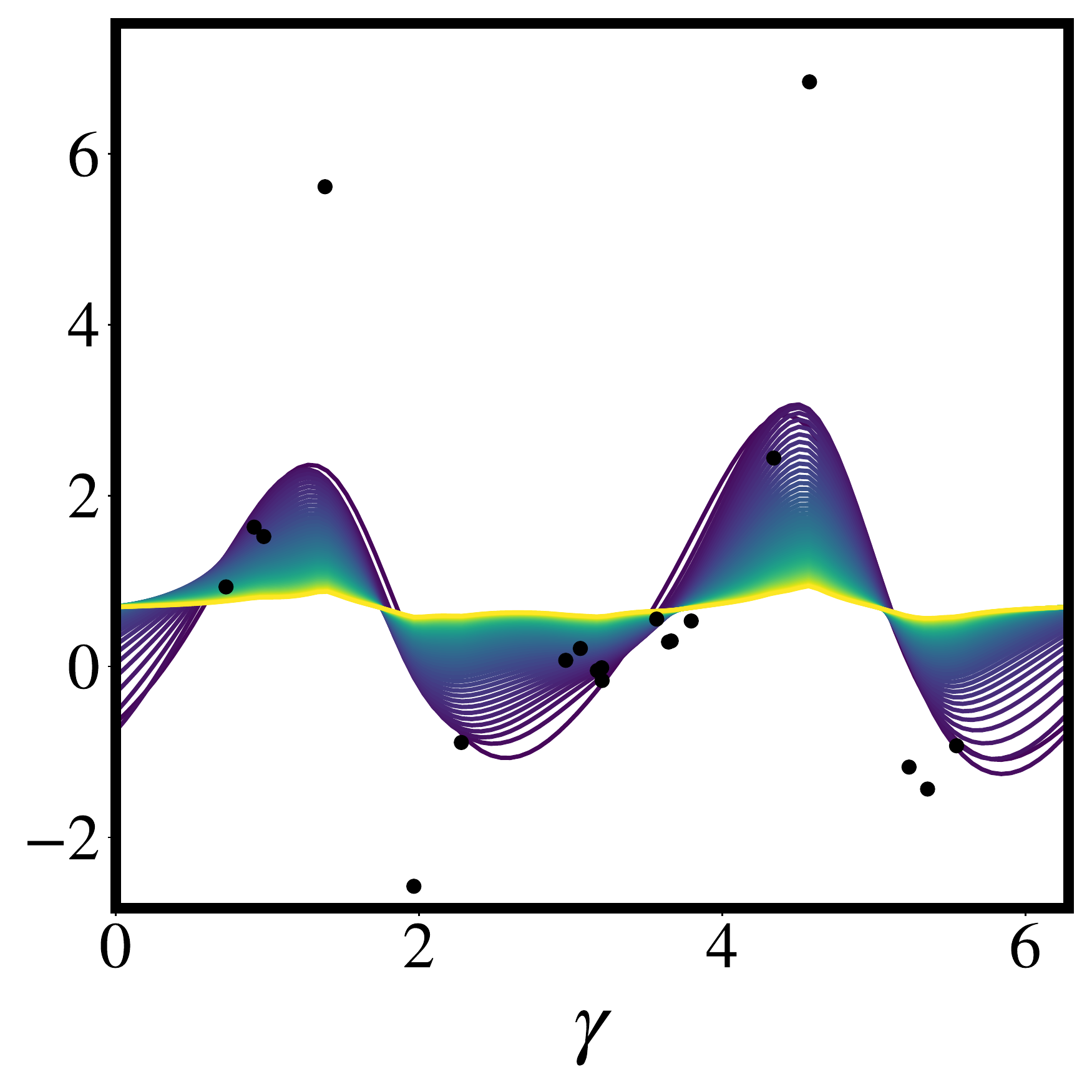} 
    \end{array} &f(\gamma) &&= \tan\gamma \\
    \end{alignat*}
    \caption{Illustration of (lack of) simplicity bias due to kernel fixed points. Training data $\mathbf{x} \in \mathbb{R}^2$ is uniformly sampled on the unit disc at heading $\gamma$. Curves show the posterior mean of a GP regression model on $y=f(\gamma) + \epsilon$ with known additive noise variance fixed at $0.1$. $\sigma_w$ is chosen according to Figure 3. Colours move from purple to yellow as depth increases from $1$ to $64$. (Left) GELU without unique kernel fixed point leading to overfitting (Right) ReLU with unique kernel fixed point leading to underfitting.}
\end{figure}
\renewcommand{\thefigure}{\arabic{figure}}
}
{}
\end{document}